\newcommand{\cmark}{\ding{51}}
\newcommand{\xmark}{\ding{55}}
\newcommand{\diag}{\mathsf{diag}}
\newcommand{\dft}{\mathrm{DFT}}
\newcommand{\idft}{\mathrm{IDFT}}
\renewcommand{\vec}{\mathsf{vec}}
\renewcommand{\mod}{~\mathsf{mod}~}
\newcommand{\trans}{\mathsf{trans}}
\newcommand{\circm}{\mathsf{circ}}
\newcommand{\Appendix}[1]{the full version for}
\newtheorem{fact}{Fact}
\newcommand{\p}{\bm{p}}
\renewcommand{\u}{\bm{u}}
\newcommand{\bv}{\bm{v}}
\renewcommand{\v}{\bm{v}}
\newcommand{\x}{\bm{x}}
\newcommand{\y}{\bm{y}}
\newcommand{\z}{\bm{z}}
\newcommand{\q}{\bm{q}}
\newcommand{\C}{\bm{C}}
\newcommand{\D}{\bm{D}}
\newcommand{\E}{\bm{E}}
\newcommand{\F}{\bm{F}}
\newcommand{\I}{\bm{I}}
\renewcommand{\P}{\mathbf{P}}
\newcommand{\Q}{\mathbf{Q}}
\newcommand{\R}{\mathbb{R}}
\renewcommand{\Re}{\mathbb{R}}
\newcommand{\Co}{\mathbb{C}}
\renewcommand{\S}{\bm{S}}
\newcommand{\U}{\bm{U}}
\newcommand{\V}{\bm{V}}
\newcommand{\W}{\bm{W}}
\newcommand{\X}{\bm{X}}
\newcommand{\Y}{\bm{Y}}
\newcommand{\Z}{\bm{Z}}
\newcommand{\rank}{\textup{\textsf{rank}}}
\newcommand{\blambda}{\boldsymbol{\lambda}}
\newcommand{\0}{\mathbf{0}}
\newcommand{\tr}{\textsf{tr}}
\newcommand{\cC}{\mathcal{C}}
\newcommand{\cE}{\mathcal{E}}
\newcommand{\cL}{\mathcal{L}}
\newcommand{\cT}{\mathcal{T}}
\newcommand{\bbS}{\mathbb{S}}
\DeclareMathOperator*{\argmax}{argmax}
\DeclareMathOperator*{\argmin}{argmin}
\newcommand\blfootnote[1]{%
	\begingroup
	\renewcommand\thefootnote{}\footnote{#1}%
	\addtocounter{footnote}{-1}%
	\endgroup
}
\newcommand{\bfSigma}{\boldsymbol{\Sigma}}
\newcommand{\bfsigma}{\boldsymbol{\sigma}}
\begin{document}%\sloppy

\title{ReduNet: A White-box Deep Network from the Principle of Maximizing Rate Reduction$^*$}

\author{\name Kwan Ho Ryan Chan$^{\,\dagger,\, \diamond}$ \email ryanchankh@berkeley.edu 
\AND
\name Yaodong Yu$^{\,\dagger,\, \diamond}$ \email yyu@eecs.berkeley.edu
\AND
\name Chong You$^{\,\dagger,\, \diamond}$ \email cyou@berkeley.edu
\AND
\name Haozhi Qi$^{\,\dagger}$ \email  hqi@berkeley.edu 
\AND
\name John Wright$^{\,\ddagger}$ \email  johnwright@ee.columbia.edu 
\AND
\name Yi Ma$^{\,\dagger}$ \email  yima@eecs.berkeley.edu\,\, 
\\ \\
\addr $^\dagger$
Department of Electrical Engineering and Computer Sciences\\
\hspace*{2.5mm}University of California, Berkeley, CA 94720-1776, USA\\
\addr $^\ddagger$ %\hspace*{-2.0mm}
Department of Electrical Engineering\\
\hspace*{2.5mm}Department of Applied Physics and Applied Mathematics\\
\hspace*{2.5mm}Columbia University, New York, NY, 10027, USA}

\editor{}

\maketitle
\blfootnote{ \hspace{-3.5mm}$^*$\,\,\,This paper integrates  previous two manuscripts:  \url{https://arxiv.org/abs/2006.08558} and \url{https://arxiv.org/abs/2010.14765}, with significantly improved organization, presentation, and new results. The first manuscript appeared as a conference paper in NeurIPS 2020~\citep{yu2020learning}. The second manuscript is the new addition.}
\blfootnote{\hspace{-3.5mm}$^ \diamond$\,\,\,The first three authors contributed equally to this work.}
% \blfootnote{\hspace{-3.5mm}$^ \triangle$\,\,\,Corresponding Author}

\vspace{-10mm}
\begin{abstract}%
This work attempts to provide a plausible theoretical framework that aims to interpret modern deep (convolutional) networks from the principles of data compression and discriminative representation. We argue that for high-dimensional multi-class data, the optimal linear discriminative representation maximizes the coding rate difference between the whole dataset and the average of all the subsets. We show that the basic iterative gradient ascent scheme for optimizing the rate reduction objective naturally leads to a multi-layer deep network, named ReduNet, which shares common characteristics of modern deep networks. The deep layered architectures, linear and nonlinear operators, and even parameters of the network are all explicitly constructed layer-by-layer via forward propagation, although they are amenable to fine-tuning via back propagation. All components of so-obtained ``white-box'' network have precise optimization, statistical, and geometric interpretation. Moreover, all linear operators of the so-derived network naturally become multi-channel convolutions when we enforce classification to be rigorously shift-invariant. The derivation in the invariant setting suggests a trade-off between sparsity and invariance, and also indicates that such a deep convolution network is significantly more efficient to construct and learn in the spectral domain. Our preliminary simulations and experiments clearly verify the effectiveness of both the rate reduction objective and the associated ReduNet. All code and data are available at \url{https://github.com/Ma-Lab-Berkeley}.
\end{abstract}

\vspace{1mm}
\begin{keywords}
rate reduction, white-box deep network, linear discriminative representation, multi-channel convolution, sparsity and invariance trade-off
\end{keywords}

\vspace{0.5mm}
\begin{quotation}
$~$ \hfill ``{\em What I cannot create, I do not understand.}''\\
$~$ \hspace{\fill} --- Richard Feynman
\end{quotation}

\newpage
\tableofcontents
\newpage

\section{Introduction and Overview}

\subsection{Background and Motivation}
In the past decade or so, the practice of deep networks has captured people’s imagination by its empirical successes in learning useful representations from large-scale real-world data such as images, speech, and natural languages \citep{lecun2015deep}. To a large extent, the dramatic revival of deep networks is attributed to remarkable technological advancements in scalable computation platforms (hardware and software) and equally importantly, ample data, real or simulated, for training and evaluation of the networks \citep{krizhevsky2012imagenet}. 

\paragraph{The practice of deep networks as a ``black box.''}
Nevertheless, the practice of deep networks has been shrouded with mystery from the very beginning. This is largely caused by the fact that deep network architectures and their components are often designed based on years of {\em trial and error}, then trained from random initialization via back propagation (BP) \citep{Back-Prop}, and then deployed as a ``black box''. Many of the popular techniques and recipes for designing and training deep networks (to be surveyed in the related work below) were developed through heuristic and empirical means, as opposed to rigorous mathematical principles, modeling and analysis. Due to lack of clear mathematical principles that can guide the design and optimization, numerous techniques, tricks, and even hacks need to be added upon the above designing and training process to make deep learning ``work at all'' or ``work better'' on real world data and tasks. Practitioners constantly face a series of challenges for any new data and tasks: What architecture or particular components they should use for the network? How wide or deep the network should be?  Which parts of the networks need to be trained and which can be determined in advance? Last but not the least, after the network has been trained, learned and tested: how to interpret functions of the operators; what are the roles and relationships among the multiple (convolution) channels; how can they be further improved or adapted to new tasks?  Besides empirical evaluation, it is challenging to provide rigorous guarantees for certain properties of so obtained networks, such as invariance and robustness. Recent studies have shown that popular networks in fact are not invariant to common deformations \citep{azulay2018deep,engstrom2017rotation}; they are prone to overfit noisy or even arbitrary labels \citep{ZhangBeHaReVi17}; and they remain vulnerable to adversarial attacks \citep{szegedy2013intriguing}; or they suffer catastrophic forgetting when  incrementally trained to learn new classes/tasks  \citep{catastrophic,delange2021continual,Wu-CVPR2021}. The lack of understanding about the roles of learned operators in the networks often provoke debates among practitioners which architecture is better than others, for example MLPs versus CNNs versus Transformers \citep{tay2021pretrained} etc. This situation seems in desperate need of improvements if one wishes deep learning to be a science rather than an ``alchemy.'' Therefore, it naturally raises a fundamental question that we aim to address in this work: \emph{how to develop a principled mathematical framework for better  understanding and design of deep networks?}

\paragraph{Existing attempts to interpret or understand deep networks.} To uncover the mystery of deep networks, a plethora of recent studies on the mathematics of deep learning has, to a large extent, improved our understanding on key aspects of deep models, including over-fitting and over-parameterization \citep{arora2019implicit,BHMM}, optimization \citep{JinGeNeKaJo17,JinNeJo17,gidel2019implicit,gunasekar2018implicit}; expressive power \citep{Rolnick-2017,hanin2017universal}, generalization bounds~\citep{BartlettFoTe17,Golowich-2017} etc.
Nevertheless, these efforts are typically based on simplified models (e.g. networks with only a handful layers \citep{zhong2017recovery,soltanolkotabi2018theoretical,zhang2019learning,Mei-2019}); or results are conservative (e.g.\ generalization bounds \citep{BartlettFoTe17,Golowich-2017}); or rely on simplified assumptions (e.g. ultra-wide deep networks \citep{Jacot2018-dv,du2018gradient,allen2019convergence,buchanan2020deep}); or only explain or justify one of the components or characteristics of the network (e.g. dropout \citep{cavazza2018dropout,pmlr-v80-mianjy18b,pmlr-v119-wei20d}). 
% As a result, most theoretical results so far have failed to provide truly useful guidelines for practitioners to improve their practice, and the predominant methodology in practice still remains {\em trial and error}. 
On the other hand, the predominant methodology in practice still remains \textit{trial and error}.
In fact, this is often pushed to the extreme by searching for effective network structures and training strategies through extensive random search techniques, such as Neural Architecture Search \citep{NAS-1,Baker2017DesigningNN}, AutoML \citep{automl}, and Learning to Learn \citep{andrychowicz2016learning}. 

\paragraph{A new theoretical framework based on data compression and representation.} Our approach deviates from the above efforts in a very significant way. Most of the existing theoretical work view the deep networks themselves as the object for study. They try to understand why deep networks work by examining their capabilities in fitting certain input-output relationships (for given class labels or function values). We, in this work, however, advocate to shift the attention of study back to the data and try to understand what deep networks should do. We start our investigation with a fundamental question: What exactly do we try to {\em learn} from and about the data? With the objective clarified, maybe deep networks, with {\em all} their characteristics, are simply necessary means to achieve such an objective. More specifically, in this paper, we develop a new theoretical framework for understanding deep networks around the following two questions:
\begin{enumerate}
    \item {\em Objective of Representation Learning:} What intrinsic structures of the data should we learn, and how should we represent such structures? What is a principled objective function for learning a good representation of such structures, instead of choosing heuristically or arbitrarily?\vspace{-2mm}
    \item {\em Architecture of Deep Networks: } Can we justify the structures of modern deep networks from such a principle? In particular, can the networks' layered architecture and operators (linear or nonlinear) all be derived from this objective, rather than designed heuristically and evaluated empirically?
\end{enumerate}
This paper will provide largely positive and constructive answers to the above questions.

We will argue that, at least in the classification setting, a principled objective for a deep network is {\em to learn a low-dimensional linear discriminative representation of the data} (Section \ref{sec:principled-objective-via-compression}). The optimality of such a representation can be evaluated by a principled measure from (lossy) data compression, known as {\em rate reduction} (Section \ref{sec:principled-objective}). Appropriately structured deep networks can be naturally interpreted as {\em optimization schemes for maximizing this measure} (Section \ref{sec:vector} and \ref{sec:shift-invariant}). Not only does this framework offer new perspectives to understand and interpret modern deep networks, they also provide new insights that can potentially change and improve the practice of deep networks. For instance, the resulting networks will be entirely a ``white box'' and back propagation from random initialization is no longer the only choice for training the networks (as we will verify through extensive experiments in Section \ref{sec:experiments}).

For the rest of this introduction, we will first provide a brief survey on existing work that are related to the above two questions. Then we will give an overview of our approach and contributions before we delve into the technical details in following sections.

\subsection{Related Work}
Given a random vector $\x \in \Re^D$ which is drawn from a mixture of $k$ distributions $\mathcal{D} = \{\mathcal{D}^j\}_{j=1}^k$, one of the most fundamental problems in machine learning is how to effectively and efficiently {\em learn the distribution} from a finite set of i.i.d samples, say $\X = [\x^1, \x^2, \ldots, \x^m] \in \Re^{D\times m}$. To this end, we {\em seek a good representation}  through a continuous mapping, $f(\x, \bm \theta): \Re^D \rightarrow \Re^n$, that captures intrinsic structures of $\x$ and best facilitates subsequent tasks such as classification or clustering.\footnote{ Classification is where deep learning demonstrated the initial success that has catalyzed the  explosive interest in deep networks \citep{krizhevsky2012imagenet}. Although our study in this paper focuses on classification, we believe the ideas and principles can be naturally generalized to other settings such as regression.}

\subsubsection{Objectives for Deep Learning}\label{sec:intro-objective}
\paragraph{Supervised learning via cross entropy.} To ease the task of learning $\mathcal{D}$, in the popular supervised setting, a true class label, represented as a one-hot vector $\y^i \in \Re^k$, is given for each sample $\x^i$. Extensive studies have shown that for many practical datasets (images, audio, and natural language, etc.), the mapping from the data $\bm x$ to its class label $\bm y$ can be effectively modeled by training a deep network~\citep{goodfellow2016deep}, here denoted as $f(\x, \bm \theta):\x \mapsto \y$ with network parameters $\bm \theta \in \Theta$. This is typically done by minimizing the {\em cross-entropy loss} over a training set $\{(\x^i, \y^i)\}_{i=1}^m$, 
through backpropagation over the network parameters $\bm \theta$: 
\begin{equation}
   \min_{\bm \theta \in \Theta} \; \mbox{CE}(\bm \theta, \x, \y) \doteq - \mathbb{E}[\langle \y, \log[f(\x, \bm \theta)] \rangle] \, \approx - \frac{1}{m}\sum_{i=1}^m \langle \y^i, \log[f(\x^i, \bm \theta)] \rangle.
   \label{eqn:cross-entropy}
\end{equation}
Despite its effectiveness and enormous popularity, there are two serious limitations with this approach:  1) It aims only to predict the labels $\y$ even if they might be mislabeled. Empirical studies show that deep networks, used as a ``black box,'' can even fit random labels \citep{zhang2017understanding}. 2) With such an end-to-end data fitting, 
despite plenty of empirical efforts in trying to interpret the so-learned features~\citep{Zeiler-ECCV2014}, 
it is not clear to what extent the intermediate features learned by the network capture the intrinsic structures of the data that make meaningful classification possible in the first place. Recent work of  \cite{papyan2020prevalence,fang2021layer,zhu2021geometric} shows that the representations learned via the  cross-entropy loss \eqref{eqn:cross-entropy} exhibit a \emph{neural collapsing} phenomenon,\footnote{Essentially, once an over-parameterized network fits the training data, regularization (such as weight decay) would collapse weight components or features that are not the most relevant for fitting the class labels. Besides the most salient feature, informative and discriminative features that also help define a class can be suppressed.} where within-class variability and structural information are getting suppressed and ignored, as we will also see in the experiments. The precise geometric and statistical properties of the learned features are also often obscured, which leads to the lack of interpretability and subsequent performance guarantees (e.g., generalizability, transferability, and robustness, etc.) in deep learning. Therefore, {\em one of the goals of this work is to address such limitations by reformulating the objective towards learning explicitly meaningful and useful representations for the data $\x$, in terms of feature linearity, discriminativeness, and richness.}

\paragraph{Minimal versus low-dimensional representations from deep learning.} Based on strong empirical evidence that intrinsic structures of high-dimensional (imagery) data are rather low-dimensional\footnote{For example, the digits in MNIST approximately live on a manifold with intrinsic dimension no larger than $15$ \citep{hein2005intrinsic}, the images in CIFAR-$10$ live closely on a $35$-dimensional manifold \citep{spigler2019asymptotic}, and the images in ImageNet have intrinsic dimension of $\sim40$ \citep{pope2021the}.}, it has been long believed that the role of deep networks is to learn certain (nonlinear) low-dimensional representations of the data, which has an advantage over classical linear dimension reduction methods such as PCA \citep{Hinton504}. Following this line of thought, one possible approach to interpret the role of deep networks is to view outputs of intermediate layers of the network as selecting certain low-dimensional latent features $\z = f(\x, \bm \theta) \in \Re^n$ of the data that are discriminative among multiple classes. Learned representations $\z$ then  facilitate the subsequent classification task for predicting the class label $\y$ by optimizing a classifier $g(\z)$: 
\vspace{-1mm}
\begin{equation}
    \x   \xrightarrow{\hspace{2mm} f(\x, \bm \theta)\hspace{2mm}} \z(\bm \theta)  \xrightarrow{\hspace{2mm} g(\z) \hspace{2mm}} \y.
\end{equation}
The {\em information bottleneck} (IB) formulation \citep{Tishby-ITW2015} further hypothesizes that the role of the network is to learn $\z$ as the minimal sufficient statistics for predicting $\y$. Formally, it seeks to maximize the mutual information $I(\z, \y)$~\citep{Thomas-Cover}
between $\z$ and $\y$ while minimizing $I(\x, \z)$ between $\x$ and $\z$:
\begin{equation}
    \max_{\bm \theta\in \Theta}\; \mbox{IB}(\x, \y, \z(\bm \theta)) \doteq I(\z(\bm \theta), \y) - \beta I(\x, \z(\bm \theta)), \quad \beta >0. 
\label{eqn:information-bottleneck}
\end{equation}
Given one can overcome some caveats associated with this framework \citep{kolchinsky2018caveats-ICLR2018}, such as how to accurately evaluate mutual information with finitely samples of degenerate distributions, this framework can be helpful in explaining certain behaviors of deep networks. For example, recent work \citep{papyan2020prevalence} indeed shows that the representations learned via the cross-entropy loss expose a \emph{neural collapse} phenomenon. That is, features of each class are mapped to a one-dimensional vector whereas all other information of the class is suppressed. More discussion on neural collapse will be given in Section \ref{sec:rate-reduction-properties}.
But by being task-dependent (depending on the label $\y$) and seeking a {\em minimal} set of most informative features for the task at hand (for predicting the label $\y$ only), the so learned network may sacrifice robustness in case the labels can be corrupted or transferrability when we want to use the features for different tasks. To address this, {\em our framework uses the label $\y$ as only side information to assist learning discriminative yet diverse (not minimal) representations; these representations optimize a different intrinsic objective based on the principle of rate reduction.}\footnote{As we will see in the experiments in Section \ref{sec:experiments}, indeed this makes learned features much more robust to mislabeled data.}

\paragraph{Reconciling contractive and contrastive learning.} 
{Complementary to the above supervised discriminative approach, {\em  auto-encoding} \citep{Baldi89,Kramer1991NonlinearPC,Hinton504} is another popular {\em unsupervised} (label-free) framework used to learn good latent representations, which can be viewed as a nonlinear extension to the classical PCA \citep{Jolliffe2002}.} The idea is to learn a compact latent representation $\z \in \Re^n$ that adequately regenerates  the original data $\x$ to certain extent, through optimizing decoder or generator $g(\z, \bm \eta)$: 
\begin{equation}
     \x \xrightarrow{\hspace{2mm} f(\x, \bm \theta)\hspace{2mm}} \z(\bm \theta)  \xrightarrow{\hspace{2mm} g(\z,\bm{\eta}) \hspace{2mm}} \widehat{\x}(\bm \theta, \bm\eta).
     \label{eqn:generative}
\end{equation}
Typically, such representations are learned in an end-to-end fashion by imposing certain heuristics on  geometric or statistical ``compactness'' of $\z$, such as its dimension, energy, or volume. For example, the {\em contractive} autoencoder  \citep{contractive-ICML11} penalizes local volume expansion of learned features approximated by the Jacobian 
\begin{equation}
\min_{\bm \theta} \left\|\frac{\partial \z}{\partial \bm \theta}\right\|.
\end{equation}
When the data contain complicated {\em multi-modal low-dimensional} structures, naive heuristics or inaccurate metrics may fail to capture all internal subclass structures or to explicitly discriminate among them for classification or clustering purposes. For example, one consequence of this is the phenomenon of {\em mode collapsing} in learning generative models for data that have mixed multi-modal structures~\citep{li2020multimodal-IJCV}. To address this, {\em we propose a principled rate reduction measure (on $\z$) that promotes both the within-class compactness and between-class discrimination of the features for data with mixed structures.}

If the above contractive learning seeks to reduce the dimension of the learned representation, {\em contrastive learning} \citep{hadsell2006dimensionality,oord2018representation,he2019momentum} seems to do just the opposite. For data that belong to $k$ different classes, a randomly chosen pair $(\x^i, \x^j)$ is of high probability belonging to difference classes if $k$ is large.\footnote{For example, when $k \ge 100$, a random pair is of probability 99\% belonging to different classes.} Hence it is desirable that the representation $\z^i = f(\x^i, \bm \theta)$ of a sample $\x^i$ should be highly incoherent to those $\z^j$ of other samples $\x^j$ whereas coherent to feature of its transformed version $\tau(\x^i)$, denoted as $\z(\tau(\x^i))$ for $\tau$ in certain augmentation set $\mathcal T$ in consideration. Hence it was proposed heuristically that, to promote  discriminativeness of the learned representation, one may seek to minimize the so-called {\em contrastive loss}: \begin{equation}
\min_{\bm \theta} - \log \frac{\exp(\langle \z^i, \z(\tau(\x^i))\rangle)}{\sum_{j\not= i} \exp(\langle \z^i, \z^j\rangle)},
\label{eqn:contrastive-loss}
\end{equation}
which is small whenever the inner product $\langle \z^i, \z(\tau(\x^i))\rangle$ is large and $\langle \z^i, \z^j\rangle$ is small for $i \not= j$. 

As we may see from the practice of both contractive learning and contrastive learning, for a good representation of the given data, people have striven to achieve certain trade-off between the compactness and discriminativeness of the representation. Contractive learning aims to compress the features of the entire ensemble, whereas contrastive learning aims to expand features of any pair of samples. Hence it is not entirely clear why either of these two seemingly opposite heuristics seems to help learn good features. Could it be the case that both mechanisms are needed but each acts on different part of the data? {\em As we will see, the rate reduction principle precisely reconciles the tension between these two seemingly contradictory objectives by explicitly specifying to compress (or contract) similar features in each class whereas to  expand (or contrast) the set of all features in multiple classes.}

\subsubsection{Architectures for Deep Networks}\label{sec:intro-architecture}
The ultimate goal of any good theory for deep learning is to facilitate a better understanding of deep networks and to design better network architectures and algorithms with performance guarantees. So far we have surveyed many popular objective functions that promote certain desired properties of the learned representation $\z  = f(\x, \bm \theta)$ of the data $\x$. The remaining question is how the mapping $f(\x, \bm \theta)$ should be modeled and learned. 

\paragraph{Empirical designs of deep (convolution) neural networks.}
The current popular practice is to model the mapping with an empirically designed artificial deep neural network and learn the  parameters $\bm \theta$ from random initialization via backpropagation (BP) \citep{Back-Prop}. Starting with the AlexNet \citep{krizhevsky2012imagenet}, the architectures of modern deep networks continue to be empirically revised and improved. Network architectures such as VGG \citep{simonyan2014very}, ResNet \citep{he2016deep}, DenseNet \citep{dense-net},  Recurrent CNN or LSTM \citep{LSTM}, and mixture of experts (MoE) \citep{MoE} etc. have continued to push the performance envelope.

As part of the effort to improve deep networks' performance, almost every component of the networks has been scrutinized empirically and various revisions and improvements have been proposed. They are not limited to the nonlinear activation functions \citep{maas2013rectifier,klambauer2017self,xu2015empirical,nwankpa2018activation,hendrycks2016gaussian}, skip connections \citep{ronneberger2015u,he2016deep}, normalizations \citep{ioffe2015batch,ba2016layer,ulyanov2016instance,wu2018group,miyato2018spectral}, up/down sampling or pooling \citep{scherer2010evaluation}, convolutions~\citep{lecun1998gradient,krizhevsky2012imagenet}, etc.
Nevertheless, almost all such modifications are developed through years of empirical {trial and error} or ablation study. Some recent practices even take to the extreme by searching for effective network structures and training strategies through extensive random search techniques, such as Neural Architecture Search \citep{NAS-1,Baker2017DesigningNN}, AutoML \citep{automl}, and Learning to Learn \citep{andrychowicz2016learning}.

However, there has been apparent lack of direct justification of the resulting network architectures from the desired learning objectives, e.g. cross entropy or contrastive learning. As a result, it is challenging if not impossible to rigorously justify why the resulting network is the best suited for the objective, let alone to interpret the learned operators and parameters inside. In this work, {\em we will attempt to derive network architectures and components as entirely a ``white box'' from the desired objective (say, rate reduction)}.

\paragraph{Constructive approaches to deep (convolution) networks.} 
For long, people have noticed structural similarities between deep networks and iterative optimization algorithms, especially those for solving sparse coding. Even before the revival of deep networks, \citet{gregor2010learning} has argued that algorithms for sparse coding, such as the FISTA algorithm \citep{BeckA2009}, can be viewed as a deep network and be trained using BP for better coding performance, known as LISTA (learned ISTA). Later \citet{papyan2017convolutional,Giryes-2018,monga2019algorithm,deep-sparse} have proposed similar interpretations of deep networks as unrolling algorithms for sparse coding in  convolutional or recurrent settings. However, it remains unclear about the role of the convolutions (dictionary) in each layer and exactly why such low-level sparse coding is needed for the high-level classification task.  To a large extent, {\em this work will provide a new perspective to elucidate the role of the sparsifying convolutions in a deep network: not only will we reveal why sparsity is needed for ensuring invariant classification but also the (multi-channel) convolution operators can be explicitly derived and constructed.}
%\jw{not sure about this assertion -- which issue are we claiming to resolve? if its the convolution issue, i'm not sure this is actually an issue; if its the low-level--high-level issue, i'm a little unclear to what the issue is and whether we've resolved it}

Almost all of the above networks inherit architectures and initial parameters from sparse coding algorithms, but still rely on back propagation \citep{Back-Prop} to tune these parameters.  There have been efforts that try to construct the network in a {\em purely forward fashion}, without any back propagation. For example, to ensure translational invariance for a wide range of signals, \cite{scattering-net,Wiatowski-2018} have proposed to use  wavelets to construct convolution networks, known as ScatteringNets.
As a ScatteringNet is oblivious to the given data and feature selection for classification, the required number of convolution kernels grow exponentially with the depth. \cite{Zarka2020Deep, zarka2021separation} have also later proposed hybrid deep networks based on scattering transform and dictionary learning to alleviate scalability. Alternatively, \cite{chan2015pcanet} has proposed to construct much more compact (arguably the simplest) networks using principal components of the input data as the convolution kernels, known as PCANets. However, in both cases of ScatteringNets and PCANets the forward-constructed networks seek a representation of the data that is not directly related to a specific (classification) task. To resolve limitations of both the ScatteringNet and the PCANet, {\em this work shows how to construct a data-dependent deep convolution network in a forward fashion that leads to a discriminative representation directly beneficial to the classification task.} More discussion about the relationships between our construction and these networks will be given in Section \ref{sec:architecture-comparison} and Appendix \ref{app:ReduNet-Scattering}.

\subsection{A Principled Objective for Discrimiative Representation via Compression}\label{sec:principled-objective-via-compression} 
Whether the given data $\X$ of a mixed distribution $\mathcal{D}  = \{\mathcal{D}^j\}_{j=1}^k$ can be effectively classified depends on how separable (or discriminative) the component distributions $\mathcal{D}^j$ are (or can be made).
One popular working assumption is that the distribution of each class has relatively {\em low-dimensional} intrinsic structures. There are several reasons why this assumption is plausible: 1). High dimensional data are highly redundant; 2). Data that belong to the same class should be similar and correlated to each other; 3). Typically we only care about equivalent structures of $\x$ that are invariant to certain classes of deformation and augmentations. 
Hence we may assume the distribution $\mathcal{D}^j$ of each class has a support on a low-dimensional submanifold, say $\mathcal{M}^j$ with dimension $d_j \ll D$, and the distribution $\mathcal D$ of $\x$ is supported on the mixture of those submanifolds, $\mathcal M = \cup_{j=1}^k \mathcal{M}^j$,  in the high-dimensional ambient space $\Re^D$, as illustrated in Figure~\ref{fig:low-dim} left. 

\begin{figure*}[t]
\begin{center}
    \subfigure{
    \tdplotsetmaincoords{60}{110}
\begin{tikzpicture}[scale=1.5]
  \coordinate (o1) at (0,0);
  \coordinate (o2) at (4.8,0.8);
  \coordinate (o3) at (6.1,1.2);
  
  % figure transitions
  \draw[very thick,->] (2.9,0.9) .. controls (3.5,0.7) .. (3.9,0.7);
  \draw (3.3,0.75) node[anchor=north]{$f(\x,{\bm \theta})$};
  % left figure
  \tdplotsetrotatedcoords{100}{0}{0}
  \tdplotsetrotatedcoordsorigin{(o1)}
  \begin{scope}[tdplot_rotated_coords]
  \draw[thick,->] (0,0,0) -- (2,0,0);%  node[anchor=north east]{$x$};
  \draw[thick,->] (0,0,0) -- (0,2.0,0);%  node[anchor=north west]{$y$};
  \draw[thick,->] (0,0,0) -- (0,0,2);%  node[anchor=south]{$z$}; 
  \draw (0.3,0.1,1.7) node{$\Re^D$};
  \draw (3.9,0.8,1.7) node{$\Re^n$};

  % data manifold
  \draw[thick] (0.5,0.5,0.5) .. controls (1.2,1,0.55) .. (2.4,0.5,0.6);
  \draw[thick] (2.4,0.5,0.6) .. controls (2.4,1.5,0.55) .. (2.9,2.1,0.5);
  \draw[thick] (2.9,2.1,0.5) .. controls (2.0,2.5,0.65) .. (0.8,2.4,0.8);
  \draw[thick] (0.8,2.4,0.8) .. controls (0.55,1.5,0.65) .. (0.5,0.5,0.5);
  \draw (0.6,0.6,0.55) node[anchor=south west]{\small $\mathcal{M}$};
  
  % data subspaces
  \draw[red] (0.85,0.85,0.75) .. controls (1.6,1.5,0.8) .. (2.6,1.5,0.8);
  \draw[black!30!green] (0.9,1.8,0.75) .. controls (1.6,1.5,0.8) .. (2.4,0.8,0.8);
  \draw[black!10!blue] (1.6,0.6,0.75) .. controls (1.7,1.5,0.8) .. (1.7,2.1,0.8);
  \draw (2.2,2.2,0.55) node[red]{\small{$ \mathcal{M}^1$}};
  \draw (2.15,1.1,0.55) node[black!30!green]{\small $\mathcal{M}^2$};
  \draw (1.35,2.4,0.55) node[black!10!blue]{\small $\mathcal{M}^j$}; 
  \draw (1.4,1.4,0.55) node{$\x^i$}; 
  
  % data points on subspaces
  \def\points{(0.95,0.93,0.75), (1.05,1.01,0.75), (1.4,1.35,0.75), (1.75,1.5,0.75), (2.0,1.55,0.75), (2.3,1.56,0.75)}
  \foreach \p in \points {
    \draw plot [mark=*, mark size=0.8, mark options={draw=red, fill=red}] coordinates{\p}; 
  }
  \def\points{(0.95,1.75,0.75), (1.15,1.7,0.75), (1.4,1.6,0.75), (1.75,1.4,0.75), (2.0,1.2,0.75), (2.3,0.95,0.75)}
  \foreach \p in \points {
    \draw plot [mark=*, mark size=0.8, mark options={draw=black!30!green, fill=black!30!green}] coordinates{\p};
  }
  \def\points{(1.61,0.7,0.75), (1.62,0.8,0.75), (1.64,1.1,0.75), (1.65,1.4,0.75), (1.66,1.8,0.75), (1.67,2.1,0.75)}
  \foreach \p in \points {
    \draw plot [mark=*, mark size=0.8, mark options={draw=black!10!blue, fill=black!10!blue}] coordinates{\p};
  }
  \end{scope}
  
  % bottom right figure
  \tdplotsetrotatedcoords{-30}{0}{10}
  \tdplotsetrotatedcoordsorigin{(o2)}
  \begin{scope}[tdplot_rotated_coords]
    \draw [red, ->] (-1.2,0,0) -- (1.2,0,0) node[anchor=north east]{$\mathcal{S}^1$};
    \draw [black!30!green, ->] (0,-1,0) -- (0,1,0) node[anchor=north west]{$\mathcal{S}^2$};
    \draw [black!10!blue, ->] (0,0,-1) -- (0,0,1) node[anchor=east]{$\mathcal{S}^j$};
    \draw (0,0,0.25) -- (-0.25,0,0.25) -- (-0.25,0,0);
    \def\points{(-0.8,0,0), (-0.6,0,0), (-0.3,0,0), (-0.1,0,0), (0.3,0,0), (0.6,0,0)}
    \foreach \p in \points {
      \draw plot [mark=*, mark size=0.8, mark options={draw=red, fill=red}] coordinates{\p}; 
    }
    \def\points{(0,-0.85,0), (0,-0.6,0), (0,-0.3,0), (0,0.1,0), (0,0.4,0), (0,0.6,0)}
    \foreach \p in \points {
      \draw plot [mark=*, mark size=0.8, mark options={draw=black!30!green, fill=black!30!green}] coordinates{\p};
    }
    \def\points{(0,0,-0.9), (0,0,-0.7), (0,0,-0.2), (0,0,-0.1), (0,0,0.4), (0,0,0.8)}
    \foreach \p in \points {
    \draw plot [mark=*, mark size=0.8, mark options={draw=black!10!blue, fill=black!10!blue}] coordinates{\p};
  }
  \draw (0.3,0.0,-0.25) node{$\z^i$}; 
  \end{scope}
\end{tikzpicture}}
    \hspace{5mm} \subfigure{\includegraphics[width=0.27\textwidth]{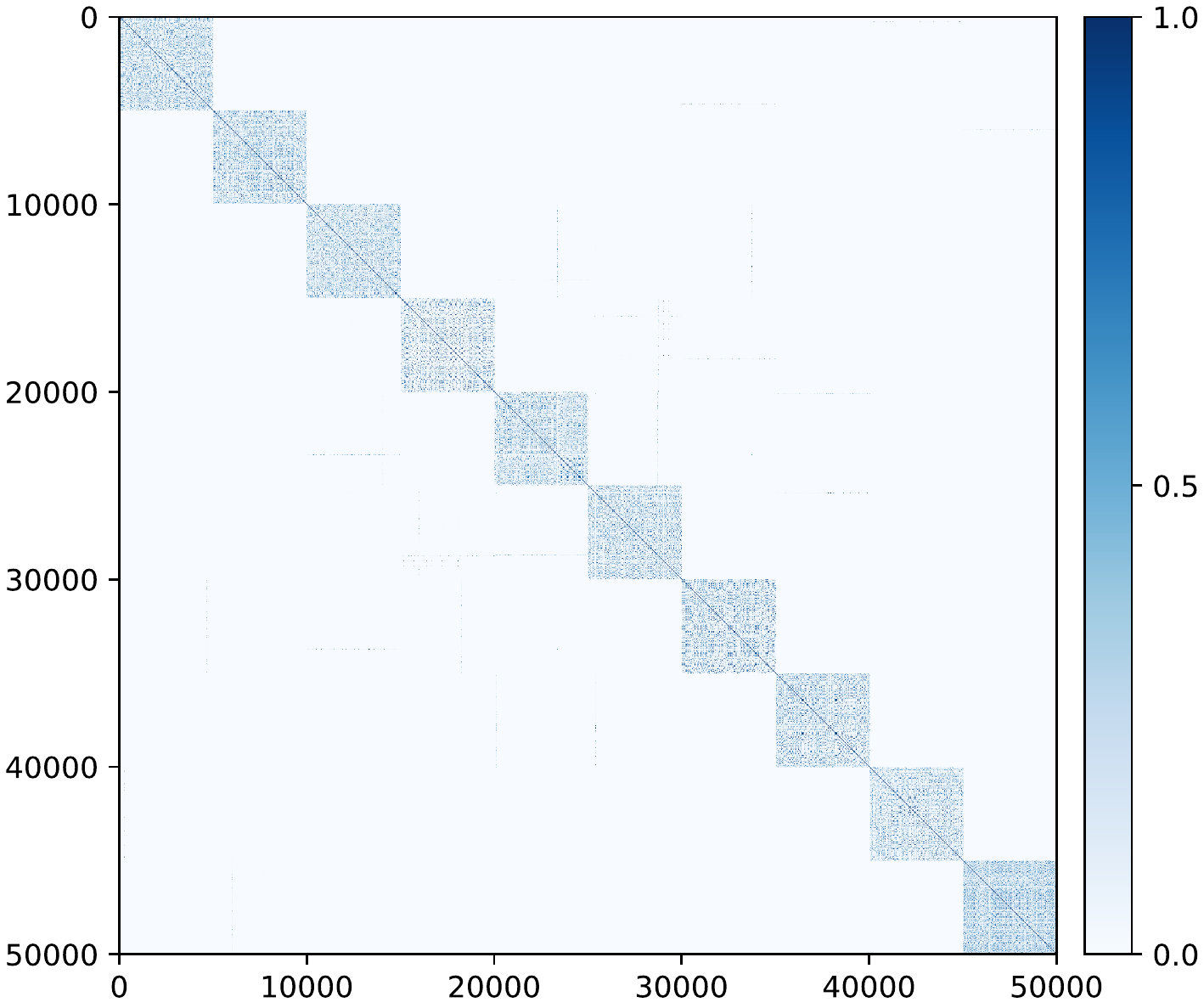}}
    %   \vskip -0.1in
\caption{\small \textbf{Left and Middle:} The distribution $\mathcal D$ of high-dim data $\x\in \Re^D$ is supported on  a manifold $\mathcal{M}$ and its classes on low-dim submanifolds $\mathcal{M}^j$, we learn a map $f(\x, \bm  \theta)$ such that $\z^i = f(\x^i, \bm \theta)$ are on a union of maximally uncorrelated subspaces $\{\mathcal{S}^j\}$. \textbf{Right:} Cosine similarity between learned features by our method for the CIFAR10 training dataset. Each class has 5,000 samples and their features span a subspace of over 10 dimensions (see Figure~\ref{fig:train-test-loss-pca-3}).}
\label{fig:low-dim}
\end{center}
\vskip -0.2in
\end{figure*}

With the manifold assumption in mind, we want to learn a mapping {$\z = f(\x, \bm \theta)$} that maps each of the submanifolds $\mathcal{M}^j \subset \Re^D$  to a {\em linear} subspace $\mathcal{S}^j \subset \Re^n$ (see Figure~\ref{fig:low-dim} middle). To do so, we require our learned representation to have the following properties, called a {\em linear discriminative representation} (LDR): 
\begin{enumerate}
    \item {\em Within-Class Compressible:} Features of samples from the same class/cluster should be relatively {\em correlated} in a sense that they belong to a low-dimensional linear subspace.\footnote{Linearity is a desirable property for many reasons. For example, in engineering, it makes interpolating/extrapolating data easy via superposition which is very useful for generative purposes. There is also scientific evidence that the brain represents objects such as faces as a linear subspace \citep{Chang-Cell-2017}.}
    \item {\em Between-Class Discriminative:} Features of samples from different classes/clusters should be highly {\em uncorrelated} and belong to different low-dimensional linear subspaces.
    \item {\em Diverse Representation:} Dimension (or variance) of features for each class/cluster should be {\em as large as possible} as long as they stay uncorrelated from the other classes.
\end{enumerate}
The third item is desired because we want the learned features to reveal all possible causes why this class is different from all other classes\footnote{For instance, to tell ``Apple'' from ``Orange'', not only do we care about the color, but also the shape and the leaves. Interestingly, some images of computers are labeled as ``Apple'' too.} (see Section \ref{sec:principled-objective} for more detailed justification). 

Notice that the first two items align well with the spirit of the classic {\em  linear discriminant analysis} (LDA)  \citep{HastieTiFr09}. Here, however, although the intrinsic structures of each class/cluster may be low-dimensional, they are by no means simply linear (or Gaussian) in their original representation $\x$ and they need be to made linear through a nonlinear transform $\z = f(\x)$. Unlike LDA (or similarly SVM\footnote{For instance, \cite{RSVM} has proposed to use SVMs in a recursive fashion to build (deep) nonlinear classifiers for complex data.}), here we do not directly seek a discriminant (linear) classifier. Instead, we use the nonlinear transform to seek a {\em linear discriminative representation}\footnote{In this work, to avoid confusion, we always use the word ``discriminative'' to describe a representation, and ``discriminant'' a classifier. To some extent, one may view classification and representation are {\em dual} to each other. Discriminant methods (LDA or SVM) are typically more  natural for two-class settings (despite many extensions to multiple classes), whereas discriminative representations are natural for multi-class data and help reveal the data intrinsic structures more directly.} (LDR) for the data such that the subspaces that represent all the classes are  maximally incoherent.  To some extent, the resulting multiple subspaces $\{\mathcal{S}^j\}$ can be viewed as discriminative {\em generalized principal components} \citep{GPCA} or, if orthogonal, {\em independent components} \citep{HYVARINEN2000411} of the resulting features $\z$ for the original data $\bm x$. As we will see in Section \ref{sec:vector}, deep networks precisely play the role of modeling this nonlinear transform from the data to an LDR!

For many clustering or classification tasks (such as object detection in images), we consider two samples as {\em equivalent} if they differ by certain classes of domain deformations or augmentations $\cT = \{\tau \}$. Hence, we are only interested in low-dimensional structures that are {\em invariant} to such deformations~(i.e., $\x \in \mathcal{M}$ iff $\tau(\x) \in \mathcal{M}$ for all $\tau \in \cT$\,), 
which are known to have sophisticated geometric and topological structures \citep{Wakin-2005} and can be difficult to learn precisely in practice even with rigorously designed CNNs \citep{Cohen-ICML-2016, cohen2019general}. In our framework, this is formulated in a very natural way: all equivariant instances are to be embedded into the same subspace, so that the subspace itself is invariant to the  transformations under consideration (see Section \ref{sec:shift-invariant}).

There are previous attempts to directly enforce subspace structures on features learned by a deep network for supervised \citep{lezama2018ole} or unsupervised learning \citep{Ji-NIPS2017,zhang2018scalable,peng2017deep,zhou2018deep,zhang2019neural,zhang2019self,lezama2018ole}. However, the {\em self-expressive} property of subspaces exploited by these work does not enforce all the desired properties listed above as shown by \cite{haeffele2020critique}. Recently \cite{lezama2018ole} has explored a nuclear norm based geometric loss to enforce orthogonality between classes, but does not promote diversity in the learned representations, as we will soon see. Figure~\ref{fig:low-dim} right illustrates a representation learned by our method on the CIFAR10 dataset. More details can be found in the experimental Section \ref{sec:experiments}. 

In this work, to learn a discriminative linear  representation for intrinsic low-dimensional structures from high-dimensional data, we propose an information-theoretic measure that maximizes the coding rate difference between the whole dataset and the sum of each individual class, known as {\em rate reduction}. This new objective provides a more unifying view of above objectives such as cross-entropy, information bottleneck, contractive and contrastive learning. {\em We can rigorously show that when the intrinsic dimensions the submanifolds are known and this objective is optimized, the resulting representation indeed has the desired properties listed above.}

\subsection{A Constructive Approach to Deep Networks via Optimization}
Despite tremendous advances made by numerous empirically designed deep networks, there is still a lack of rigorous theoretical justification of the need or reason for ``deep layered'' architectures\footnote{After all, at least in theory, \cite{Barron1991ApproximationAE} already proved back in early 90's that a single-layer neural network can efficiently approximate a very general class of functions or mappings.} and a lack of fundamental understanding of the roles of the associated operators, e.g. linear (multi-channel convolution) and nonlinear activation in each layer. Although many works mentioned in Section \ref{sec:intro-architecture} suggest the layered architectures might be interpreted as unrolled optimization algorithms (say for sparse coding), there is lack of explicit and direct connection between such algorithms and the objective (say, minimizing the cross entropy for classification). As a result, there is lack of principles for network design: How wide or deep should the network be? What has improved  about features learned between adjacent layers? Why are multi-channel convolutions necessary for image classification instead of separable convolution kernels\footnote{Convolution kernels/dictionaries such as wavelets have been widely practiced to model and process 2D signals such as images. Convolution kernels used in modern CNNs are nevertheless multi-channel, often involving hundreds of channels altogether at each layer. Most theoretical modeling and analysis for images have been for the 2D kernel case. The precise reason and role for multi-channel convolutions have been elusive and equivocal. This work aims to provide a rigorous explanation.}? Can the network parameters be better initialized than purely randomly?

In this paper, we attempt to provide some answers to the above questions and offer a plausible interpretation of deep neural networks by deriving a class of deep (convolution) networks from the perspective of learning an LDR for the data. We contend that all key features and structures of modern deep (convolution) neural networks can be naturally derived from optimizing the {\em rate reduction} objective, which seeks an optimal (invariant) linear discriminative  representation of the data. More specifically, the basic iterative {\em projected gradient ascent} scheme for optimizing this objective naturally takes the form of a deep neural network, one layer per iteration. In this framework, the width of the network assumes a precise role as the {\em statistical resource} needed to preserve the low-dimensional (separable) structures of the data whereas the network depth as the {\em computational resource} needed to map the (possibly nonlinear) structures to a linear discriminative representation. 

This principled approach brings a couple of nice surprises: First, architectures, operators, and parameters of the network can be constructed explicitly layer-by-layer in a {\em forward propagation} fashion, and all inherit precise optimization, statistical and geometric interpretation. As a result, the so constructed ``white-box'' deep network already gives a rather  discriminative representation for the given data even {\em without any back propagation training} (see Section \ref{sec:vector}). Nevertheless, the so-obtained network is actually amenable to be further fine-tuned by back propagation for better performance, as our experiments will show. Second, in the case of seeking a representation {\em rigorously} {invariant to shift or  translation}, the network naturally lends itself to a multi-channel convolutional network (see Section \ref{sec:shift-invariant}). Moreover, the derivation indicates such a convolutional network is computationally more efficient to construct in the {\em spectral (Fourier) domain}, analogous to how neurons in the visual cortex encode and transit information with their spikes \citep{spiking-neuron-book,Belitski5696}.

\section{The Principle of Maximal Coding Rate Reduction}\label{sec:principled-objective}

\subsection{Measure of Compactness for Linear  Representations}\label{sec:lossy-coding}
Although the three properties listed in Section \ref{sec:principled-objective-via-compression} for linear discriminative representations (LDRs) are all highly desirable for the latent representation $\z$, they are by no means easy to obtain: Are these properties compatible so that we can expect to achieve them all at once? If so, is there  a {\em simple but principled} objective that can measure the goodness of the resulting representations in terms of all these properties? The key to  these questions {is to find} a principled ``measure of compactness'' for the distribution of a random variable $\z$ or from its finite samples $\Z$. Such a measure should directly and accurately characterize intrinsic geometric or statistical properties of the distribution, in terms of its intrinsic dimension or {volume}. Unlike cross-entropy \eqref{eqn:cross-entropy} or information bottleneck \eqref{eqn:information-bottleneck}, such a measure should not depend exclusively on class labels so that it can work in all supervised, self-supervised, semi-supervised, and unsupervised settings.

\paragraph{Measure of compactness from information theory. } 
In information theory \citep{Thomas-Cover}, the notion of entropy $H(\z)$ is designed to be such a measure. 
However, entropy is not  well-defined for continuous random variables with degenerate distributions. 
This is unfortunately the case for data with relatively low intrinsic dimension. 
The same difficulty resides with evaluating mutual information $I(\x, \z)$ for degenerate distributions. 
To alleviate this difficulty, another related concept in information theory, more specifically in lossy data compression, that measures the ``compactness'' of a random distribution is the so-called {\em rate distortion} \citep{Thomas-Cover}: Given a random variable $\z$ and a prescribed precision $\epsilon >0$, the rate distortion $R(\z, \epsilon)$ is the minimal number of binary bits needed to encode $\z$ such that the expected decoding error is less than $\epsilon$, i.e., the decoded $\widehat \z$ satisfies $\mathbb E[\|\z - \widehat \z \|_2] \le \epsilon$. This quantity has been shown to be useful in explaining feature selection in deep networks \citep{rate-distortion}. However, the rate distortion of an arbitrary high-dimensional distribution is intractable, if not impossible, to compute, except for simple distributions such as discrete and Gaussian.\footnote{The same difficulties lie with the information bottleneck framework \citep{Tishby-ITW2015} where one needs to evaluate (difference of) mutual information for degenerate distributions in a high-dimensional space \eqref{eqn:information-bottleneck}.} Nevertheless, as we have discussed in Section \ref{sec:principled-objective-via-compression}, our goal here is to learn a final representation of the data as linear subspaces. Hence we only need a measure of compactness/goodness for this class of distributions. Fortunately, as we will explain below, the rate distortions for this class of distributions can be accurately and easily computed, actually in closed form!

\paragraph{Rate distortion for finite samples on a subspace.} 
Another practical difficulty in evaluating the rate distortion is that we normally do not know the distribution of $\z$.  Instead, we have a finite number of samples as learned representations $\{\z^{i} = f(\x^i, \bm \theta) \in \R^{n}, i = 1,\ldots, m\}$, for the given data samples $\X = [\x^1, \ldots, \x^m]$. Fortunately, \citet{ma2007segmentation} provides a precise estimate on the number of binary bits needed to encode finite samples from a subspace-like distribution. In order to encode the learned representation $\Z = [\z^1, \dots, \z^m]$ up to a precision, say $\epsilon$, the total number of bits needed is given by the following expression:
$\cL(\Z, \epsilon) \doteq \left(\frac{m + n}{2}\right)\log \det\left(\I + \frac{n}{m\epsilon^{2}}\Z\Z^{*}\right)$.\footnote{We use superscript $^*$ to indicate (conjugate) transpose of a vector or a matrix} This formula can be derived either by packing $\epsilon$-balls into the space spanned by $\Z$ as a Gaussian source or by computing the number of bits needed to quantize the SVD of $\Z$ subject to the precision, see \citet{ma2007segmentation} for proofs. Therefore, the compactness of learned features {\em as a whole} can be measured in terms of the average coding length per sample (as the sample size $m$ is large),  a.k.a. the {\em coding rate} subject to the distortion $\epsilon$:
\begin{equation}
R(\Z,\epsilon) \doteq \frac{1}{2}\log\det\left(\I + \frac{n}{m\epsilon^{2}}\Z\Z^{*}\right).
\label{eqn:coding-length-eval}
\end{equation} 
See Figure \ref{fig:lossy-diagram} for an illustration.

\begin{figure}[t]
  \begin{center}
    \includegraphics[width=0.45\textwidth]{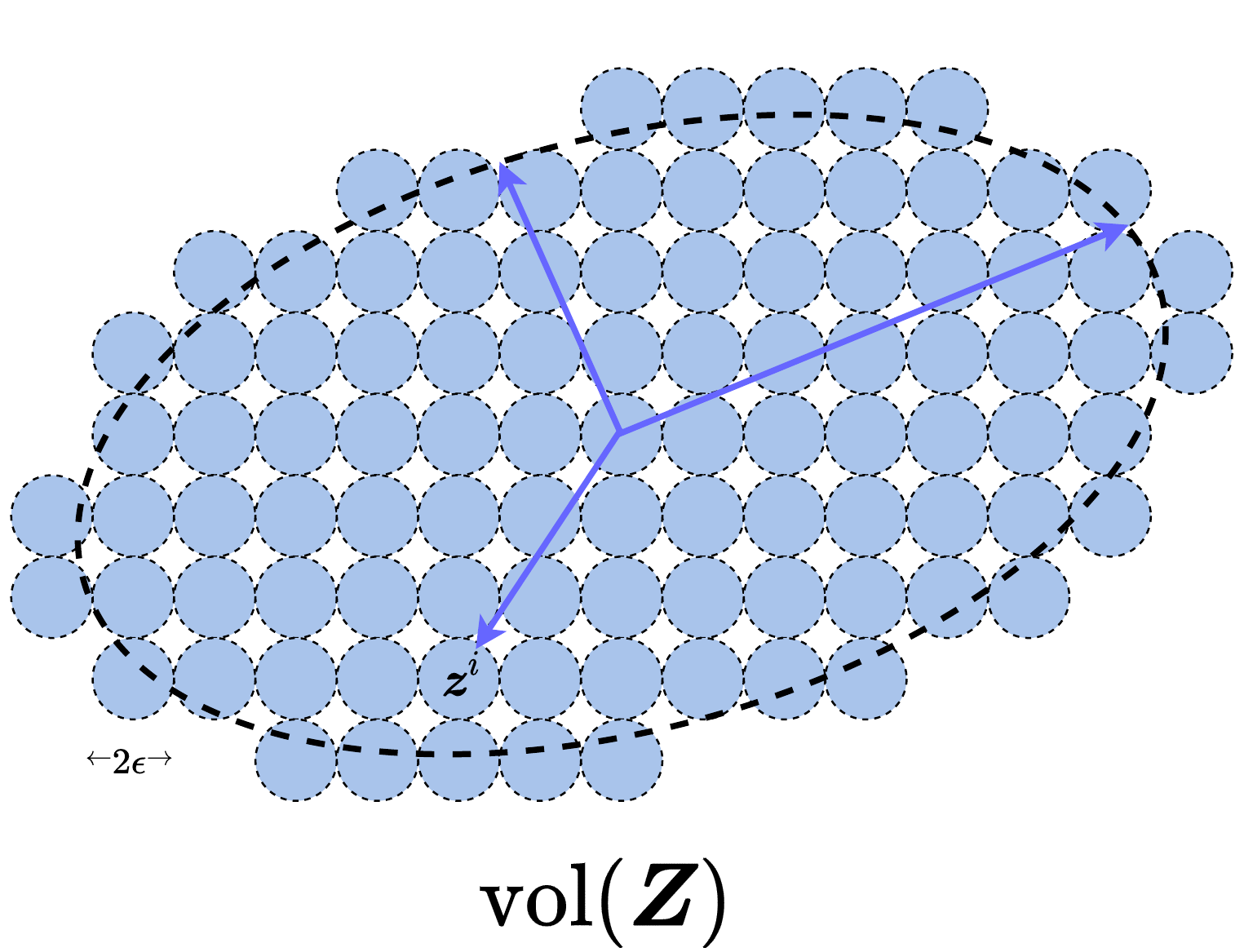}
  \end{center}
  \vspace{-0.2in}
\caption{\small Lossy coding scheme: Given a precision $\epsilon$, we pack the space/volume spanned by the data $\bm Z$ with small balls of diameter $2\epsilon$. The number of balls needed to pack the space gives the number of bits needed to record the location of each data point $\bm z^i$, up to the given precision $\mathbb E[\|\z - \widehat \z \|_2] \le \epsilon$.}\label{fig:lossy-diagram}
\end{figure}

\paragraph{Rate distortion of samples on a mixture of subspaces.} {In general,}  the features $\Z$ of multi-class data may belong to multiple low-dimensional subspaces. To evaluate the rate distortion of such mixed data {\em more accurately}, we may partition the data $\Z$ into multiple subsets: $\Z = \Z^1 \cup \Z^2\cup \ldots \cup \Z^k$, {with} each $\Z^j$ containing samples in one low-dimensional subspace.\footnote{By a little abuse of notation, we here use $\bm Z$ to denote the set of all samples from all the $k$ classes. For convenience, we often represent $\bm Z$ as a matrix whose columns are the samples.} So the above coding rate \eqref{eqn:coding-length-eval} is accurate for each subset. For convenience, let $\bm{\Pi} = \{\bm{\Pi}^j \in \Re^{m \times m}\}_{j=1}^{k}$ be a set of diagonal matrices whose diagonal entries encode the membership of the $m$ samples in the $k$ classes. More specifically, the diagonal entry $\bm \Pi^j(i,i)$ of $\bm \Pi^j$ indicates the probability of sample $i$ belonging to subset $j$. Therefore $\bm{\Pi}$ lies in a simplex: ${\Omega} \doteq \{\bm{\Pi} \mid \bm{\Pi}^j \ge \mathbf{0}, \; \bm{\Pi}^1 + \cdots + \bm{\Pi}^k = \I\}$.
Then, according to \citet{ma2007segmentation}, with respect to this partition, the average number of bits per sample (the coding rate) is
\begin{equation}
R_c(\Z,  \epsilon \mid \bm{\Pi}) \doteq \sum_{j=1}^{k}
\frac{\tr(\bm{\Pi}^j)}{2m}\log\det\left(\I + \frac{n}{\tr(\bm{\Pi}^j)\epsilon^{2}}\Z\bm{\Pi}^j\Z^{*}\right).
\label{eqn:compress-loss-eval}
\end{equation}
When $\Z$ is given, $R_c(\Z, \epsilon \mid \bm{\Pi})$ is a concave function of $\bm{\Pi}$.
The function $\log\det(\cdot)$ in the above expressions has been long known as an effective heuristic for rank minimization problems, with guaranteed convergence to local minimum \citep{fazel2003log-det}. {As it nicely characterizes the rate distortion of Gaussian or subspace-like distributions, $\log\det(\cdot)$ can be very effective in clustering or classification of mixed data \citep{ma2007segmentation,wright2008classification,kang2015logdet}.}

\subsection{Principle of Maximal Coding Rate Reduction}\label{sec:principle-mcr2}
On one hand, for learned features to be discriminative, features of different classes/clusters are preferred to be {\em maximally incoherent} to each other. Hence they together should span a space of the largest possible volume (or dimension) and the coding rate of the whole set $\Z$ should be as large as possible. On the other hand, learned features of the same class/cluster should be highly correlated and coherent. Hence each class/cluster should only span a space (or subspace) of a very small volume and the coding rate should be as small as possible. Shortly put, learned features should follow the basic rule that {\em similarity contracts and dissimilarity contrasts.}

To be more precise, a good (linear) discriminative representation $\Z$ of $\X$ is one such that, given a partition $\bm{\Pi}$ of $\Z$, achieves a large difference between the coding rate for the whole and that for all the subsets:
\begin{equation}
\Delta R(\Z, \bm{\Pi}, \epsilon) \doteq R(\Z, \epsilon) - R_c(\Z, \epsilon \mid  \bm{\Pi}).
\label{eqn:coding-length-reduction}
\end{equation}
If we choose our feature mapping to be $\z = f(\x,\bm \theta)$ (say modeled by a deep neural network), the overall process of the feature representation and the resulting rate reduction w.r.t. certain partition $\bm{\Pi}$ can be illustrated by the following diagram:
\begin{equation}
    \X 
    \xrightarrow{\hspace{2mm} f(\x, \,\bm \theta)\hspace{2mm}} \Z(\bm \theta) \xrightarrow{\hspace{2mm} \bm{\Pi},\epsilon \hspace{2mm}} \Delta R(\Z(\bm \theta), \bm{\Pi}, \epsilon).
    \label{eqn:flow}
\end{equation}

\paragraph{The role of normalization.} Note that $\Delta R$ is {\em monotonic} in the scale of the features $\Z$. So to make the amount of reduction comparable between different representations,
we need to {\em normalize the scale} of the learned features, either by imposing the Frobenius norm of each class $\Z^j$ to scale with the number of features in $\Z^j \in \mathbb R^{n \times m_j}$: $\|\Z^j\|_F^2 = m_j$ or by normalizing each feature to be on the unit sphere: $\z^i \in \mathbb{S}^{n-1}$. This can be compared to the use of ``batch normalization'' in the practice of training deep neural networks \citep{ioffe2015batch}.\footnote{Notice that normalizing the scale of the learned representations helps ensure that the mapping of each layer of the network is approximately {\em isometric}. As it has been shown in the work of \cite{ISOnet}, ensuring the isometric property alone is adequate to ensure good performance of deep networks, even without the batch normalization.} Besides normalizing the scale, normalization could also act as a precondition mechanism that helps accelerate gradient descent \citep{liu2021convolutional}.\footnote{Similar to the role that preconditioning plays in the classic conjugate gradient descent method \citep{Shewchuk-CG,NocedalJ2006}.} This interpretation of normalization becomes even more pertinent when we realize deep networks as an iterative scheme to optimize the rate reduction objective in the next two sections. In this work, to simplify the analysis and derivation, we adopt the simplest possible normalization schemes, by simply enforcing each sample on a sphere or the Frobenius norm of each subset being a constant.\footnote{In practice, to strive for better performance on specific data and tasks, many other normalization schemes can be considered such as layer normalization \citep{ba2016layer}, instance normalization \citep{ulyanov2016instance}, group normalization \citep{wu2018group}, spectral normalization \citep{miyato2018spectral}. }  

Once the representations can be compared fairly, our goal becomes to learn a set of features $\Z(\bm \theta) = f(\X, \bm \theta)$ and their partition $\bm \Pi$ (if not given in advance) such that they maximize the reduction between the coding rate of all features and that of the sum of features w.r.t. their classes:
\vspace*{0.05in} 
\begin{equation}
 \max_{\bm \theta, \,\bm{\Pi}} \;  \Delta R\big(\Z(\bm \theta), \bm{\Pi}, \epsilon\big) = R(\Z(\bm \theta), \epsilon) - R_c(\Z(\bm \theta),  \epsilon \mid \bm{\Pi}), \quad \mbox{s.t.} \ \ \,  \|\Z^j(\bm \theta)\|_F^2 = m_j, \, \bm{\Pi} \in {\Omega}.
\label{eqn:maximal-rate-reduction}
\end{equation}
We refer to this as the principle of {\em maximal coding rate reduction} (MCR$^2$), 
{an embodiment of Aristotle's famous quote:  ``{\em the whole is greater than the sum of the  parts.}''}  Note that for the clustering purpose alone, one may only care about the sign of $\Delta R$ for deciding whether to partition the data or not, which leads to the greedy algorithm in \citep{ma2007segmentation}. More specifically, in the context of clustering {\em finite} samples, one needs to use the more precise measure of the coding length mentioned earlier, see \citep{ma2007segmentation} for more details. Here to seek or learn the most discriminative representation, we further desire that {\em  the whole is maximally greater than the sum of the parts}. This principle is illustrated with a simple example in Figure \ref{fig:sphere-packing}.  
\begin{figure}[t]
  \begin{center}
    \includegraphics[width=0.65\textwidth]{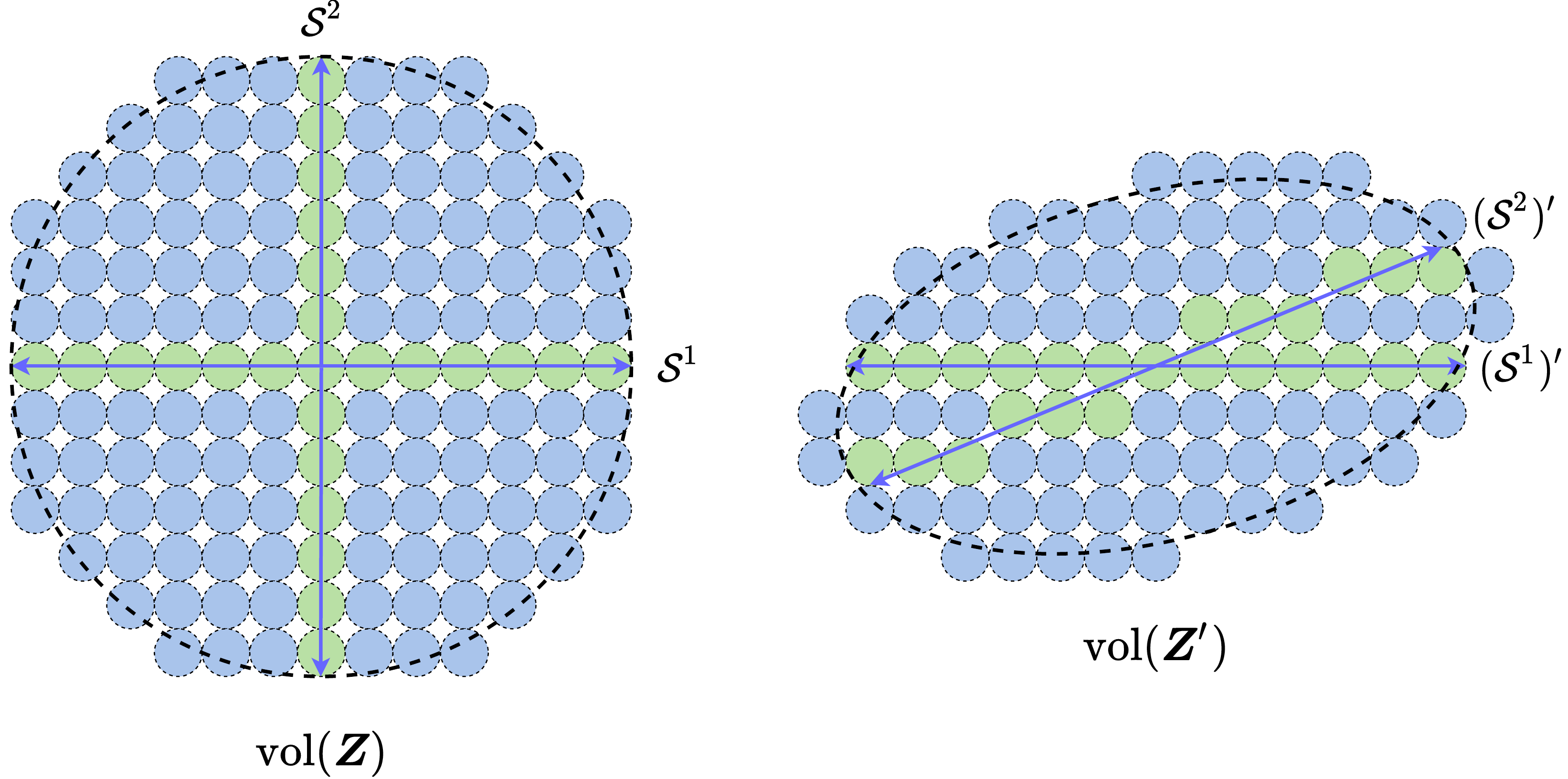}
  \end{center}
    \vspace{-0.2in}
\caption{\small Comparison of two learned representations $\Z$ and $\Z'$ via reduced rates: $R$ is the number of $\epsilon$-balls packed in the joint distribution and $R_c$ is the sum of the numbers for all the subspaces (the green balls). $\Delta R$ is their difference (the number of blue balls). The MCR$^2$ principle prefers $\Z$ (the left one).}\label{fig:sphere-packing}
\end{figure}

\paragraph{Relationship to information gain.} 
The maximal coding rate reduction can be viewed as a generalization to {\em information gain} (IG), which aims to maximize the reduction of entropy of a random variable, say $\bm z$, with respect to an observed attribute, say $\bm \pi$:
$
\max_{\bm \pi} \; \mbox{IG}(\bm z, \bm \pi) \doteq H(\bm z) - H(\bm z \mid \bm \pi),
$
i.e., the {\em mutual information} between $\z$ and $\bm \pi$ \citep{Thomas-Cover}. Maximal information gain has been widely used in areas such as decision trees \citep{decision-trees}. However, MCR$^2$ is used differently in several ways: 1) One typical setting of MCR$^2$ is when the data class labels are given, i.e. $\bm \Pi$ is known, MCR$^2$ focuses on learning representations $\bm z(\bm \theta)$ rather than fitting labels. 2) In traditional settings of IG, the number of attributes in $\bm z$ cannot be so large and their values are discrete (typically binary). Here the ``attributes'' $\bm \Pi$  represent the probability of a multi-class partition for all samples and their values can even be continuous. 3) As mentioned before, entropy $H(\bm z)$ or mutual information $I(\bm z, \bm \pi)$ \citep{hjelm2018learning} is not well-defined for degenerate continuous distributions or hard to compute for high-dimensional distributions,  whereas here the rate distortion $R(\bm z, \epsilon)$ is and can be accurately and efficiently computed for (mixed)  subspaces, at least.

\subsection{Properties of the Rate Reduction Function}
\label{sec:rate-reduction-properties}
In theory, the MCR$^2$ principle \eqref{eqn:maximal-rate-reduction} is very general and can be applied to representations $\Z$ of {\em any} distributions with {\em any} attributes $\bm \Pi$ as long as the rates $R$ and $R_c$ for the distributions can be accurately and efficiently evaluated. The optimal representation $\Z_\star$ and partition $\bm \Pi_\star$ should have some interesting geometric and statistical properties. We here reveal nice properties of the optimal representation with the special case of linear subspaces, which have many important use  cases in machine learning. When the desired representation for $\Z$ is multiple subspaces (or Gaussians), the rates $R$ and $R_c$ in \eqref{eqn:maximal-rate-reduction} are given by \eqref{eqn:coding-length-eval}
and \eqref{eqn:compress-loss-eval}, respectively. At any optimal representation, denoted as $\Z_\star = \Z^1_\star\cup \cdots \cup \Z^k_\star \subset \Re^n$, it should achieve the maximal rate reduction. One can show that $\Z_\star$ has the following desired properties (see Appendix \ref{ap:rate-reduction} for a formal statement and detailed proofs).

\begin{theorem}[Informal Statement]
Suppose $\Z_\star = \Z^1_\star\cup \cdots \cup \Z^k_\star$ is the optimal solution that maximizes the rate reduction~\eqref{eqn:maximal-rate-reduction} with  the rates $R$ and $R_c$ given by \eqref{eqn:coding-length-eval}
and \eqref{eqn:compress-loss-eval}. Assume that the optimal solution satisfies $\rank{(\Z^j_\star)}\le d_j$.\footnote{Notice that here we assume we know a good upper bound for the dimension $d_j$ of each class. This requires us to know the intrinsic dimension of the submanifold $\mathcal{M}^j$. In general, this can be a very challenging problem itself even when the submanifold is linear but noisy, which is still an active research topic \citep{hong2020selecting}. Nevertheless, in practice, we can decide the dimension empirically through ablation experiments, see for example Table \ref{table:ablation-supervise} for experiments on the CIFAR10 dataset.} We have:
% \vspace{-2mm}
\begin{itemize}
\item {\em Between-class Discriminative}: As long as the ambient space is adequately large ($n \ge \sum_{j=1}^{k} d_j$), the subspaces are all orthogonal to each other, {\em i.e.} $(\Z^i_\star)^{*} \Z^{j}_\star = \bm 0$ for $i \not= j$.
\item {\em Maximally Diverse Representation}: 
As long as the coding precision is adequately high, i.e., $\epsilon ^4 < \min_{j}\Big\{ \frac{m_j}{m}\frac{n^2}{d_j^2}\Big\}$, each subspace achieves its maximal dimension, i.e. $\rank{(\Z^j_\star)}= d_j$. In addition, the largest $d_j - 1$ singular values of $\Z^{j}_\star$ are equal. \label{thm:MCR2-properties}
\end{itemize}
\end{theorem}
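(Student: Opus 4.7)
My plan is to (i) decouple the objective across the $k$ classes via a sharp positive-semidefinite determinant inequality, (ii) read off the orthogonality conclusion from the equality condition of that inequality, and (iii) reduce the remaining per-class problem to a one-dimensional maximization over singular values.

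For a hard partition, $\Z\bm{\Pi}^j\Z^* = \Z^j(\Z^j)^*$ and $\tr(\bm{\Pi}^j) = m_j$, so
\[
\Delta R = \tfrac{1}{2}\log\det\!\Bigl(\I + \tfrac{n}{m\epsilon^2}\sum_{j}\Z^j(\Z^j)^*\Bigr) - \sum_{j}\tfrac{m_j}{2m}\log\det\!\Bigl(\I + \tfrac{n}{m_j\epsilon^2}\Z^j(\Z^j)^*\Bigr).
\]
The key inequality is
\[
\log\det(\I + \A + \B) \le \log\det(\I + \A) + \log\det(\I + \B), \qquad \A,\B \succeq 0,
\]
which follows from the identity $\det(\I+\A+\B) = \det(\I+\A)\det(\I + (\I+\A)^{-1}\B)$ combined with the PSD ordering $(\I+\A)^{-1/2}\B(\I+\A)^{-1/2} \preceq \B$, and has equality iff $\A\B = 0$, i.e.\ orthogonal ranges. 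Iterating gives $R(\Z,\epsilon) \le \sum_{j}\tfrac{1}{2}\log\det\bigl(\I+\tfrac{n}{m\epsilon^2}\Z^j(\Z^j)^*\bigr)$, with equality iff the column spaces of the $\Z^j$ are mutually orthogonal. Feasibility of this equality condition requires $n \ge \sum_j d_j$, which yields the \emph{between-class discriminative} conclusion and reduces $\Delta R$ to a sum of per-class functionals that depend only on the singular values of $\Z^j$.

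For fixed $j$, writing $t_i := \sigma_i(\Z^j)^2$ with constraint $\sum_i t_i = m_j$ and at most $d_j$ nonzero entries, the problem becomes
\[
\max \sum_{i} f(t_i), \qquad f(t) = \tfrac{1}{2}\log(1+\alpha t) - \tfrac{m_j}{2m}\log(1+\alpha_j t),
\]
where $\alpha = \tfrac{n}{m\epsilon^2}$ and $\alpha_j = \tfrac{n}{m_j\epsilon^2}$. A short calculation gives
\[
f'(t) = \tfrac{\alpha}{2}\Bigl(\tfrac{1}{1+\alpha t} - \tfrac{1}{1+\alpha_j t}\Bigr) \ge 0,
\]
which vanishes at $t=0$, is positive for $t>0$, and is unimodal, so $f$ is S-shaped (convex near zero, concave for large $t$). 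KKT on the concave branch forces the active coordinates to share a common value, so the optimizer has the structure of at most two distinct nonzero values among the $t_i$'s — which is consistent with the stated ``largest $d_j{-}1$ singular values equal.''

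The last step, and the main obstacle, is to show that the precision hypothesis $\epsilon^4 < \tfrac{m_j}{m}\tfrac{n^2}{d_j^2}$ is \emph{precisely} what rules out concentrating the mass $m_j$ on fewer than $d_j$ coordinates. Concretely, I would compare the equal allocation $t_i = m_j/d_j$ against any rank-$r$ allocation with $r < d_j$ (including the extremal split suggested by the convex branch of $f$), and show that the difference in $\sum f(t_i)$ is positive exactly when $\epsilon^4 < \tfrac{m_j n^2}{m d_j^2}$. This amounts to expanding $f$ at the allocation $m_j/d_j$ and matching the critical threshold to the inflection of $f$; the algebra is elementary but is where the specific form of the bound gets pinned down, and it simultaneously delivers $\rank(\Z^j_\star) = d_j$ and the equality of the leading singular values.
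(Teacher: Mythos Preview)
Your plan is essentially the paper's: upper-bound $R$ by $\sum_j \tfrac12\log\det\bigl(\I+\tfrac{n}{m\epsilon^2}\Z^j(\Z^j)^*\bigr)$ with equality iff the column spaces are pairwise orthogonal, then reduce each class to a scalar problem in the squared singular values $t_i$. The paper derives this upper bound from the concavity inequality $\log\det\Q\le\log\det\S+\tr(\S^{-1}\Q)-m$ with $\Q=\I+\tfrac{n}{m\epsilon^2}\Z^*\Z$ and $\S$ its block-diagonal part; your route via $\det(\I+\A+\B)=\det(\I+\A)\det\bigl(\I+(\I+\A)^{-1}\B\bigr)$ is a clean equivalent leading to the same bound and the same equality case. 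Your observation that $f'$ vanishes at $0$, is positive thereafter, and is unimodal---forcing at most two distinct positive levels at any stationary point---is exactly the content of the paper's Lemma~A.5.

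The gap is in your last paragraph. The precision hypothesis $\epsilon^4<(m_j/m)(n/d_j)^2$ is \emph{precisely} the statement $f''(m_j/d_j)<0$: the equal allocation lies on the concave branch of $f$. That is a \emph{local} second-order condition, not the global comparison you propose. If you actually try to verify $d_j\,f(m_j/d_j)>r\,f(m_j/r)$ for $r<d_j$ (even just $r=1$), you will find parameter choices with $\epsilon^4$ strictly below the stated threshold for which the inequality \emph{fails}; so ``expanding $f$ at $m_j/d_j$ and matching the threshold to the inflection'' cannot close the argument as written. The paper does not do this comparison either---it instead invokes first-order KKT (Lemma~A.5) to argue that no coordinate can be zero, appealing to $f'(0)<f'(t)$ for $t>0$. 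If you pursue that route, be careful with the multiplier signs for a \emph{maximization} problem with constraints $t_i\ge 0$: the correct necessary condition at a boundary point is $f'(0)\le f'(t_i)$ for the active coordinate, which is \emph{consistent with} (not contradicted by) $f'(0)=0<f'(t_i)$, so first-order KKT alone does not exclude zero coordinates.
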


In other words, the MCR$^2$ principle promotes embedding of data into multiple independent subspaces,\footnote{In this case, the subspaces can be viewed as the {\em independent components} \citep{HYVARINEN2000411} of the so learned features. However, when the condition $n \ge \sum_{j=1}^{k} d_j$ is violated, the optimal subspaces may not be orthogonal. But experiments show that they tend to be maximally incoherent, see Appendix \ref{sec:appendix-gaussian}.} with features distributed {\em isotropically}  in each subspace (except for possibly one dimension). In addition, among all such discriminative representations, it prefers the one with the highest dimensions in the ambient space (see Section \ref{sec:experiment-objective} and Appendix \ref{ap:additional-exp} for experimental verification). This is substantially different from objectives such as the cross entropy \eqref{eqn:cross-entropy} and information bottleneck~\eqref{eqn:information-bottleneck}. The optimal representation associated with MCR$^2$ is indeed an LDR according to definition given in Section \ref{sec:principled-objective-via-compression}.

\paragraph{Relation to neural collapse. } A line of recent work \cite{papyan2020prevalence,han2021neural,mixon2020neural,zhu2021geometric} discovered, both empirically and theoretically, that deep networks trained via cross-entropy or mean squared losses produce \emph{neural collapse} features. That is, features from each class become identical, and different classes are maximally separated from each other. 
In terms of the coding rate function, the neural collapse solution is preferred by the objective of minimizing the coding rate of the ``parts'', namely $R_c(\Z(\bm \theta),  \epsilon \mid \bm{\Pi})$ (see Table~\ref{table:simulations} of the Appendix \ref{ap:additional-exp}).  However, the neural collapse solution leads to a small coding rate for the ``whole'', namely $R(\Z(\bm \theta))$, hence is not an optimal solution for maximizing the rate reduction. Therefore, the benefit of MCR$^2$ in preventing the collapsing of the features from each class and producing maximally diverse representations can be attributed to introducing and maximizing the term $R(\Z(\bm \theta))$.

\paragraph{Comparison to the geometric OLE loss.} To encourage the learned features to be uncorrelated between classes, the work of \citet{lezama2018ole} has proposed to maximize the difference between the nuclear norm of the whole $\Z$ and its subsets $\Z^j$, called the {\em orthogonal low-rank embedding} (OLE) loss:
$
    \max_{\bm \theta}\,
    \mbox{OLE}(\Z(\bm \theta), \bm \Pi) \doteq  \|\Z(\bm \theta)\|_* - \sum_{j=1}^k \|\Z^j(\bm \theta)\|_*,
$
added as a regularizer to the cross-entropy loss \eqref{eqn:cross-entropy}. The nuclear norm $\|\cdot \|_*$ is a {\em nonsmooth convex}
surrogate for low-rankness and the nonsmoothness potentially poses additional difficulties in using this loss to learn features via gradient descent, whereas $\log\det(\cdot)$ is {\em smooth concave} instead. Unlike the rate reduction $\Delta R$, OLE is always {\em negative} and achieves the maximal value $0$ when the subspaces are orthogonal, regardless of their dimensions. So in contrast to $\Delta R$, this loss serves as a geometric heuristic and does not promote diverse representations.  In fact, OLE typically promotes learning one-dimensional representations per class, whereas MCR$^2$ encourages learning subspaces with maximal dimensions (Figure~7 of \cite{lezama2018ole} versus our Figure~\ref{fig:pca-plot}). More importantly, as we will see in the next section, the precise form of the rate distortion plays a crucial role in deriving the deep network operators, with precise statistical and geometrical meaning. 

\paragraph{Relation to contractive or contrastive learning.}
If samples are {\em evenly} drawn from $k$ classes, a randomly chosen pair $(\x^i, \x^j)$ is of high probability belonging to different classes if $k$ is large. For example, when $k \ge 100$, a random pair is of probability 99\% belonging to different classes. 
We may view the learned features of two samples together with their augmentations $\Z^i$ and $\Z^j$ as two classes. Then the rate reduction $\Delta R^{ij} = R(\Z^i\cup \Z^j, \epsilon) - \frac{1}{2}(R(\Z^i, \epsilon) + R(\Z^j, \epsilon))$ gives a ``distance'' measure for how far the two sample sets are. We may try to further ``expand'' pairs that likely belong to different classes. From Theorem \ref{thm:MCR2-properties},  the (averaged) rate reduction $\Delta R^{ij}$ is maximized  when features from different samples are uncorrelated $(\Z^i)^* \Z^j = \bm 0$  (see Figure~\ref{fig:sphere-packing}) and features $\Z^i$ from augmentations of the same sample are compressed into the same subspace. Hence, when applied to sample pairs, MCR$^2$ naturally conducts the so-called {\em contrastive learning} \citep{hadsell2006dimensionality,oord2018representation,he2019momentum} and {\em contractive learning} \citep{contractive-ICML11} together that we have discussed in the introduction Section \ref{sec:intro-objective}. But MCR$^2$ is {\em not} limited to expand or compress pairs of samples and can uniformly conduct ``contrastive/contractive learning'' for a subset with {\em any number} of samples as long as we know they likely belong to different (or the same) classes, say by randomly sampling subsets from a large number of classes or with membership derived from a good clustering method.

\section{Deep Networks from Maximizing Rate Reduction}\label{sec:vector}
In the above section, we have presented rate  reduction \eqref{eqn:maximal-rate-reduction} as a principled objective for learning a  linear discriminative representation (LDR) for the data. We have, however, not specified the architecture of the feature mapping $\z(\bm \theta) = f(\x, \bm \theta)$ for extracting such a representation from input data $\x$. 
A straightforward choice is to use a conventional deep network, such as ResNet, for implementing $f(\x, \bm \theta)$. 
As we show in the experiments (see Section~\ref{sec:experiment-objective}), we can effectively optimize the MCR$^2$ objective with a ResNet architecture and obtain discriminative and diverse representations for real image data sets.

There remain several unanswered problems with using a ResNet. Although the learned feature representation is now more interpretable, the network itself is still {\em not}. It is unclear why any chosen ``black-box'' network is able to optimize the desired MCR$^2$ objective at all. The good empirical results (say with a ResNet) do not necessarily justify the particular choice in architectures and operators of the network:  Why is a deep layered model necessary;\footnote{Especially it is already long known that even a single layer neural network is already a universal functional approximator with tractable model complexity \citep{Barron1991ApproximationAE}.} what do additional layers try to improve or simplify; how wide and deep is adequate; or is there any rigorous justification for the convolutions (in the popular multi-channel form) and nonlinear operators (e.g. ReLu or softmax) used? In this section, we show that using gradient ascent to maximize the rate reduction $\Delta R(\Z )$ naturally leads to a ``white-box'' deep network that represents such a mapping. All network layered architecture, linear/nonlinear operators, and parameters are {\em explicitly constructed in a purely forward propagation fashion}.

\subsection{Gradient Ascent for Rate Reduction on the Training} 
From the previous section, we see that mathematically, we are essentially seeking a continuous mapping $f(\cdot): \x \mapsto \z$ from the data $\X = [\x^1, \ldots, \x^m] \in \Re^{D \times m}$ (or initial features extracted from the data\footnote{As we will see the necessity of such a feature extraction in the next section.}) to an optimal representation $\Z = [\z^1, \ldots, \z^m] \subset \Re^{n \times m}$ that maximizes the following coding rate reduction objective:
\begin{equation}\label{eq:mcr2-formulation}
\begin{split}
\Delta R(\Z, \bm{\Pi}, \epsilon) &= R(\Z, \epsilon) - R_c(\Z, \epsilon \mid  \bm{\Pi})\\
&\doteq \underbrace{\frac{1}{2}\log\det \Big(\I + {\alpha} \Z \Z^{*} \Big)}_{R(\Z, \epsilon)} \;-\; \underbrace{\sum_{j=1}^{k}\frac{\gamma_j}{2} \log\det\Big(\I + {\alpha_j} \Z \bm{\Pi}^{j} \Z^{*} \Big)}_{R_c(\Z, \epsilon \mid\bm \Pi)},
\end{split}
\end{equation}
where for simplicity we denote $\alpha=\frac{n}{m\epsilon^2}$, $\alpha_j=\frac{n}{\textsf{tr}(\bm{\Pi}^{j})\epsilon^2}$, $\gamma_j=\frac{\textsf{tr}(\bm{\Pi}^{j})}{m}$ for $j = 1,\ldots, k$.

The question really boils down to whether there is a {\em constructive} way of finding such a continuous mapping $f(\cdot)$ from $\bm x$ to $\bm z$? To this end, let us consider incrementally maximizing the objective $\Delta R(\Z )$ as a function of $\Z \subset \mathbb{S}^{n-1}$. Although there might be many optimization schemes to choose from, for simplicity we first consider the arguably simplest projected {\em gradient ascent} (PGA)  scheme:\footnote{Notice that we use superscript $j$ on $\Z^j$ to indicate features in the $j$th class and subscript $\ell$ on $\Z_\ell$ to indicate all features at $\ell$-th iteration or layer.} 
\begin{equation}
\bm Z_{\ell+1}   \; \propto \; \bm Z_{\ell} + \eta \cdot \frac{\partial \Delta R}{\partial \bm Z}\bigg|_{\Z_\ell}
\quad \mbox{s.t.} \quad \Z_{\ell+1} \subset \mathbb{S}^{n-1}, \; \ell = 1, 2, \ldots,
\label{eqn:gradient-descent}
\end{equation}
for some step size $\eta >0$ and the iterate starts with the given data $\bm Z_1 = \bm X$\footnote{Again, for simplicity, we here first assume the initial features $\bm Z_1$ are the data themselves. Hence the data and the features have the same dimension $n$. This needs not to be the case though. As we will see in the next section, the initial features can be some (lifted) features of the data to begin with and could in principle have a different (much higher) dimension. All subsequent iterates have the same dimension.}. 
This scheme can be interpreted as how one should incrementally adjust locations of the current features $\Z_\ell$, initialized as the input data $\bm X$, in order for the resulting $\Z_{\ell +1}$ to improve the rate reduction $\Delta R(\Z)$, as illustrated in Figure \ref{fig:gradient-flow}. 
\begin{figure}
\centering
    \includegraphics[width=0.85\linewidth]{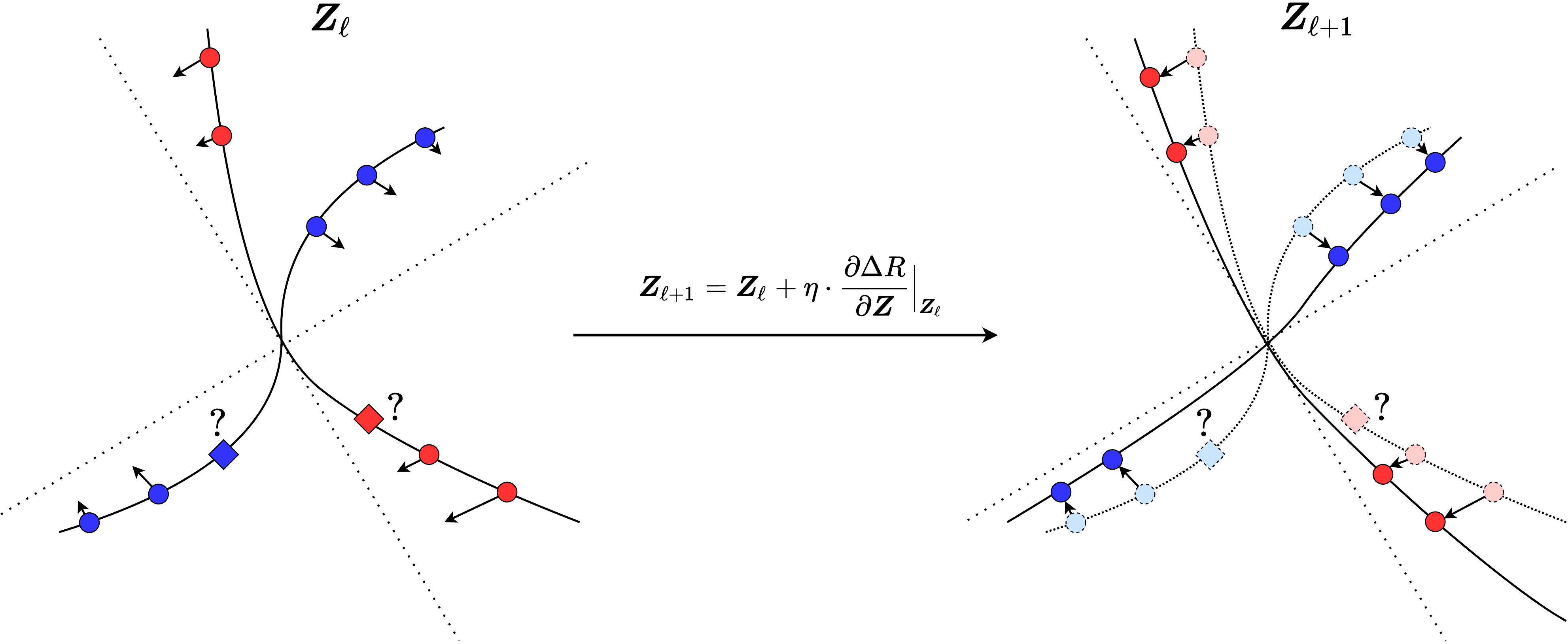} 
    \caption{Incremental deformation via gradient flow to both flatten data of each class into a subspace and push different classes apart. Notice that for points whose memberships are unknown, those marked as ``$\diamond$'', their gradient cannot be directly calculated.}
    \label{fig:gradient-flow}
\end{figure} 

Simple calculation shows that the gradient $\frac{\partial \Delta R}{\partial \bm Z}$ entails evaluating the following derivatives of the two terms in $\Delta R(\Z )$:
\begin{equation}\label{eqn:expand-directions}
    \frac{1}{2}\frac{\partial \log \det (\I \!+\! \alpha \Z \Z^{*} )}{\partial \bm Z}\bigg|_{\Z_\ell} = \,\, \underbrace{\alpha(\I \!+\! \alpha\Z_\ell \Z_\ell^{*})^{-1}}_{\E_{\ell} \; \in \Re^{n\times n}}\Z_\ell,
\end{equation}

\begin{equation}\label{eqn:compress-directions}
\frac{1}{2}\frac{\partial \left( \gamma_j  \log \det (\I + \alpha_j \Z \bm \Pi^j \Z^{*} )  \right)}{\partial \bm Z}\bigg|_{\Z_\ell} \\= \gamma_j  \underbrace{ \alpha_j  (\I +  \alpha_j \Z_\ell \bm \Pi^j \Z_\ell^{*})^{-1}}_{\bm C_{\ell}^j \; \in \Re^{n\times n}} \Z_{\ell} \bm \Pi^j.
\end{equation}
Notice that in the above, the matrix $\bm E_{\ell}$  only depends on $\Z_{\ell}$ and it aims to {\em expand} all the features to increase the overall coding rate; the matrix $\bm C_{\ell}^{j}$ depends on features from each class and aims to {\em compress} them to reduce the coding rate of each class. 
Then the complete gradient $\frac{\partial \Delta R}{\partial \bm Z}\big|_{\Z_\ell} \in \Re^{n\times m}$ is of the  form:
\begin{equation}
\frac{\partial \Delta R}{\partial \bm Z}\bigg|_{\Z_\ell}  = \underbrace{\bm E_{\ell}}_{\text{Expansion}} \Z_{\ell} \;-\; \sum_{j=1}^k \gamma_j \underbrace{\bm C^{j}_{\ell}}_{\text{Compression}}  \Z_{\ell} \bm{\Pi}^j.
\label{eqn:DR-gradient}
\end{equation}

\begin{remark}[Interpretation of $\bm E_\ell$ and $\bm C^j_\ell$ as Linear Operators]\label{rem:regression-interpretation} 
For any $\z_\ell \in \mathbb{R}^n$,
\begin{gather}
    \bm E_\ell \z_\ell = \alpha(\z_\ell - \Z_\ell [\q_\ell]_\star) \quad
    \mbox{where}\quad [\q_\ell]_\star \doteq \argmin_{\q_\ell} \alpha \|\z_\ell - \Z_\ell \q_\ell\|_2^2 + \|\q_\ell\|_2^2.
\end{gather}
Notice that $[\q_\ell]_\star$ is exactly the solution to the ridge regression by all the data points $\Z_\ell$ concerned. Therefore, $\E_\ell$ (similarly for $\bm C_\ell^j$) is approximately (i.e. when $m$ is large enough) the projection onto the orthogonal complement of the subspace spanned by columns of $\Z_\ell$.
Another way to interpret the matrix $\E_\ell$ is through eigenvalue decomposition of the covariance matrix $\Z_\ell \Z_\ell^*$. Assuming that $\Z_\ell \Z_\ell^* \doteq \U_\ell \bm \Lambda_\ell \U_\ell^*$ where $\bm \Lambda_\ell \doteq \diag\{\lambda_\ell^1, \ldots, \lambda_\ell^n \}$, we have 
\begin{equation}\E_\ell = \alpha\, \U_\ell\, \diag\left\{\frac{1}{1+\alpha\lambda_\ell^1}, \ldots, \frac{1}{1+\alpha\lambda_\ell^n}\right\} \U_\ell^*.
\end{equation}
Therefore, the matrix $\E_\ell$ operates on a vector $\z_\ell$ by stretching in a way that directions of large variance are shrunk while directions of vanishing variance are kept. These are exactly the directions \eqref{eqn:expand-directions} in which we move the features so that the overall volume expands and the coding rate will increase, hence the positive sign. To the opposite effect, the directions associated with \eqref{eqn:compress-directions} are ``residuals'' of features of each class deviate from the subspace to which they are supposed to belong. These are exactly the directions in which the features need to be compressed back onto their respective subspace, hence the negative sign (see Figure~\ref{fig:regression-interpretation}). 

\begin{figure}[t]
    \centering
    \includegraphics[width=0.65\linewidth]{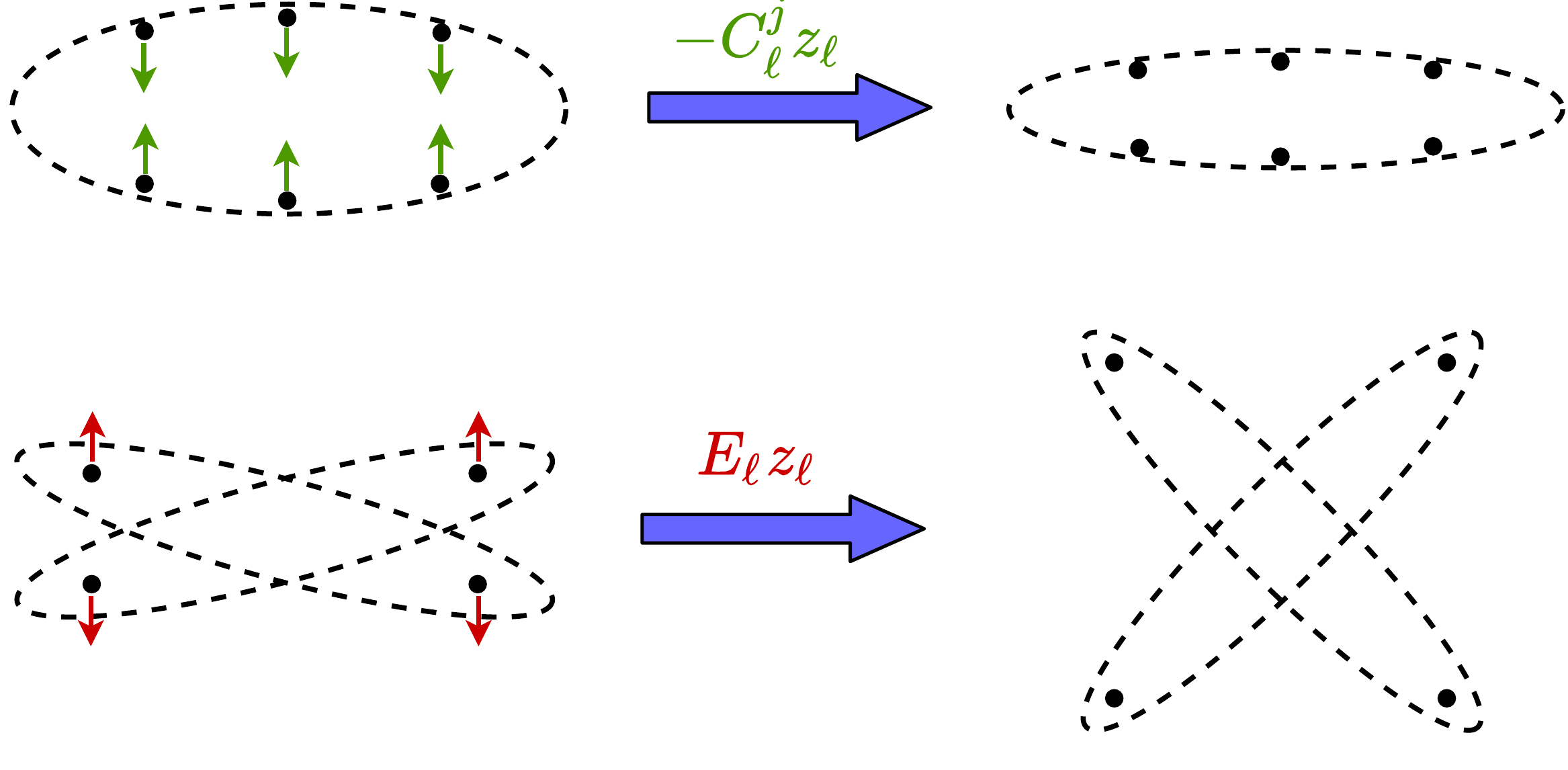}
    \caption{\small Interpretation of $\bm C_\ell^j$ and $\bm E_\ell$: $\bm C_\ell^j$ compresses each class by contracting the features to a low-dimensional subspace; $\bm E_\ell$ expands all features by contrasting and repelling features across different classes.}
    \label{fig:regression-interpretation}
    \vspace{-0.1in}
\end{figure}

Essentially, the linear operations $\bm E_\ell$ and $\bm C^j_\ell$ in gradient ascend for rate reduction are determined by training data conducting ``auto-regressions". The recent renewed understanding about ridge regression in an over-parameterized setting \citep{yang2020rethinking,Wu2020OnTO} indicates that using seemingly redundantly sampled data (from each subspaces) as regressors do not lead to overfitting.
\end{remark}

\subsection{Gradient-Guided Feature Map Increment}
Notice that in the above, the gradient ascent considers all the features $\Z_{\ell} = [\z_{\ell}^{1}, \dots, \z_{\ell}^{m}]$ as free variables. The increment $\Z_{\ell+1} - \Z_{\ell} = \eta \frac{\partial \Delta R}{\partial \bm Z}\big|_{\Z_\ell}$ does not yet give a transform on the entire feature domain $\z_\ell \in \Re^n$. According to equation \eqref{eqn:DR-gradient}, the gradient cannot be evaluated at a point whose membership is not known, as illustrated in Figure \ref{fig:gradient-flow}. Hence, in order to find the optimal $f(\x,\bm  \theta)$ explicitly, we may consider constructing a small increment transform $g(\cdot, \bm{\theta}_{\ell})$ on the $\ell$-th layer feature $\z_\ell$ to emulate the above (projected) gradient scheme:
\begin{equation}
\z_{\ell + 1}   \; \propto \; \z_{\ell} + \eta\cdot  g(\z_{\ell}, \bm{\theta}_{\ell}) \quad \mbox{subject to} \quad \z_{\ell +1} \in \mathbb{S}^{n-1}
\label{eqn:gradient-descent-transform}
\end{equation}
such that: $\big[g(\z^1_{\ell}, \bm \theta_{\ell}), \ldots, g(\z^m_{\ell}, \bm \theta_{\ell}) \big] \approx \frac{\partial \Delta R}{\partial \bm Z}\big|_{\Z_\ell}.$ That is, we need to approximate the gradient flow $\frac{\partial \Delta R}{\partial \bm Z}$ that locally deforms all (training) features $\{\z^i_\ell\}_{i=1}^m$ with a continuous mapping $g(\z)$ defined on the entire feature space $\z_\ell \in \Re^n$.  
Notice that one may interpret the increment \eqref{eqn:gradient-descent-transform} as a discretized version of a continuous  differential equation:
\begin{equation}
\dot{\z} = g(\z, \theta).
\end{equation}
Hence the (deep) network so constructed can be interpreted as certain neural ODE \citep{chen2018neural}. Nevertheless, unlike neural ODE where the flow $g$ is chosen to be some generic structures, here our $g(\z, \theta)$ is to emulate the gradient flow of the rate reduction on the feature set (as shown in Figure \ref{fig:gradient-flow}): \begin{equation}
    \dot{\Z} = \frac{\partial \Delta R}{\partial \bm Z},
\end{equation} 
and its structure is entirely derived and fully determined from this objective, without any other priors or heuristics.

By inspecting the structure of the gradient \eqref{eqn:DR-gradient}, it suggests that a natural candidate for the increment transform $g(\z_\ell, \bm \theta_\ell)$ is of the form:
\begin{equation}
    g(\z_\ell, \bm \theta_\ell) \; \doteq \; \E_\ell \z_\ell - \sum_{j=1}^k \gamma_j \bm C^{j}_{\ell}  \z_{\ell} \bm \pi^j(\z_\ell) \quad \in \Re^n,
    \label{eqn:DR-gradient-transform}
\end{equation}
where  $\bm \pi^j(\z_\ell) \in [0,1]$ {indicates the probability of $\z_{\ell}$ belonging to the $j$-th class.}
The increment depends on: First, a set of linear maps represented by $\bm E_{\ell}$ and $\{ \bm C_{\ell}^{j}\}_{j=1}^{k}$ that depend only on statistics of features of the training $\Z_\ell$; Second, membership $\{ \bm \pi^j(\z_\ell)\}_{j=1}^k$ of any feature $\z_\ell$. 
Notice that on the training samples $\Z_\ell$, for which the memberships $\bm \Pi^j$ are known,  the so defined $g(\z_\ell, \bm \theta)$ gives exactly the values for the gradient $\frac{\partial \Delta R}{\partial \bm Z}\big|_{\Z_\ell}$.

Since we only have the membership $\bm \pi^j$ for the training samples, the function $g(\cdot)$ defined in \eqref{eqn:DR-gradient-transform} can only be evaluated on the training. To extrapolate $g(\cdot)$ to the entire feature space, we need to estimate $\bm \pi^j(\z_\ell)$ in its second term. In the conventional deep learning, this map is typically modeled as a deep network and learned from the training data, say via {\em back propagation}. Nevertheless, our goal here is not to learn a precise classifier $\bm \pi^{j}(\z_\ell)$ already. Instead, we only need a good enough estimate of the class information in order for $g(\cdot)$ to approximate the gradient $\frac{\partial \Delta R}{\partial \bm Z}$ well.

From the geometric interpretation of the linear maps $\bm E_\ell$ and $\bm C^j_\ell$ given by Remark \ref{rem:regression-interpretation}, the term $\p^{j}_{\ell} \doteq \bm C_{\ell}^j \z_{\ell}$ can be viewed as (approximately) the projection of $\z_{\ell}$ onto the orthogonal complement of each class $j$. 
Therefore, $\|\p^{j}_{\ell}\|_2$ is small if $\z_\ell$ is in class $j$ and large otherwise. This motivates us to estimate its membership based on the following softmax function:
\begin{equation}
\widehat{\bm \pi}^j(\z_\ell) \doteq \frac{\exp{( -\lambda \|\bm C_{\ell}^j  \z_{\ell} \|)} }{ \sum_{j=1}^{k} \exp{(-\lambda \|\bm C_{\ell}^j  \z_{\ell}  \|)}} \in [0,1].    
\end{equation}
Hence the second term of \eqref{eqn:DR-gradient-transform} can be approximated by this estimated membership:
\begin{align}
\sum_{j=1}^k \gamma_j \bm C^{j}_{\ell}  \z_{\ell} \bm \pi^j(\z_\ell)
\; \approx \;  \sum_{j=1}^k \gamma_j  \bm{C}_{\ell}^j  \z_{\ell} \cdot \widehat{\bm \pi}^j(\z_\ell) 
\; \doteq \; \bm \sigma\Big([\bm{C}_{\ell}^{1} \z_{\ell}, \dots, \bm{C}_{\ell}^{k} \z_{\ell}]\Big),
\label{eqn:soft-residual}
\end{align}
which is denoted as a nonlinear operator $\bm \sigma(\cdot)$ on outputs of the feature $\z_\ell$ through $k$ groups of filters: $[\bm{C}_{\ell}^{1}, \dots, \bm{C}_{\ell}^{k}]$. Notice that the nonlinearality arises due to a ``soft'' assignment of class membership based on the feature responses from those filters.

\begin{remark}[Approximate Membership with a ReLU Network]\label{rem:ReLU}
The choice of the softmax is mostly for its simplicity as it is widely used in other (forward components of) deep networks for purposes such as clustering, gating \citep{MoE} and routing \citep{capsule-net}. 
In practice, there are many other simpler nonlinear activation functions that one can use to approximate the membership $\widehat{\bm \pi}(\cdot)$ and subsequently the nonlinear operation $\bm \sigma$ in \eqref{eqn:soft-residual}. Notice that the geometric meaning of $\bm \sigma$ in \eqref{eqn:soft-residual} is to compute the ``residual'' of each feature against the subspace to which it belongs. There are many different ways one may approximate this quantity. For example, when we restrict all our features to be in the first (positive) quadrant of the feature space,\footnote{Most current neural networks seem to adopt this regime.}
one may approximate this residual using the rectified linear units operation, ReLUs, on $\p_{j} = \bm C_{\ell}^j \z_{\ell}$ or its orthogonal complement:
\begin{equation}
\bm \sigma(\z_\ell) \; \propto \; \z_\ell - \sum_{j=1}^k   \mbox{ReLU}\big(\bm P_{\ell}^j \z_{\ell}\big), 
\label{eq:approx-relu}
\end{equation}
where $\bm P_{\ell}^j = (\bm C_{\ell}^j)^\perp$ is (approximately) the  projection onto the $j$-th class 
and $\text{ReLU}(x) = \max(0, x)$. The above approximation is good under the more restrictive assumption that projection of $\z_{\ell}$ on the correct class via  $\bm P_{\ell}^j$ is mostly large and positive and yet small or negative for other classes. 
\end{remark}
Overall, combining \eqref{eqn:gradient-descent-transform},  \eqref{eqn:DR-gradient-transform}, and \eqref{eqn:soft-residual}, 
the increment feature transform from $\z_{\ell}$ to $\z_{\ell+1}$ now becomes:
\begin{equation}\label{eqn:layer-approximate}
\begin{aligned}
\z_{\ell+1}  &\propto \; \z_\ell +  \eta \cdot  \bm E_{\ell} \z_{\ell} - \eta\cdot  \bm \sigma\big([\bm{C}_{\ell}^{1} \z_{\ell}, \dots, \bm{C}_{\ell}^{k} \z_{\ell}]\big)  \\
&= \; \z_\ell +  \eta \cdot g(\z_\ell, \bm \theta_\ell) \quad \mbox{s.t.} \quad \z_{\ell +1} \in \mathbb{S}^{n-1},
\end{aligned}
\end{equation}
with the nonlinear function $\bm \sigma(\cdot)$ defined above and $\bm \theta_\ell$ collecting all the layer-wise parameters. That is  $\bm \theta_\ell =\left\{\E_\ell, \bm{C}_{\ell}^{1}, \dots, \bm{C}_{\ell}^{k}, \gamma_j, \lambda\right\}$. Note features at each layer are always ``normalized'' by projecting onto the unit sphere $\mathbb S^{n-1}$, denoted as $\mathcal P_{\mathbb S^{n-1}}$. The form of increment in \eqref{eqn:layer-approximate} can be illustrated by a diagram in  Figure~\ref{fig:arch} left.
\begin{figure}[t]
\subcapcentertrue
\begin{center}
    \subfigure[\textbf{ReduNet}]{\includegraphics[width=0.32\textwidth]{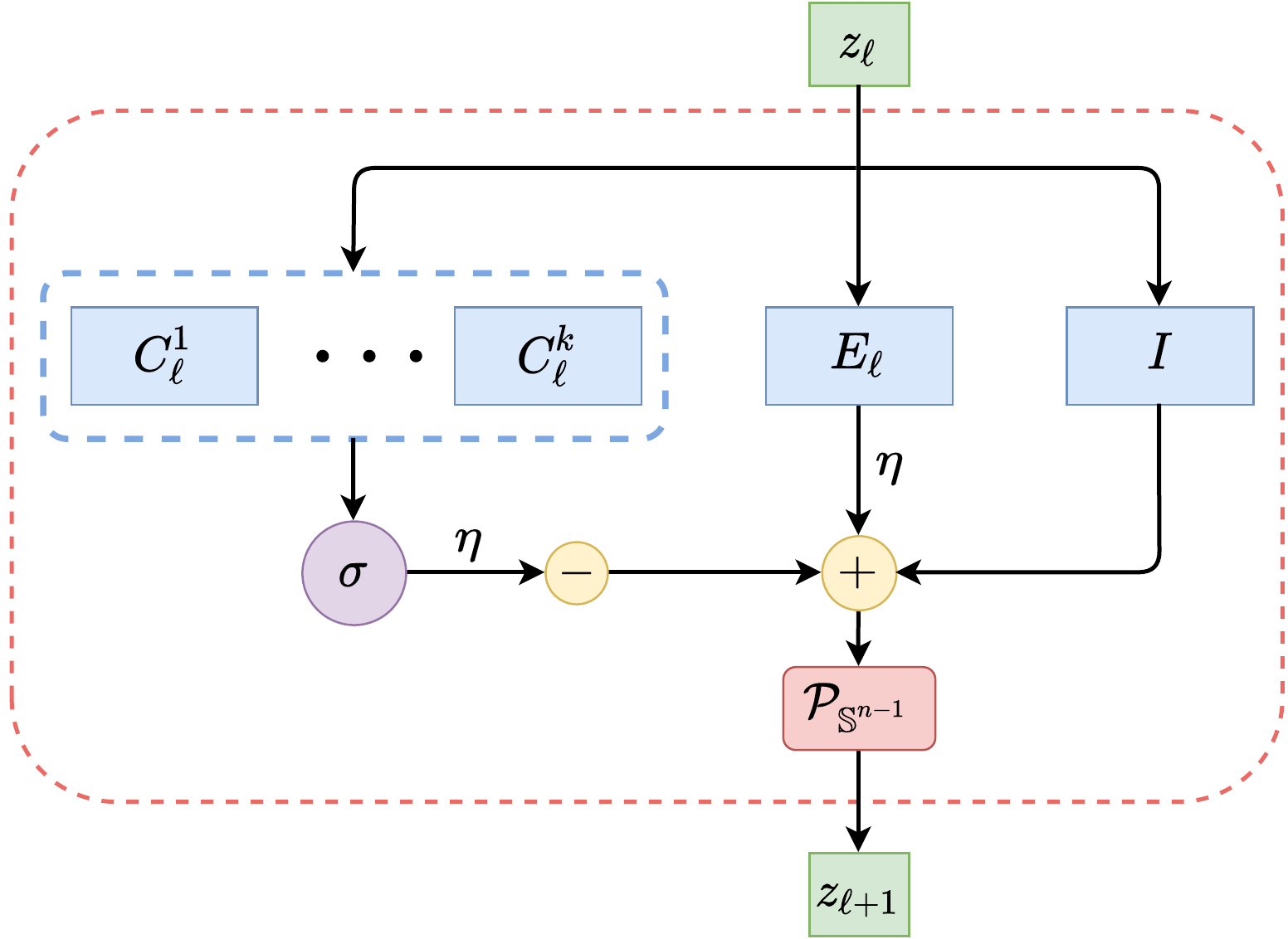}} 
    \hspace{4mm}
    \subfigure[\textbf{ResNet} and \textbf{ResNeXt}.]{\includegraphics[width=0.635\textwidth]{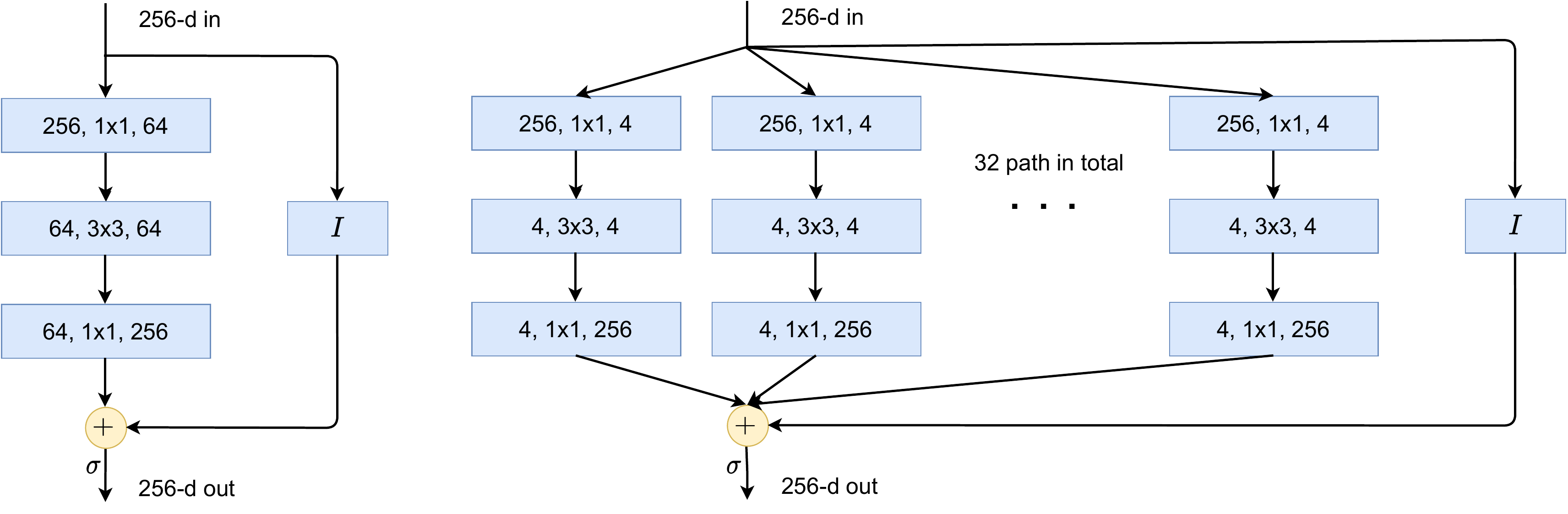}}
    \caption{\small Network Architectures of the ReduNet and comparison with others. \textbf{(a)}: Layer structure of the \textbf{ReduNet} derived from one iteration of gradient ascent for optimizing rate reduction. \textbf{(b)} (left): A layer of ResNet~\citep{he2016deep}; and \textbf{(b)} (right): A layer of ResNeXt~\citep{ResNEXT}. As we will see in Section~\ref{sec:shift-invariant}, the linear operators $\bm E_\ell$ and $\bm{C}^j_\ell$ of the ReduNet naturally become (multi-channel) convolutions when shift-invariance is imposed.}
    \label{fig:arch}
\end{center}
\end{figure}

\subsection{Deep Network for Optimizing Rate Reduction} 
Notice that the increment is constructed to emulate the gradient ascent for the rate reduction $\Delta R$. Hence by transforming the features iteratively via the above process, we expect the rate reduction to increase, as we will see in the experimental section. This iterative process, once converged say after $L$ iterations, gives the desired feature map $f(\x, \bm \theta)$ on the input $\z_1 = \x$, precisely in the form of a {\em deep network}, in which each layer has the structure shown in Figure~\ref{fig:arch} left:
\begin{equation}\label{eqn:ReduNet}
\begin{aligned}
f(\x, \bm \theta)\; =&  \;\;\phi_L \circ \phi_{L-1} \circ  \cdots \circ \phi_2 \circ \phi_1(\z_1),  \\ 
\phi_\ell(\z_\ell, \bm \theta_\ell) \; \doteq & \;\; \z_{\ell+1} = \mathcal{P}_{\mathbb{S}^{n-1}}[\z_{\ell} + \eta\cdot g(\z_{\ell}, \bm \theta_{\ell})], \\
g(\z_{\ell}, \bm \theta_{\ell}) \; =&\;\; \bm E_{\ell} \z_{\ell} -  \bm \sigma\big([\bm{C}_{\ell}^{1} \z_{\ell}, \dots, \bm{C}_{\ell}^{k} \z_{\ell}]\big).
\end{aligned}
\end{equation}
As this deep network is derived from maximizing the rate \textbf{redu}ced, we call it the \textbf{ReduNet}. 
We summarize the training and evaluation of ReduNet in Algorithm~\ref{alg:training} and Algorithm~\ref{alg:evaluating}, respectively. 
Notice that all parameters of the network are explicitly constructed layer by layer in a {\em forward propagation} fashion. The construction does not need any back propagation! The so learned features can be directly used for classification, say via a nearest subspace classifier. 

\begin{algorithm}[t]
	\caption{\textbf{Training Algorithm} for ReduNet}
	\label{alg:training}
	\begin{algorithmic}[1]
		\REQUIRE $\X = [\x^1, \ldots, \x^m]\in \Re^{D \times m}$, $\bm{\Pi}$, $\epsilon > 0$, feature dimension $n$, $\lambda$, and a learning rate $\eta$.\!\!\!
		\STATE Set $\alpha = {n}/{(m \epsilon^2)}$, $\{\alpha_j = {n}/{(\textsf{tr}\left(\bm{\Pi}^{j}\right)\epsilon^{2})}\}_{j=1}^k$, 
		$\{\gamma_j = {\textsf{tr}\left(\bm{\Pi}^{j}\right)}/{m}\}_{j=1}^k$.
		\STATE Set $\Z_1 \doteq [\z_1^1, \ldots, \z_1^m] = \X \in \Re^{n \times m}$ (assuming $n = D$ for simplicity).
		\FOR{$\ell = 1, 2, \dots, L$} 
		    \STATE {\texttt{\# Step 1:Compute network parameters} $\E_\ell$ \texttt{and} $\{\C_\ell^j\}_{j=1}^k$.}
    		\STATE $\E_\ell \doteq \alpha(\I \!+\! \alpha\Z_\ell \Z_\ell^{*})^{-1} \in \Re^{n\times n}$, \; $\{ \C^j_\ell \doteq \alpha_j  (\I +  \alpha_j \Z_\ell \bm \Pi^j \Z_\ell^{*})^{-1} \in \Re^{n\times n} \}_{j=1}^k$;
    		\STATE \texttt{\# Step 2:Update feature  $\Z_\ell$.}
    		\FOR{$i=1, \ldots, m$}
    		    \STATE \texttt{\# Compute soft  assignment $\{\widehat{\bm \pi}^j(\z_\ell^i)\}_{j=1}^k$.}
            	\STATE $\left\{\widehat{\bm \pi}^j(\z_\ell^i) \doteq \frac{\exp{( -\lambda \|\bm C_{\ell}^j  \z_{\ell}^i \|)} }{ \sum_{j=1}^{k} \exp{(-\lambda \|\bm C_{\ell}^j  \z_{\ell}^i  \|)}} \in [0,1] \right\}_{j=1}^k $;
    		    \STATE \texttt{\# Update feature $\z^i_\ell$.} 
            	\STATE $\z^i_{\ell+1} = \mathcal P_{\mathbb S^{n-1}} \left( \z^i_\ell + \eta\left(\E_\ell \z^i_\ell - \sum_{j=1}^k \gamma_j  \bm{C}_{\ell}^j  \z^i_{\ell} \cdot \widehat{\bm \pi}^j(\z_\ell^i)\right) \right) \quad \in \Re^n$;
        	\ENDFOR
		\ENDFOR
		\ENSURE features $\Z_{L+1}$, the learned parameters $\{\E_\ell\}_{\ell=1}^L$ and  $\{\C^j_\ell\}_{j=1, \ell=1}^{k, L}$, $\{\gamma_j\}_{j=1}^k$.
	\end{algorithmic}
\end{algorithm}

\begin{algorithm}[t]
	\caption{\textbf{Evaluation Algorithm} for ReduNet}
	\label{alg:evaluating}
	\begin{algorithmic}[1]
		\REQUIRE $\x \in \Re^{D}$,  network parameters $\{\E_\ell\}_{\ell=1}^L$ and $\{\C^j_\ell\}_{j=1, \ell=1}^{k, L}$, $\{\gamma_j\}_{j=1}^k$, feature dimension $n$, $\lambda$, and a learning rate $\eta$.
		\STATE Set $\z_1 = \x \in \Re^{n}$ (assuming $n = D$ for simplicity).
		\FOR{$\ell = 1, 2, \dots, L$} 
    		    \STATE \texttt{\# Compute soft  assignment $\{\widehat{\bm \pi}^j(\z_\ell)\}_{j=1}^k$.}
            	\STATE $\left\{\widehat{\bm \pi}^j(\z_\ell) \doteq \frac{\exp{( -\lambda \|\bm C_{\ell}^j  \z_{\ell} \|)} }{ \sum_{j=1}^{k} \exp{(-\lambda \|\bm C_{\ell}^j  \z_{\ell}  \|)}} \in [0,1] \right\}_{j=1}^k $;
    		    \STATE \texttt{\# Update feature $\z_\ell$.} 
            	\STATE $\z_{\ell+1} = \mathcal P_{\mathbb S^{n-1}} \left( \z_\ell + \eta\left(\E_\ell \z_\ell - \sum_{j=1}^k \gamma_j  \bm{C}_{\ell}^j  \z_{\ell} \cdot \widehat{\bm \pi}^j(\z_\ell)\right) \right) \quad \in \Re^n$;
		\ENDFOR
		\ENSURE feature $\z_{L+1}$
	\end{algorithmic}
\end{algorithm}

\subsection{Comparison with Other Approaches and Architectures}\label{sec:comparison-vector}
Like all networks that are inspired by unfolding certain iterative optimization schemes, the structure of the ReduNet naturally contains a skip connection between adjacent layers as in the ResNet \citep{he2016deep} (see Figure \ref{fig:arch} middle). 
Empirically, people have found that additional skip connections across multiple layers may improve the network performance, e.g. the Highway networks \citep{srivastava2015highway} and DenseNet \citep{dense-net}. In our framework, the role of each layer is precisely interpreted as one iterative gradient ascent step for the objective function $\Delta R$. In our experiments (see Section \ref{sec:experiments}), we have observed that the basic gradient scheme sometimes converges slowly, resulting in deep networks with hundreds of layers (iterations)!  To improve the efficiency of the basic ReduNet, one may consider in the future accelerated gradient methods such as the Nesterov acceleration \citep{nesterov1983method} or perturbed accelerated gradient descent \citep{Jin-2018}. Say to minimize or maximize a function $h(\z)$, such accelerated methods usually take the form:
\begin{equation}
\left\{
\begin{array}{ccl}
    \bm q_{\ell +1} &=& \z_\ell + \beta_\ell\cdot (\z_\ell - \z_{\ell-1}),\\
    \z_{\ell+1} &=& \bm q_{\ell+1} + \eta \cdot \nabla h(\bm q_{\ell+1}). 
\end{array}  
\right.
\label{eqn:acceleration}
\end{equation}
They require introducing additional skip connections among three layers $\ell-1$, $\ell$ and $\ell+1$. For typical convex or nonconvex programs, the above accelerated schemes can often reduce the number of iterations by a magnitude \citep{Wright-Ma-2021}.

Notice that, structure wise, the $k+1$ parallel groups of channels $\bm E, \bm C^j$ of the ReduNet correspond to the ``residual'' channel of the ResNet (Figure \ref{fig:arch} middle). Remarkably, here in ReduNet, we know they precisely correspond to {\em the residual of data auto-regression} (see Remark \ref{rem:regression-interpretation}). Moreover, the multiple parallel groups actually draw resemblance to the parallel structures that people later empirically found to further improve the ResNet, e.g. ResNeXt \citep{ResNEXT} (shown in Figure \ref{fig:arch} right) or the mixture of experts (MoE) module adopted in \citet{MoE}.\footnote{The latest large language model, the switched transformer \citep{Switch-Transformers}, adopts the MoE architecture, in which the number of parallel banks (or experts) $k$ are in the thousands and the total number of parameters of the network is about 1.7 trillion.} Now in ReduNet, each of those groups $\bm C^j$ can be precisely interpreted as {\em an expert classifier for each class of objects}. But a major difference here is that all above networks need to be initialized randomly and trained via back propagation whereas all components (layers, operators, and parameters) of the ReduNet are by explicit construction in a forward propagation. They all have precise optimization, statistical and geometric interpretation. 

Of course, like any other deep networks, the so-constructed ReduNet is amenable to fine-tuning via back-propagation if needed. A recent study from \cite{giryes2018tradeoffs} has shown that such fine-tuning may achieve a better trade off between accuracy and efficiency of the unrolled network (say when only a limited number of layers, or iterations are allowed in practice). Nevertheless, for the ReduNet, one can start with the nominal values obtained from the forward construction, instead of random initialization. Benefits of fine-tuning and initialization will be verified in the experimental section (see Table \ref{table:backprop_acc}).

\section{Deep Convolution Networks from Invariant Rate Reduction}\label{sec:shift-invariant}
So far, we have considered the data $\bm x$ and their features $\bm z$ as vectors. In many applications, such as serial data or imagery data, the semantic meaning (labels) of the data are  {\em invariant} to certain transformations $\mathfrak{g} \in \mathbb{G}$ (for some group $\mathbb{G}$) \citep{CohenW16,deep-sets-NIPS2017}. For example, the meaning of an audio signal is invariant to shift in time; and the identity of an object in an image is invariant to translation in the image plane. Hence, we prefer the feature mapping $f(\x,\bm \theta)$ is rigorously invariant to such transformations:
\begin{equation}
\mbox{\em Group Invariance:}\;   f(\x\circ \mathfrak{g}, \bm \theta) \sim f(\x,\bm \theta), \quad \forall \mathfrak{g} \in \mathbb{G},
\end{equation}
where ``$\sim$'' indicates two features belonging to the same equivalent class. Although to ensure invariance or equivarience, convolutional operators has been common practice in deep networks \citep{CohenW16}, it remains challenging in practice to train an (empirically designed) convolution network from scratch that can {\em guarantee} invariance even to simple transformations such as translation and rotation \citep{azulay2018deep,engstrom2017rotation}. An alternative approach is to carefully design convolution filters of each layer so as to ensure translational invariance for a wide range of signals, say using wavelets as in ScatteringNet \citep{scattering-net} and followup works \citep{Wiatowski-2018}. However, in order to ensure invariance to generic signals, the number of convolutions needed usually grows exponentially with network depth. That is the reason why this type of network cannot be constructed so deep, usually only several layers.

In this section, we show that the MCR$^2$ principle is compatible with invariance in a very natural and precise way: we only need to assign all transformed versions $\{\x\circ \mathfrak{g} \mid \mathfrak{g} \in \mathbb G\}$  into the same class as the data $\x$ and map their features $\z$ all to the same subspace $\mathcal S$. Hence, all group equivariant information is encoded only inside the subspace, and any classifier defined on the resulting set of subspaces will be automatically invariant to such group transformations. See Figure \ref{fig:ortho-invariance-diagram} for an illustration of the examples of 1D rotation and 2D translation.  We will rigorously show in the next two sections (as well as Appendix~\ref{app:1D} and Appendix~\ref{ap:2D-translation}) that, when the group $\mathbb G$ is circular 1D shifting or 2D translation, the resulting deep network naturally becomes a {\em multi-channel convolution network}! Because the so-constructed network only needs to ensure invariance for the given data $\X$ or their features $\Z$, the number of convolutions needed actually remain constant through a very deep network, as oppose to the ScatteringNet. 

\begin{figure}[t]
  \begin{center}
    \subfigure{\includegraphics[width=0.4\textwidth]{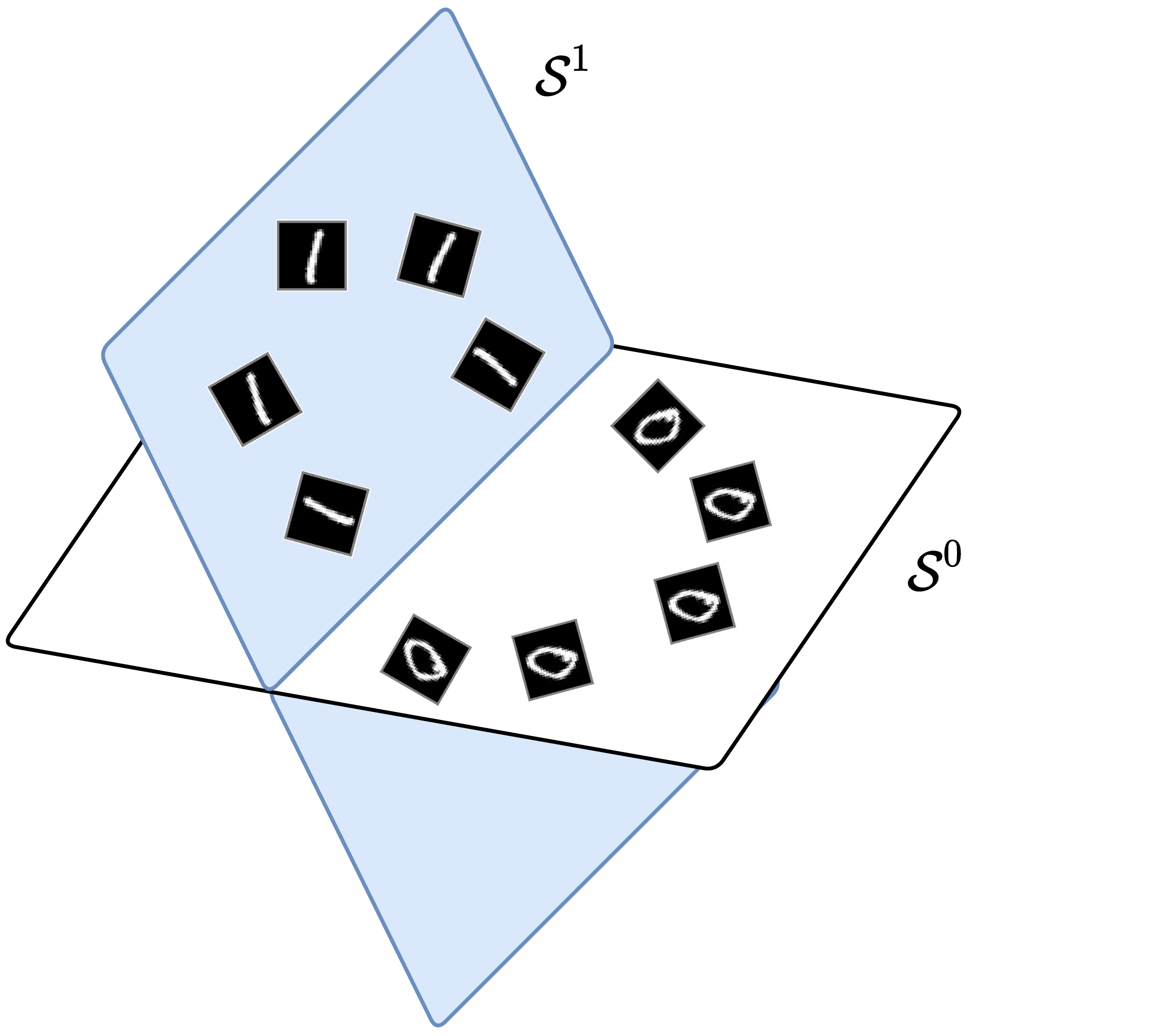}} 
    \hspace{5mm}
    \subfigure{\includegraphics[width=0.4\textwidth]{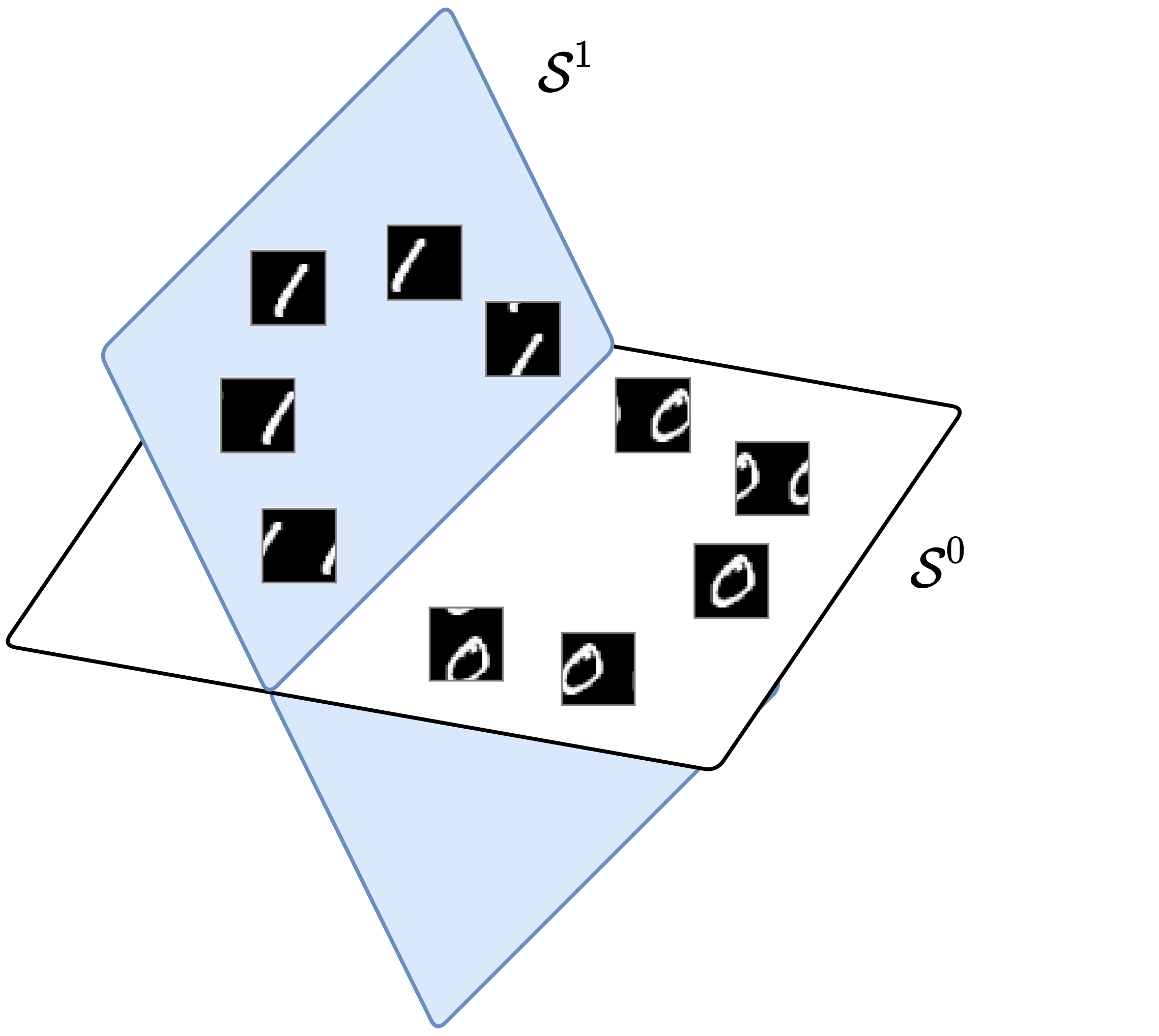}}
  \end{center}
\caption{\small Illustration of the sought representation that is equivariant/invariant to image rotation (left) or translation (right): all transformed images of each class are mapped into the same subspace that are incoherent to other subspaces. The features embedded in each subspace are equivariant to transformation group whereas each subspace is invariant to such transformations.}\label{fig:ortho-invariance-diagram}
\end{figure}

\subsection{1D Serial Data and  Shift Invariance}\label{sec:invariance}
To classify one-dimensional data $\x = [x(0), x(1), \ldots, x(D-1)] \in \Re^D$ invariant under shifting, we take $\mathbb{G}$ to be the group of all circular shifts. Each observation $\x^i$ generates a family $\{ \x^i \circ \mathfrak{g} \, | \, \mathfrak{g} \in \mathbb G \}$ of shifted copies, which are the columns of the circulant matrix $\circm(\x^i) \in \Re^{D \times D}$ given by
\begin{equation}
\circm(\x) \,\doteq\, \left[ \begin{array}{ccccc} x(0) & x(D-1) & \dots & x(2) & x(1) \\ x(1) & x(0) & x(D-1) & \cdots & x(2) \\ \vdots & x(1) & x(0) &\ddots & \vdots \\ x(D-2) &  \vdots & \ddots & \ddots & x(D-1) \\ x(D-1) & x(D-2) & \dots & x(1) & x(0)   \end{array} \right] \quad \in \Re^{D \times D}.
\end{equation}
We refer the reader to Appendix \ref{ap:circulant} or \cite{Kra2012OnCM} for properties of circulant matrices. For simplicity, let $\bm Z_1 \doteq [ \z_1^1, \dots, \z_1^m ] = \X \in \Re^{n \times m}$\footnote{Again, to simplify discussion, we assume for now that the initial features $\Z_1$ are $\X$ themselves hence have the same dimension $n$. But that does not need to be the case as we will soon see that we need to lift $\X$ to a higher dimension.}. Then what happens if we construct the ReduNet from their circulant families $\circm(\bm Z_1) = \left[ \circm(\z_1^1), \dots, \circm(\z_1^m) \right] \in \Re^{n \times nm}$? That is, we want to compress and map all these into the same subspace by the ReduNet. 

Notice that now the data covariance matrix: 
\begin{eqnarray}
\circm(\bm Z_1) \circm(\bm Z_1)^* 
&=& \left[ \circm(\z^1), \dots, \circm(\z^m) \right] \left[ \circm(\z^1), \dots, \circm (\z^m) \right]^* \\
&=& \sum_{i =1}^m \circm(\z_1^i) \circm(\z_1^i)^* \;\in \Re^{n\times n}
\end{eqnarray}
associated with this family of samples is {\em automatically} a (symmetric) circulant matrix. Moreover, because the circulant property is preserved under sums, inverses, and products, the matrices $\bm E_1$ and $\bm C_1^j$ are also automatically circulant matrices, whose application to a feature vector $\bm z \in \Re^n$ can be implemented using circular convolution ``$\circledast$''.
Specifically, we have the following proposition. 

\begin{proposition}[Convolution structures of $\bm E_1$ and $\bm C_1^j$]
The matrix 
\begin{equation}
    \E_1 = \alpha\big(\bm I + \alpha \circm(\bm Z_1) \circm(\bm Z_1)^* \big)^{-1}
\end{equation}
is a circulant matrix and represents a circular convolution: 
$$\E_1 \z = \bm e_1 \circledast \z,$$ 
where $\bm e_1 \in \Re^n$ is the first column vector of $\E_1$ and ``$\circledast$'' is circular convolution defined as
\begin{equation*}
    (\bm e_1 \circledast \bm z)_{i} \doteq \sum_{j=0}^{n-1} e_1(j) x(i+ n-j \,\, \textsf{mod} \,\,n).
\end{equation*}
Similarly, the matrices $\bm C_1^j$ associated with any subsets of $\bm Z_1$ are also circular convolutions. 
\label{prop:circular-conv-1}
\end{proposition}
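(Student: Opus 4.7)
The plan is to reduce everything to the closure properties of circulant matrices: circulants form a commutative algebra closed under sums, scalar multiples, conjugate transposition, products, and (where defined) inverses. These facts are collected in Appendix~\ref{ap:circulant}. Once this algebraic picture is in hand, both claims follow almost mechanically, and the only real work is matching indexing conventions.

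First I would verify that the data Gram $\circm(\bm Z_1)\circm(\bm Z_1)^*$ is circulant. Writing $\circm(\bm Z_1) = [\circm(\z_1^1),\dots,\circm(\z_1^m)]$ as a concatenation of $n \times n$ blocks, the block product collapses to $\sum_{i=1}^{m} \circm(\z_1^i)\circm(\z_1^i)^*$. Each factor $\circm(\z_1^i)$ is circulant by construction, its conjugate transpose is circulant, and a product of two circulants is circulant; a finite sum of circulants is circulant. Since $\bm I$ is trivially circulant, $\bm I + \alpha\,\circm(\bm Z_1)\circm(\bm Z_1)^*$ is Hermitian positive definite \emph{and} circulant, so its inverse exists and is circulant. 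Therefore $\bm E_1$ is circulant.

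Next I would identify the action of a circulant matrix with a circular convolution. For any circulant $\bm A \in \mathbb{R}^{n \times n}$ with first column $\bm a$, one has $A_{ij} = a((i-j) \bmod n)$, so $(\bm A \z)_i = \sum_{j=0}^{n-1} a((i-j) \bmod n)\, z(j)$. Re-indexing via $j \mapsto n-j$ puts this in precisely the form $(\bm a \circledast \z)_i = \sum_{j=0}^{n-1} a(j)\, z((i+n-j) \bmod n)$ used in the statement. Applying this with $\bm A = \bm E_1$ and $\bm a = \bm e_1$ gives $\bm E_1 \z = \bm e_1 \circledast \z$.

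For the second claim, the identical argument goes through once I observe that the membership diagonal $\bm \Pi^j$ associated with the expanded dataset $\circm(\bm Z_1)$ assigns every circular shift of a training sample to the class of that sample. Hence $\circm(\bm Z_1)\bm \Pi^j \circm(\bm Z_1)^* = \sum_{i:\, \x^i \in \text{class } j} \circm(\z_1^i)\circm(\z_1^i)^*$, still a sum of circulants, and the same inversion/convolution reasoning yields $\bm C_1^j$ circulant with its action equal to circular convolution by its first column. The only place that calls for care is fixing the $\circm(\cdot)$ convention so that the reindexing from $A_{ij} = a((i-j) \bmod n)$ to the proposition's formula $(\bm e_1 \circledast \z)_i = \sum_j e_1(j)\, z((i+n-j) \bmod n)$ lines up exactly; with the convention in the statement this is a one-line bookkeeping step, not a real obstacle.
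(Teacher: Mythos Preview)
Your proposal is correct and follows essentially the same approach as the paper: the paper's argument (given in the text immediately preceding the proposition and in Appendix~\ref{ap:circulant}) simply observes that $\circm(\bm Z_1)\circm(\bm Z_1)^* = \sum_i \circm(\z_1^i)\circm(\z_1^i)^*$ and then invokes the closure of circulants under transpose, products, sums, and inverses (Fact~\ref{fact:circ-properties}) to conclude that $\bm E_1$ and $\bm C_1^j$ are circulant, hence act as circular convolutions. Your write-up is slightly more explicit about the index-matching between $A_{ij}=a((i-j)\bmod n)$ and the stated convolution formula, but otherwise the strategy is identical.
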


Not only the first-layer parameters $\E_1$ and $\C_1^j$ of the ReduNet become circulant convolutions but also the next layer features remain circulant matrices. 
That is, the incremental feature transform in \eqref{eqn:layer-approximate} applied to all shifted versions of a $\z_1 \in \Re^n$, given by
\begin{equation}
    \circm(\z_1) + \eta \cdot \E_1 \circm(\z_1) - \eta \cdot \bm \sigma \Big([\C_1^1 \circm(\z_1), \ldots, \C_1^k \circm(\z_1)] \Big),
\end{equation}
is a circulant matrix. 
This implies that there is no need to construct circulant families from the second layer features as we did for the first layer. 
By denoting
\begin{equation}
\z_{2} \propto \z_{1} +\eta\cdot g(\z_{1}, \bm \theta_{1}) =  \z_1 + \eta \cdot \bm e_{1} \circledast \z_{1} -  \eta \cdot \bm \sigma\Big([\bm{c}_{1}^{1} \circledast \z_{1}, \dots, \bm{c}_{1}^{k} \circledast \z_{1}]\Big),
\label{eqn:approximate-convolution}
\end{equation}
the features at the next level can be written as
$$\circm(\bm Z_2) = \big[ \circm( \bm z_2^1), \dots, \circm( \bm z_2^m ) \big] = \big[ \circm( \bm z_1^1 + \eta g( \bm z_1^1, \bm \theta_1)), \dots, \circm( \bm z_1^m + \eta g(\bm z_1^m, \bm \theta_1)) \big].$$
Continuing inductively, we see that all matrices $\bm E_\ell$ and $\bm C_\ell^j$ based on such $\circm(\bm Z_\ell)$ are circulant, and so are all features. 
By virtue of the properties of the data, ReduNet has taken the form of a convolutional network, {\em with no need to explicitly choose this structure!}

\subsection{A Fundamental Trade-off between Invariance and Sparsity}
There is one problem though: In general, the set of all circular permutations of a vector $\z$ gives a full-rank matrix. That is, the $n$ ``augmented'' features associated with each sample (hence each class) typically already span the entire space $\Re^n$. For instance, all shifted versions of a delta function $\delta(n)$ can generate any other signal as their (dense) weighted superposition. The MCR$^2$ objective \eqref{eqn:maximal-rate-reduction} will not be able to distinguish classes as different subspaces.

One natural remedy is to improve the separability of the data by ``lifting'' the original signal to a higher dimensional space, e.g., by taking their responses to multiple, filters $\bm k_1, \ldots, \bm k_C \in \Re^n$: 
\begin{equation}
\bm z[c] = \bm k_c \circledast \bm x  =  \circm(\bm k_c) \bm x \quad \in \Re^n, \quad c = 1, \ldots, C.
\label{eqn:lift-1d}
\end{equation} 
The filters can be pre-designed invariance-promoting filters,\footnote{For 1D signals like audio, one may consider the conventional short time Fourier transform (STFT); for 2D images, one may consider 2D wavelets as in the ScatteringNet \citep{scattering-net}.} or adaptively learned from the data,\footnote{For learned filters, one can learn filters as the principal components of samples as in the PCANet \citep{chan2015pcanet} or from convolution dictionary learning \citep{li2019multichannel,qu2019nonconvex}.} or randomly selected as we do in our experiments. This operation lifts each original signal $\x \in \Re^n$ to a $C$-channel feature, denoted as $\bar{\z}  \doteq [\z[1], \ldots, \z[C]]^* \in \Re^{C\times n}$. 
Then, we may construct the ReduNet on vector representations of $\bar{\z}$, denoted as
$\vec(\bar\z) \doteq [\z[1]^*, \ldots, \z[C]^*] \in \Re^{nC}$. 
The associated circulant version $ \circm(\bar{\z})$ and its data covariance matrix, denoted as $\bar{\bm \Sigma}(\bar\z)$, for all its shifted versions are given as:
\begin{equation}
 \circm(\bar{\z}) \doteq \left[\begin{smallmatrix}
    \circm(\z[1])  \\ \vdots \\ \circm(\z[C]) \end{smallmatrix} \right] \in \Re^{nC\times n},\;\;  \bar{\bm \Sigma}(\bar\z) \doteq 
    \left[\begin{smallmatrix}
    \circm(\z[1]) \\ \vdots \\ \circm(\z[C]) \end{smallmatrix} \right]
    \left[\begin{smallmatrix}\circm(\z[1])^*,\ldots, \circm(\z[C])^*\end{smallmatrix} \right] \in \Re^{nC\times nC},
    \label{eqn:W-multichannel}
\end{equation}
where $\circm(\z[c]) \in \Re^{n\times n}$ with $c \in [C]$ is the circulant version of the $c$-th channel of the feature $\bar \z$. Then the columns of $\circm(\bar\z)$  will only span at most an $n$-dimensional proper subspace in $\Re^{nC}$. 

However, this simple lifting operation (if linear) is not sufficient to render the classes separable yet---features associated with other classes will span the {\em same} $n$-dimensional subspace. This reflects a fundamental conflict between invariance and linear (subspace) modeling: {\em one cannot hope for arbitrarily shifted and superposed signals to belong to the same class.} 

\begin{figure}[t]
	\centerline{
\includegraphics[width=0.9\textwidth]{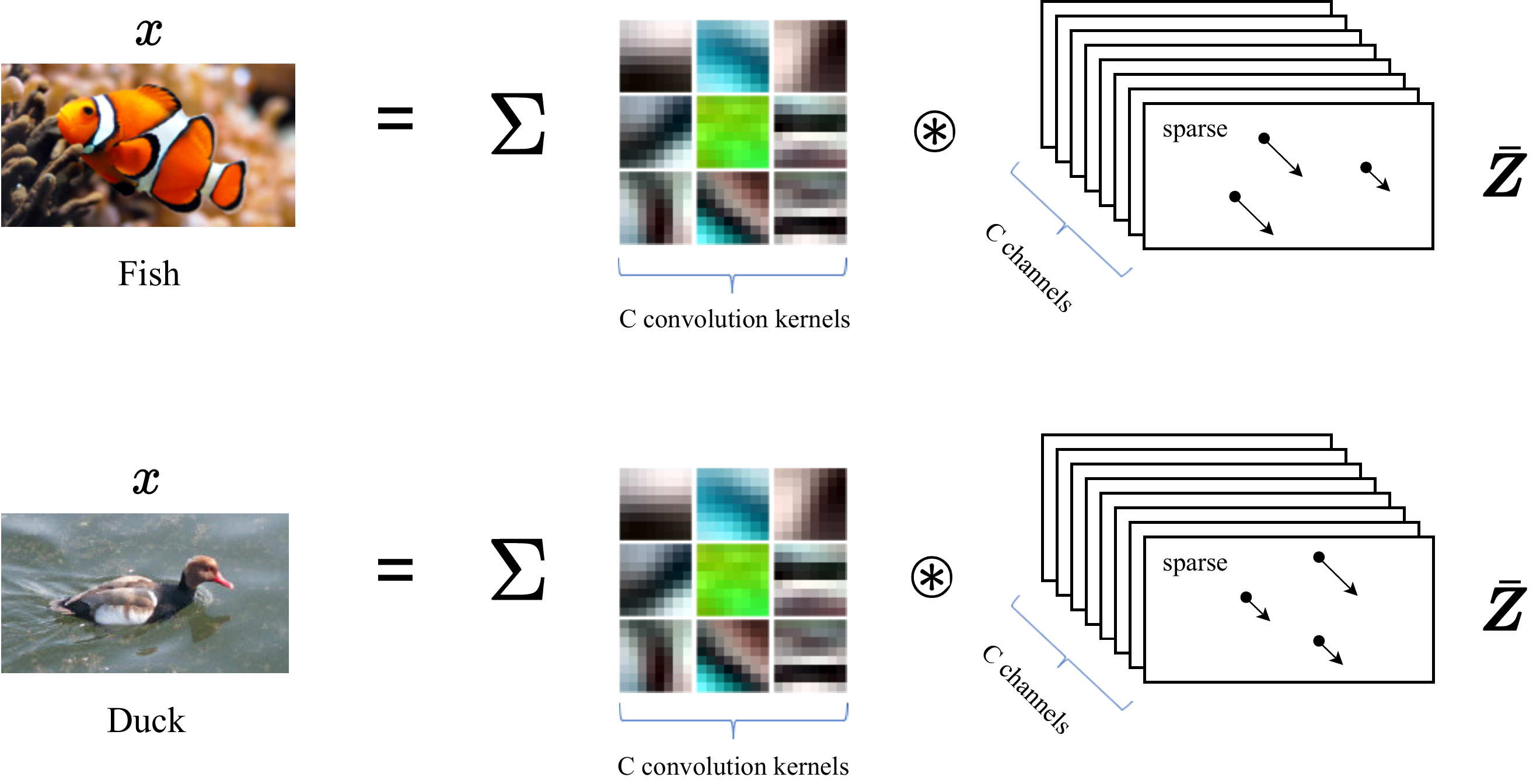}}
	\caption{Each input signal $\bm x$ (an image here) can be represented as a superposition of sparse convolutions with multiple kernels $\bm d_c$ in a dictionary $\bm D$.}
	\label{fig:multi-channel-sparse-representation}
\end{figure}
One way of resolving this conflict is to leverage additional structure within each class, in the form of {\em sparsity}: Signals within each class are not generated as arbitrary linear superposition of some base atoms (or motifs), but only {\em sparse} combinations of them and their shifted versions, as shown in Figure \ref{fig:multi-channel-sparse-representation}. More precisely, let $\bm D^j = [\bm d_1^j, \ldots, \bm d_c^j]$ denote a matrix with a collection of atoms associated for class $j$, also known as a dictionary, then each signal $\x$ in this class is sparsely generated as: 
\begin{equation}
    \x = \bm d_1^j \circledast \z_1 + \ldots + \bm d_c^j \circledast \z_c = \circm(\bm{D}^j)\z,
\end{equation}
for some sparse vector $\z$. Signals in different classes are then generated by different dictionaries whose atoms (or motifs) are incoherent from one another. Due to incoherence, signals in one class are unlikely to be sparsely represented by atoms in any other class. Hence all signals in the $k$ class can be represented as
\begin{equation}
\x = \big[\circm(\bm{D}^1), \circm(\bm{D}^2), \ldots, \circm(\bm{D}^k)\big] \bar \z,
\end{equation}
where $\bar \z$ is sparse.\footnote{Notice that similar sparse representation models have long been proposed and used for classification purposes in applications such a face recognition, demonstrating excellent effectiveness  \citep{Wright:2009,wagner2012toward}. Recently, the convolution sparse coding model has been proposed by \cite{papyan2017convolutional} as a framework for interpreting the structures of deep convolution networks.} There is a vast literature on how to learn the most compact and optimal sparsifying dictionaries from sample data, e.g.  \citep{li2019multichannel,qu2019nonconvex} and subsequently solve the inverse problem and compute the associated sparse code $\z$ or $\bar \z$. Recent studies of \cite{qu2020nonconvex,qu2020geometric} even show under broad conditions the convolution dictionary learning problem can be solved effectively and efficiently. 

Nevertheless, for tasks such as classification, we are not necessarily interested in the precise optimal dictionary nor the precise sparse code for each individual signal. We are mainly interested if collectively the set of sparse codes for each class are adequately separable from those of other classes. Under the assumption of the sparse generative model, if the convolution kernels $\{\bm k_c\}_{c=1}^C$  match well with the ``transpose'' or ``inverse'' of the above sparsifying dictionaries $\bm D = [\bm D^1, \ldots, \bm D^k]$, also known as the {\em analysis filters} \citep{Cosparse-Nam,Analysis-Filter}, signals in one class will only have high responses to a small subset of those filters and low responses to others (due to the incoherence assumption).  Nevertheless, in practice, often a sufficient number of, say $C$, random filters $\{\bm k_c\}_{c=1}^C$ suffice the purpose of ensuring so extracted $C$-channel features:
\begin{equation}
\big[\bm k_1 \circledast \x, \bm k_2 \circledast \x, \ldots, \bm k_C \circledast \x\big]^* = \big[\circm(\bm k_1) \x, \ldots, \circm(\bm k_C) \x \big]^* \in \Re^{C\times n}
\end{equation}
for different classes have different response patterns to different filters hence make different classes separable \citep{chan2015pcanet}. 

Therefore, in our framework, to a large extent the number of channels (or the width of the network) truly plays the role as the  {\em statistical resource} whereas the number of layers (the depth of the network) plays the role as the {\em computational resource}. The theory of compressive sensing precisely characterizes how many measurements are needed in order to preserve the intrinsic low-dimensional structures (including separability) of the data \citep{Wright-Ma-2021}. As optimal sparse coding is not the focus of this paper, we will use the simple random filter design in our experiments, which is adequate to verify the concept.\footnote{Although better learned or designed sparsifying dictionaries and sparse coding schemes may surely lead to better classification performance, at a higher computational cost.} 

\begin{figure}[t]
	\centerline{
\includegraphics[width=0.95\textwidth]{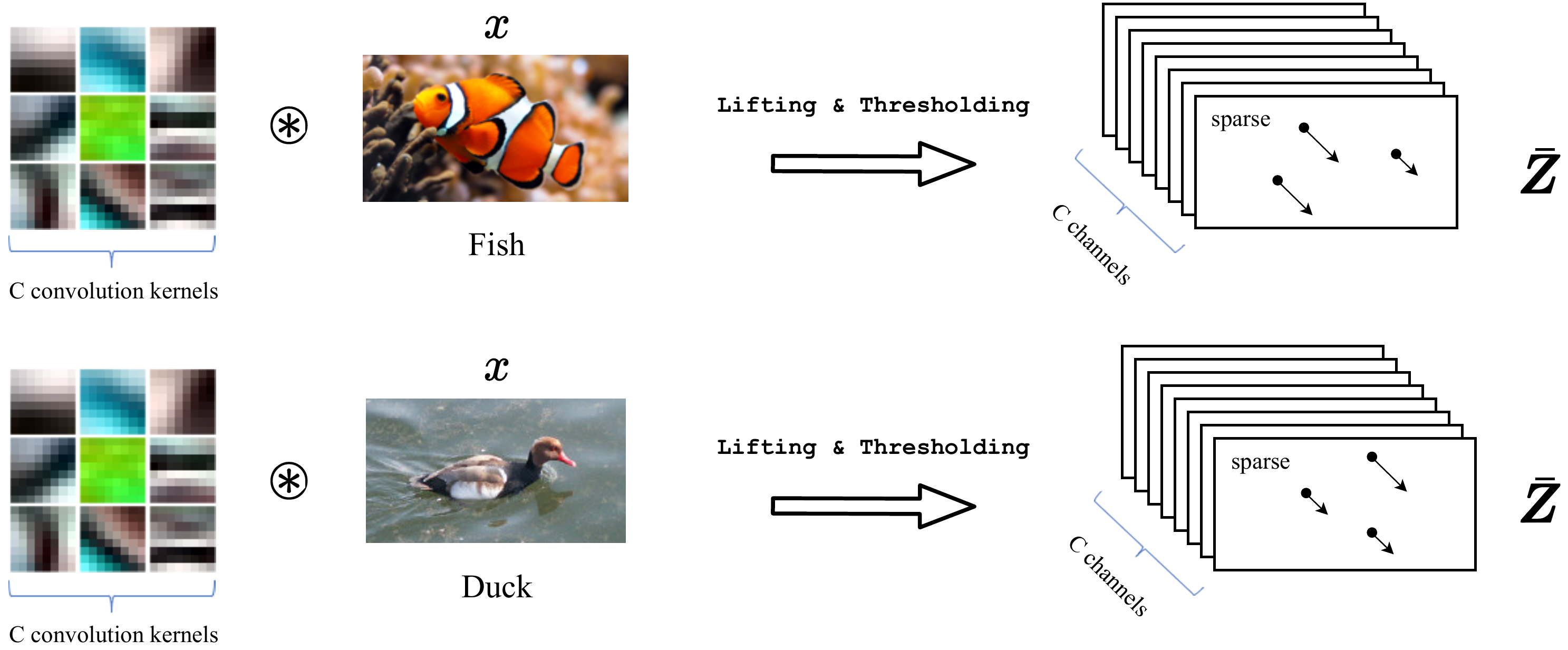}}
\caption{Estimate the sparse code $\bar{\bm z}$ of an input signal $\bm x$ (an image here) by taking convolutions with multiple kernels $\bm k_c$ and then sparsifying.}
		\label{fig:multi-channel-sparse-lifting}
\end{figure}
The multi-channel responses $\bar \z$ should be sparse. So to approximate the sparse code $\bar \z$, we may take an entry-wise {\em sparsity-promoting nonlinear thresholding}, say $\bm \tau(\cdot)$, on the above filter outputs by setting low (say absolute value below $\epsilon$) or negative  responses to be zero:
\begin{equation}
\bar \z \doteq \bm \tau \left( \big[\circm(\bm k_1) \x, \ldots, \circm(\bm k_C) \x \big]^* \right) \quad \in \Re^{C \times n}.
\label{eqn:sparse-lifting}
\end{equation}
Figure \ref{fig:multi-channel-sparse-lifting} illustrates the basic ideas. One may refer to \citet{Analysis-Filter} for a more systematical study on the design of the sparsifying thresholding operator. Nevertheless, here we are not so interested in obtaining the best sparse codes as long as the codes are sufficiently separable. Hence the nonlinear operator $\bm \tau$ can be simply chosen to be a soft thresholding or a ReLU. 
These presumably sparse features $\bar \z$ can be assumed to lie on a lower-dimensional (nonlinear) submanifold of $\mathbb{R}^{nC}$, which can be linearized and separated from the other classes by subsequent ReduNet layers, as illustrated later in Figure \ref{fig:learn-to-classify-diagram}. 

The ReduNet constructed from circulant version of these multi-channel features $\bar\Z \doteq [\bar \z^1, \ldots, \bar \z^m] \in \Re^{C \times n \times m}$, i.e., $\circm(\bar\Z) \doteq [ \circm(\bar\z^1), \dots, \circm(\bar\z^m)] \in \Re^{nC \times nm}$, retains the good invariance properties described above: the linear operators, now denoted as $\bar{\bm E}$ and $\bar{\bm C}^j$, remain block circulant, and represent {\em multi-channel 1D circular convolutions. }
Specifically, we have the following result (see Appendix \ref{ap:multichannel-circulant} for a proof).
\begin{proposition}[Multi-channel convolution structures of $\bar{\bm E}$ and $\bar{\bm C}^j$]
The matrix 
\begin{equation}
\label{eq:def-E-bar}
\bar\E \doteq \alpha\left(\bm I + \alpha \circm(\bar\Z) \circm(\bar\Z)^* \right) ^{-1}  
\end{equation}
is block circulant, i.e.,
\begin{equation*}
    \bar{\bm E} = 
    \left[\begin{smallmatrix}
        \bar{\bm E}_{1, 1} & \cdots & \bar{\bm E}_{1, C}\\
        \vdots & \ddots & \vdots \\
        \bar{\bm E}_{C, 1} & \cdots & \bar{\bm E}_{C, C}\\
    \end{smallmatrix}\right] \in \Re^{nC \times nC},
\end{equation*}
where each $\bar{\bm E}_{c, c'}\in \Re^{n \times n}$ is a circulant matrix. Moreover, $\bar{\bm E}$ represents a multi-channel circular convolution, i.e., for any multi-channel signal $\bar\z \in \Re^{C \times n}$ we have 
$$\bar\E \cdot \vec(\bar\z) = \vec( \bar{\bm e} \circledast \bar\z).$$ 
In above, $\bar{\bm e} \in \Re^{C \times C \times n}$ is a multi-channel convolutional kernel with $\bar{\bm e}[c, c'] \in \Re^{n}$ being the first column vector of $\bar{\bm E}_{c, c'}$, and $\bar{\bm e} \circledast \bar\z \in \Re^{C \times n}$ is the multi-channel circular convolution defined as
\begin{equation*}
    (\bar{\bm e} \circledast \bar\z)[c] \doteq \sum_{c'=1}^C \bar{\bm e}[c, c'] \circledast \bar{\z}[c'], \quad \forall c = 1, \ldots, C.
\end{equation*}
Similarly, the matrices $\bar{\bm C}^j$ associated with any subsets of $\bar{\bm Z}$ are also multi-channel circular convolutions. 
\label{prop:multichannel-circular-conv-1}
\end{proposition}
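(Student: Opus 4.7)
The plan is to establish the result in three stages: first show that $\circm(\bar\Z)\circm(\bar\Z)^*$ is block-circulant with circulant blocks (BCCB), next argue that the BCCB structure is preserved under the resolvent $\alpha(\I + \alpha \,\cdot\,)^{-1}$, and finally identify any BCCB matrix with the matrix representation of a multi-channel circular convolution by reading off the first column of each block.

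For the first stage, I would write
\begin{equation*}
\circm(\bar\Z)\,\circm(\bar\Z)^* \;=\; \sum_{i=1}^{m} \circm(\bar\z^i)\,\circm(\bar\z^i)^{*},
\end{equation*}
and then use the block form in \eqref{eqn:W-multichannel} to read off that the $(c,c')$-block of $\circm(\bar\z^i)\,\circm(\bar\z^i)^*$ equals $\circm(\z^i[c])\,\circm(\z^i[c'])^*$. Since circulant matrices are closed under transposition, product, and sum (the algebraic facts already invoked in Proposition~\ref{prop:circular-conv-1} and developed in Appendix~\ref{ap:circulant}), each $(c,c')$-block is itself circulant. Adding $\alpha \I_{nC}$ only modifies the diagonal blocks by $\alpha \I_n$, so the BCCB structure persists.

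For the second stage, the cleanest route is simultaneous diagonalization. Let $\F\in\Co^{n\times n}$ denote the unitary DFT matrix, so $\F\,\circm(\bm v)\,\F^{*}$ is diagonal for every circulant $\circm(\bm v)$. Conjugating a BCCB matrix by $\I_C\otimes \F$ turns each of its $C^{2}$ circulant blocks into an $n\times n$ diagonal matrix, and after the natural perfect-shuffle permutation the resulting matrix is block-diagonal with $n$ dense blocks of size $C\times C$. Block-diagonal matrices are trivially closed under inversion, and $\I + \alpha\,\circm(\bar\Z)\circm(\bar\Z)^{*} \succ 0$ guarantees invertibility. Undoing the permutation and conjugation returns a BCCB matrix, so $\bar\E$ defined in \eqref{eq:def-E-bar} is BCCB, with each block being a circulant matrix of the form $\circm(\bar{\bm e}[c,c'])$ for some first-column vector $\bar{\bm e}[c,c']\in\Re^{n}$.

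For the third stage, a direct computation finishes the job. Writing $\vec(\bar\z)=[\,\z[1]^{*},\ldots,\z[C]^{*}\,]^{*}$ and using $\bar\E_{c,c'}=\circm(\bar{\bm e}[c,c'])$, the $c$-th block of $\bar\E\,\vec(\bar\z)$ equals $\sum_{c'=1}^{C} \circm(\bar{\bm e}[c,c'])\,\z[c'] = \sum_{c'=1}^{C} \bar{\bm e}[c,c']\circledast \z[c']$, which matches the definition of $(\bar{\bm e}\circledast \bar\z)[c]$. The same three steps apply verbatim to $\bar\C^{j} = \alpha_j(\I + \alpha_j\,\circm(\bar\Z)\bm\Pi^{j}\circm(\bar\Z)^{*})^{-1}$, because $\circm(\bar\Z)\bm\Pi^{j}\circm(\bar\Z)^{*}$ is still a nonnegative weighted sum of the BCCB rank-one pieces $\circm(\bar\z^i)\circm(\bar\z^i)^{*}$. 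The main technical obstacle is the inversion step, but invoking simultaneous diagonalization of circulants by the DFT (available from Appendix~\ref{ap:circulant}) reduces it to inverting a block-diagonal matrix; everything else is careful bookkeeping of block indices.
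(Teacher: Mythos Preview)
Your proposal is correct and follows essentially the same approach as the paper: show that $\I + \alpha\,\circm(\bar\Z)\circm(\bar\Z)^{*}$ has circulant blocks via the closure properties of circulants, argue that this structure survives inversion, and then read off the multi-channel convolution from the block structure. The paper's own argument is in fact briefer---it simply invokes a stated Fact that the inverse of a block-circulant matrix is block-circulant---whereas your DFT-conjugation argument for the inversion step is exactly the mechanism the paper spells out in the subsequent ``Fast Computation in the Spectral Domain'' section (equations \eqref{eq:bar-E-frequency}--\eqref{eq:E-bar-fast-inverse}); so you have effectively merged the proposition's proof with the spectral computation that the paper postpones. Two cosmetic slips worth fixing: the identity you add is $\I_{nC}$, not $\alpha\I_{nC}$ (though either preserves BCCB), and the summands $\circm(\bar\z^i)\circm(\bar\z^i)^{*}$ are not rank-one (each $\circm(\bar\z^i)$ is $nC\times n$); what matters is only that they are BCCB, which you correctly use.
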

From Proposition \ref{prop:multichannel-circular-conv-1}, shift invariant ReduNet is a deep convolutional network for multi-channel 1D signals by construction. Notice that even if the initial lifting kernels are separated \eqref{eqn:sparse-lifting}, the matrix inverse  in \eqref{eq:def-E-bar} for computing $\bar{\bm E}$ (similarly for $\bar{\bm C^j}$) introduces ``cross talk'' among all $C$ channels. This multi-channel mingling effect will become more clear when we show below how to compute $\bar{\bm E}$ and $\bar{\bm C}^j$ efficiently in the frequency domain. Hence, unlike Xception nets~\citep{Xception}, these multi-channel convolutions in general are {\em not} depth-wise separable.\footnote{It remains open what additional structures on the data would lead to depth-wise separable convolutions.}

\subsection{Fast Computation in the Spectral Domain} The calculation of $\bar\E$ in \eqref{eq:def-E-bar} requires inverting a matrix of size $nC \times nC$, which has complexity $O(n^3C^3)$. 
By using the relationship between circulant matrix and Discrete Fourier Transform (DFT) of a 1D signal, this complexity can be significantly reduced.

Specifically, let $\F \in \Co^{n \times n}$ be the DFT matrix,\footnote{Here we scaled the matrix $\bm F$ to be unitary, hence it differs from the conventional DFT matrix by a $1/\sqrt{n}$.} and $\dft(\z) \doteq \F \z \in \Co^{n \times n}$ be the DFT of $\z \in \Re^n$, where $\Co$ denotes the set of complex numbers. 
We know all circulant matrices can be simultaneously diagonalized by the discrete Fourier transform matrix $\F$:
\begin{equation}
\label{eq:circ-dft-main}
    \circm(\z) = \F^* \diag(\dft(\z)) \F. 
\end{equation}  
We refer the reader to Fact \ref{fact:circulant} of the Appendix \ref{ap:1D-shift} for more detailed properties of circulant matrices and DFT. Hence the covariance matrix $\bar{\bm \Sigma}(\bar{\z})$ of the form \eqref{eqn:W-multichannel} can be converted to a standard ``blocks of diagonals'' form:
\begin{equation}
\bar{\bm \Sigma}(\bar{\z}) = 
\left[ 
\begin{matrix}
\F^* & \bm 0 & \bm 0  \\
\bm 0 & \footnotesize{\ddots} & \bm 0 \\
\bm 0 & \bm 0 & \F^*
\end{matrix}
\right]\left[ 
\begin{matrix}
\D_{11}(\bar{\z}) & \cdots & \D_{1C}(\bar{\z}) \\
{\footnotesize \vdots} & {\footnotesize \ddots} & {\footnotesize \vdots} \\
\D_{C1}(\bar{\z}) & \cdots & \D_{CC}(\bar{\z})
\end{matrix}
\right]\left[ 
\begin{matrix}
\F & \bm 0 & \bm 0  \\
\bm 0 & {\footnotesize \ddots} & \bm 0 \\
\bm 0 & \bm 0 & \F
\end{matrix}
\right]\in \Re^{nC \times nC},
\label{eqn:diagonal-block}
\end{equation}
where $\D_{cc'}(\bar{\z}) \doteq \diag(\dft(\z[c])) \cdot \diag(\dft(\z[c']))^* \in \Co^{n \times n}$ is a diagonal matrix. 
The middle of RHS of \eqref{eqn:diagonal-block} is a block diagonal matrix after a permutation of rows and columns.

Given a collection of multi-channel features $\{\bar\z^i \in \Re^{C \times n}\}_{i=1}^m$, we can use the relation in \eqref{eqn:diagonal-block} to compute $\bar{\E}$ (and similarly for $\bar{\C}^j$) as
\begin{equation}
    \bar\E = 
    \left[\begin{matrix}
    \F^* & \bm 0 & \bm 0  \\
    \bm 0 & \footnotesize{\ddots} & \bm 0 \\
    \bm 0 & \bm 0 & \F^*
    \end{matrix}\right]
    \cdot \alpha  \left(\I + \alpha \sum_{i=1}^m
    \left[\begin{matrix}
    \D_{11}(\bar{\z}^i) & \cdots & \D_{1C}(\bar{\z}^i) \\
    {\footnotesize \vdots} & {\footnotesize \ddots} & {\footnotesize \vdots} \\
    \D_{C1}(\bar{\z}^i) & \cdots & \D_{CC}(\bar{\z}^i)
    \end{matrix}\right]
    \right)^{-1} \cdot
    \left[\begin{matrix}
    \F & \bm 0 & \bm 0  \\
    \bm 0 & \footnotesize{\ddots} & \bm 0 \\
    \bm 0 & \bm 0 & \F
    \end{matrix}\right] \in \Re^{nC \times nC}.
\end{equation}
The matrix in the inverse operator is a block diagonal matrix with $n$ blocks of size $C \times C$ after a permutation of rows and columns. 
Hence, to compute $\bar \E$ and $\bar{\C}^j \in \Re^{nC \times nC}$, we only need to compute in the frequency domain the inverse of $C\times C$ blocks for $n$ times and the overall complexity is $O(nC^3)$.~\footnote{There is strong scientific evidence that neurons in the visual cortex encode and transmit information in the rate of spiking, hence the so-called spiking neurons \citep{spking-neuron-1993,spiking-neuron-book}. Nature might be exploiting the computational efficiency in the frequency domain for achieving shift invariance.} 

The benefit of computation with DFT motivates us to construct the ReduNet in the spectral domain. 
Let us consider the \emph{shift invariant coding rate reduction} objective for shift invariant features $\{\bar\z^i \in \Re^{C \times n}\}_{i=1}^m$:
\begin{multline}
    \Delta R_\circm(\bar\Z, \bm{\Pi}) \doteq \frac{1}{n}\Delta R(\circm(\bar\Z), \bar{\bm{\Pi}}) \\= 
    \frac{1}{2n}\log\det \Bigg(\I + \alpha  \circm(\bar\Z)  \circm(\bar\Z)^{*} \Bigg) 
    - \sum_{j=1}^{k}\frac{\gamma_j}{2n}\log\det\Bigg(\I + \alpha_j  \circm(\bar\Z) \bar{\bm{\Pi}}^{j} \circm(\bar\Z)^{*} \Bigg),
\end{multline}
where $\alpha = \frac{Cn}{mn\epsilon^{2}} = \frac{C}{m\epsilon^{2}}$, $\alpha_j = \frac{Cn}{\textsf{tr}\left(\bm{\Pi}^{j}\right)n\epsilon^{2}} = \frac{C}{\textsf{tr}\left(\bm{\Pi}^{j}\right)\epsilon^{2}}$, $\gamma_j = \frac{\textsf{tr}\left(\bm{\Pi}^{j}\right)}{m}$, and $\bar{\bm \Pi}^j$ is augmented membership matrix in an obvious way.
The normalization factor $n$ is introduce because the circulant matrix $\circm(\bar\Z)$ contains $n$ (shifted) copies of each signal.
Next, we derive the ReduNet for maximizing $\Delta R_\circm(\bar\Z, \bm{\Pi})$.

Let $\dft(\bar\Z) \in \Co^{C \times n \times m}$ be data in spectral domain obtained by taking DFT on the second dimension
%each channel of each signal $\bar\z^i$ 
and denote $\dft(\bar\Z)(p) \in \Co^{C \times m}$ the $p$-th slice of $\dft(\bar\Z)$ on the second dimension. 
Then, the gradient of $\Delta R_\circm(\bar\Z, \bm{\Pi})$ w.r.t. $\bar\Z$ can be computed from the expansion $\bar\cE \in \Co^{C \times C \times n}$ and compression $\bar\cC^j \in \Co^{C \times C \times n}$ operators in the spectral domain, defined as
\begin{equation}
\begin{aligned}
    \bar\cE(p) \doteq&\;  \alpha \cdot \left[\I + \alpha \cdot \dft(\bar\Z)(p) \cdot \dft(\bar\Z)(p)^* \right]^{-1} \quad \in \Co^{C\times C}, \\
    \bar\cC^j(p) \doteq& \; \alpha_j \cdot\left[\I + \alpha_j \cdot \dft(\bar\Z)(p) \cdot \bm{\Pi}_j \cdot \dft(\bar\Z)(p)^*\right]^{-1} \quad \in \Co^{C\times C}.
\end{aligned}
\end{equation}
In above, $\bar\cE(p)$ (resp., $\bar\cC^j(p)$) is the $p$-th slice of $\bar\cE$ (resp., $\bar\cC^j$) on the last dimension. 
Specifically, we have the following result (see Appendix \ref{ap:1D-shift} for a complete proof).
 
\begin{theorem} [Computing multi-channel convolutions $\bar{\bm E}$ and $\bar{\bm C}^j$]
\label{thm:1D-convolution}
Let $\bar\U \in \Co^{C \times n \times m}$ and $\bar\W^{j} \in \Co^{C \times n \times m}, j=1,\ldots, k$ be given by 
\begin{eqnarray}
    \bar\U(p) &\doteq& \bar\cE(p) \cdot \dft(\bar\Z)(p), \\
    \bar\W^{j}(p) &\doteq& \bar\cC^j(p) \cdot \dft(\bar\Z)(p),
\end{eqnarray}
for each  $p \in \{0, \ldots, n-1\}$. Then, we have
\begin{eqnarray}
    \frac{1}{2n}\frac{\partial \log \det (\I + \alpha \cdot \circm(\bar\Z) \circm(\bar\Z)^{*} )}{\partial \bar\Z} &=& \idft(\bar\U), 
    \\
    \frac{\gamma_j}{2n}\frac{\partial  \log\det (\I + \alpha_j \cdot \circm(\bar\Z) \bm \bar{\bm \Pi}^j \circm(\bar\Z)^{*})}{\partial \bar\Z}&=&
    \gamma_j \cdot \idft(\bar\W^{j} \bm \Pi^j).
\end{eqnarray}
In above, $\idft(\bar\U)$ is the time domain signal obtained by taking inverse DFT on each channel of each signal in $\bar\U$. 
\end{theorem}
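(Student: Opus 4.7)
The plan is to reduce the gradient computation to the spectral domain by combining two ingredients: the standard $\log\det$ matrix-calculus identity, and the fact that the DFT simultaneously diagonalizes every circulant matrix (Fact \ref{fact:circulant} together with equation \eqref{eqn:diagonal-block}). The formula $\idft(\bar\U)$ will emerge naturally once the block-circulant matrix $\bar\E$ is made to act on the (also block-circulant) object $\circm(\bar\Z)$, with the DFT converting the inversion of an $nC\times nC$ matrix into $n$ independent inversions of $C\times C$ matrices.

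First I would apply the identity $\nabla_{X}\log\det(\I+\alpha X X^{*})=2\alpha(\I+\alpha X X^{*})^{-1}X$ with $X=\circm(\bar\Z)$, obtaining
\begin{equation*}
\nabla_{\circm(\bar\Z)}\,\tfrac{1}{2n}\log\det\bigl(\I+\alpha\,\circm(\bar\Z)\circm(\bar\Z)^{*}\bigr)=\tfrac{1}{n}\,\bar\E\,\circm(\bar\Z),
\end{equation*}
and then propagate the chain rule through the linear map $\bar\Z\mapsto\circm(\bar\Z)$. A direct index computation (using $\circm(\z)_{ij}=z_{(i-j)\bmod n}$) shows that the Frobenius adjoint $\circm^{*}$ reads off $n$ times the first columns of the circulant blocks of its argument. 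Because $\bar\E$ is block-circulant and $\circm(\bar\Z)$ is a stack of circulant blocks, the product $\bar\E\,\circm(\bar\Z)$ is again block-circulant, so applying $\circm^{*}$ to it extracts those first columns, with the factor $n$ exactly cancelling the prefactor $1/n$. By Proposition \ref{prop:multichannel-circular-conv-1}, the extracted first columns are precisely the multi-channel convolution kernel $\bar{\bm e}$, so the columnwise gradient equals $\bar{\bm e}\circledast \bar\z^{i}$.

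It remains to evaluate this multi-channel convolution in the spectral domain. Using \eqref{eqn:diagonal-block}, conjugation by $\I_{C}\otimes\F$ (followed by the row-column permutation implicit in that equation) converts $\I+\alpha\,\circm(\bar\Z)\circm(\bar\Z)^{*}$ into a block-diagonal matrix whose $p$-th $C\times C$ block is exactly $\I+\alpha\,\dft(\bar\Z)(p)\,\dft(\bar\Z)(p)^{*}$; block-wise inversion then identifies that block with $\alpha^{-1}\bar\cE(p)$. Consequently, multiplication by $\bar\E$ acts as: take DFT along the length-$n$ axis to get $\dft(\bar\Z)\in\mathbb{C}^{C\times n\times m}$, left-multiply each frequency slice by the $C\times C$ matrix $\bar\cE(p)$, and take the inverse DFT; this is exactly $\idft(\bar\U)$ with $\bar\U(p)=\bar\cE(p)\,\dft(\bar\Z)(p)$, proving the first identity. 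The second identity follows by repeating the argument with $\bar{\bm\Pi}^{j}$ inserted into the middle factor: differentiating $\circm(\bar\Z)\,\bar{\bm\Pi}^{j}\,\circm(\bar\Z)^{*}$ places a $\bm\Pi^{j}$ on the right (hence the $\bm\Pi^{j}$ attached to $\bar\W^{j}$), while the diagonalization step replaces $\bar\cE(p)$ by $\bar\cC^{j}(p)$, using that $\bar{\bm\Pi}^{j}$ is a block-diagonal repetition of $\bm\Pi^{j}$ which commutes with the DFT conjugation on the signal-length axis.

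The main obstacle will be the bookkeeping of index conventions: one must carefully track how the $nC\times nm$ matrix derivative pulls back to the $C\times n\times m$ tensor $\bar\Z$, how the adjoint $\circm^{*}$ interacts with block-circulant structure (and in particular produces the correct cancellation of the $1/n$ factor), and how the row-column permutation implicit in \eqref{eqn:diagonal-block} rearranges block-circulant matrices into per-frequency block-diagonal ones. None of these steps is analytically deep, but each admits several admissible ordering and scaling conventions that must be reconciled; once they are set correctly, the identity is a direct consequence of the DFT diagonalization of circulant matrices.
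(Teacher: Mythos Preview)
Your proposal is correct and follows essentially the same approach as the paper: apply the standard $\log\det$ gradient to obtain $\bar\E\,\circm(\bar\z^i)$, pull back through the linear map $\circm$ via the chain rule (producing the factor $n$ that cancels the $1/n$), and diagonalize block-circulant structure with the DFT to obtain per-frequency $C\times C$ actions. The paper's proof (in Appendix~\ref{ap:1D-shift}) carries this out by explicit matrix manipulation with the permutation $\P$ from \eqref{eq:permute-dft}, whereas you phrase the same steps in terms of the adjoint $\circm^{*}$ acting on block-circulant matrices; the content is identical.
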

By this result, the gradient ascent update in \eqref{eqn:gradient-descent} (when applied to $\Delta R_\circm(\bar\Z, \bm{\Pi})$) can be equivalently expressed as an update in spectral domain on $\bar\V_\ell \doteq \dft(\bar\Z_\ell)$ as
\begin{equation}
    \bar\V_{\ell+1}(p) \; \propto \; \bar\V_{\ell}(p) + \eta \; \Big(\bar\cE_\ell(p) \cdot \bar\V_\ell(p) - \sum_{j=1}^k \gamma_j \bar\cC_\ell^j(p) \cdot \bar\V_\ell(p) \bm \Pi^j \Big), \quad p = 0, \ldots, n-1,
\end{equation}
and a ReduNet can be constructed in a similar fashion as before. For implementation details, we refer the reader to Algorithm~\ref{alg:training-1D} of Appendix~\ref{ap:1D-shift}.

\begin{figure}[t]
  \begin{center}
    \includegraphics[width=0.95\textwidth]{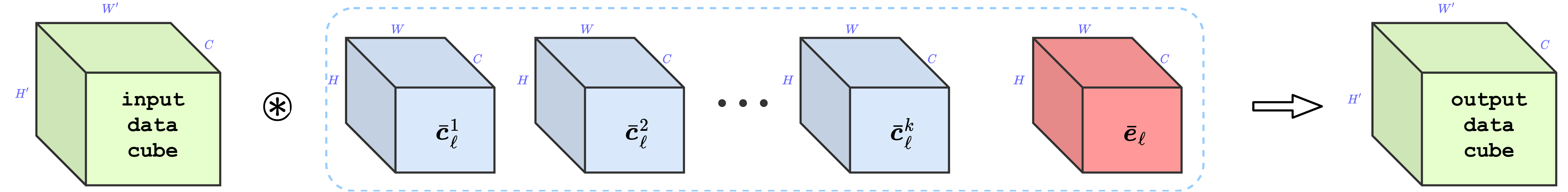}
  \end{center}
\caption{\small For invariance to 2D translation, $\bar{\bm E}$ and $\bar{\bm C}^j$ are automatically multi-channel 2D  convolutions.}\label{fig:multi-channel-convolution}
\end{figure}

\subsection{2D Translation Invariance}
In the case of classifying images invariant to arbitrary 2D translation, we may view the image (feature) $\z \in \Re^{(W\times H)\times C}$ as a function defined on a torus $\mathcal{T}^2$ (discretized as a  $W\times H$ grid) and consider $\mathbb{G}$ to be the (Abelian) group of all 2D (circular) translations on the torus. As we will show in the Appendix \ref{ap:2D-translation}, the associated linear operators $\bar \E$ and $\bar{\C}^j$'s act on the image feature $\z$ as {\em multi-channel 2D circular convolutions}, as shown in Figure \ref{fig:multi-channel-convolution}. The resulting network will be a deep convolutional network that shares the same multi-channel convolution structures as empirically designed  CNNs for 2D images \citep{LeNet,krizhevsky2012imagenet} or ones suggested for promoting sparsity \citep{papyan2017convolutional}! The difference is that, again, the convolution architectures and parameters of our network are derived from the rate reduction objective, and so are all the nonlinear activations. Like the 1D signal case, the derivation in Appendix \ref{ap:2D-translation} shows that this convolutional network can be constructed much more efficiently in the spectral domain. See Theorem \ref{thm:2D-convolution} of Appendix \ref{ap:2D-translation} for a rigorous statement and justification.

\subsection{Overall Network Architecture and Comparison}
\label{sec:architecture-comparison}
Following the above derivation, we see that in order to find a linear discriminative representation (LDR)  for multiple classes of signals/images that is invariant to translation, sparse coding, a multi-layer architecture with multi-channel convolutions, different nonlinear activation, and spectrum computing all become {\em necessary} components for achieving the objective effectively and efficiently. Figure \ref{fig:learn-to-classify-diagram} illustrates the overall process of learning such a representation via invariant rate reduction on the input sparse codes. 

\begin{figure}[t]
    \centering
    \includegraphics[width=0.98\linewidth]{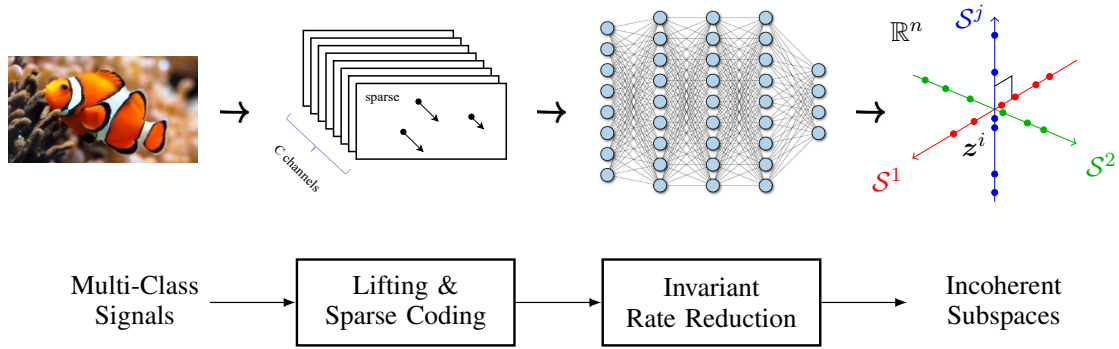}
    \caption{The overall process for classifying multi-class signals with shift invariance: Multi-channel lifting,  sparse coding, followed by a multi-channel convolution ReduNet for invariant rate reduction. These components are {\em necessary} in order to map shift-invariant multi-class signals to incoherent (linear) subspaces as an LDR. Note that the architectures of most modern deep neural networks resemble this process. The so-learned LDR facilitates subsequent tasks such as classification.}
    \label{fig:learn-to-classify-diagram}
    \vspace{-0.1in}
\end{figure}

\paragraph{Connections to convolutional and recurrent sparse coding.}
As we have discussed in the introduction, it has been long noticed that there are connections between sparse coding and deep networks, including the work of Learned ISTA  \citep{gregor2010learning} and many of its convolutional and recurrent variants  \citep{Wisdom2016InterpretableRN,papyan2017convolutional,sulam2018multilayer,monga2019algorithm}. Although both sparsity and convolution are advocated as desired characteristics for such networks, their precise roles for the classification task have never been fully revealed. For instance, \cite{papyan2017convolutional} has suggested a convolutional sparse coding framework for the CNNs. It actually aligns well with the early lifting and sparse coding stage of the overall process. Nevertheless, our new framework suggests, once such sparse codes are obtained, one needs subsequent ReduNet to transform them to the desired LDRs. Through our derivations, we see how lifting and sparse coding (via random filtering or sparse deconvolution), multi-channel convolutions ($\bar{\bm E}, \bar{\bm C}^j$), and different nonlinear operations: grouping $\widehat{\bm \pi}^j(\cdot)$, normalization $\mathcal{P}_{\mathbb{S}^{n-1}}(\cdot)$, and soft thresholding $\bm \tau(\cdot)$ are all derived as {\em necessary processes} from the objective of maximizing rate reduction of the learned features while enforcing shift invariance.

\paragraph{Comparison with ScatteringNet and PCANet.}
Using convolutional operators/with pooling to ensure equivarience/invariance have been common practice in deep networks \citep{lecun1995convolutional,Cohen-ICML-2016}, but the number of convolutions needed has never been clear and their parameters need to be learned via back propagation from randomly initialized ones. Of course, one may also choose a complete basis for the convolution filters in  each layer to ensure translational equivariance/invariance for a wide range of signals. ScatteringNet \citep{scattering-net} and many followup works \citep{Wiatowski-2018} have shown to use modulus of 2D wavelet transform to construct invariant features. However, the number of convolutions needed usually grow {\em exponentially} in the number of layers. That is the reason why ScatteringNet type networks cannot be so deep and typically limited to only 2-3 layers. In practice, it is often used in a hybrid setting \cite{Zarka2020Deep, zarka2021separation} for better performance and scalability. On the other hand, PCANet \citep{chan2015pcanet} argues that one can significantly reduce the number of convolution channels by learning features directly from the data. 
%if they are adaptive to the data of interest to begin with but like the scattering transform the constructed features are obviously to the subsequent classification task. 
In contrast to ScatteringNet and PCANet, the proposed invariant ReduNet \textit{by construction} learns equivariant features from the data that are discriminative between classes. As experiments on MNIST in Appendix \ref{app:ReduNet-Scattering} show, the representations learned by ReduNet indeed preserve translation information. In fact, scattering transform can be used with ReduNet in a complementary fashion. One may replace the aforementioned random (lifting) filters with the scattering transform. As experiments in the Appendix \ref{app:ReduNet-Scattering} show this can yield significantly better classification performance.

Notice that, none of the previous ``convolution-by-design'' approaches explain why we need {\em multi-channel} (3D) convolutions instead of separable (2D) ones, let alone how to design them. In contrast, in the new rate reduction framework, we see that both the forms and roles of the multi-channel convolutions ($\bar{\bm E}, \bar{\bm C}^j$) are explicitly derived and justified, the number of filters (channels) remains constant through all layers, and even values of their parameters are determined by the data of interest. Of course, as mentioned before the values of the parameters can be further fine-tuned to improve performance, as we will see in the experimental section (Table \ref{table:backprop_acc}).

\paragraph{Sparse coding, spectral computing, and subspace embedding in nature.} Notice that {\em sparse coding} has long been hypothesized as the guiding organization principle for the visual cortex of primates \citep{olshausen1996emergence}. Through years of evolution, the visual cortex has learned to sparsely encode the visual input with all types of localized and oriented filters. Interstingly, there have been strong scientific evidences that neurons in the visual cortex transmit and process information in terms of {\em rates of spiking}, i.e. in the spectrum rather than the magnitude of signals, hence the so-called ``spiking neurons'' \citep{spking-neuron-1993,spiking-neuron-book,Belitski5696}. Even more interestingly, recent studies in neuroscience have started to reveal how these mechanisms might be integrated in the inferotemporal (IT) cortex, where neurons encode and process information about high-level object identity (e.g. face recognition), invariant to various transformations \citep{Majaj13402,Chang-Cell-2017}. In particular, \cite{Chang-Cell-2017} went even further to hypothesize that high-level neurons encode the face space as a {\em linear subspace} with each cell likely encoding one axis of the subspace (rather than previously thought ``an exemplar''). The framework laid out in this paper suggests that such a ``high-level'' compact (linear and discriminative) representation can be efficiently and effectively learned in the spectrum domain via an arguably much simpler and more natural ``forward propagation'' mechanism. Maybe, just maybe, nature has already learned to exploit what mathematics reveals as the most parsimonious and economic.

\section{Experimental Verification}\label{sec:experiments}
In this section, we conduct experiments to (1). \textit{Validate} the effectiveness of the proposed  maximal coding rate reduction (\textbf{MCR$^2$}) principle analyzed in Section~\ref{sec:principled-objective}; and (2). \textit{Verify} whether the constructed \textbf{ReduNet}, including the basic vector case ReduNet in Section~\ref{sec:vector} and the invariance ReduNet in Section~\ref{sec:shift-invariant}, achieves its design objectives through experiments. 
Our goal in this work is not to push the state of the art performance on any real datasets with additional engineering ideas and heuristics, although the experimental results clearly suggest this potential in the future.  All code is implemented in Python mainly using NumPy and PyTorch. All of our experiments are conducted in a computing node with 2.1 GHz Intel Xeon Silver CPU,  256GB of memory and 2 Nvidia RTX2080. Implementation details and many more experiments and  can be found in Appendix~\ref{ap:additional-exp} and \ref{sec:appendix-redunet-exp}.

\subsection{Experimental Verification of the MCR$^2$ Objective}\label{sec:experiment-objective}
In this subsection, we present experimental results on investigating the MCR$^2$ objective function for training neural networks. Our theoretical analysis in Section~\ref{sec:principled-objective} shows how the {\em maximal coding rate reduction} (MCR$^2$) is a principled measure for learning discriminative and diverse representations for mixed data. In this section, we demonstrate experimentally how this principle alone, {\em without any other heuristics,} is adequate to learning good representations. 
More specifically, we apply the widely adopted neural network architectures (such as ResNet~\citep{he2016deep}) as the feature mapping $\z = f(\x,\bm \theta)$ and optimize the neural network parameters $\bm \theta$ to achieve maximal coding rate reduction. 
Our goal here is to validate effectiveness of this principle through its most basic usage and fair comparison with existing frameworks. More implementation details and experiments are given in Appendix~\ref{ap:additional-exp}.  The code for reproducing the results on the effectiveness of MCR$^2$ objective in this section can be found in \url{https://github.com/Ma-Lab-Berkeley/MCR2}.

\vspace{-0.05in}
\paragraph{Supervised learning via rate reduction.} When class labels are provided during training, we assign the membership (diagonal) matrix $\bm{\Pi} = \{\bm{\Pi}^j\}_{j=1}^{k}$ as follows: for each sample $\bm{x}^i$ with label $j$, set $\bm{\Pi}^{j}(i,i) = 1$ and  $\bm{\Pi}^l(i,i)=0, \forall l \not= j$. Then the mapping $f(\cdot, \bm{\theta})$ can be learned by optimizing \eqref{eqn:maximal-rate-reduction}, where $\bm{\Pi}$ 
remains constant. We apply stochastic gradient descent to optimize MCR$^2$, and for each iteration we use mini-batch data $\{(\bm{x}^i, \bm{y}^{i})\}_{i=1}^{m}$ to approximate the MCR$^2$ loss.

\begin{figure*}[t]
\subcapcentertrue
  \begin{center}
    \subfigure[\label{fig:train-test-loss-pca-1} Evolution of $R, R_c, \Delta R$ during the training process.]{\includegraphics[width=0.32\textwidth]{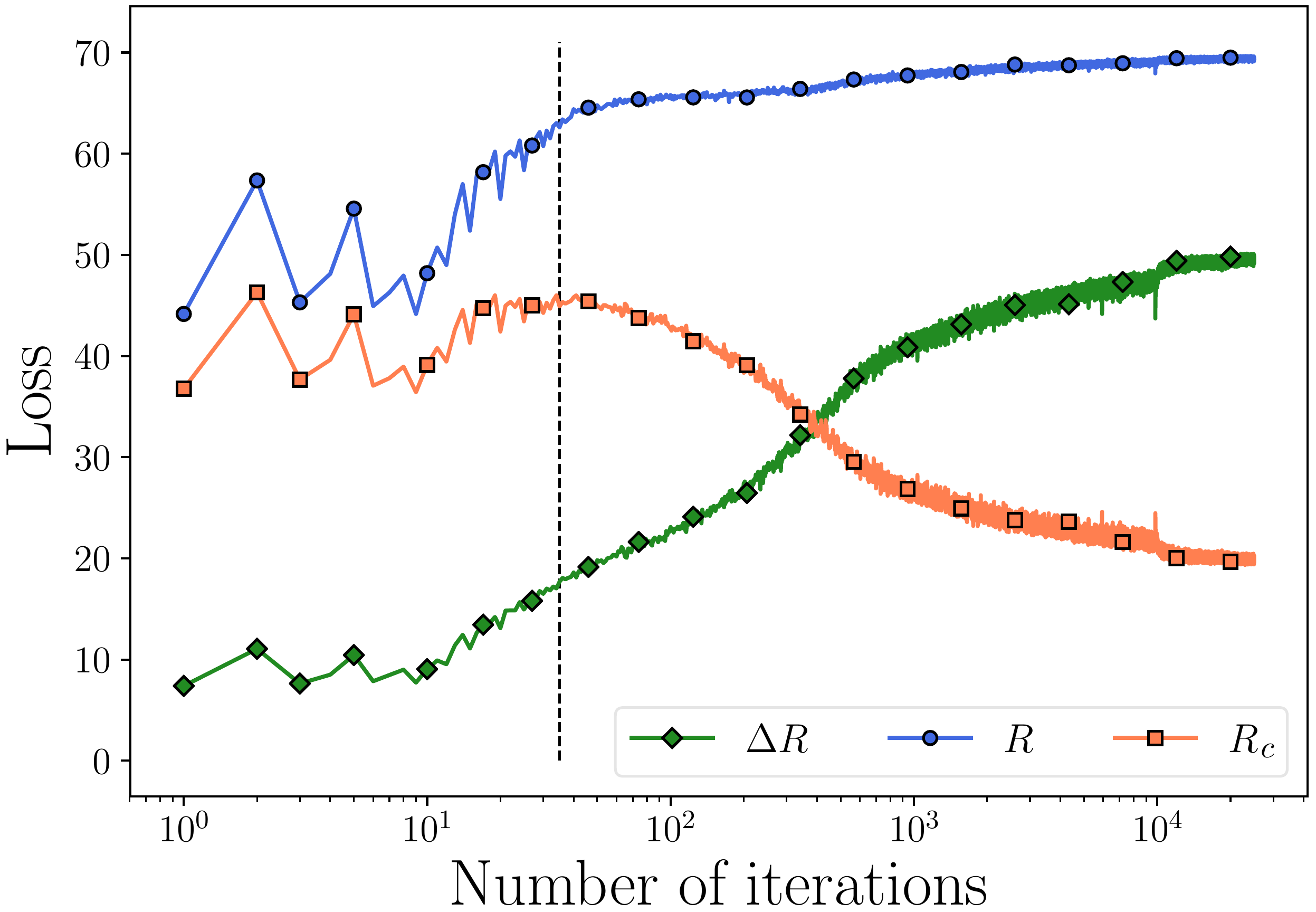}}
    \subfigure[\label{fig:train-test-loss-pca-2}Training loss versus testing loss.]{\includegraphics[width=0.32\textwidth]{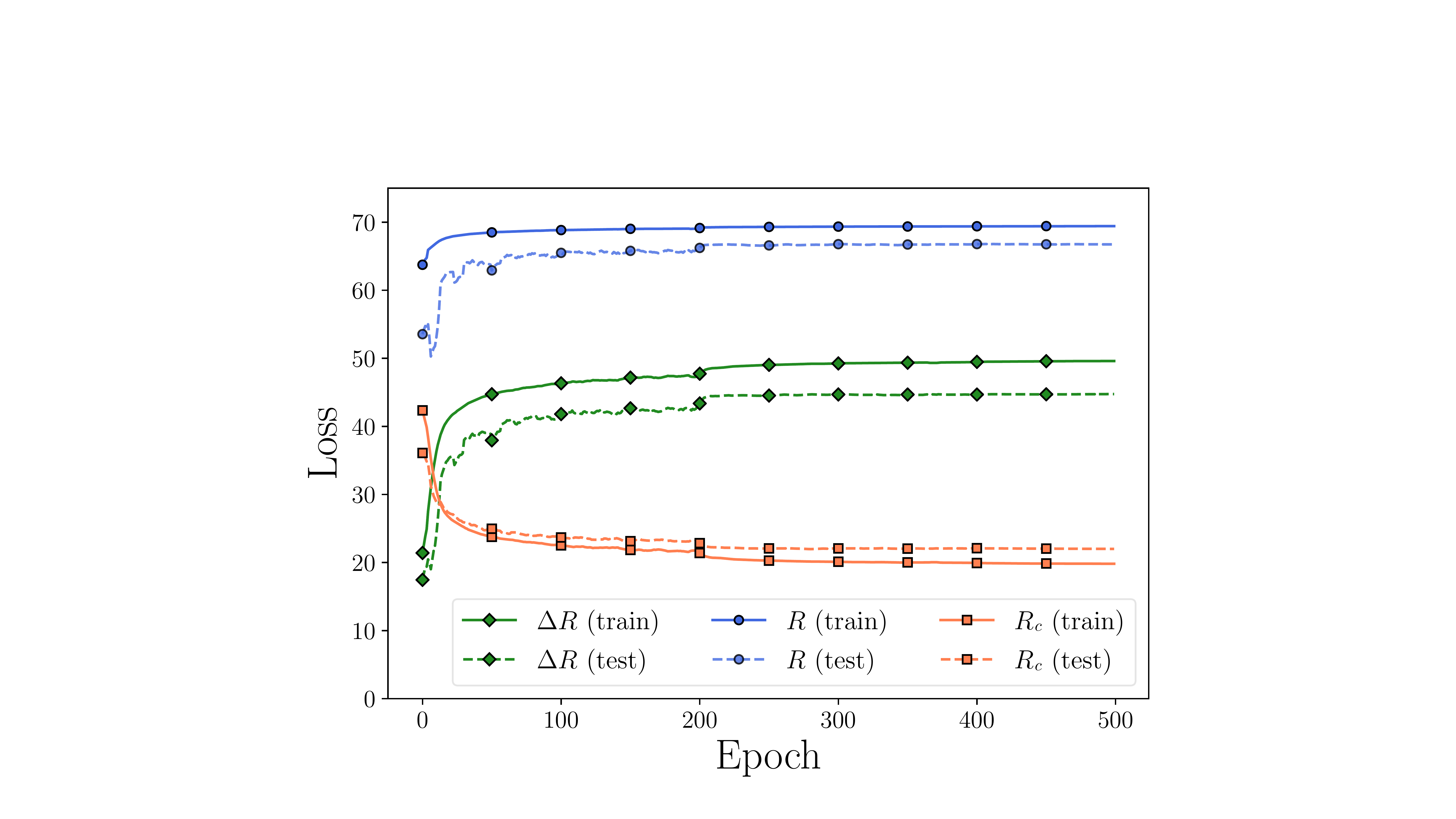}}
    \subfigure[\label{fig:train-test-loss-pca-3}PCA: {\small (\textbf{red}) overall data; (\textbf{blue}) individual classes}.]
    {\includegraphics[width=0.32\textwidth]{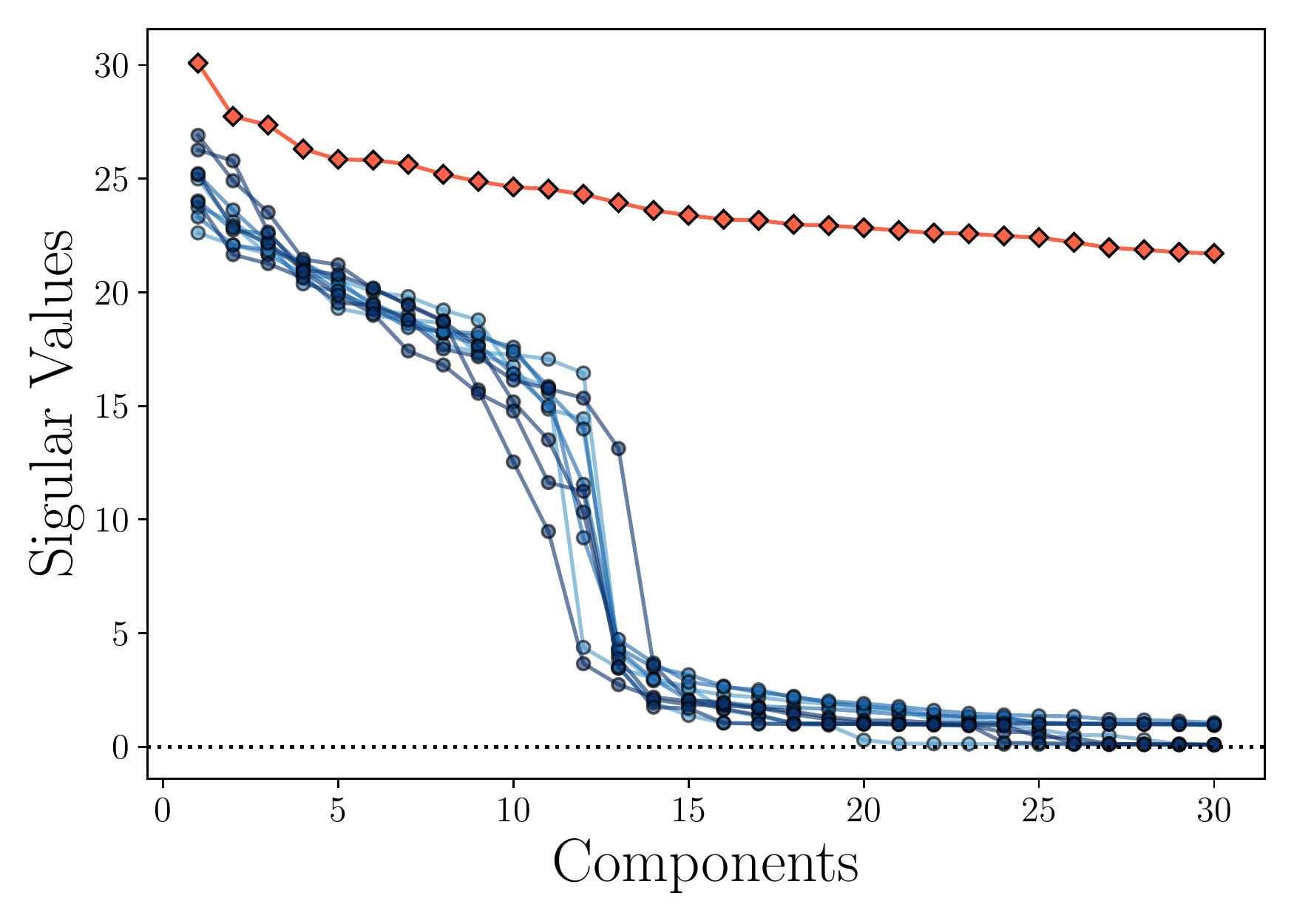}}
    \vskip -0.05in
    \caption{\small Evolution of the rates of MCR$^2$ in the training process and principal components of learned features.}
    \label{fig:train-test-loss-pca}
  \end{center}
  \vskip -0.4in
\end{figure*} 

\vspace{-0.05in}
\paragraph{Evaluation via classification.} As we will see, in the supervised setting, the learned representation has very clear subspace structures. So to evaluate the learned representations, we consider a natural nearest subspace classifier. For each class of learned features $\Z^j$, let $\bm{\mu}^j \in \Re^n$ be the mean of the representation vectors of $j$-th class and $\U^j \in \Re^{n \times r_j}$ be the first $r_j$ principal components for $\Z^j$, where $r_j$ is the estimated  dimension of class $j$. The predicted label of a test data $\x'$ is given by
$j' = \argmin_{j \in \{1, \ldots, k\}}\|(\I - \U^j (\U^j)^*) (f(\x', \bm{\theta}) - \bm{\mu}^j)\|_2^2.$

\vspace{-0.05in}
\paragraph{Experiments on real data.} We consider CIFAR10 dataset~\citep{krizhevsky2009learning} and ResNet-18 \citep{he2016deep} for $f(\cdot, \bm{\theta})$. We replace the last linear layer of ResNet-18 by a two-layer fully connected network with ReLU activation function such that the output dimension is 128. We set the mini-batch size as $m = 1,000$ and the precision parameter $\epsilon^2 = 0.5$. More results can be found in Appendix~\ref{sec:appendix-subsec-sup}.

Figure~\ref{fig:train-test-loss-pca-1} illustrates how the two rates and their difference (for both training and test data) evolves over epochs of training: After an initial phase, $R$ gradually increases while $R_c$ decreases, indicating that features $\bm Z$ are expanding as a whole while each class $\bm Z^j$ is being compressed.  
Figure~\ref{fig:train-test-loss-pca-3} shows the distribution of singular values per $\Z^j$ and Figure~\ref{fig:low-dim} (right) shows the angles of features sorted by class.  Compared to the geometric loss \citep{lezama2018ole}, our features are {\em not only orthogonal but also of much higher dimension}. We compare the singular values of representations, both overall data and individual classes, learned by using cross-entropy and MCR$^2$ in Figure~\ref{fig:pca-plot} and Figure~\ref{fig:heatmap-plot} in Appendix \ref{sec:subsec-pca}. We 
find that the representations learned by using MCR$^2$ loss are much more diverse than the ones learned by using cross-entropy loss. In addition, we find that we are able to select diverse images from the same class according to the ``principal'' components of the learned features (see Figure~\ref{fig:visual-class-2-8} and Figure~\ref{fig:visual-overall-data}).

One potential caveat of MCR$^2$ training is how to optimally select the output dimension $n$ and training batch size $m$. For a given output dimension $n$, a sufficiently large batch size $m$ is needed in order to achieve good classification performance. More detail study on varying output dimension $n$ and batch size $m$ is listed in Table~\ref{table:ablation-supervise} of Appendix~\ref{sec:appendix-subsec-sup}.

\begin{figure*}[t]
\subcapcentertrue
\begin{center}
    \subfigure[\label{fig:visual-bird}Bird]{\includegraphics[width=0.47\textwidth]{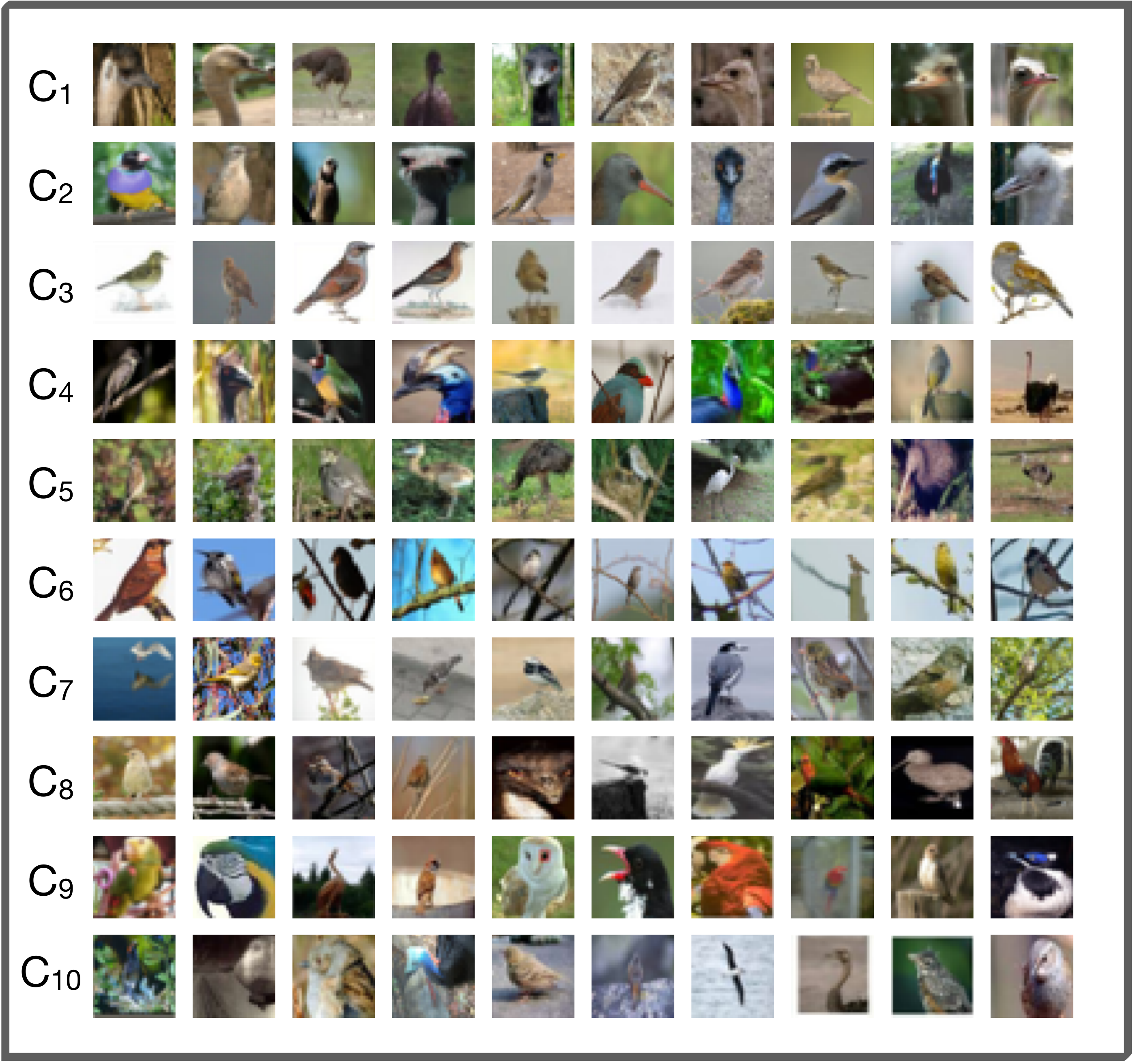}}
    % \hspace{0.05cm}
    \subfigure[\label{fig:visual-ship}Ship]{\includegraphics[width=0.47\textwidth]{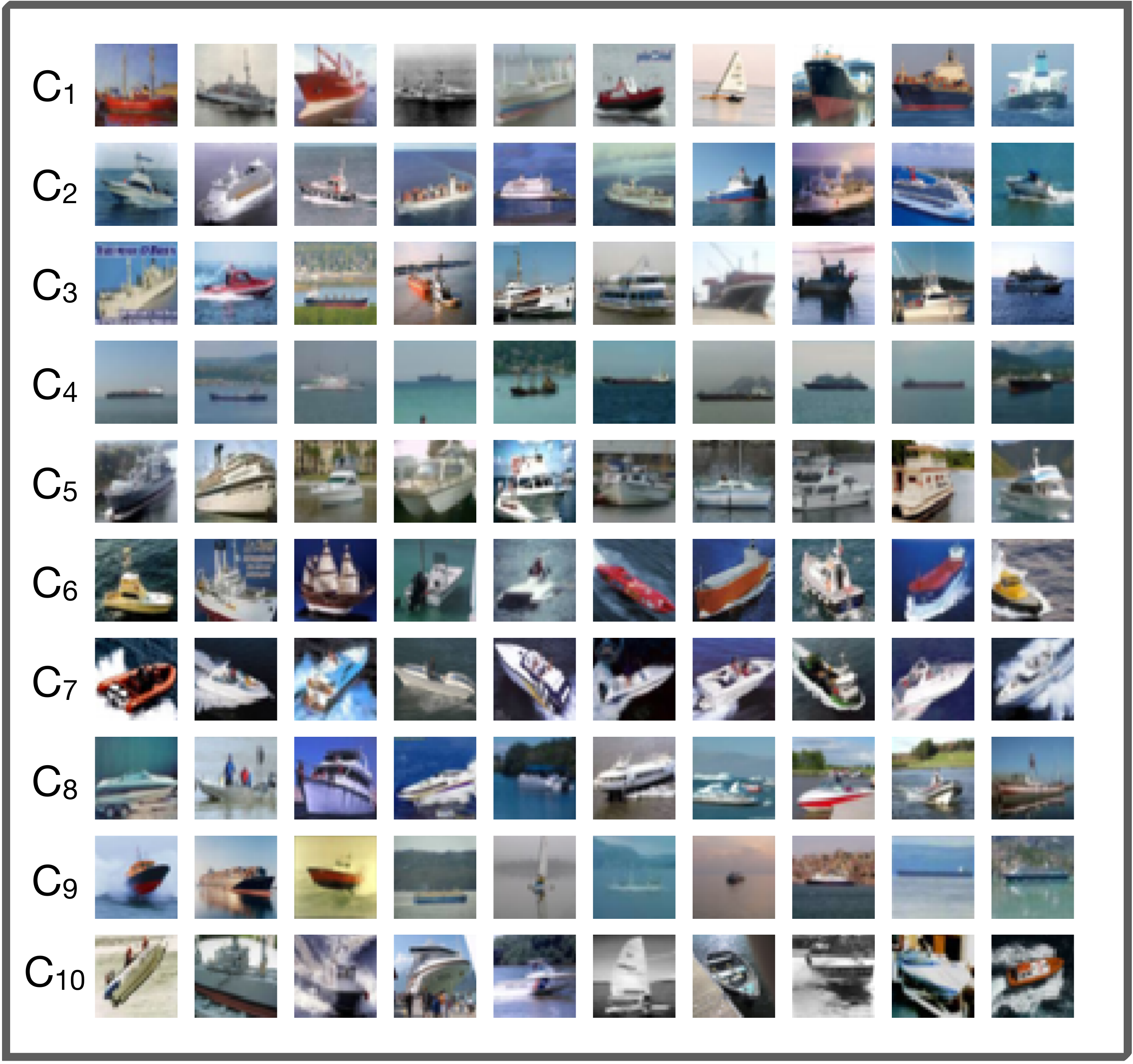}}
    \caption{\small Visualization of principal components learned for class 2-`Bird' and class 8-`Ship'. For each class $j$, we first compute the top-10 singular vectors of the SVD of the learned features $\Z^j$. Then for the $l$-th singular vector of class $j$ (denoted by $\u_{j}^{l}$), and for the feature of the $i$-th image of class $j$ (denoted by $\z_{j}^{i}$), we calculate the absolute value of inner product, $| \langle  \z_{j}^{i}, \u_{j}^{l} \rangle|$, then we select the top-10 images according to  $| \langle  \z_{j}^{i}, \u_{j}^{l} \rangle|$ for each singular vector. 
    In the above two figures, each row corresponds to one singular vector (component $C_l$). The rows are sorted based on the magnitude of the associated singular values, from large to small.}
\label{fig:visual-class-2-8}
\end{center}
\vskip -0.4in
\end{figure*}

\begin{figure*}[t]
  \begin{center}
    \subfigure[\label{fig:train-label-noise-1} $\Delta R\big(\Z(\bm{\theta}), \bm{\Pi}, \epsilon\big)$.]{\includegraphics[width=0.32\textwidth]{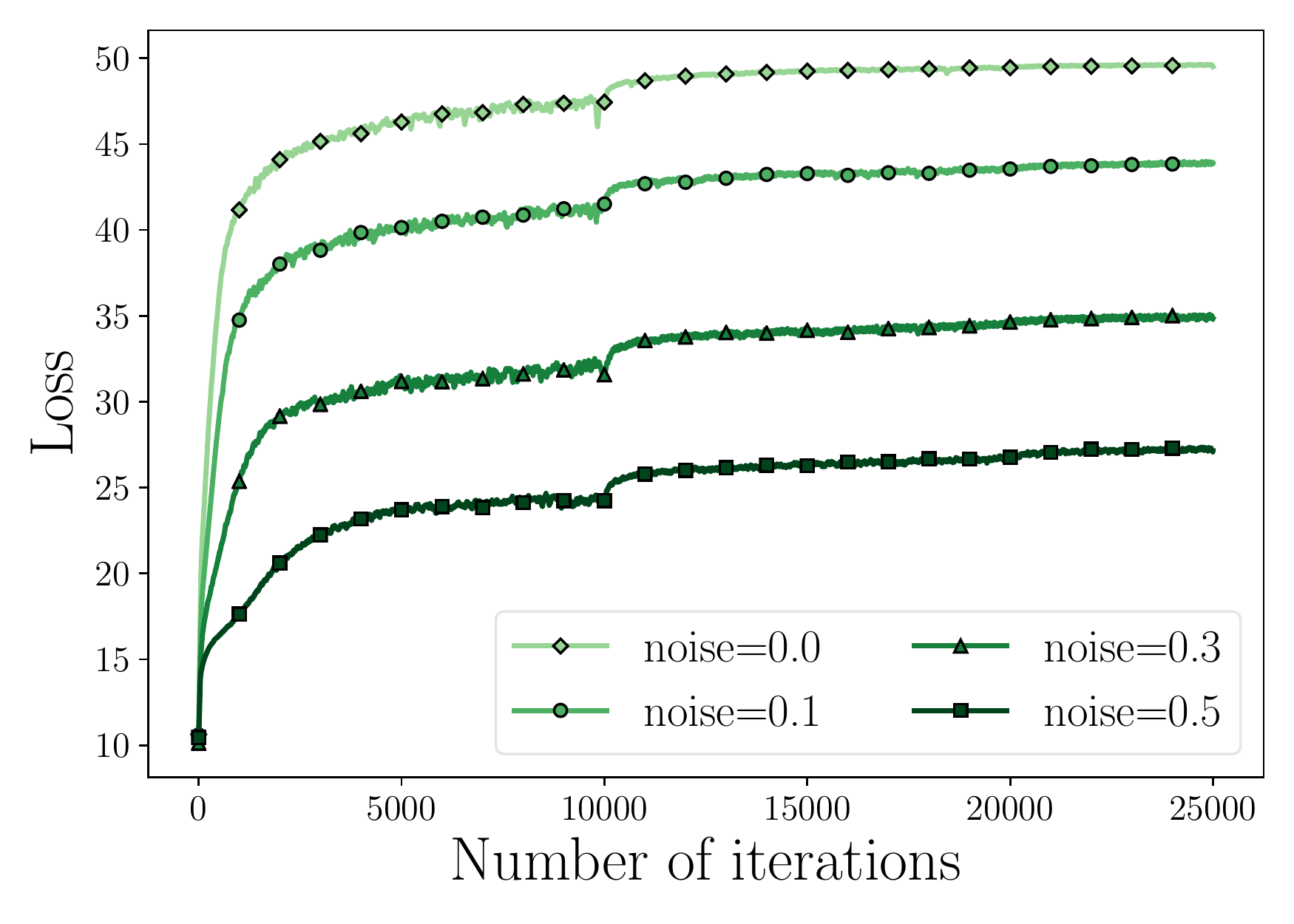}}
    \subfigure[\label{fig:train-label-noise-2} $R(\Z(\bm{\theta}), \epsilon)$.]{\includegraphics[width=0.32\textwidth]{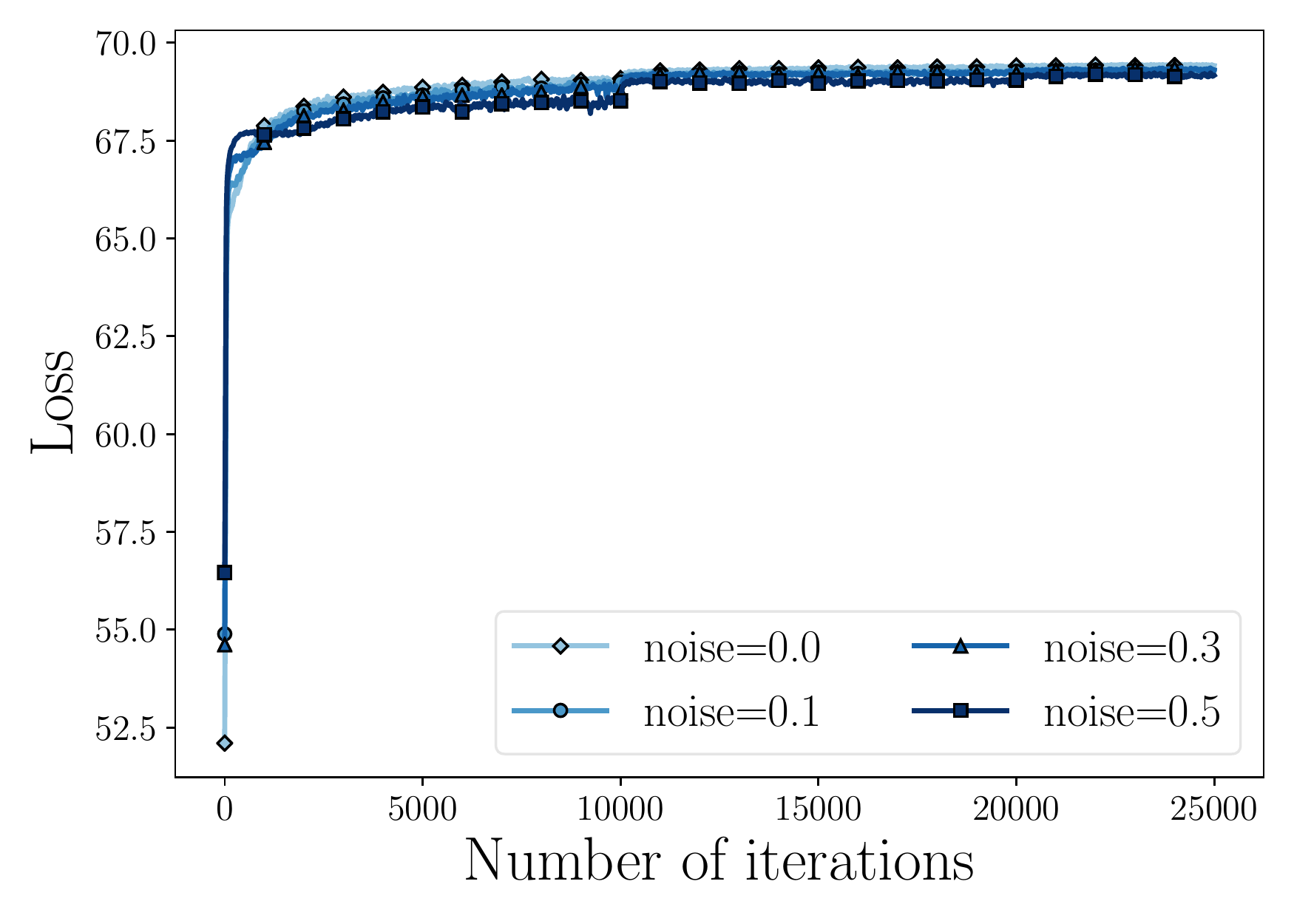}}
    \subfigure[\label{fig:train-label-noise-3}  $R_c(\Z(\bm{\theta}),  \epsilon \mid \bm{\Pi})$.]{\includegraphics[width=0.32\textwidth]{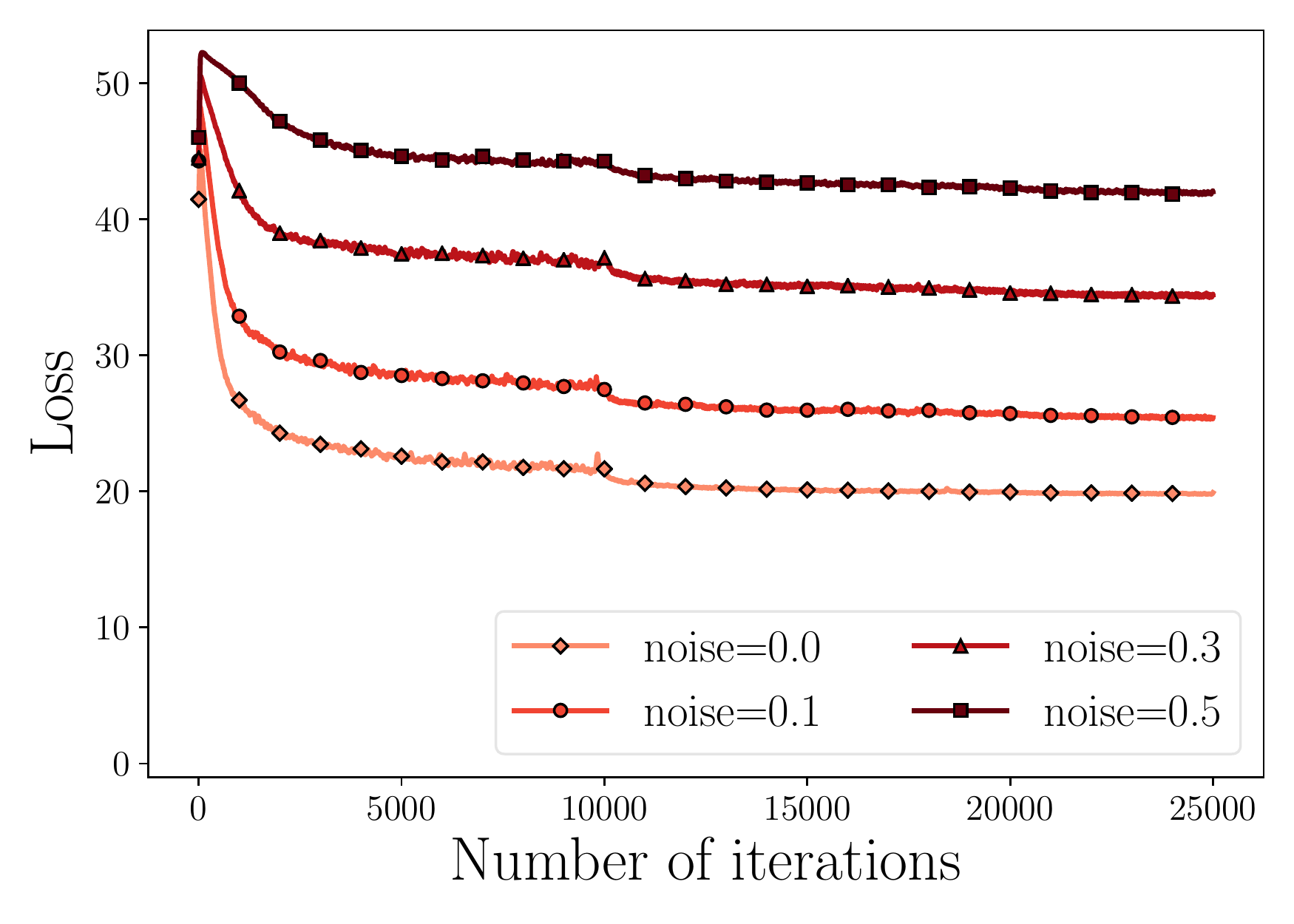}}
    \vskip -0.05in
    \caption{\small Evolution of rates $R, R_c, \Delta R$ of MCR$^2$ during training with corrupted labels.}
    \label{fig:train-label-noise}
  \end{center}
\vskip -0.3in
\end{figure*}

\begin{table}[h]
\begin{center}
\vskip 0.05in
\begin{small}
\begin{sc}
\begin{adjustbox}{max width=\textwidth}
\begin{tabular}{l |c c c c c c }
\toprule
 & Ratio=0.0 & Ratio=0.1 &  Ratio=0.2 &  Ratio=0.3 &  Ratio=0.4 &  Ratio=0.5 \\
\midrule
CE Training & 0.939 & 0.909 & 0.861 & 0.791 & 0.724  & 0.603 \\
MCR$^2$ Training  & \textbf{0.940} & \textbf{0.911} & \textbf{0.897} & \textbf{0.881} & \textbf{0.866} &  \textbf{0.843}\\
\bottomrule
\end{tabular}
\end{adjustbox}
\end{sc}
\end{small}
\caption{\small Classification results with features learned with labels corrupted at different levels.}
\label{table:label-noise}
\end{center}
\vspace{-4mm}
\end{table}

\noindent \textbf{Robustness to corrupted labels.} 
Because MCR$^2$ by design encourages richer representations that preserves intrinsic structures from the data $\X$, training relies less on class labels than traditional loss such as cross-entropy (CE). To verify this, we train the same network using both CE and MCR$^2$ with certain ratios of \textit{randomly corrupted} training labels.  Figure~\ref{fig:train-label-noise} illustrates the learning process: for different levels of corruption, while the rate for the whole set always converges to the same value, the rates for the classes are inversely proportional to the ratio of corruption, indicating our method only compresses samples with valid labels. The classification results are summarized in Table~\ref{table:label-noise}. By applying \textit{exact the same} training parameters, MCR$^2$ is significantly more robust than CE, especially with  higher ratio of corrupted labels. This can be an advantage in the settings of self-supervised learning or constrastive learning when the grouping information can be very noisy. More detailed comparison between MCR$^2$ and  OLE~\citep{lezama2018ole}, Large Margin Deep Networks~\citep{elsayed2018large}, and ITLM~\citep{shen2019learning} on learning from noisy labels can be found in Appendix~\ref{sec:appendix-label-noise-related-work} (Table~\ref{table:label-noise-related-work}).

Beside the supervised learning setting, we explore the MCR$^2$ objective in the self-supervised learning setting. We find that the MCR$^2$ objective can learn good representations without using any label and achieve better performance over other highly engineered methods on clustering tasks. More details on self-supervised learning can be found in Section~\ref{sec:appendix-selfsup}.

\subsection{Experimental Verification of the ReduNet}
In this section, we \textit{verify} whether the so constructed ReduNet (developed in  Section~\ref{sec:vector} and \ref{sec:shift-invariant}) achieves its design objectives through experiments on synthetic data and real images. The datasets and experiments are chosen to clearly demonstrate the properties and behaviors of the proposed ReduNet, in terms of learning the correct truly invariant discriminative (orthogonal) representation for the given data. 
Implementation details and more experiments and  can be found in Appendix~\ref{sec:appendix-redunet-exp}. The code for reproducing the ReduNet results can be found in \url{https://github.com/Ma-Lab-Berkeley/ReduNet}.

\vspace{-0.05in}
\paragraph{Learning Mixture of Gaussians in $\mathbb{S}^2$.}
Consider a mixture of three Gaussian distributions in $\R^{3}$ that is projected onto $\mathbb{S}^2$. We first generate data points from these two distributions,  $\X^{1}=[\x^1_{1}, \ldots, \x^{m}_{1}] \in \R^{3\times m}$, $\x^{i}_{1} \sim \mathcal{N}(\bm{\mu}_{1}, \sigma_{1}^{2} \I)$, and $\bm{\pi}(\x^{i}_{1}) = 1$; $\X^{2}=[\x^1_{2}, \ldots, \x^{m}_{2}] \in \R^{2\times m}$, $\x^{i}_{2} \sim \mathcal{N}(\bm{\mu}_{2}, \sigma_{2}^{2} \I)$, and $\bm{\pi}(\x^{i}_{2}) = 2$; $\X^{3}=[\x^1_{3}, \ldots, \x^{m}_{3}] \in \R^{3\times m}$, $\x^{i}_{3} \sim \mathcal{N}(\bm{\mu}_{3}, \sigma_{3}^{2} \I)$, and $\bm{\pi}(\x^{i}_{2}) = 3$. We set $m=500, \sigma_{1}=\sigma_{2}=\sigma_{3}=0.1$ and $\bm{\mu}^{1}, \bm{\mu}^{2}, \bm{\mu}^{3} \in \mathbb{S}^2$. Then we project all the data points onto $\mathbb{S}^{2}$, i.e., $\x^i_{j}/\|\x^i_{j}\|_{2}$. To construct the network (computing $\E_{\ell}, \C_{\ell}^{j}$ for $\ell$-the layer), we set the number of iterations/layers $L=2,000$\footnote{We do this only to demonstrate our framework leads to stable deep networks even with thousands of layers! In practice this is not necessary and one can stop whenever adding new layers gives diminishing returns. For this example, a couple of hundred is sufficient. Hence the clear optimization objective gives a natural criterion for the depth of the network needed. Remarks in Section \ref{sec:comparison-vector} provide possible ideas to further reduce the number of layers.}, step size $\eta=0.5$, and precision $\epsilon=0.1$. As shown in Figure~\ref{fig:gaussian3d-scatter-heatmap-a}-\ref{fig:gaussian3d-scatter-heatmap-b}, we can observe that after the mapping $f(\cdot, \bm{\theta})$, samples from the same class converge to a single cluster and the angle between two different clusters is approximately $\pi/4$, which is well aligned with the optimal solution $\Z_{\star}$ of the MCR$^2$ loss in $\mathbb{S}^2$. MCR$^2$ loss of features on different layers can be found in Figure~\ref{fig:gaussian3d-scatter-heatmap-c}. Empirically, we find that our constructed network is able to maximize MCR$^2$ loss and converges stably and samples from the same class converge to one cluster and different clusters are orthogonal to each other. Moreover, we sample new data points from the same distributions for both cases and find that new samples form the same class consistently converge to the same cluster center as the training samples. More simulation examples and details can be found in Appendix~\ref{sec:appendix-gaussian}.

\begin{figure*}[t]
  \begin{center}
    \subfigure[\label{fig:gaussian3d-scatter-heatmap-a}$\X_{\text{train}}$ ($3D$)]{
    \includegraphics[width=0.3\textwidth]{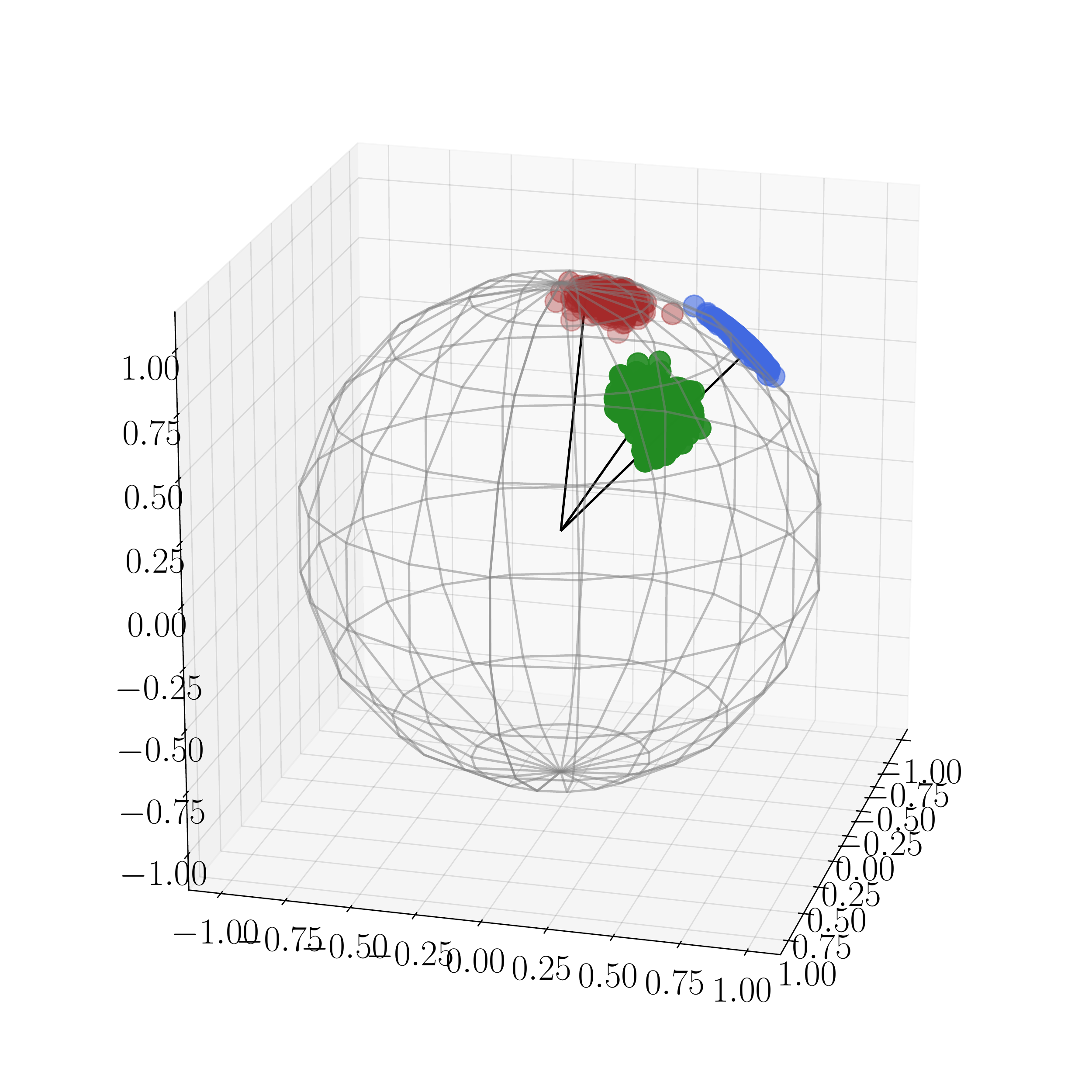}
    }
    \subfigure[\label{fig:gaussian3d-scatter-heatmap-b}$\Z_{\text{train}}$ ($3D$)]{
    \includegraphics[width=0.3\textwidth]{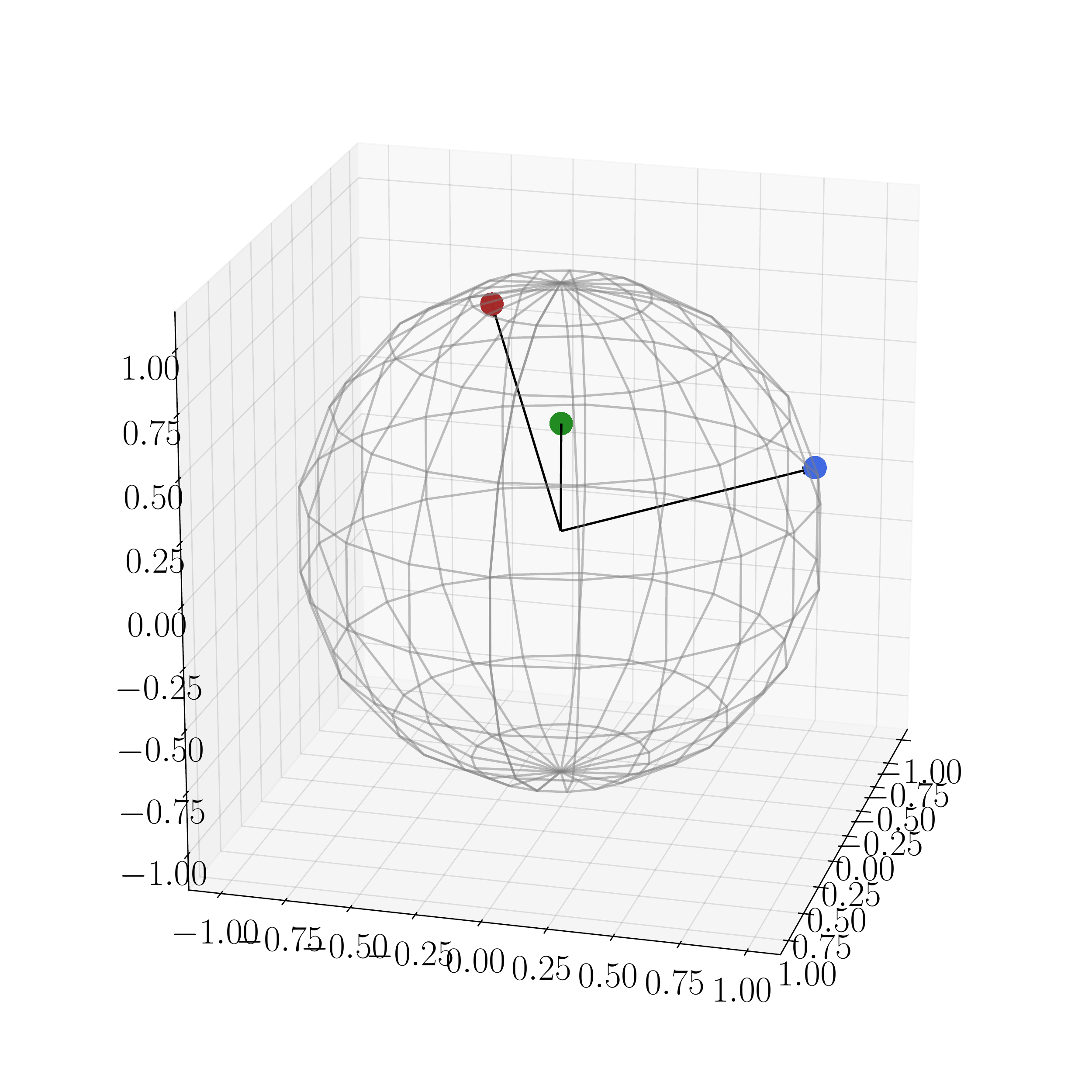}
    }
    \subfigure[\label{fig:gaussian3d-scatter-heatmap-c}Loss ($3D$)]{
    \includegraphics[width=0.3\textwidth]{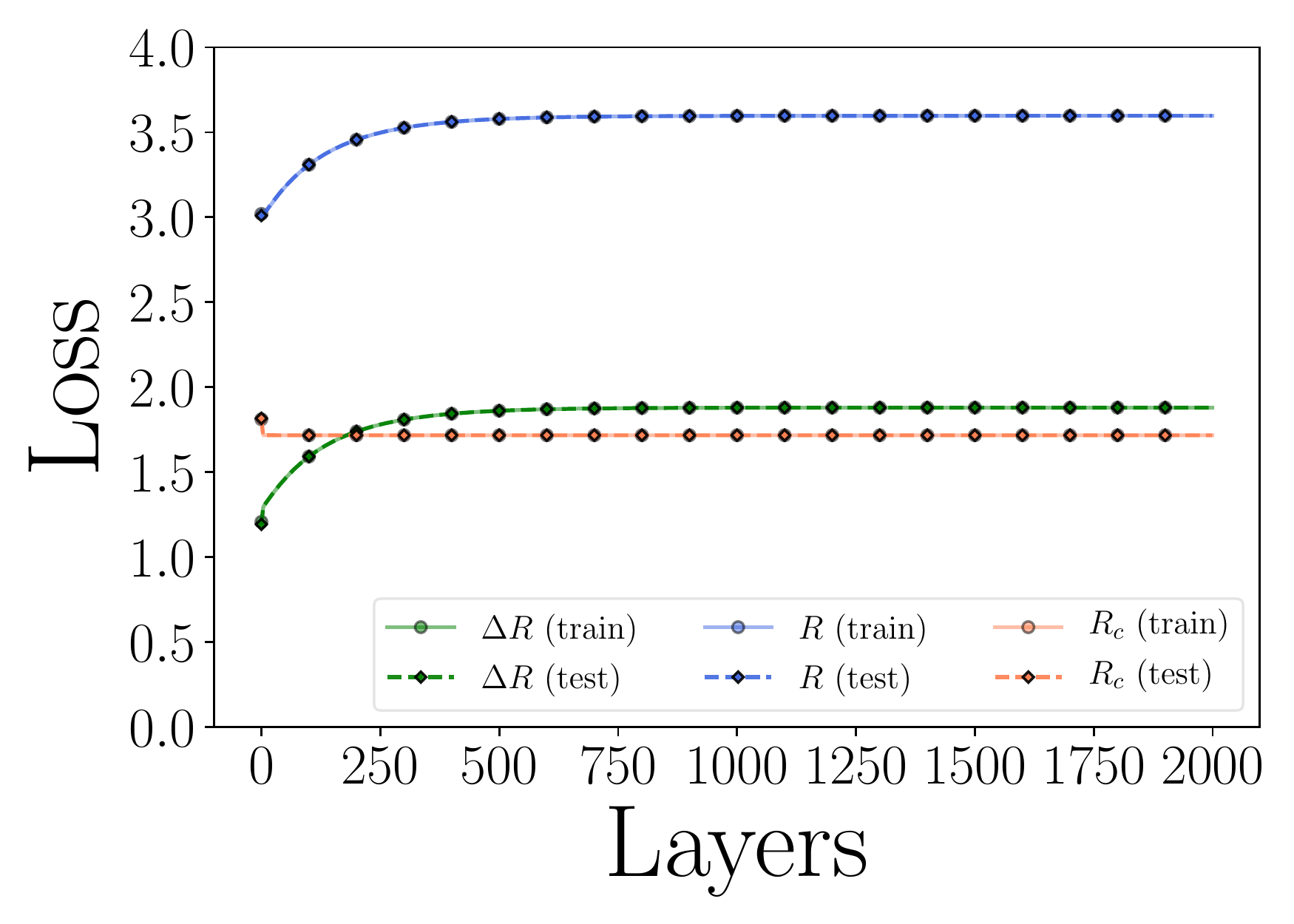}
    }
    \vskip -0.1in
    \caption{\small Original samples and learned representations for 3D Mixture of Gaussians. We visualize data points $\X$ (before mapping $f(\cdot, \bm{\theta})$) and learned features $\Z$ (after mapping $f(\cdot, \bm{\theta})$) by scatter plot. In each scatter plot, each color represents one class of samples. We also show the plots for the progression of values of the objective functions.%\vspace{-3mm}
    }\label{fig:gaussian3d-scatter-heatmap}
  \end{center}
\vspace{-0.4in}
\end{figure*}

\vspace{-0.05in}
\paragraph{Rotational Invariance on MNIST Digits.}
We now study the ReduNet on learning \textit{rotation} invariant features on the real 10-class MNIST dataset~\citep{lecun1998mnist}. We impose a polar grid on the image $\x\in\R^{H\times W}$, with its geometric center being the center of the 2D polar grid (as illustrated in Figure \ref{fig:appendix-mnist-rotation-visualize} in the Appendix). For each radius $r_i$, $i \in [C]$, we can sample $\Gamma$ pixels with respect to each angle $\gamma_l =l\cdot({2\pi}/\Gamma)$ with $l \in [\Gamma]$. 
Then given a sample image $\x$ from the dataset, we represent the image in the (sampled) polar coordinate as a multi-channel signal $\x_p \in\R^{\Gamma\times C}$.  Our goal is to learn a rotation invariant representation, i.e., we expect to learn $f(\cdot, \bm{\theta})$ such that $\{f(\x_p \circ \mathfrak{g}, \bm{\theta})\}_{\mathfrak{g} \in\mathbb{G}}$ lie in the same subspace, where $\mathfrak{g}$ is the cyclic-shift in polar angle.  We use $m=100$ training samples ($10$ from each class) and set $\Gamma=200$, $C=15$ for polar sampling. By performing the above sampling in polar coordinate, we can obtain the data matrix $\X_p \in \mathbb{R}^{(\Gamma\cdot C) \times m}$. For the ReduNet, we set the number of layers/iterations $L=40$, precision $\epsilon=0.1$, step size $\eta=0.5$. Before the first layer, we perform lifting of the input by 1D circulant-convolution with 20 random Gaussian kernels of size 5.

\begin{figure*}[ht]
  \begin{center}
    \subfigure[\label{fig:1d-invariance-plots-a}$\X_{\text{shift}}$ (RI-MNIST)]{
    \includegraphics[width=0.22\textwidth]{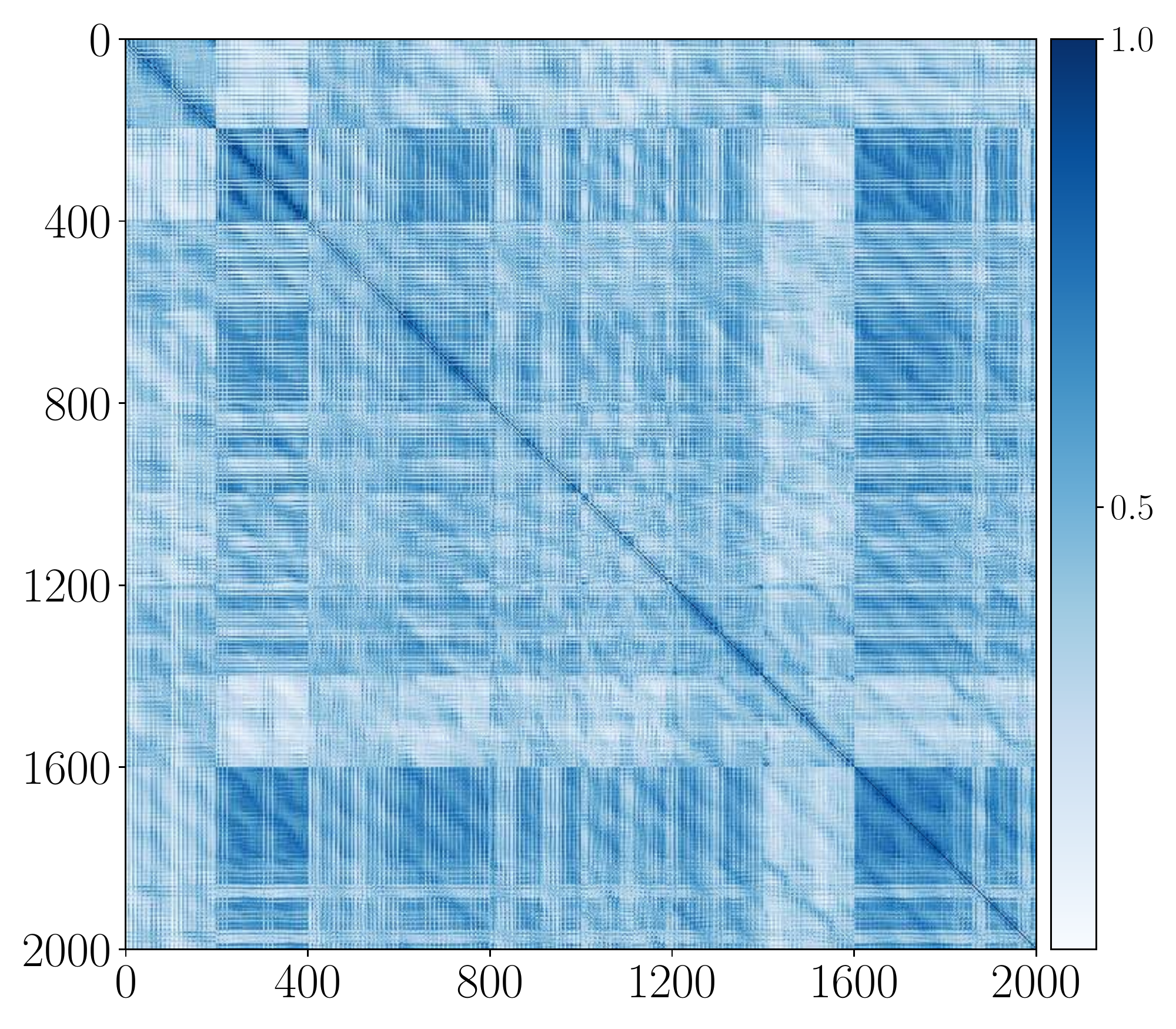}
    }
    \subfigure[\label{fig:1d-invariance-plots-b}$\bar{\Z}_{\text{shift}}$ (RI-MNIST)]{
    \includegraphics[width=0.22\textwidth]{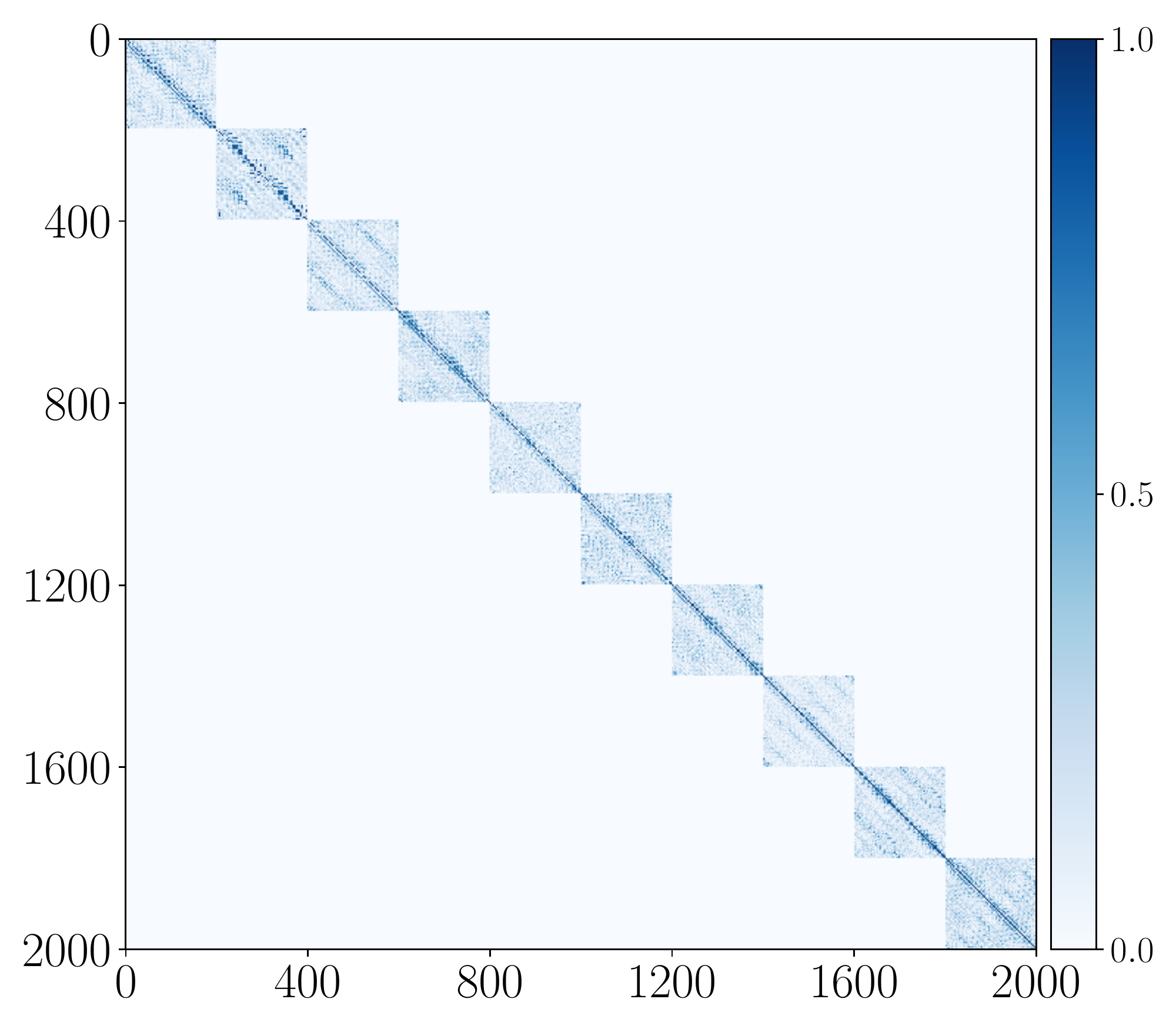}
    }
    \subfigure[\label{fig:1d-invariance-plots-c}Similarity (RI-MNIST)]{
    \includegraphics[width=0.23\textwidth]{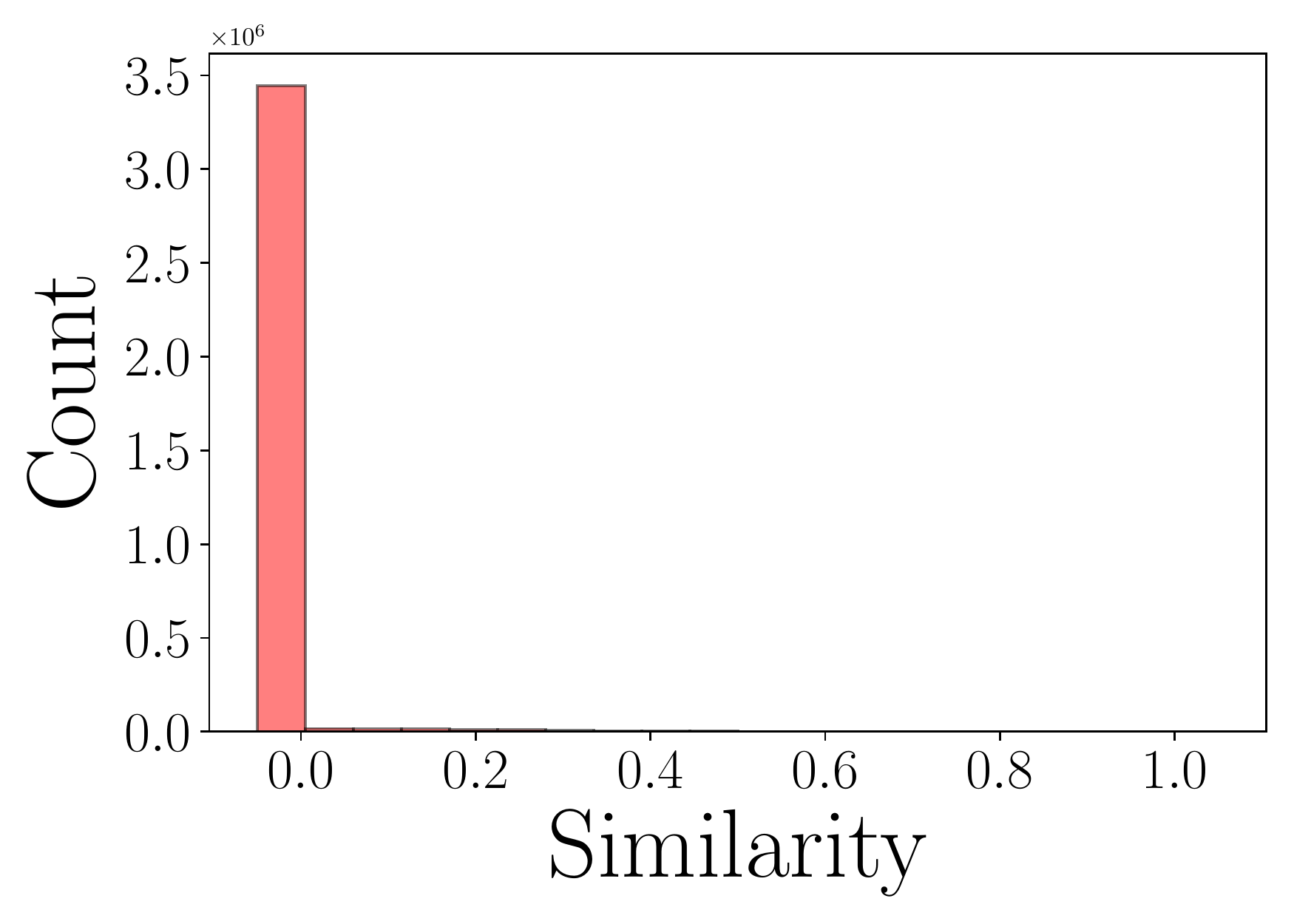}
    }
    \subfigure[\label{fig:1d-invariance-plots-d}Loss (RI-MNIST)]{
    \includegraphics[width=0.23\textwidth]{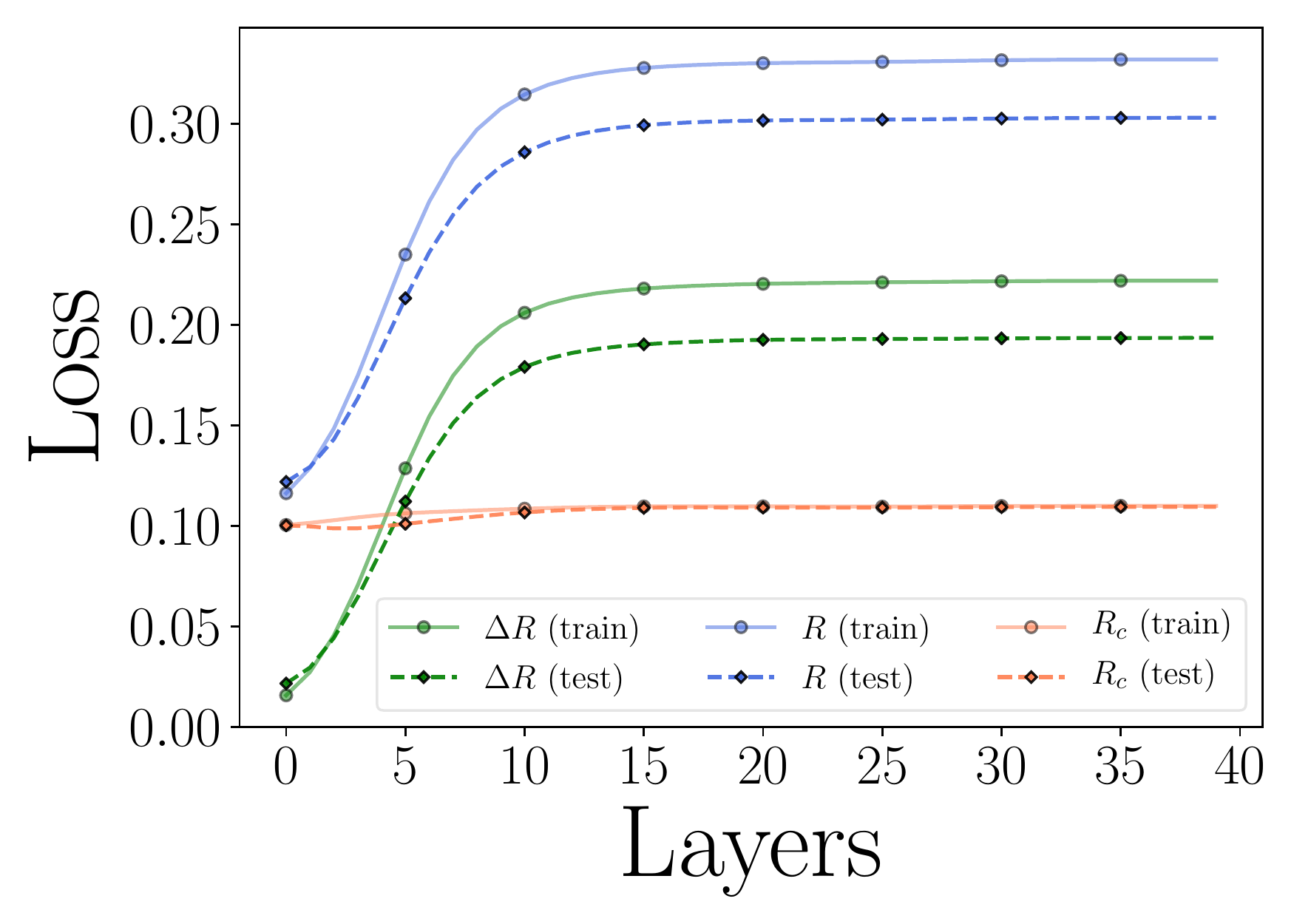}
    }
    \vskip -0.1in
    \subfigure[\label{fig:1d-invariance-plots-e}$\X_{\text{shift}}$ (TI-MNIST)]{
    \includegraphics[width=0.22\textwidth]{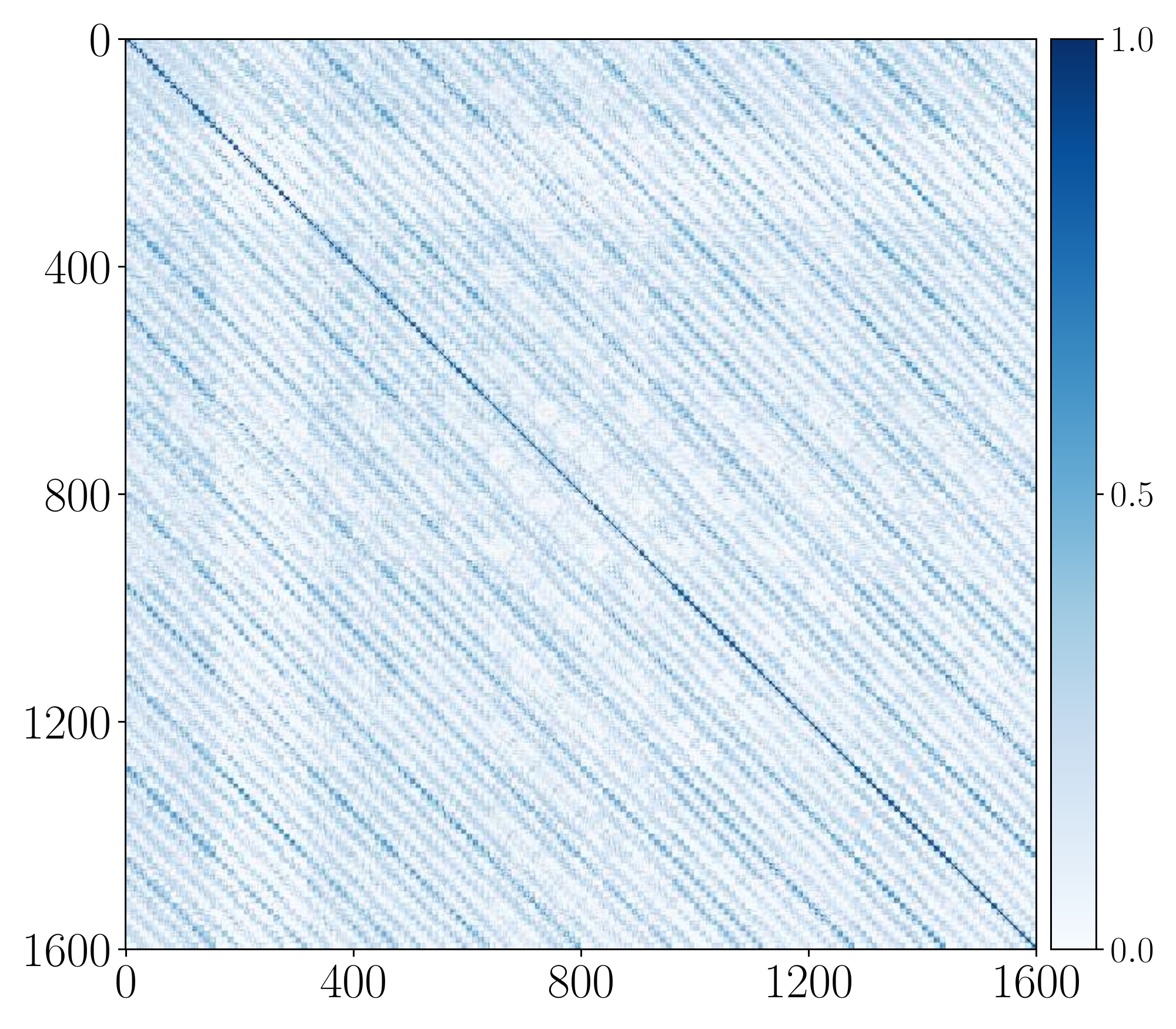}
    }
    \subfigure[\label{fig:1d-invariance-plots-f}$\bar{\Z}_{\text{shift}}$ (TI-MNIST)]{
    \includegraphics[width=0.22\textwidth]{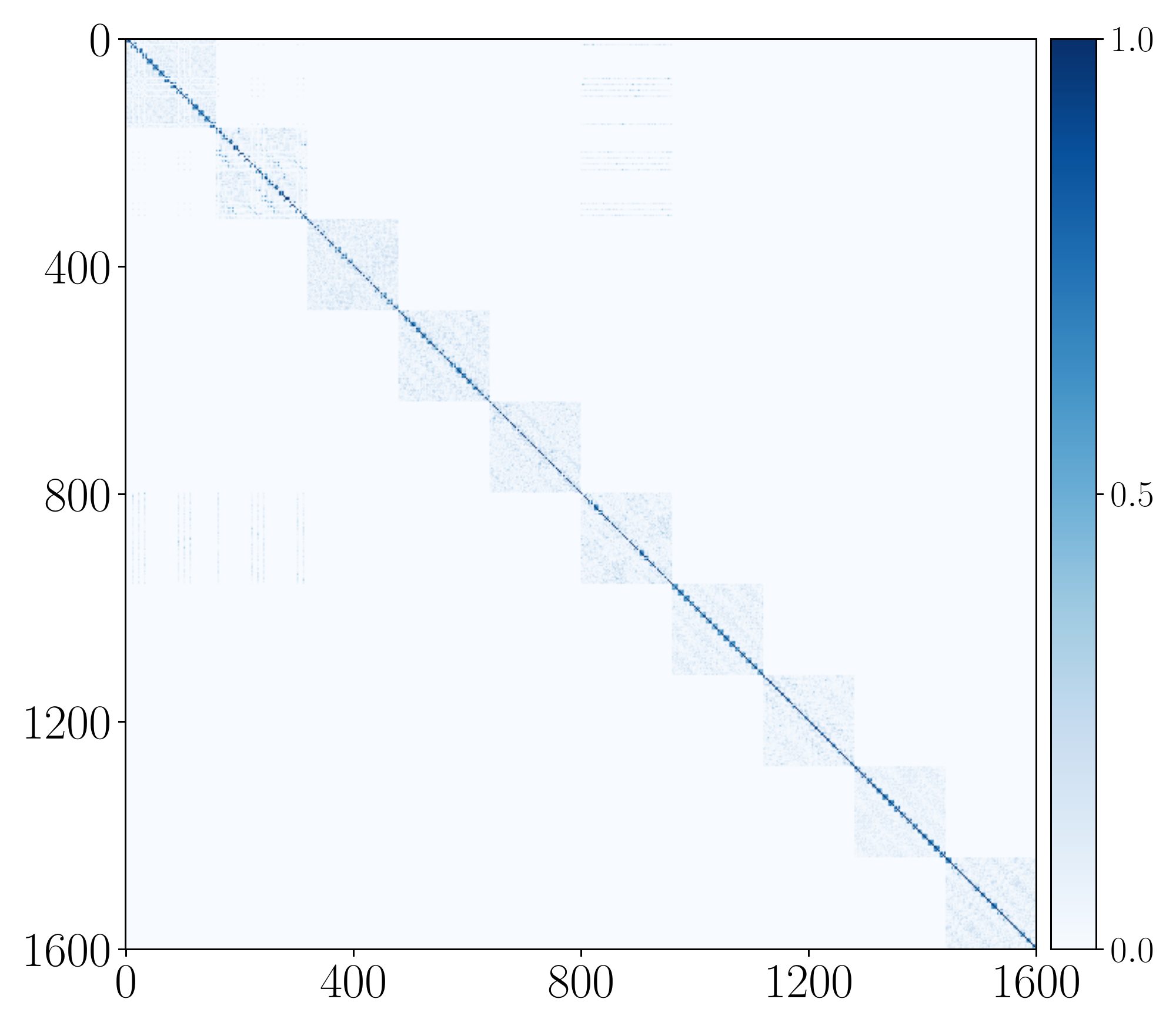}
    }
    \subfigure[\label{fig:1d-invariance-plots-g}Similarity (TI-MNIST)]{
    \includegraphics[width=0.23\textwidth]{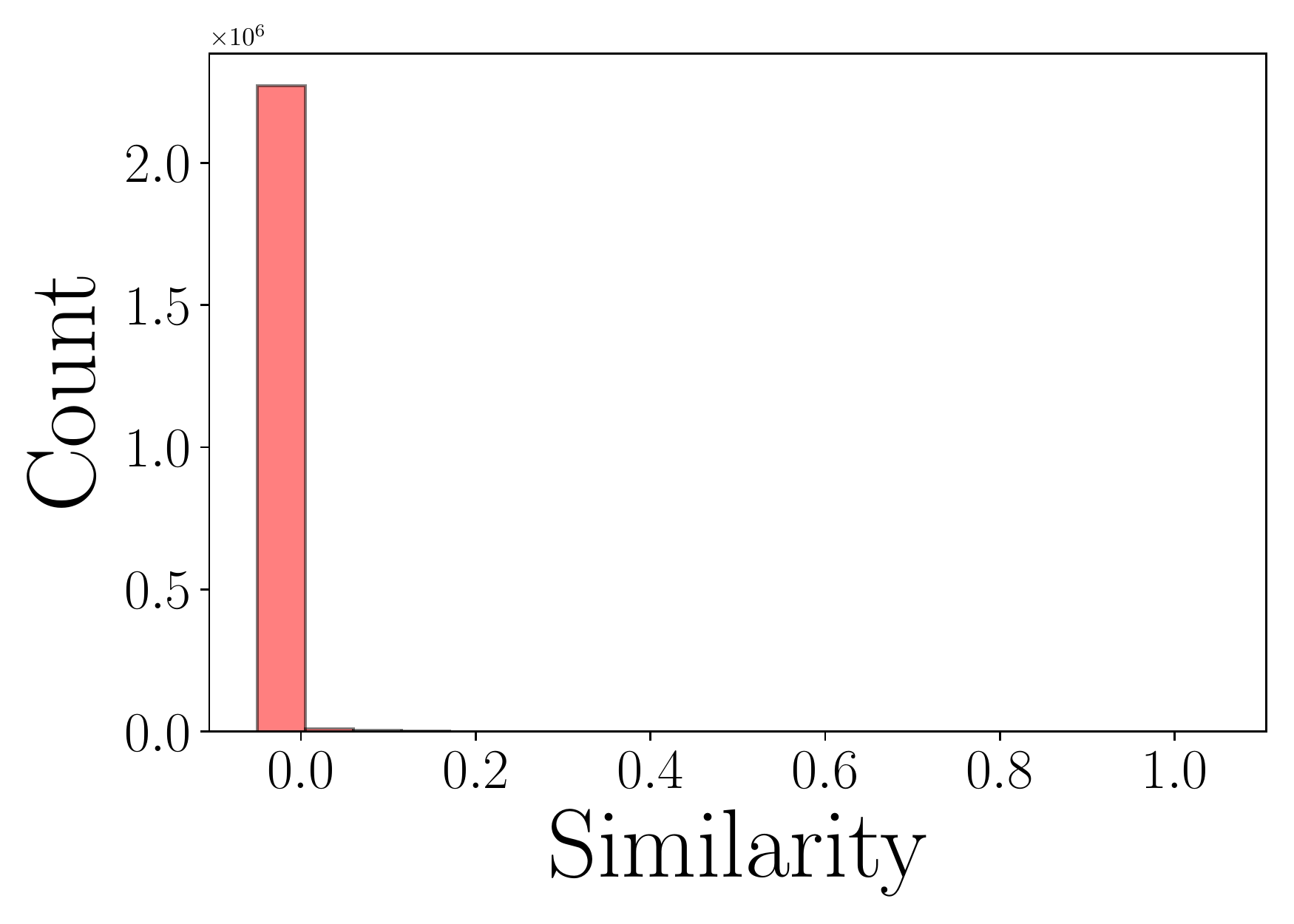}
    }
    \subfigure[\label{fig:1d-invariance-plots-h}Loss (TI-MNIST)]{
    \includegraphics[width=0.23\textwidth]{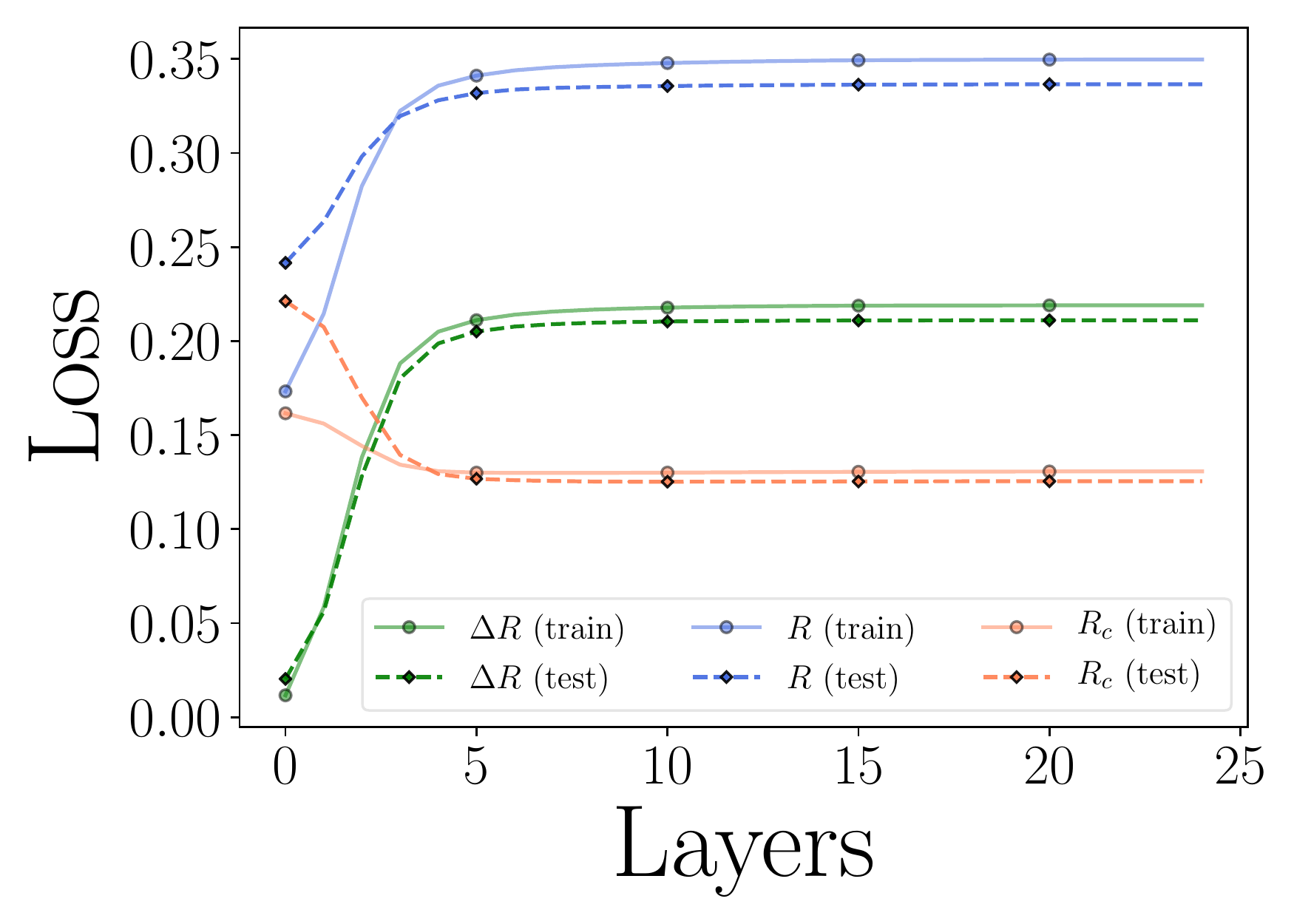}
    }
    \vskip -0.1in
    \caption{\small (a)(b) and (e)(f) are heatmaps of cosine similarity among shifted training data $\X_{\text{shift}}$ and learned features $\bar{\Z}_{\text{shift}}$, for rotation and translation invariance respectively.   
    (c)(g) are histograms of the cosine similarity (in absolute value) between all pairs of features across different classes: for each pair, one sample is from the training dataset (including all shifts) and one sample is from another class in the test dataset (including all possible shifts). There are $4\times 10^{6}$ pairs for the rotation~(c) and $2.56\times 10^{6}$ pairs for the translation~(g).
    }\label{fig:1d-invariance-plots}
  \end{center}
  \vspace{-0.5in}
\end{figure*}

To evaluate the learned representation, each training sample is augmented by 20 of its rotated version, each shifted with stride=10. We compute the cosine similarities among the $m \times 20$ augmented training inputs $\X_{\text{shift}}$ and the results are shown in Figure~\ref{fig:1d-invariance-plots-a}. We compare the cosine similarities among the learned features of all the augmented versions, i.e., $\bar{\Z}_{\text{shift}}$ and summarize the results in  Figure~\ref{fig:1d-invariance-plots-b}. As we see, the so constructed rotation-invariant ReduNet is able to  map the training data (as well as all its rotated versions) from the 10 different classes into 10 nearly orthogonal subspaces. That is, the learnt subspaces are truly invariant to shift transformation in polar angle.  Next, we randomly draw another $100$ test samples followed by the same augmentation procedure. We compute the cosine similarity histogram between  features of all shifted training and those of the new test samples in Figure~\ref{fig:1d-invariance-plots-c}. In Figure~\ref{fig:1d-invariance-plots-d}, we visualize the MCR$^{2}$ loss on the  $\ell$-th layer representation of the ReduNet on the training and test dataset. From  Figure~\ref{fig:1d-invariance-plots-c} and Figure~\ref{fig:1d-invariance-plots-d}, we can find that the constructed ReduNet is indeed able to maximize the MCR$^{2}$ loss as well as generalize to the test data.

\vspace{-0.05in}
\paragraph{2D Translation Invariance on MNIST Digits.} 
In this part, we provide experimental results to use the invariant ReduNet to learn representations for images that are invariant to the 2D cyclic translation. Essentially we view the image as painted on a torus and can be translated arbitrarily, as illustrated in Figure \ref{fig:appendix-mnist-translation-visualize} in the Appendix. Again we use the 10-class MNIST dataset.  We use $m=100$ training samples (10 samples from each class) for constructing the ReduNet and use another $100$ samples ($10$ samples from each class) as the test dataset. We apply 2D circulant convolution to the (1-channel) inputs with 75 random Gaussian kernels of size $9\times 9$ before the first layer of ReduNet.
For the translation-invariant ReduNet, we set $L=25$, step size $\eta=0.5$, precision $\epsilon=0.1$.  Similar to the rotational invariance task,
to evaluate the performance of ReduNet with regard to translation, we augment each training/testing sample by 16 of its translational shifted version (with stride=7). The heatmap of the similarities among the $m\times 16$ augmented training inputs $\X_{\text{shift}}$ and the learned features $\circm(\bar\Z)_{\text{shift}}$ are compared in Figure~\ref{fig:1d-invariance-plots-e} and Figure~\ref{fig:1d-invariance-plots-f}. Similar to the rotation case, Figure~\ref{fig:1d-invariance-plots-g} shows the histogram of similarity between features of all shifted training samples and test samples. Clearly, the ReduNet can map training samples from the 10 classes to 10 nearly orthogonal subspaces and invariant to all possible 2D translations on the training dataset. Also, the MCR$^2$ loss in Figure~\ref{fig:1d-invariance-plots-h} is increasing with the increased layer. This verifies that the proposed ReduNet can indeed maximize the objective and be invariant to transformations as it is designed to.

\begin{table}[t]
\begin{center}
% \vskip 0.05in
\begin{small}
\begin{sc}
\begin{tabular}{ccc}
\toprule
Initialization & Backpropagation & Test Accuracy\\
\midrule
\cmark & \xmark & 0.898\\
\xmark & \cmark & 0.932\\
\cmark & \cmark & 0.978\\
\bottomrule
\end{tabular}
\end{sc}
\end{small}
\caption{\small Test accuracy of 2D translation-invariant ReduNet, ReduNet-bp (without initialization), and ReduNet-bp (with initialization) on the MNIST dataset.}
\label{table:backprop_acc}
\end{center}
\vskip -0.2in
\end{table}

\paragraph{Back Propagation on ReduNet.} 
Once the $\{\bm E_{\ell}\}_{\ell=1}^{L}$ and $\{\bm{C}_{\ell}^{1}, \dots, \bm{C}_{\ell}^{k} \}_{\ell=1}^{L}$ of the ReduNet are constructed, we here further test whether the architecture is amenable to fine-tuning via back propagation. For the 2D translation-invariant ReduNet, we add a fully-connected layer with weight having dimensions $CHW \times k$ after the last layer of the ReduNet, where $k$ is the number of classes. We denote this modified architecture as \textit{ReduNet-bp}. 
We consider the 10-class classification problem on the MNIST dataset, and compare three networks: (1). ReduNet by the forward construction, (2). ReduNet-bp initialized by the construction, (3). ReduNet-bp with random initialization using the same backbone architecture. To initialize the networks, we use $m=500$ samples ($50$ from each class) and set number of layers/iterations $L=30$, step size $\eta = 0.5$, precision $\epsilon = 0.1$. Before the first layer, we perform 2D circulant-convolution to the inputs with $16$ channels with $7 \times 7$ random Gaussian kernels. For ReduNet-bp, we further train the constructed ReduNet via back propagation by adding an extra fully connected layer and using cross-entropy loss. We update the model parameters by using SGD to minimize the loss on the entire MNIST training data. To evaluate the three networks, we compare the standard test accuracy on $10,000$ MNIST test samples. For ReduNet, we apply the nearest subspace classifier for prediction. For ReduNet-bp with and without initialization, we take the argmax of the final fully connected layer as the prediction. The results are summarized in Table~\ref{table:backprop_acc}. We find that (1). The ReduNet architecture \textit{can} be optimized by SGD and achieve better standard accuracy after back propagation; (2). Using constructed ReduNet for initialization can achieve better performance compared with the same architecture with random initialization.

\section{Conclusions and Discussions}
In this paper, we have laid out a theoretical and computational framework based on data compression which allows us to understand and interpret not only the characteristics of modern deep networks but also reveal their purposes and functions as a white box. From this new perspective, we see that at a high level, the objective of  seeking a linear discriminative representation (via deep learning)  aligns well with the objectives of the classic {\em linear discriminant analysis} \citep{HastieTiFr09}, {\em independent component analysis} \citep{HYVARINEN2000411}, and {\em generalized principal component analysis} \citep{GPCA}. The main difference is that now we are able to conduct such analyses through the lens of a constructive nonlinear mapping and an intrinsic measure. This renders all these analyses so much more general hence practical for real-world data. 

Despite the long history of practicing artificial (deep) neural networks since their inception in 1940-1950's \citep{McCulloch1943ALC,Rosenblatt1958ThePA}, the architectures and operators in deep networks have been mainly proposed or designed empirically and trained via back propagation as a black box \citep{Back-Prop}. Table \ref{tab:comparison} left column summarizes the main characteristics of the conventional practice of deep networks,\footnote{There are exceptions such as ScatteringNets \citep{scattering-net,Wiatowski-2018}, whose operators are pre-designed and fixed, as also  indicated in the table.} whereas the right column highlights comparison of our new compression based framework to the current practice of deep neural networks, on which we will elaborate a little more below.

\begin{table}[t]
\begin{center}
\begin{tabular}{| l || c | c |}
\hline
  & Conventional DNNs & Compression (ReduNets) \\ [0.5ex]
  \hline \hline
Objectives & input/output fitting & rate reduction \\ [0.5ex]
  \hline
Deep architectures & trial \& error & iterative optimization \\  [0.5ex]
\hline
Layer operators & empirical & projected gradient \\  [0.5ex]
\hline
Shift invariance & CNNs + augmentation & invariant ReduNets \\  [0.5ex]
\hline
Initializations & random/pre-designed & forward computed \\ [0.5ex]
\hline
Training/fine-tuning & back prop/fixed & forward/back prop\\ [0.5ex]
\hline
Interpretability & black box & white box \\ [0.5ex]
\hline
Representations & unknown or unclear & incoherent subspaces (LDR) \\ [0.5ex]
\hline
\end{tabular}
\end{center}
\caption{Comparison between conventional deep networks and compression based ReduNets.}\label{tab:comparison}
\end{table}

\paragraph{White box versus black box.}
This new framework offers a constructive approach to derive deep (convolution) networks entirely as a white box from the objective of learning a low-dimensional linear discriminative representation for the given (mixed) data. The goodness of the representation is measured by the intrinsic rate reduction. The resulting network, called the ReduNet, emulates a gradient-based iterative scheme to optimize the rate reduction objective. It shares almost all the main structural characteristics of modern deep networks. Nevertheless, its architectures, linear and nonlinear operators and even their values are all derived from the data and all have precise geometric and statistical interpretation. In particular, we find it is rather intriguing that the linear operator of each layer has a non-parameteric ``data auto-regression'' interpretation. Together with the ``forward construction,'' they give rather basic but universal computing mechanisms that even simple organisms/systems can use to learn good representations from the data that help future classification tasks (e.g. object  detection or recognition).

\paragraph{Forward versus backward optimization and refinement.}
This constructive and white-box approach has demonstrated potential in liberating the practice of deep networks from relying (almost entirely) on random initialization and back propagation of all network parameters. It offers effective mechanisms to construct (hence initialize) deep networks in a forward fashion. The forward-constructed ReduNet already exhibits descent classification performance. Preliminary experiments given in this paper indicate  that the ReduNet architecture is amenable to fine-tuning via back propagation too (say with new training data), and initialization with the forward-constructed network has advantages over random initialization. Furthermore, the constructive nature of ReduNet makes it amenable to other fine-tuning schemes such as forward propagation or incremental learning. Since one no longer has to update all network parameters simultaneously as a black box, one can potentially avoid the so-called ``catastrophic forgetting'' \citep{catastrophic} in the sequential or {\em incremental learning} setting, as the recent work of \cite{Wu-CVPR2021} suggests. 

\paragraph{Modeling invariance and equivariance.}
In the case of seeking classification invariant to certain transformation groups (say translation), the new framework models both  eqvuivariance and invariance in a very natural way: all equivariant instances are mapped into the same subspace and the resulting subspaces are hence invariant to the transformation. As we have shown, in this case, the ReduNet naturally becomes a deep convolution network. Arguably, this work gives a new constructive and explicit justification for the role of multi-channel convolutions in each layer (widely adopted in modern CNNs) as incremental operators to compress or expand all equivariant instances for learning an LDR for the data.
%\jw{I'm not sure about the preceeding sentence -- ``Arguably for the first time...'' .... Isn't the principal motivation for using convolution in modern CNNs that convolutions are translation equivariant, and as a result, convolutional networks are translation equivariant? We adopt convolutions for the same reason, and explain how to incorporate them into the ReDUnet frame work. But its a little bit hard to see this framework as explaining the use of convolutions in  a way that goes beyond what was known before. I may be missing something here, though -- happy to discuss}. 
Moreover, our derivation reveals the fundamental computational advantage in constructing and learning such multi-channel convolutions in the spectral domain. Simulations and experiments on synthetic and real data sets clearly verify such forward-constructed ReduNet can be invariant for {\em all} transformed instances. The computation scales gracefully with the number of classes, channels and sample dimension/size. 

\paragraph{Invariance and sparsity.}
The new framework also reveals a {\em fundamental trade-off} between sparsity and invariance: essentially one cannot expect to separate different classes of signals/data if signals in each class can be both arbitrarily shifted and arbitrarily superimposed. To achieve invariance to all translation, one must impose that signals in each class are sparsely generated so that all shifted samples span a proper submanifold (in the high-dimensional space) and features can be mapped to a proper subspace. Although sparse representation for individual signals have been extensively studied and well understood in the literature \citep{Wright-Ma-2021}, very little is yet known about how to characterize the distribution of sparse codes of a class of (equivariant or locally equivariant) signals and its separability from other classes. This fundamental trade-off between sparsity and invariance certainly merits further theoretical study, as it will lead to more precise characterization of the statistical resource (network width) and computational resource (network depth) needed to provide performance guarantees, say for (multi-manifold) classification \citep{buchanan2020deep}.

\paragraph{Further improvements and extensions.}
We believe the proposed rate reduction provides a principled framework for designing new networks with interpretable architectures and operators that can provide performance guarantees (say invariance) when applied to real-world datasets and problems. Nevertheless, the purposes of this paper are to introduce the basic principles and concepts of this new framework. We have chosen arguably the simplest and most basic gradient-based scheme to construct the ReduNet for optimizing the rate reduction. As we have touched upon briefly in the paper, many powerful ideas from optimization can be further applied to improve the efficiency and performance of the network, such as acceleration, precondition/normalization, and regularization. Also there are additional relationships and structures among the channel operators $\bm E$ and $\bm C^j$ that have not been exploited in this work for computational efficiency. 

The reader may have realized that the basic ReduNet is expected to work when the data have relatively benign nonlinear structures when each class is close to be a linear subspace or Gaussian distribution. Real data (say image classes) have much more complicated structures: each class can have highly nonlinear geometry and topology or even be multi-modal itself. Hence in practice, to learn a better LDR, one may have to resort to more sophisticated strategies to control the compression (linearization) and expansion process. Real data also have additional priors and data structures that can be exploited. For example, one can reduce the dimension of the ambient feature space whenever the intrinsic dimension of the features are low (or sparse) enough. Such dimension-reduction operations (e.g. pooling or striding) are widely practiced in modern deep (convolution) networks for both computational efficiency and even accuracy. According to the theory of compressive sensing, even a random projection would be rather efficient and effective in preserving the (discriminative)  low-dimensional structures \citep{Wright-Ma-2021}.

In this work, we have mainly considered learning a good representation $\bm Z$ for the data $\X$ when the class label $\bm \Pi$ is given and fixed. Nevertheless, notice that in its most general form, the maximal rate reduction objective can be optimized against both the representation $\Z$ and the membership $\bm \Pi$. In fact, the original work of \cite{ma2007segmentation} precisely studies the complementary problem of learning $\bm \Pi$ by maximizing $\Delta R(\Z, \bm{\Pi}, \epsilon)$ with $\Z$ fixed, hence equivalent to minimizing only the compression term: $ \min_{\bm \Pi}R_c(\Z, \epsilon \mid  \bm{\Pi})$. Therefore, it is obvious that this framework can be  naturally extended to {\em unsupervised} settings if the membership $\bm \Pi$ is partially known or entirely unknown and it is to be optimized together with the representation $\bm Z$. In a similar vein to the construction of the ReduNet, this may entail us to examine the joint dynamics of the gradient of the representation $\Z$ and the membership $\bm \Pi$:
\begin{equation}
\dot{\bm Z} = \eta \cdot \frac{\partial \Delta R}{\partial \bm Z}, \quad \dot{\bm \Pi} = \gamma \cdot \frac{\partial \Delta R}{\partial \bm \Pi}.
\end{equation}

Last but not the least, in this work, we only considered data that are naturally embedded (as submanifolds) in a vector space (real or complex). There have been many work that study and apply deep networks to data with additional structures or in a non-Euclidean space. For example, in reinforcement learning and optimal control, people often use deep networks to process data with additional dynamical structures, say linearizing the dynamics  \citep{koopman}. In computer graphics or many other fields, people deal with data on a non-Euclidean domain such as a mesh or a graph  \citep{Geometric-DNN}. It remains interesting to see how the principles of {\em data compression} and {\em linear discriminative representation} can be extended to help study or design principled white-box deep networks associated with dynamical or graphical data and problems. 

\acks{Yi would like to thank professor Yann LeCun of New York University for a stimulating discussion in his office back in November 2019 when they contemplated a  fundamental question: {\em what does or should a deep network try to optimize?} At the time, they both believed the low-dimensionality of the data (say sparsity) and discriminativeness of the representation (e.g. contrastive learning) have something to do with the answer. The conversation had inspired Yi to delve into this problem more deeply while self-isolated at home during the pandemic. 

Yi would also like to thank Dr. Harry Shum who has had many hours of conversations with Yi about how to understand and interpret deep networks during the past couple of years. In particular, Harry  suggested how to better visualize the representations learned by the rate reduction, including the results shown in Figure \ref{fig:visual-class-2-8}.

Yi acknowledges support from ONR grant N00014-20-1-2002 and the joint Simons Foundation-NSF DMS grant \#2031899, as well as support from Berkeley FHL Vive Center for Enhanced Reality and Berkeley Center for Augmented Cognition. Chong and Yi acknowledge support from Tsinghua-Berkeley Shenzhen Institute (TBSI) Research Fund. Yaodong, Haozhi, and Yi acknowledge support from Berkeley AI Research (BAIR). John acknowledges support from NSF grants 1838061, 1740833, and 1733857.}

\newpage

\appendix

\section{Properties of the Rate Reduction Function}\label{ap:rate-reduction}

This section is organized as follows. 
We present background and preliminary results for the $\log\det(\cdot)$ function and the coding rate function in Section~\ref{sec:theory-preliminary}. 
Then, Section~\ref{sec:theory-bounds-rate} and \ref{sec:theory-bounds-rate-reduction} provide technical lemmas for bounding the coding rate and coding rate reduction functions, respectively. 
Finally, these lemmas are used to prove our main theoretical results about the properties of the rate reduction function. The main results (given informally as Theorem \ref{thm:MCR2-properties} in the main body) are stated formally in Section~\ref{sec:theory-main} and a proof is given in Section~\ref{sec:theory-proof}. 

\paragraph{Notations} Throughout this section, we use $\bbS_{++}^n$, $\Re_+$ and $\mathbb Z_{++}$ to denote the set of symmetric positive definite matrices of size $n\times n$, nonnegative real numbers and positive integers, respectively.

\subsection{Preliminaries}
\label{sec:theory-preliminary}
\paragraph{Properties of the $\log\det(\cdot)$ function. }

\begin{lemma}\label{thm:logdet-strictly-concave}
The function $\log\det(\cdot): \bbS_{++}^n \to \R$ is strictly concave. That is, 
\begin{equation*}
    \log\det((1-\beta) \Z_1 + \beta \Z_2)) \ge (1-\beta)\log\det(\Z_1) + \beta\log\det(\Z_2)
\end{equation*}
for any $\beta \in (0, 1)$ and $\{\Z_1, \Z_2\} \subseteq \bbS_{++}^n$, with equality holds if and only if $\Z_1 = \Z_2$.
\begin{proof}
Consider an arbitrary line given by $\Z = \Z_0 + t \Delta\Z$ where $\Z_0$ and $\Delta\Z \ne \0$ are symmetric matrices of size $n\times n$. 
Let $f(t) \doteq \log\det(\Z_0 + t \Delta\Z)$ be a function defined on an interval of values of $t$ for which $\Z_0 + t\Delta\Z \in \bbS_{++}^n$.
Following the same argument as in \cite{boyd2004convex}, we may assume $\Z_0 \in \bbS_{++}^n$ and get
\begin{equation*}
    f(t) = \log\det \Z_0 + \sum_{i=1}^n \log(1+ t\lambda_i),
\end{equation*}
where $\{\lambda_i\}_{i=1}^n$ are eigenvalues of $\Z_0^{-\frac{1}{2}}\Delta\Z \Z_0^{-\frac{1}{2}}$. 
The second order derivative of $f(t)$ is given by
\begin{equation*}
    f''(t) = -\sum_{i=1}^n \frac{\lambda_i^2}{(1+t\lambda_i)^2} < 0.
\end{equation*}
Therefore, $f(t)$ is strictly concave along the line $\Z = \Z_0 + t\Delta\Z$. 
By definition, we conclude that $\log\det(\cdot)$ is strictly concave.
\end{proof}
\end{lemma}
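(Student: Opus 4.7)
My plan is to prove strict concavity by the standard device of restricting to an arbitrary line in the domain and then showing that the resulting single-variable function has strictly negative second derivative. Because $\bbS_{++}^n$ is an open convex set, strict concavity along every line inside the domain implies strict concavity on the whole set.

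First, I would pick $\Z_0 \in \bbS_{++}^n$ and a nonzero symmetric $\Delta \Z$, and define $f(t) \doteq \log\det(\Z_0 + t\Delta\Z)$ on the open interval of $t$'s for which $\Z_0 + t\Delta\Z$ stays positive definite. The key algebraic step is to factor out $\Z_0$: writing $\Z_0 + t\Delta\Z = \Z_0^{1/2}(\I + t \Z_0^{-1/2}\Delta\Z \Z_0^{-1/2})\Z_0^{1/2}$ and using $\log\det(AB) = \log\det(A) + \log\det(B)$, I can reduce $f(t)$ to $\log\det(\Z_0) + \sum_{i=1}^n \log(1+t\lambda_i)$, where the $\lambda_i$ are the (real) eigenvalues of the symmetric matrix $\Z_0^{-1/2}\Delta\Z\Z_0^{-1/2}$.

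Next, I would differentiate term by term to obtain $f''(t) = -\sum_{i=1}^n \lambda_i^2/(1+t\lambda_i)^2$. Since $\Delta\Z \neq \0$, the conjugated matrix $\Z_0^{-1/2}\Delta\Z\Z_0^{-1/2}$ is also nonzero, so at least one $\lambda_i$ is nonzero and $f''(t) < 0$ for all $t$ in the domain. Hence $f$ is strictly concave along the line, and taking $\Z_0 = (1-\beta)\Z_1 + \beta\Z_2$ with $\Delta\Z = \Z_2 - \Z_1$ (or evaluating $f$ at $t=0$ and $t=1$ along the segment from $\Z_1$ to $\Z_2$) gives the claimed strict inequality, with equality only when $\Delta\Z = \0$, i.e., $\Z_1 = \Z_2$.

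I do not expect any serious obstacle here: the argument is a textbook one (e.g., Boyd--Vandenberghe) and the only subtle points are (i) verifying that strict concavity along every line in an open convex set suffices for strict concavity on the set, which is immediate from the definition, and (ii) being careful that the statement is about $\bbS_{++}^n$ (so $\Z_0 \succ 0$ is available for the factorization). No additional machinery beyond simultaneous diagonalizability of a symmetric matrix conjugated to itself is needed.
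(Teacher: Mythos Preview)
Your proposal is correct and follows essentially the same approach as the paper: both restrict to an arbitrary line, use the factorization $\Z_0 + t\Delta\Z = \Z_0^{1/2}(\I + t\Z_0^{-1/2}\Delta\Z\Z_0^{-1/2})\Z_0^{1/2}$ to reduce to eigenvalues, and compute $f''(t) = -\sum_i \lambda_i^2/(1+t\lambda_i)^2 < 0$. Your write-up is in fact slightly more explicit than the paper's (you spell out the factorization, justify why some $\lambda_i \neq 0$, and address the equality case), but the argument is identical in substance.
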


\paragraph{Properties of the coding rate function. }
The following properties, also known as the Sylvester's determinant theorem, for the coding rate function are known in the paper of  \cite{ma2007segmentation}.
\begin{lemma}[Commutative property]\label{thm:coding-rate-commute}
For any $\Z \in \R^{n\times m}$ we have 
\begin{equation*}
    R(\Z,\epsilon) \doteq \frac{1}{2}\log \det\left(\I + \frac{n}{m\epsilon^{2}}\Z\Z^{*}\right) =\frac{1}{2}\log\det\left(\I + \frac{n}{m\epsilon^{2}}\Z^{*}\Z\right).
\end{equation*}
\end{lemma}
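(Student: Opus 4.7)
The statement is a scaled instance of Sylvester's determinant identity: for any $\A \in \R^{n\times m}$ and $\B \in \R^{m\times n}$, $\det(\I_n + \A\B) = \det(\I_m + \B\A)$. Setting $\alpha \doteq n/(m\epsilon^2)$, $\A = \sqrt{\alpha}\,\Z$, $\B = \sqrt{\alpha}\,\Z^*$ reduces the lemma to this identity, after which taking $\tfrac12\log$ of both sides yields the claim. So the plan is simply to establish Sylvester's identity in our setting.

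I would give two short self-contained arguments and let the reader pick. First approach (block matrix / Schur complement). Consider the $(n+m) \times (n+m)$ block matrix
\begin{equation*}
    \M \doteq \begin{pmatrix} \I_n & -\sqrt{\alpha}\,\Z \\ \sqrt{\alpha}\,\Z^* & \I_m \end{pmatrix}.
\end{equation*}
Using the Schur complement with respect to the top-left block gives $\det(\M) = \det(\I_n) \cdot \det(\I_m + \alpha \Z^*\Z) = \det(\I_m + \alpha \Z^*\Z)$, while the Schur complement with respect to the bottom-right block gives $\det(\M) = \det(\I_m) \cdot \det(\I_n + \alpha \Z\Z^*) = \det(\I_n + \alpha \Z\Z^*)$. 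Equating the two expressions yields the identity.

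Second approach (SVD), which I find more transparent. Write a (thin or full) singular value decomposition $\Z = \U \bfSigma \V^*$ with $\U \in \R^{n\times n}$, $\V \in \R^{m \times m}$ orthogonal and $\bfSigma \in \R^{n\times m}$ diagonal with singular values $\sigma_1,\ldots,\sigma_{\min(n,m)}$. Then $\Z\Z^* = \U (\bfSigma\bfSigma^*) \U^*$ and $\Z^*\Z = \V (\bfSigma^*\bfSigma) \V^*$. Since determinant is invariant under orthogonal conjugation,
\begin{equation*}
    \det(\I_n + \alpha \Z\Z^*) = \prod_{i=1}^{n}(1+\alpha \lambda_i(\Z\Z^*)), \qquad \det(\I_m + \alpha \Z^*\Z) = \prod_{j=1}^{m}(1+\alpha \lambda_j(\Z^*\Z)).
\end{equation*}
Both products equal $\prod_{i=1}^{\min(n,m)}(1+\alpha \sigma_i^2)$ because the remaining eigenvalues on whichever side is larger are zero and contribute factors of $1$. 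Taking logs and dividing by $2$ gives the lemma.

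There is no real obstacle here; this is a one-line consequence of a classical identity. The only thing to be a little careful about is the degenerate case where $\Z$ is not full rank (handled uniformly by the SVD argument, since zero singular values contribute trivial factors) and the rectangular case $n \ne m$ (handled by the zero-padding in $\bfSigma$). I would present the SVD proof in the main text since it also makes transparent the geometric content we rely on later, namely that the coding rate depends only on the squared singular values of $\Z$.
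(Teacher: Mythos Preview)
Your proposal is correct. Both arguments you give---the Schur complement factorization of the block matrix and the SVD / eigenvalue-matching argument---are standard, self-contained proofs of Sylvester's determinant identity $\det(\I_n+\A\B)=\det(\I_m+\B\A)$, and the reduction to that identity via $\A=\sqrt{\alpha}\,\Z$, $\B=\sqrt{\alpha}\,\Z^*$ is exactly right.

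For comparison: the paper does not actually prove this lemma. It simply names it as Sylvester's determinant theorem and attributes it to \cite{ma2007segmentation}. So you have supplied strictly more than the paper does here. Your SVD argument is a good choice to present, since (as you note) it makes explicit that $R(\Z,\epsilon)$ depends only on the squared singular values of $\Z$, a fact the paper leans on later (e.g., in the proof of Theorem~\ref{thm:maximal-rate-reduction} when the optimization is reduced to one over singular values). The Schur-complement proof is also fine and arguably the more canonical derivation of Sylvester's identity; either would be acceptable.
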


\begin{lemma}[Invariant property]\label{thm:coding-rate-invariant}
For any $\Z \in \R^{n\times m}$ and any orthogonal matrices $\U\in \R^{n\times n}$ and $\V \in \R^{m\times m}$ we have 
\begin{equation*}
    R(\Z,\epsilon) = R(\U\Z\V^*,\epsilon).
\end{equation*}
\end{lemma}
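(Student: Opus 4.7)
The plan is to reduce the invariance claim to two elementary facts: orthogonal conjugation preserves determinants, and an orthogonal factor on the right collapses when forming the Gram matrix $\Z\Z^*$. Concretely, I would first absorb the right factor $\V^*$ by writing $(\Z\V^*)(\Z\V^*)^* = \Z\V^*\V\Z^* = \Z\Z^*$, using $\V^*\V = \I$. This shows immediately that the argument of the $\log\det$ in the definition of $R$ is unchanged by right multiplication by an orthogonal matrix, so $R(\Z\V^*,\epsilon) = R(\Z,\epsilon)$.

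Next I would handle the left factor $\U$ by computing $(\U\Z)(\U\Z)^* = \U\Z\Z^*\U^*$ and then pushing $\U$ past the identity via $\I = \U\U^*$, giving
\begin{equation*}
\I + \tfrac{n}{m\epsilon^{2}} \U\Z\Z^*\U^* \;=\; \U\bigl(\I + \tfrac{n}{m\epsilon^{2}}\Z\Z^*\bigr)\U^*.
\end{equation*}
Taking determinants and using $\det(\U)\det(\U^*) = \det(\U\U^*) = 1$, the determinant equals $\det(\I + \tfrac{n}{m\epsilon^{2}}\Z\Z^*)$, hence $R(\U\Z,\epsilon) = R(\Z,\epsilon)$.

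Combining these two steps applied to $\U\Z\V^*$ yields the claim: $R(\U\Z\V^*,\epsilon) = R(\U\Z,\epsilon) = R(\Z,\epsilon)$. There is no real obstacle here; the statement is essentially a direct consequence of the definition of $R$ together with orthogonality. As an alternative presentation, one could use the commutative form from Lemma~\ref{thm:coding-rate-commute} to dispose of the right factor by computing $(\Z\V^*)^*(\Z\V^*) = \V\Z^*\Z\V^*$ and applying the same conjugation-invariance of $\det$, which makes the symmetry between the two sides manifest.
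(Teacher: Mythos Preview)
Your proof is correct. The paper does not actually prove this lemma; it is stated in the preliminaries as a known property of the coding rate function, with attribution to \cite{ma2007segmentation}, so there is no proof in the paper to compare against. Your direct argument via $\V^*\V=\I$ and conjugation-invariance of $\det$ is the standard one and is entirely adequate.
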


\subsection{Lower and Upper Bounds for Coding Rate}
\label{sec:theory-bounds-rate}

The following result provides an upper and a lower bound on the coding rate of $\Z$ as a function of the coding rate for its components $\{\Z^j\}_{j=1}^k$. 
The lower bound is tight when all the components $\{\Z^j\}_{j=1}^k$ have the same covariance (assuming that they have zero mean).
The upper bound is tight when the components $\{\Z^j\}_{j=1}^k$ are pair-wise orthogonal.

\begin{lemma}\label{thm:coding-rate-bounds}
For any $\{\Z^j \in \Re^{n\times m_j}\}_{j=1}^k$ and any $\epsilon > 0$, let $\Z = [\Z^1, \cdots, \Z^k] \in \Re^{n\times m}$ with $m=\sum_{j=1}^k m_j$. We have
\begin{equation}\label{eq:coding-rate-bounds}
\begin{split}
\sum_{j=1}^k \frac{m_j}{2}  \log \det\left(\I + \frac{n}{m_j\epsilon^2}\Z^j (\Z^j)^*\right)
&\le  
\frac{m}{2} \log\det\left(\I + \frac{n}{m\epsilon^2}\Z \Z^*\right)  \\
&\le 
\sum_{j=1}^k \frac{m}{2}  \log \det\left(\I + \frac{n}{m\epsilon^2}\Z^j (\Z^j)^*\right),
\end{split}
\end{equation}
where the first equality holds if and only if
$$\frac{\Z^1 (\Z^1)^*}{m_1} = \frac{\Z^2 (\Z^2)^*}{m_2}=\cdots= \frac{\Z^k (\Z^k)^*}{m_k},$$
and the second equality holds if and only if $(\Z^{j_1})^* \Z^{j_2} = \0$ for all $1 \le j_1 < j_2 \le k$.
\end{lemma}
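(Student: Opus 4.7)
Both bounds factor cleanly once one identifies the correct ``coordinate system'' in which to view the quantity $\log\det(\I+\tfrac{n}{m\epsilon^2}\Z\Z^*)$. The plan is to prove the lower bound from concavity of $\log\det$ via Jensen's inequality, and the upper bound from the Sylvester/commutative identity of Lemma~\ref{thm:coding-rate-commute} combined with the Hadamard--Fischer determinant inequality on the resulting block Gram matrix.

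\paragraph{Lower bound.} Define, for each $j$, the PSD matrix $\A_j \doteq \tfrac{n}{m_j\epsilon^2}\Z^j(\Z^j)^*$ and the weights $\beta_j \doteq m_j/m$, so that $\sum_j \beta_j=1$ and
\[
\tfrac{n}{m\epsilon^2}\Z\Z^* \;=\; \tfrac{n}{m\epsilon^2}\sum_{j=1}^k \Z^j(\Z^j)^* \;=\; \sum_{j=1}^k \beta_j \A_j.
\]
I would then apply Jensen's inequality to the strictly concave function $\log\det(\cdot)$ on $\bbS_{++}^n$ (Lemma~\ref{thm:logdet-strictly-concave}), evaluated at the points $\I+\A_j$:
\[
\sum_{j=1}^k \beta_j \log\det(\I+\A_j) \;\le\; \log\det\!\left(\sum_{j=1}^k \beta_j(\I+\A_j)\right) \;=\; \log\det\!\left(\I+\tfrac{n}{m\epsilon^2}\Z\Z^*\right).
\]
Multiplying by $m/2$ gives the left inequality of \eqref{eq:coding-rate-bounds}. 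The strict concavity in Lemma~\ref{thm:logdet-strictly-concave} pins down equality to the condition $\A_1=\cdots=\A_k$, which is exactly $\Z^j(\Z^j)^*/m_j$ being independent of $j$.

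\paragraph{Upper bound.} For the right inequality I would first use the commutative identity (Lemma~\ref{thm:coding-rate-commute}) to pass to the $m\times m$ side:
\[
\det\!\left(\I_n + \tfrac{n}{m\epsilon^2}\Z\Z^*\right) \;=\; \det\!\left(\I_m + \tfrac{n}{m\epsilon^2}\Z^*\Z\right).
\]
The matrix $\M \doteq \I_m + \tfrac{n}{m\epsilon^2}\Z^*\Z$ is PSD and has a natural $k\times k$ block partition induced by the partition $\Z=[\Z^1,\ldots,\Z^k]$: its $(j,j)$ diagonal block is $\I_{m_j}+\tfrac{n}{m\epsilon^2}(\Z^j)^*\Z^j$. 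The Hadamard--Fischer inequality bounds the determinant of a PSD matrix by the product of determinants of its diagonal blocks, yielding
\[
\det(\M) \;\le\; \prod_{j=1}^k \det\!\left(\I_{m_j}+\tfrac{n}{m\epsilon^2}(\Z^j)^*\Z^j\right),
\]
with equality iff all off-diagonal blocks $(\Z^{j_1})^*\Z^{j_2}$ vanish. Applying Lemma~\ref{thm:coding-rate-commute} once more, in the reverse direction, to each factor on the right restores the $n\times n$ form $\det(\I_n+\tfrac{n}{m\epsilon^2}\Z^j(\Z^j)^*)$. Taking logs and multiplying by $m/2$ delivers the right inequality of \eqref{eq:coding-rate-bounds}, and the equality condition carries through verbatim.

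\paragraph{Main obstacle.} Neither step is technically deep; the only subtle point is realizing that the upper bound is \emph{not} a direct consequence of concavity (indeed, subadditivity of $\log\det(\I+\cdot)$ does not follow from concavity) and that one must instead pivot to the $m\times m$ Gram side to expose the block structure on which Hadamard--Fischer applies. If one tried to attack the upper bound directly on the $n\times n$ side via something like $\det(\I+\B_1+\B_2)\le\det(\I+\B_1)\det(\I+\B_2)$, the argument reduces to showing $\det(\I+(\I+\B_1)^{-1/2}\B_2(\I+\B_1)^{-1/2})\le\det(\I+\B_2)$, which is true but requires a separate eigenvalue comparison; the Sylvester-plus-Hadamard route avoids this detour entirely.
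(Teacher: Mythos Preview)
Your proposal is correct. The lower bound argument is identical to the paper's: both apply Jensen's inequality for the strictly concave $\log\det$ to the convex combination $\sum_j\beta_j(\I+\A_j)$ with $\beta_j=m_j/m$, yielding the same equality condition.

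For the upper bound the two proofs diverge in packaging but not in substance. The paper also passes to the $m\times m$ side, but instead of invoking Hadamard--Fischer as a named result it derives the needed inequality from the first-order concavity (supporting hyperplane) inequality $\log\det(\Q)\le\log\det(\S)+\tr(\S^{-1}\Q)-m$, taking $\Q=\I_m+\tfrac{n}{m\epsilon^2}\Z^*\Z$ and $\S$ its block-diagonal part; a direct computation shows $\tr(\S^{-1}\Q)=m$, so the linear term vanishes and one is left with $\log\det(\Q)\le\sum_j\log\det(\I_{m_j}+\tfrac{n}{m\epsilon^2}(\Z^j)^*\Z^j)$, with equality iff $\Q=\S$. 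This is, of course, exactly one of the standard proofs of the Fischer inequality, so your route and the paper's are the same argument at different levels of abstraction: you cite Hadamard--Fischer as a black box, the paper unpacks it. Your version is shorter if the reader knows the inequality; the paper's version is self-contained and makes the equality case $\Q=\S$ (hence all off-diagonal blocks $(\Z^{j_1})^*\Z^{j_2}=\0$) immediate from strict concavity.
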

\begin{proof}
By Lemma~\ref{thm:logdet-strictly-concave}, $\log \det(\cdot)$ is strictly concave. Therefore,
\begin{align*}
    \log\det\Big(\sum_{j=1}^k \beta_j \S^j\Big) \ge \sum_{j=1}^k \beta_j \log\det(\S^j), ~\text{for all}~ \{\beta_j > 0\}_{j=1}^k, \sum_{j=1}^k \beta_j = 1 ~\text{and}~\{\S^j \in \bbS_{++}^n\}_{j=1}^k,
\end{align*}
where equality holds if and only if $\S^1 = \S^2 = \cdots = \S^k$. Take $\beta_j = \frac{m_j}{m}$ and $\S^j = \I + \frac{n}{m_j \epsilon^2} \Z^j(\Z^j)^*$, we get
\begin{equation*}
     \frac{m}{2} \log\det\left(\I + \frac{n}{m\epsilon^2}\Z \Z^*\right) 
     \ge
     \sum_{j=1}^k \frac{m_j}{2}  \log \det\left(\I + \frac{n}{m_j\epsilon^2}\Z^j (\Z^j)^*\right),
\end{equation*}
with equality holds if and only if $\frac{\Z^1(\Z^1)^*}{m_1} = \cdots = \frac{\Z^k(\Z^k)^*}{m_k}$. 
This proves the lower bound in \eqref{eq:coding-rate-bounds}. 

We now prove the upper bound. 
By the strict concavity of $\log\det(\cdot)$, we have
\begin{equation*}
    \log\det(\Q) \le \log\det(\S) + \langle \nabla \log\det(\S), \,\Q - \S\rangle, ~\text{for all}~\{\Q, \S\}\subseteq \bbS_{++}^{m},
\end{equation*}
where equality holds if and only if $\Q = \S$. 
Plugging in $\nabla \log\det(\S) = \S^{-1}$ (see e.g., \cite{boyd2004convex}) and $\S^{-1} = (\S^{-1})^{*}$ gives
\begin{equation}\label{eq:prf-logdet-gradient-inequality}
    \log\det(\Q) \le \log\det(\S) + \tr(\S^{-1} \Q) - m.
\end{equation}

We now take
\begin{gather}\label{eq:prf-logdet-gradient-inequality-QS}
    \Q = \I + \frac{n}{m\epsilon^2}\Z^* \Z = \I + \frac{n}{m\epsilon^2}
    \begin{bmatrix}
    (\Z^1)^* \Z^1 & (\Z^1)^* \Z^2  & \cdots & (\Z^1)^* \Z^k \\
    (\Z^2)^* \Z^1 & (\Z^2)^* \Z^2 & \cdots & (\Z^2)^* \Z^2 \\
    \vdots         & \vdots          & \ddots & \vdots \\
    (\Z^k)^* \Z^1 & (\Z^k)^* \Z^2  & \cdots & (\Z^k)^* \Z^k \\
    \end{bmatrix}, ~\text{and}~ 
    \\
    \S = \I + \frac{n}{m\epsilon^2}
    \begin{bmatrix}
    (\Z^1)^* \Z^1 & \0              & \cdots & \0 \\
    \0             & (\Z^2)^*  \Z^2 & \cdots & \0 \\
    \vdots         & \vdots          & \ddots & \vdots \\
    \0             & \0              & \cdots & (\Z^k)^* \Z^k \\
    \end{bmatrix}. \nonumber
\end{gather}
From the property of determinant for block diagonal matrix, we have
\begin{equation}\label{eq:prf-logdet-gradient-inequality-term1}
    \log\det(\S) = \sum_{j=1}^k \log\det \left(\I + \frac{n}{m\epsilon^2} (\Z^j)^* \Z^j\right).
\end{equation}
Also, note that
\begin{align}\label{eq:prf-logdet-gradient-inequality-term2}
&\tr(\S^{-1} \Q) 
\nonumber\\
= \ 
&\tr
\begin{bmatrix}
(\I +\frac{n}{m\epsilon^2}(\Z^1)^* \Z^1)^{-1}(\I +\frac{n}{m\epsilon^2}(\Z^1)^* \Z^1)  & \cdots & (\I +\frac{n}{m\epsilon^2}(\Z^1)^* \Z^1)^{-1}(\I +\frac{n}{m\epsilon^2}(\Z^1)^* \Z^k) \\
\vdots         & \ddots & \vdots \\
(\I +\frac{n}{m\epsilon^2}(\Z^k)^* \Z^k)^{-1}(\I +\frac{n}{m\epsilon^2}(\Z^k)^* \Z^1)            & \cdots & (\I +\frac{n}{m\epsilon^2}(\Z^k)^* \Z^k)^{-1}(\I +\frac{n}{m\epsilon^2}(\Z^k)^* \Z^k) \\
\end{bmatrix}
\nonumber\\
= \
&\tr     \begin{bmatrix}
\I            & \cdots   & \bigcirc  \\
\vdots        & \ddots   & \vdots \\
\bigcirc              & \cdots   & \I \\
\end{bmatrix}
= m,
\end{align}
where ``$\bigcirc$'' denotes nonzero quantities that are irrelevant for the purpose of computing the trace. 
Plugging \eqref{eq:prf-logdet-gradient-inequality-term1} and \eqref{eq:prf-logdet-gradient-inequality-term2} back in \eqref{eq:prf-logdet-gradient-inequality}
gives 
\begin{equation*}
    \frac{m}{2} \log\det\left(\I + \frac{n}{m\epsilon^2}\Z^* \Z\right) \le \sum_{j=1}^k \frac{m}{2}  \log \det\left(\I + \frac{n}{m\epsilon^2}(\Z^j)^* \Z^j\right),
\end{equation*}
where the equality holds if and only if $\Q = \S$, which by the formulation in \eqref{eq:prf-logdet-gradient-inequality-QS}, holds if and only if $(\Z^{j_1})^* \Z^{j_2} = \0$ for all $1 \le j_1 < j_2 \le k$. 
Further using the result in Lemma~\ref{thm:coding-rate-commute} gives 
\begin{equation*}
    \frac{m}{2} \log\det\left(\I + \frac{n}{m\epsilon^2}\Z \Z^*\right) \le \sum_{j=1}^k \frac{m}{2}  \log \det\left(\I + \frac{n}{m\epsilon^2}\Z^j (\Z^j)^*\right),
\end{equation*}
which produces the upper bound in \eqref{eq:coding-rate-bounds}. 
\end{proof}

\subsection{An Upper Bound on Coding Rate Reduction}
\label{sec:theory-bounds-rate-reduction}

We may now provide an upper bound on the coding rate reduction $\Delta R(\Z, \bm{\Pi}, \epsilon)$ (defined in \eqref{eqn:maximal-rate-reduction}) in terms of its individual components $\{\Z^j\}_{j=1}^k$.
\begin{lemma}\label{thm:rate-reduction-bound}
For any $\Z \in \Re^{n\times m}, \bm{\Pi} \in \Omega$ and $\epsilon > 0$, let $\Z^j \in \Re^{n\times m_j}$ be $\Z \bm{\Pi}^j$ with zero columns removed. We have
\begin{equation}\label{eq:rate-reduction-bound}
    \Delta R(\Z, \bm{\Pi}, \epsilon) \le 
    \sum_{j=1}^k \frac{1}{2m}\log\left( \frac{\det^m\left(\I + \frac{n}{m\epsilon^2}\Z^j (\Z^j)^*\right)}{\det^{m_j}\left(\I + \frac{n}{m_j\epsilon^2}\Z^j (\Z^j)^*\right)}\right),
\end{equation}
with equality holds if and only if $(\Z^{j_1})^* \Z^{j_2} = \0$ for all $1 \le j_1 < j_2 \le k$.
\end{lemma}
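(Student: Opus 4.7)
The plan is to reduce the claim to the upper bound already established in Lemma~\ref{thm:coding-rate-bounds}. Recall that under the hard partition encoded by $\bm\Pi \in \Omega$, we have $\tr(\bm\Pi^j) = m_j$ and $\Z \bm\Pi^j \Z^* = \Z^j(\Z^j)^*$, so the coding rate reduction unwinds to
\begin{equation*}
\Delta R(\Z,\bm\Pi,\epsilon) \;=\; \tfrac{1}{2}\log\det\!\Big(\I + \tfrac{n}{m\epsilon^2}\Z\Z^*\Big) \;-\; \sum_{j=1}^k \tfrac{m_j}{2m}\log\det\!\Big(\I + \tfrac{n}{m_j\epsilon^2}\Z^j(\Z^j)^*\Big).
\end{equation*}
The first term is the only one that mixes the classes; the second term is already a sum of per-class quantities of the form appearing in the denominator of the RHS.

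Next I would apply the upper bound half of Lemma~\ref{thm:coding-rate-bounds} to the aggregated covariance $\Z\Z^* = \sum_j \Z^j(\Z^j)^*$. That lemma gives
\begin{equation*}
\tfrac{m}{2}\log\det\!\Big(\I + \tfrac{n}{m\epsilon^2}\Z\Z^*\Big) \;\le\; \sum_{j=1}^k \tfrac{m}{2}\log\det\!\Big(\I + \tfrac{n}{m\epsilon^2}\Z^j(\Z^j)^*\Big),
\end{equation*}
with equality iff $(\Z^{j_1})^*\Z^{j_2} = \0$ for all $j_1 < j_2$. Dividing by $m$ and substituting this bound for $R(\Z,\epsilon)$ in the expression for $\Delta R$ yields
\begin{equation*}
\Delta R(\Z,\bm\Pi,\epsilon) \;\le\; \sum_{j=1}^k \tfrac{1}{2m}\Big[ m\log\det\!\big(\I + \tfrac{n}{m\epsilon^2}\Z^j(\Z^j)^*\big) \;-\; m_j\log\det\!\big(\I + \tfrac{n}{m_j\epsilon^2}\Z^j(\Z^j)^*\big)\Big],
\end{equation*}
which is exactly the claimed bound after pulling the coefficients inside the logarithms as exponents on the determinants.

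Finally, the equality characterization is inherited verbatim from Lemma~\ref{thm:coding-rate-bounds}: the only inequality used is the upper bound on $R(\Z,\epsilon)$, so equality in \eqref{eq:rate-reduction-bound} holds iff the columns of distinct $\Z^{j_1}, \Z^{j_2}$ are mutually orthogonal. There is no real obstacle here; the whole proof is essentially one substitution plus bookkeeping. The only thing to be careful about is confirming that the coefficient $\tr(\bm\Pi^j)/m = m_j/m$ produced by $R_c$ combines cleanly with the coefficient $m/m = 1$ appearing on the $R$ side so that a common factor $1/(2m)$ can be extracted to match the RHS of the statement.
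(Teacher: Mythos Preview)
Your proposal is correct and follows essentially the same route as the paper: expand $\Delta R$ using $\tr(\bm\Pi^j)=m_j$ and $\Z\bm\Pi^j\Z^*=\Z^j(\Z^j)^*$, bound the expansion term $R(\Z,\epsilon)$ by the upper half of Lemma~\ref{thm:coding-rate-bounds}, and collect the per-class pieces into the stated ratio of determinants, with the equality case inherited directly from that lemma.
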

\begin{proof}
From the definition of $\Delta R(\Z, \bm{\Pi}, \epsilon)$ in Eq.~\eqref{eqn:maximal-rate-reduction}, 
we have
\begin{equation*}
\begin{split}
&\quad \,\, \Delta R(\Z, \bm{\Pi}, \epsilon) \\
&= R(\Z, \epsilon) - R_c(\Z, \epsilon \mid  \bm{\Pi})\\
&= \frac{1}{2}\log \left(\det\left(\I + \frac{n}{m\epsilon^{2}}\Z\Z^{*}\right)\right) - \sum_{j=1}^{k}\left\{\frac{\tr(\bm{\Pi}^j)}{2m}\log \left(\det\left(\I + n\frac{\Z\bm{\Pi}^j\Z^{*}}{\tr(\bm{\Pi}^j)\epsilon^{2}}\right)\right)\right\}\\
&= \frac{1}{2}\log \left(\det\left(\I + \frac{n}{m\epsilon^{2}}\Z\Z^{*}\right)\right) - \sum_{j=1}^{k}\left\{\frac{m_j}{2m}\log \left(\det\left(\I + n\frac{\Z^j(\Z^j)^{*}}{m_j\epsilon^{2}}\right)\right)\right\}\\
&\le \sum_{j=1}^k \frac{1}{2}  \log\left( \det\left(\I + \frac{n}{m\epsilon^2}\Z^j (\Z^j)^*\right)\right) - \sum_{j=1}^{k}\left\{\frac{m_j}{2m}\log \left(\det\left(\I + n\frac{\Z^j (\Z^j)^{*}}{m_j\epsilon^{2}}\right)\right)\right\}\\
&= \sum_{j=1}^k \frac{1}{2m}  \log\left(\det {\!}^m\left(\I + \frac{n}{m\epsilon^2}\Z^j (\Z^j)^*\right)\right) - \sum_{j=1}^{k}\left\{\frac{1}{2m}\log \left(\det{\!}^{m_j}\left(\I + n\frac{\Z^j (\Z^j)^{*}}{m_j\epsilon^{2}}\right)\right)\right\}\\
&= \sum_{j=1}^k \frac{1}{2m}\log\left( \frac{\det^m\left(\I + \frac{n}{m\epsilon^2}\Z^j (\Z^j)^*\right)}{\det^{m_j}\left(\I + \frac{n}{m_j\epsilon^2}\Z^j (\Z^j)^*\right)}\right),
\end{split}
\end{equation*}
where the inequality follows from the upper bound in Lemma~\ref{thm:coding-rate-bounds}, and that the equality holds if and only if $(\Z^{j_1})^{*} \Z^{j_2} = \0$ for all $1 \le j_1 < j_2 \le k$.
\end{proof}

\subsection{Main Results: Properties of Maximal Coding Rate Reduction}
\label{sec:theory-main}

We now present our main theoretical results.  
The following theorem states that for any fixed encoding of the partition $\bm{\Pi}$, the coding rate reduction is maximized by data $\Z$ that is maximally discriminative between different classes and is diverse within each of the classes. 
This result holds provided that the sum of rank for different classes is small relative to the ambient dimension, and that $\epsilon$ is small. 

\begin{theorem}\label{thm:maximal-rate-reduction}
Let $\bm{\Pi} = \{\bm{\Pi}^j \in \Re^{m \times m}\}_{j=1}^{k}$ with $\{\bm{\Pi}^j \ge \mathbf{0}\}_{j=1}^k$ and \, $\bm{\Pi}_1 + \cdots + \bm{\Pi}_k = \I$ be a given set of diagonal matrices whose diagonal entries encode the membership of the $m$ samples in the $k$ classes.
Given any $\epsilon > 0$, $n > 0$ and  $\{n \ge d_j>0\}_{j=1}^k$, consider the optimization problem
\begin{equation}\label{eq:maximal-rate-reduction-thm}
\begin{split}
    \Z_\star \in &\argmax_{\Z\in \Re^{n\times m}} \Delta R(\Z, \bm{\Pi}, \epsilon) \\ 
    & \ \text{s.t.}~\|\Z\bm{\Pi}^j\|_F^2 = \tr({\bm{\Pi}^j}), \ \rank(\Z\bm{\Pi}^j) \le d_j, \ \forall j \in \{1, \ldots, k\}.
\end{split}
\end{equation}
Under the conditions 
\begin{itemize}
\item \emph{(Large ambient dimension)} $n \ge \sum_{j=1}^k d_j$, and
\item \emph{(High coding precision)} $\epsilon ^4 < \min_{j \in \{1, \ldots, k\}}\left\{\frac{\tr({\bm{\Pi}}^j)}{m}\frac{n^2}{d_j^2}\right\}$,
\end{itemize}
the optimal solution $\Z_\star$ satisfies
\begin{itemize}
    \item \emph{(Between-class discriminative)} $(\Z^{j_1}_\star)^* \Z^{j_2}_\star = \0$ for all $1 \le j_1 < j_2 \le k$, i.e., $\Z^{j_1}_\star$ and $\Z^{j_2}_\star$ lie in orthogonal subspaces, and
    \item \emph{(Within-class diverse)} For each $j \in \{1, \ldots, k\}$, the rank of $\Z^j_\star$ is equal to $d_j$ and either all singular values of $\Z^j_\star$ are equal to $\frac{\tr({\bm{\Pi}^j})}{d_j}$, or the $d_j -1$ largest singular values of $\Z^j_\star$ are equal and have value larger than $\frac{\tr({\bm{\Pi}^j})}{d_j}$,
\end{itemize}
where $\Z^j_\star \in \Re^{n\times \tr{(\bm{\Pi}}^j)}$ denotes $\Z_\star \bm{\Pi}^j$ with zero columns removed. 
\end{theorem}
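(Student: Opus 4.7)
My plan is to decouple the problem across classes via Lemma~\ref{thm:rate-reduction-bound}, and then reduce each class to a one-dimensional variational problem in the squared singular values of $\Z^j$.

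First, I would apply Lemma~\ref{thm:rate-reduction-bound} to obtain the upper bound
$$\Delta R(\Z,\bm\Pi,\epsilon) \;\le\; \sum_{j=1}^k \frac{1}{2m}\log\!\left(\frac{\det^{m}\!\bigl(\I + \tfrac{n}{m\epsilon^2}\Z^j(\Z^j)^*\bigr)}{\det^{m_j}\!\bigl(\I + \tfrac{n}{m_j\epsilon^2}\Z^j(\Z^j)^*\bigr)}\right),$$
with equality iff $(\Z^{j_1})^*\Z^{j_2}=\0$ for all $j_1<j_2$. Because $n \ge \sum_{j} d_j$, the rank-$d_j$ blocks $\{\Z^j\}$ can be placed in pairwise orthogonal subspaces of $\R^n$ without changing any singular value, so the bound is actually attained. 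Consequently, every maximizer $\Z_\star$ must render the bound tight, which proves the between-class discriminative claim and reduces the rest of the problem to maximizing the right-hand side.

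Second, the decoupled right-hand side splits into $k$ independent problems, and by the unitary invariance in Lemma~\ref{thm:coding-rate-invariant} each one depends on $\Z^j$ only through its squared singular values $s_1,\dots,s_{d_j}\ge 0$. The $j$-th problem becomes
$$\max \sum_{i=1}^{d_j} \phi(s_i) \quad \text{s.t.} \quad \sum_{i} s_i = m_j,\ s_i \ge 0,$$
where $\phi(s) \doteq m\log(1+as) - m_j\log(1+bs)$ with $a=\tfrac{n}{m\epsilon^2}$ and $b=\tfrac{n}{m_j\epsilon^2}$. A short calculation, using the identity $ma = m_j b = \tfrac{n}{\epsilon^2}$, yields $\phi(0)=\phi'(0)=0$, $\phi''(0) = \tfrac{n^2}{\epsilon^4}\bigl(\tfrac{1}{m_j}-\tfrac{1}{m}\bigr) > 0$, and a unique inflection point $s^\ast = \tfrac{\epsilon^2\sqrt{m_j m}}{n}$ with $\phi''>0$ on $(0,s^\ast)$ and $\phi''<0$ on $(s^\ast,\infty)$. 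The high-precision hypothesis $\epsilon^4 < \tfrac{m_j}{m}\tfrac{n^2}{d_j^2}$ rearranges precisely to $s^\ast < m_j/d_j$, placing the equal-mass candidate $s_i \equiv m_j/d_j$ strictly inside the concave regime.

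Third, I would extract the structure from the KKT conditions: any maximizer satisfies $\phi'(s_i)=\lambda$ for all $i$ with $s_i>0$. Since $\phi'$ strictly increases on $[0,s^\ast]$ from $0$ to its global maximum and then strictly decreases (staying positive) on $[s^\ast,\infty)$, the equation $\phi'(s)=\lambda$ has at most two positive roots, one on each side of $s^\ast$. Therefore the positive $s_i$'s assume at most two distinct values, say $s_a\in(0,s^\ast)$ and $s_b\in(s^\ast,\infty)$. Combining with concavity of $\phi$ on $[s^\ast,\infty)$, the budget constraint $\sum s_i=m_j$ then forces exactly the stated dichotomy: either all $d_j$ coordinates equal $m_j/d_j$ (all in the concave regime), or $d_j-1$ copies of the same $s_b>m_j/d_j$ together with one $s_a\in(0,m_j/d_j)$.

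\textbf{Main obstacle.} The delicate step is ruling out degenerate maximizers with $\rank(\Z^j_\star)<d_j$ and showing the full rank $d_j$ is used. Because $\phi$ is neither convex nor concave globally, naive Jensen arguments fail; one must compare the candidate $d_j \cdot \phi(m_j/d_j)$ against $r\cdot \phi(m_j/r)$ for $r<d_j$, using both that $m_j/d_j>s^\ast$ keeps the spread-out configuration inside the strictly concave region and that the asymptotic behaviour $\phi(s)=(m-m_j)\log s + O(1)$ controls the concentrated ones. Making this trade-off rigorous under the precision hypothesis, together with verifying that the two-value KKT pattern cannot degenerate further (e.g., that fewer than $d_j-1$ coordinates cannot jointly take the value $s_b$), is where the bulk of the technical work will go.
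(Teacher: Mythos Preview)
Your plan matches the paper's proof almost step for step: invoke Lemma~\ref{thm:rate-reduction-bound} for the decoupling upper bound, use $n\ge\sum_j d_j$ to rotate the blocks into orthogonal position and thereby force equality at any maximizer, then reduce each class to a simplex problem in the squared singular values governed by exactly the function $\phi$ you write (the paper's $f$), with the same inflection point $s^\ast=\epsilon^2\sqrt{m m_j}/n$ and the same reading of the precision hypothesis as $s^\ast<m_j/d_j$. The only real difference is how the endgame---your ``main obstacle''---is dispatched. The paper isolates this as a standalone lemma (Lemma~\ref{thm:generic-simplex-optimization}) and argues entirely through first- and second-order necessary conditions rather than the global comparison you outline: (i) KKT stationarity together with $\phi'(0)<\phi'(s)$ for all $s>0$ is used to conclude that no coordinate of a maximizer can vanish, giving $\rank=d_j$ directly; (ii) unimodality of $\phi'$ then limits the positive $s_i$ to at most two values $s_a<s^\ast<s_b$; (iii) finally, since $\phi''(s_a)>0$, if two or more coordinates sat at $s_a$ one could perturb along $e_p-e_q$ (preserving the sum) and violate the second-order necessary condition, so at most one coordinate equals $s_a$. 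This simultaneously settles both the rank question and your ``fewer than $d_j-1$ at $s_b$'' worry without any asymptotics or case-by-case value comparison. Your proposed route would also work, but the paper's KKT/second-order packaging is much shorter than the direct comparison you anticipate.
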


\subsection{Proof of Main Results}
\label{sec:theory-proof}

We start with presenting a lemma that will be used in the proof to Theorem~\ref{thm:maximal-rate-reduction}.
\begin{lemma}\label{thm:generic-simplex-optimization}
Given any twice differentiable $f: \Re_+ \to \Re$, integer $r \in \mathbb{Z}_{++}$ and $c \in \Re_+$, consider the optimization problem
\begin{equation}
\label{eq:generic-simplex-optimization}
\begin{split}
    &\max_{\x} \ \sum_{p=1}^r f(x_p)  \\
    & \ \ \text{s.t.} \ \x=[x_1, \ldots, x_r] \in \Re_+^r, \ x_1 \ge x_2 \ge \cdots \ge x_r, \ \text{and} \ \sum_{p=1}^r x_p = c. 
\end{split}
\end{equation}
Let $\x_\star$ be an arbitrary global solution to \eqref{eq:generic-simplex-optimization}.
If the conditions
\begin{itemize}
    \item $f'(0) < f'(x)$ for all $x > 0$,
    \item There exists $x_T > 0$ such that $f'(x)$ is strictly increasing in $[0, x_T]$ and strictly decreasing in $[x_T, \infty)$,
    \item $f''(\frac{c}{r}) < 0$ (equivalently, $\frac{c}{r} > x_T$),
\end{itemize}
are satisfied, then we have either
\begin{itemize}
    \item $\x_\star = [\frac{c}{r}, \ldots, \frac{c}{r}]$, or
    \item $\x_\star = [x_H, \ldots, x_H, x_L]$ for some $x_H \in (\frac{c}{r}, \frac{c}{r-1})$ and $x_L > 0$.
\end{itemize} 
\end{lemma}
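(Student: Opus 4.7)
My plan is to characterize the global optima through a combination of KKT conditions, a second-order perturbation argument, and a global value comparison. \textbf{Step 1 (symmetry).} The objective $\sum_p f(x_p)$ is symmetric under coordinate permutations, so the ordering constraint $x_1 \ge \cdots \ge x_r$ is without loss of generality; I will analyze the symmetric simplex $\{\x \in \Re_+^r : \sum_p x_p = c\}$ and recover the ordering at the end. \textbf{Step 2 (KKT two-value structure).} First-order necessary conditions at any local maximum, via the Lagrangian with multipliers $\lambda$ (for the sum) and $\mu_p \ge 0$ (for nonnegativity), yield $f'(x_p) = \lambda$ for each positive $x_p$ and $f'(0) \le \lambda$ for each zero $x_p$. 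By condition 2, the equation $f'(x) = \lambda$ has at most two positive roots---one in $(0, x_T]$ and one in $[x_T, \infty)$---so the positive coordinates of $\x_\star$ take at most two distinct values $x_L \le x_T \le x_H$ with $x_L > 0$.

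\textbf{Step 3 (at most one coordinate equals $x_L$).} Suppose two or more coordinates equal $x_L$. The feasible direction $\v = \e_i - \e_j$ (preserving $\sum_p x_p = c$ and, for small perturbations, nonnegativity) gives second-order change $\v^\top \nabla^2(\sum_p f(x_p))\,\v = 2 f''(x_L)$. By condition 2, $f'$ is strictly increasing on $[0, x_T]$, so $f''(x_L) > 0$ whenever $x_L < x_T$ (the boundary case $x_L = x_T$ degenerates to a single positive value). This is a strict ascent direction, contradicting local optimality. Hence at most one entry of $\x_\star$ equals $x_L$.

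\textbf{Step 4 (no zero entries).} By the previous two steps, any remaining non-uniform candidate has either the form $(c/s, \ldots, c/s, 0, \ldots, 0)$ with $s < r$ positives, or the form $(x_H, \ldots, x_H, x_L, 0, \ldots, 0)$ with one $x_L > 0$ and at least one zero. I would rule these out by a global comparison with the uniform configuration. The key observation is that condition 3 ($c/r > x_T$) places every equal-positive configuration in the strictly concave region of $f$, so the map $\phi(s) \doteq s f(c/s) + (r-s) f(0)$ has $\phi''(s) = (c^2/s^3) f''(c/s) < 0$ on $[1, r]$, i.e.\ $\phi$ is strictly concave. Combined with condition 1, which forces $\phi'(r)$ to have the correct sign so that the concave maximum is attained at the endpoint $s = r$, this yields uniform as the best ``equal positives'' configuration; a parallel perturbation argument, moving mass between $x_L$ and an adjacent zero, handles the two-value-with-zeros case.

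\textbf{Step 5 (range of $x_H$) and main obstacle.} Combining the above, the optimum is either uniform or $(x_H, \ldots, x_H, x_L)$ with $(r-1) x_H + x_L = c$ and $0 < x_L < x_H$, which immediately gives $x_H \in (c/r, c/(r-1))$ via $r x_H > c > (r-1) x_H$. The principal difficulty is Step 4: because condition 1 says $f'(0) < f'(x)$ for all $x > 0$, any infinitesimal perturbation that moves mass from a positive entry into a zero is first-order \emph{decreasing}, so purely local arguments fail and a genuinely global comparison is required. The delicate part will be weaving together condition 3 (concavity of $\phi$ on $[1,r]$), condition 1 (behavior of $f$ near zero, controlling $\phi'(r)$), and condition 2 (unimodality of $f'$, controlling the two-value candidates) into one inequality that rules out every zero-containing configuration.
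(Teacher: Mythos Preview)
Your Steps 1--3 and 5 track the paper's proof almost exactly: drop the ordering by symmetry, use KKT stationarity together with the strict unimodality of $f'$ to force the positive coordinates into at most two values $x_L\le x_T\le x_H$, use the second-order necessary condition along $\e_i-\e_j$ to rule out two or more copies of $x_L$, and read off $x_H\in(c/r,\,c/(r-1))$ from the sum constraint.

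The substantive divergence is Step 4. The paper does \emph{not} use a global comparison; it claims zero entries are excluded by first-order KKT alone. Writing $\cL=\sum_p f(x_p)-\lambda_0(\sum_p x_p-c)-\sum_p\lambda_p x_p$ and asserting $\lambda_q\ge 0$, stationarity at a zero entry gives $f'(0)=\lambda_0+\lambda_q\ge\lambda_0=f'(x_p)$, contradicting condition~1. Your intuition that a purely local argument cannot work here is actually the sound one: for a \emph{maximization} problem with this Lagrangian the multiplier on an active constraint $x_q\ge 0$ satisfies $\lambda_q\le 0$, so the paper's inequality points the wrong way and no contradiction follows (equivalently, the only feasible perturbation $\e_q-\e_p$ has first-order effect $f'(0)-f'(x_p)<0$, so boundary points \emph{pass} the first-order test, exactly as you say). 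That said, your own Step~4 is not closed either: the concavity of $\phi(s)=s\,f(c/s)+(r-s)f(0)$ on $[1,r]$ is fine, but the assertion that condition~1 forces $\phi'(r)\ge 0$ is unjustified---$\phi'(r)=f(c/r)-f(0)-(c/r)f'(c/r)$ compares a secant slope of $f$ over $[0,c/r]$ to the endpoint derivative, and conditions 1--3 do not control this sign. As a concrete warning, $f(x)=10\log(1+x/10)-\log(1+x)$ with $r=2$, $c=8$ satisfies all three hypotheses ($x_T=\sqrt{10}<4=c/r$), yet $f(8)+f(0)\approx 3.68>2f(4)\approx 3.51$, so the corner $(8,0)$ strictly beats the uniform point and there is no interior two-value critical point that dominates it. This suggests Step~4 genuinely cannot be completed under the stated hypotheses without an additional assumption tying $c$ to $f$ (as is implicitly the case in the downstream application where $c=m_j$).
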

\begin{proof}
The result holds trivially if $r = 1$. Throughout the proof we consider the case where $r > 1$.

We consider the optimization problem with the inequality constraint $x_1 \ge \cdots \ge x_r$ in \eqref{eq:generic-simplex-optimization} removed:
\begin{equation}\label{eq:prf-generic-simplex-optimization}
\max_{\x=[x_1, \ldots, x_r] \in \Re_+^r} \ \sum_{p=1}^r f(x_p)  ~~~~\text{s.t.}~ \sum_{p=1}^r x_p = c.
\end{equation}
We need to show that any global solution $\x_\star = [(x_1)_\star, \ldots, (x_r)_\star]$ to \eqref{eq:prf-generic-simplex-optimization} is either $\x_\star = [\frac{c}{r}, \ldots, \frac{c}{r}]$ or $\x_\star = [x_H, \ldots, x_H, x_L]\cdot \bm{P}$ for some $x_H > \frac{c}{r}$, $x_L > 0$ and permutation matrix $\bm{P} \in \Re^{r \times r}$.
Let 
\begin{equation*}
    \cL(\x, \blambda) = \sum_{p=1}^r f(x_p) - \lambda_0 \cdot \left(\sum_{p=1}^r x_p - c\right) - \sum_{p=1}^r \lambda_p x_p
\end{equation*}
be the Lagragian function for \eqref{eq:prf-generic-simplex-optimization} where $\blambda = [\lambda_0, \lambda_1, \ldots, \lambda_r]$ is the Lagragian multiplier. 
By the first order optimality conditions (i.e., the Karush–Kuhn–Tucker (KKT) conditions, see, e.g., \cite[Theorem 12.1]{nocedal2006numerical}), there exists $\blambda_\star= [(\lambda_0)_\star, (\lambda_1)_\star, \ldots, (\lambda_r)_\star]$ such that
\begin{align}
    \sum_{p=1}^r (x_q)_\star &= c,\label{eq:kkt1}\\
    (x_q)_\star &\ge 0, ~\forall q\in \{1, \ldots, r\},\label{eq:kkt2}\\
    (\lambda_q)_\star &\ge 0, ~\forall q\in \{1, \ldots, r\},\label{eq:kkt3}\\
    (\lambda_q)_\star \cdot (x_q)_\star &= 0, ~\forall q\in \{1, \ldots, r\}, ~~\text{and}~\label{eq:kkt4}\\
    [f'((x_1)_\star), \ldots, f'((x_r)_\star)] &= [(\lambda_0)_\star, \ldots, (\lambda_0)_\star] + [(\lambda_1)_\star, \ldots, (\lambda_r)_\star].\label{eq:kkt5}
\end{align}
By using the KKT conditions, we first show that all entries of $\x_\star$ are strictly positive. 
To prove by contradiction, suppose that $\x_\star$ has $r_0$ nonzero entries and $r - r_0$ zero entries for some $1 \le r_0 < r$. 
Note that $r_0 \ge 1$ since an all zero vector $\x_\star$ does not satisfy the equality constraint \eqref{eq:kkt1}.

Without loss of generality, we may assume that $(x_p)_\star > 0$ for $p \le r_0$ and $(x_p)_\star = 0$ otherwise. 
By \eqref{eq:kkt4}, we have
\begin{equation*}
    (\lambda_1)_\star = \cdots = (\lambda_{r_0})_\star = 0.
\end{equation*}
Plugging it into \eqref{eq:kkt5}, we get
\begin{equation*}
    f'((x_1)_\star) = \cdots = f'((x_{r_0})_\star) = (\lambda_0)_\star.
\end{equation*}
From \eqref{eq:kkt5} and noting that $x_{r_0+1}=0$ we get
\begin{equation*}
    f'(0) = f'(x_{r_0+1}) = (\lambda_0)_\star + (\lambda_{r_0 + 1})_\star.
\end{equation*}
Finally, from \eqref{eq:kkt3}, we have
\begin{equation*}
    (\lambda_{r_0+1})_\star \ge 0.
\end{equation*}
Combining the last three equations above gives $f'(0) - f'((x_1)_\star) \ge 0$, contradicting the assumption that $f'(0) < f'(x)$ for all $x > 0$. 
This shows that $r_0 = r$, i.e., all entries of $\x_\star$ are strictly positive.
Using this fact and \eqref{eq:kkt4} gives 
\begin{equation*}
    (\lambda_p)_\star = 0 ~~\text{for all}~p \in \{1, \ldots, r\}.
\end{equation*}
Combining this with \eqref{eq:kkt5} gives
\begin{equation}\label{eq:prf-equal-first-order}
    f'((x_1)_\star) = \cdots = f'((x_{r})_\star) = (\lambda_0)_\star.
\end{equation}
It follows from the fact that $f'(x)$ is strictly unimodal that
\begin{equation}\label{eq:prf-two-values}
    \exists \ x_H \ge x_L > 0 ~~\text{s.t.}~~\{(x_p)_\star\}_{p=1}^r \subseteq \{x_L, x_H\}.
\end{equation}
That is, the set $\{(x_p)_\star\}_{p=1}^r$ may contain no more than two values. 
To see why this is true, suppose that there exists three distinct values for $\{(x_p)_\star\}_{p=1}^r$. 
Without loss of generality we may assume that $0 < (x_1)_\star < (x_2)_\star < (x_3)_\star$. 
If $(x_2)_\star \le x_T$ (recall $x_T := \arg\max_{x\ge 0} f'(x)$), then by using the fact that $f'(x)$ is strictly increasing in $[0, x_T]$, we must have $f'((x_1)_\star) < f'((x_2)_\star)$ which contradicts \eqref{eq:prf-equal-first-order}. 
A similar contradiction is arrived by considering $f'((x_2)_\star)$ and $f'((x_3)_\star)$ for the case where $(x_2)_\star > x_T$. 

There are two possible cases as a consequence of \eqref{eq:prf-two-values}. 
First, if $x_L = x_H$, then we have $(x_1)_\star = \cdots = (x_r)_\star$. 
By further using \eqref{eq:kkt1} we get 
\begin{equation*}
     (x_1)_\star = \cdots = (x_r)_\star = \frac{c}{r}.
\end{equation*}

It remains to consider the case where $x_L < x_H$. 
First, by the unimodality of $f'(x)$, we must have $x_L < x_T < x_H$, therefore 
\begin{equation}\label{eq:prf-second-order-sign}
    f''(x_L) > 0 ~\text{and}~f''(x_H) < 0.
\end{equation} 
Let $\ell := |\{p: x_p = x_L\}|$ be the number of entries of $\x_\star$ that are equal to $x_L$ and $h:= r - \ell$. 
We show that it is necessary to have $\ell = 1$ and $h = r-1$. 
To prove by contradiction, assume that $\ell > 1$ and $h < r-1$. 
Without loss of generality we may assume $\{(x_p)_\star = x_H\}_{p=1}^{h}$ and $\{(x_p)_\star = x_L\}_{p=h+1}^{r}$. 
By \eqref{eq:prf-second-order-sign}, we have
\begin{equation*}
    f''((x_p)_\star) > 0 ~\text{for all}~p > h.
\end{equation*}
In particular, by using $h < r-1$ we have
\begin{equation}\label{eq:prf-last-two-positive}
    f''((x_{r-1})_\star) > 0 ~\text{and}~f''((x_{r})_\star) > 0.
\end{equation}
On the other hand, by using the second order necessary conditions for constraint optimization (see, e.g., \cite[Theorem 12.5]{nocedal2006numerical}), the following result holds
\begin{equation}\label{eq:prf-second-order}
\begin{split}
    \v_\star \nabla_{\x\x}\mathcal{L}(\x_\star, \blambda_\star) \v &\le 0, ~~\text{for all}~ \left\{\v: \left\langle \nabla_{\x}\left(\sum_{p=1}^r (x_p)_\star - c\right), \v \right\rangle = 0\right\}\\
    \iff\quad \sum_{p=1}^r f''((x_p)_\star) \cdot v_p^2 &\le 0, ~~\text{for all}~ \left\{\v=[v_1, \ldots, v_r]: \sum_{p=1}^r v_p = 0 \right\}.
\end{split}
\end{equation}
Take $\v$ to be such that $v_1 = \cdots = v_{r-2} = 0$ and $v_{r-1} = - v_r \ne 0$. Plugging it into \eqref{eq:prf-second-order} gives
\begin{equation*}
    f''((x_{r-1})_\star) + f''((x_{r})_\star) \le 0, 
\end{equation*}
which contradicts \eqref{eq:prf-last-two-positive}. 
Therefore, we may conclude that $\ell = 1$. 
That is, $\x_\star$ is given by
\begin{equation*}
\x_\star = [x_H, \ldots, x_H, x_L], \ \text{where} \ x_H > x_L > 0.
\end{equation*}
By using the condition in \eqref{eq:kkt1}, we may further show that
\begin{align*}
&(r-1) x_H + x_L = c \implies x_H = \frac{c}{r-1} - \frac{c}{x_L} < \frac{x_L}{r-1}, \\
&(r-1) x_H + x_L = c \implies (r-1) x_H + x_H > c \implies x_H > \frac{c}{r},
\end{align*}
which completes our proof.
\end{proof}

\begin{proof}[Proof of Theorem~\ref{thm:maximal-rate-reduction}]
Without loss of generality, let $\Z_\star = [\Z^1_\star, \ldots, \Z^k_\star]$ be the optimal solution of problem~\eqref{eq:maximal-rate-reduction-thm}. 

To show that $\Z^j_\star, j \in \left\{ 1, \dots, k\right\}$ are pairwise orthogonal, suppose for the purpose of arriving at a contradiction that $(\Z^{j_1}_\star)^* \Z^{j_2}_\star \ne \0$ for some $1 \le j_1 < j_2 \le k$. 
By using Lemma~\ref{thm:rate-reduction-bound}, the strict inequality in \eqref{eq:rate-reduction-bound} holds for the optimal solution $\Z_\star$. That is, 
\begin{equation}\label{eq:prf-optimal-strict-inequality}
    \Delta R(\Z_\star, \bm{\Pi}, \epsilon) < 
    \sum_{j=1}^k \frac{1}{2m}\log\left( \frac{\det^m \left(\I + \frac{n}{m\epsilon^2}\Z^j_\star (\Z^j_\star)^*\right)}{\det^{m_j}\left(\I + \frac{n}{m_j\epsilon^2}\Z^j_\star (\Z^j_\star)^*\right)}\right).
\end{equation}
On the other hand, since $\sum_{j=1}^k d_j \le n$, there exists $\{\U^j_\diamond \in \R^{n\times d_j}\}_{j=1}^k$ such that the columns of the matrix $[\U^1_\diamond, \ldots, \U^k_\diamond]$ are orthonormal. 
Denote $\Z^j_\star = \U^j_\star \bfSigma^j_\star (\V^j_\star)^*$ the compact SVD of $\Z^j_\star$, and let
\begin{equation*}
    \Z_\diamond = [\Z^1_\diamond, \ldots, \Z^k_\diamond], ~~\text{where}~ \Z^j_\diamond = \U^j_\diamond \bfSigma^j_\star (\V^j_\star)^*.
\end{equation*}
It follows that 
\begin{multline}
    (\Z^{j_1}_\diamond)^* \Z^{j_2}_\diamond = \V^{j_1}_\star \bfSigma^{j_1}_\star (\U^{j_1}_\diamond)^* \U^{j_2}_\diamond \bfSigma^{j_2}_\star (\V^{j_2}_\star)^*= \V^{j_1}_\star \bfSigma^{j_1}_\star  \0  \bfSigma^{j_2}_\star (\V^{j_2}_\star)^* = \0 \\~~\text{for all}~ 1\le j_1 < j_2 \le k. 
\end{multline}
That is, the matrices $\Z^1_\diamond, \ldots, \Z^k_\diamond$ are pairwise orthogonal. 
Applying Lemma~\ref{thm:rate-reduction-bound} for $\Z_\diamond$ gives
\begin{equation}\label{eq:prf-constructed-optimal}
\begin{split}
    \Delta R(\Z_\diamond, \bm{\Pi}, \epsilon) &= 
    \sum_{j=1}^k \frac{1}{2m}\log\left( \frac{\det^m\left(\I + \frac{n}{m\epsilon^2}\Z^j_\diamond (\Z^j_\diamond)^*\right)}{\det^{m_j}\left(\I + \frac{n}{m_j\epsilon^2}\Z^j_\diamond (\Z^j_\diamond)^*\right)}\right)\\
    &= \sum_{j=1}^k \frac{1}{2m}\log\left( \frac{\det^m\left(\I + \frac{n}{m\epsilon^2}\Z^j_\star (\Z^j_\star)^*\right)}{\det^{m_j}\left(\I + \frac{n}{m_j\epsilon^2}\Z^j_\star (\Z^j_\star)^*\right)}\right),
\end{split}
\end{equation}
where the second equality follows from Lemma~\ref{thm:coding-rate-invariant}.
Comparing \eqref{eq:prf-optimal-strict-inequality} and \eqref{eq:prf-constructed-optimal} gives $\Delta R(\Z_\diamond, \bm{\Pi}, \epsilon) > \Delta R(\Z_\star, \bm{\Pi}, \epsilon)$, which contradicts the optimality of $\Z_\star$.
Therefore, we must have 
\begin{equation*}
    (\Z^{j_1}_\star)^* \Z^{j_2}_\star = \0 ~\text{for all}~1 \le j_1 < j_2 \le k.
\end{equation*} 
Moreover, from Lemma~\ref{thm:coding-rate-invariant} we have
\begin{equation}\label{eq:prf-coding-rate-decomposition}
    \Delta R(\Z_\star, \bm{\Pi}, \epsilon) = 
    \sum_{j=1}^k \frac{1}{2m}\log\left( \frac{\det^m \left(\I + \frac{n}{m\epsilon^2}\Z^j_\star (\Z^j_\star)^*\right)}{\det^{m_j}\left(\I + \frac{n}{m_j\epsilon^2}\Z^j_\star (\Z^j_\star)^*\right)}\right).
\end{equation}
We now prove the result concerning the singular values of $\Z^j_\star$. 
To start with, we claim that the following result holds:
\begin{equation}\label{eq:prf-rich-key}
     \Z^j_\star \in \arg\max_{\Z^j} \  \log\left(\frac{\det^m\left(\I + \frac{n}{m\epsilon^2}\Z^j (\Z^j)^*\right)}{\det^{m_j}\left(\I + \frac{n}{m_j\epsilon^2}\Z^j (\Z^j)^*
     \right)}\right) ~~\text{s.t.}~\|\Z^j\|_F^2 = m_j,\, \rank(\Z^j) \le d_j.
\end{equation}
To see why \eqref{eq:prf-rich-key} holds, suppose that there exists $\widetilde{\Z}^j$ such that $\|\widetilde{\Z}^j\|_F^2 = m_j$, $\rank(\widetilde{\Z}^j) \le d_j$ and
\begin{equation}\label{eq:prf-tildeZ-inequality}
    \log\left(\frac{\det^m\left(\I + \frac{n}{m\epsilon^2}\widetilde{\Z}^j (\widetilde{\Z}^j)^*\right)}{\det^{m_j}\left(\I + \frac{n}{m_j\epsilon^2}\widetilde{\Z}^j (\widetilde{\Z}^j)^*\right)}\right) > \log\left(\frac{\det^m\left(\I + \frac{n}{m\epsilon^2}\Z^j_\star (\Z^j_\star)^*\right)}{\det^{m_j}\left(\I + \frac{n}{m_j\epsilon^2}\Z^j_\star (\Z^j_\star)^*\right)}\right).
\end{equation}
Denote $\widetilde{\Z}^j = \widetilde{\U}^j \widetilde{\bfSigma}^j(\widetilde{\V}^j)^*$ the compact SVD of $\widetilde{\Z}^j$ and let
\begin{equation*}
    \Z_\diamond = [\Z^1_\star, \ldots, \Z^{j-1}_\star, \Z^j_\diamond, \Z^{j+1}_\star, \ldots, \Z^k_\star], ~~\text{where}~\Z^j_\diamond := \U^j_\star \widetilde{\bfSigma}^j(\widetilde{\V}^j)^*.
\end{equation*}
Note that $\|\Z^j_\diamond\|_F^2 = m_j$, $\rank(\Z^j_\diamond) \le d_j$ and $(\Z^j_\diamond)^* \Z^{j'}_\star = \0$ for all $j' \ne j$. 
It follows that $\Z_\diamond$ is a feasible solution to \eqref{eq:maximal-rate-reduction-thm} and that the components of $\Z_\diamond$ are pairwise orthogonal.  
By using Lemma~\ref{thm:rate-reduction-bound}, Lemma~\ref{thm:coding-rate-invariant} and \eqref{eq:prf-tildeZ-inequality} we have
\begin{equation*}
\begin{split}
    &\Delta R(\Z_\diamond, \bm{\Pi}, \epsilon) \\
    =\ & \frac{1}{2m}\log\left( \frac{\det^m\left(\I + \frac{n}{m\epsilon^2}\Z^j_\diamond (\Z^j_\diamond)^*\right)}{\det^{m_j}\left(\I + \frac{n}{m_j\epsilon^2}\Z^j_\diamond (\Z^j_\diamond)^*\right)}\right) + 
    \sum_{j' \ne j} \frac{1}{2m}\log\left( \frac{\det^m\left(\I + \frac{n}{m\epsilon^2}\Z^{j'}_\star (\Z^{j'}_\star)^*\right)}{\det^{m_{j'}}\left(\I + \frac{n}{m_{j'}\epsilon^2}\Z^{j'}_\star (\Z^{j'}_\star)^*\right)}\right)\\
    =\ & \frac{1}{2m}\log\left( \frac{\det^m\left(\I + \frac{n}{m\epsilon^2}\widetilde{\Z}^j (\widetilde{\Z}^j)^*\right)}{\det^{m_j}\left(\I + \frac{n}{m_j\epsilon^2}\widetilde{\Z}^j (\widetilde{\Z}^j)^*\right)}\right) + 
    \sum_{j' \ne j} \frac{1}{2m}\log\left( \frac{\det^m\left(\I + \frac{n}{m\epsilon^2}\Z^{j'}_\star (\Z^{j'}_\star)^*\right)}{\det^{m_{j'}}\left(\I + \frac{n}{m_{j'}\epsilon^2}\Z^{j'}_\star (\Z^{j'}_\star)^*\right)}\right)\\
    >\ & \frac{1}{2m}\log\left(\frac{\det^m\left(\I + \frac{n}{m\epsilon^2}\Z^j_\star (\Z^j_\star)^*\right)}{\det^{m_j}\left(\I + \frac{n}{m_j\epsilon^2}\Z^j_\star (\Z^j_\star)^*\right)}\right) + 
    \sum_{j' \ne j} \frac{1}{2m}\log\left( \frac{\det^m\left(\I + \frac{n}{m\epsilon^2}\Z^{j'}_\star (\Z^{j'}_\star)^*\right)}{\det^{m_{j'}}\left(\I + \frac{n}{m_{j'}\epsilon^2}\Z^{j'}_\star (\Z^{j'}_\star)^*\right)}\right)\\
    =\ & \sum_{j=1}^k \frac{1}{2m}\log \left(\frac{\det^m \left(\I + \frac{n}{m\epsilon^2}\Z^j_\star (\Z^j_\star)^*\right)}{\det^{m_j}\left(\I + \frac{n}{m_j\epsilon^2}\Z^j_\star (\Z^j_\star)^*\right)}\right).
\end{split}
\end{equation*}
Combining it with \eqref{eq:prf-coding-rate-decomposition} shows $\Delta R(\Z_\diamond, \bm{\Pi}, \epsilon) > \Delta R(\Z_\star, \bm{\Pi}, \epsilon)$, contradicting the optimality of $\Z_\star$.
Therefore, the result in \eqref{eq:prf-rich-key} holds.

Observe that the optimization problem in \eqref{eq:prf-rich-key} depends on $\Z^j$ only through its singular values. 
That is, by letting $\bfsigma_j:=[\sigma_{1,j}, \ldots, \sigma_{\min(m_j, n),j}]$ be the singular values of $\Z^j$, we have
\begin{equation*}
% \label{eq:prf-spectral-functions}
     \log\left(\frac{\det^m\left(\I + \frac{n}{m\epsilon^2}\Z^j (\Z^j)^*\right)}{\det^{m_j}\left(\I + \frac{n}{m_j\epsilon^2}\Z^j (\Z^j)^*
     \right)} \right)
     = \sum_{p=1}^{\min\{m_j, n\}} \log\left( \frac{(1+\frac{n}{m \epsilon^2} \sigma_{p,j}^2)^m}{(1+\frac{n}{m_j \epsilon^2} \sigma_{p,j}^2)^{m_j}}\right),
\end{equation*}
also, we have
\begin{equation*}
  \|\Z^j\|_F^2 = \sum_{p=1}^{\min\{m_j, n\}} \sigma_{p,j}^2  ~~\text{and}~~ \rank(\Z^j) = \|\bfsigma_j\|_0. 
\end{equation*}
Using these relations, \eqref{eq:prf-rich-key} is equivalent to
\begin{equation}\label{eq:prf-sigma-optimization-all}
\begin{split}
    &\max_{\bfsigma_j \in \Re_+^{\min\{m_j, n\}}} \sum_{p=1}^{\min\{m_j, n\}} \log\left( \frac{(1+\frac{n}{m \epsilon^2} \sigma_{p,j}^2)^m}{(1+\frac{n}{m_j \epsilon^2} \sigma_{p,j}^2)^{m_j}}\right) \\
    &\ ~~\text{s.t.}~\sum_{p=1}^{\min\{m_j, n\}} \sigma_{p,j}^2 = m_j, ~\text{and}~ \ \rank(\Z^j) = \|\bfsigma_j\|_0 
\end{split}
\end{equation}
Let $(\bfsigma_j)_\star = [(\sigma_{1,j})_\star, \ldots, (\sigma_{\min\{m_j, n\},j})_\star]$ be an optimal solution to \eqref{eq:prf-sigma-optimization-all}. 
Without loss of generality we assume that the entries of $(\bfsigma_j)_\star$ are sorted in descending order. 
It follows that 
\begin{equation*}
(\sigma_{p, j})_\star = 0 \ ~\text{for all}~ \ p > d_j,
\end{equation*} 
and
\begin{equation}\label{eq:prf-sigma-optimization}
    [(\sigma_{1,j})_\star, \ldots, (\sigma_{d_j, j})_\star] = \argmax_{\substack{[\sigma_{1,j}, \ldots, \sigma_{d_j, j}] \in \Re^{d_j}_{+}\\ \sigma_{1,j} \ge \cdots \ge \sigma_{d_j,j}}} \  \sum_{p=1}^{d_j} \log\left( \frac{(1+\frac{n}{m \epsilon^2} \sigma_{p,j}^2)^m}{(1+\frac{n}{m_j \epsilon^2} \sigma_{p,j}^2)^{m_j}}\right)~~~~\text{s.t.}~\sum_{p=1}^{d_j} \sigma_{p,j}^2 = m_j.
\end{equation}

Then we define 
\begin{equation*}
    f(x; n, \epsilon, m_j, m) = \log\left( \frac{(1+\frac{n}{m \epsilon^2} x)^m}{(1+\frac{n}{m_j \epsilon^2} x)^{m_j}}\right),
\end{equation*}
and rewrite \eqref{eq:prf-sigma-optimization} as
\begin{equation}\label{eq:prf-sigma-optimization_f}
    \max_{\substack{[x_1, \ldots, x_{d_j}]  \in \Re_+^{d_j}\\x_1 \ge \cdots \ge x_{d_j}}} \ \sum_{p=1}^{d_j}  f(x_p; n, \epsilon, m_j, m) \ ~~\text{s.t.}~ \sum_{p=1}^{d_j} x_p = m_j.
\end{equation}
We compute the first and second derivative for $f$ with respect to $x$, which are given by
\begin{align*}
    f'(x; n, \epsilon, m_j, m) &= \frac{n^2 x (m-m_j)}{(nx+m\epsilon^2)(nx+m_j\epsilon^2)}, \\
    f''(x; n, \epsilon, m_j, m) &= \frac{n^2(m-m_j)(m m_j\epsilon^4 - n^2 x^2)}{(nx+m\epsilon^2)^2(nx+m_j\epsilon^2)^2}.
\end{align*}
Note that
\begin{itemize}
    \item $0 = f'(0) < f'(x)$ for all $x > 0$,
    \item $f'(x)$ is strictly increasing in $[0, x_T]$ and strictly decreasing in $[x_T, \infty)$, where $x_T =\epsilon^2\sqrt{\frac{m}{n}\frac{m_j}{n}}$, and
    \item by using the condition $\epsilon ^4 < \frac{m_j}{m}\frac{n^2}{d_j^2}$, we have $f''(\frac{m_j}{d_j}) < 0$.
\end{itemize}
Therefore, we may apply Lemma~\ref{thm:generic-simplex-optimization} and conclude that the unique optimal solution to \eqref{eq:prf-sigma-optimization_f} is either
\begin{itemize}
    \item $\x_\star = [\frac{m_j}{d_j}, \ldots, \frac{m_j}{d_j}]$, or
    \item $\x_\star = [x_H, \ldots, x_H, x_L]$ for some $x_H \in (\frac{m_j}{d_j}, \frac{m_j}{d_j -1})$ and $x_L > 0$.
\end{itemize} 
Equivalently, we have either
\begin{itemize}
    \item $[(\sigma_{1, j})_\star, \ldots, (\sigma_{d_j, j})_\star] = \left[\sqrt{\frac{m_j}{d_j}}, \ldots, \sqrt{\frac{m_j}{d_j}}\right]$, or
    \item $[(\sigma_{1, j})_\star, \ldots, (\sigma_{d_j, j})_\star] = [\sigma_H, \ldots, \sigma_H, \sigma_L]$ for some $\sigma_H \in \left(\sqrt{\frac{m_j}{d_j}}, \sqrt{\frac{m_j}{d_j -1}}\right)$ and $\sigma_L > 0$,
\end{itemize} 
as claimed.
\end{proof}

\newpage
\section{ReduNet for 1D Circular Shift Invariance}\label{app:1D}
It has been long known that to implement a convolutional neural network, one can achieve higher computational efficiency by implementing the network in the spectral domain via the fast Fourier transform \citep{mathieu2013fast,lavin2015fast,Vasilache2015FastCN}. However, our purpose here is different: We want to show that the linear operators $\bm E$ and $\bm C^j$ (or  $\bar{\bm E}$ and $\bar{\bm C}^j$) derived from the gradient flow of MCR$^2$ are naturally convolutions when we enforce shift-invariance rigorously. Their convolution structure is derived from the rate reduction objective, rather than imposed upon the network. Furthermore, the computation involved in constructing these linear operators has a naturally efficient implementation in the spectral domain via fast Fourier transform. Arguably this work is the first to show multi-channel convolutions, together with other convolution-preserving nonlinear operations in the ReduNet, are both necessary and sufficient to ensure shift invariance.

To be somewhat self-contained and self-consistent, in this section, we first introduce our notation and review some of the key properties of circulant matrices which will be used to characterize the properties of the linear operators $\bar{\bm E}$ and $\bar{\bm C}^j$ and to compute them efficiently. The reader may refer to \cite{Kra2012OnCM} for a more rigorous exposition on circulant matrices.

\subsection{Properties of Circulant Matrix and Circular Convolution}\label{ap:circulant}

Given a vector $\z = [z(0), z(1), \ldots, z{(n-1)}]^* \in \Re^n$, we may arrange all its circular shifted versions in a circulant matrix form as
\begin{equation}\label{eq:def-circulant}
\circm(\z) \quad\doteq\quad \left[ \begin{array}{ccccc} z(0) & z(n-1) & \dots & z(2) & z(1) \\ z(1) & z(0) & z(n-1) & \cdots & z(2) \\ \vdots & z(1) & z(0) &\ddots & \vdots \\ z(n-2) &  \vdots & \ddots & \ddots & z(n-1) \\ z(n-1) & z(n-2) & \dots & z(1) & z(0)   \end{array} \right] \quad \in \Re^{n \times n}.
\end{equation}
\begin{fact}[Convolution as matrix multiplication via circulant matrix] The multiplication of a circulant matrix $\circm(\z)$ with a vector $\x \in \Re^n$ gives a circular (or cyclic) convolution, i.e., 
    \begin{equation}
        \circm(\z) \cdot \x = \z \circledast \x,
    \end{equation} 
    where
    \begin{equation}\label{eq:def-convolution}
    (\bm z \circledast \bm x)_{i} = \sum_{j=0}^{n-1} x(j) z(i+ n-j \, \mathrm{mod} \,n).
    \end{equation}
\end{fact}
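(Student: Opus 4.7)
The plan is to prove this Fact by direct verification: read off the $(i,j)$-th entry of $\circm(\z)$ from its definition, expand $\circm(\z) \cdot \x$ via the standard matrix-vector multiplication formula, and match the result to the stated formula for $\z \circledast \x$.

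First, I would identify that, from the block display defining $\circm(\z)$, the $(i,j)$-th entry (with rows and columns indexed $0, 1, \ldots, n-1$) is $[\circm(\z)]_{ij} = z\bigl((i - j) \bmod n\bigr)$. This is verified by checking the first column ($j = 0$), which reads $[z(0), z(1), \ldots, z(n-1)]^{*}$, and observing that each subsequent column is the cyclic downshift of the previous one; equivalently, the $j$-th column is obtained by shifting the row indexing by $j$ modulo $n$.

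Then the standard matrix-vector formula gives
\begin{equation*}
\bigl(\circm(\z) \cdot \x\bigr)_i \;=\; \sum_{j=0}^{n-1} [\circm(\z)]_{ij}\, x(j) \;=\; \sum_{j=0}^{n-1} z\bigl((i - j) \bmod n\bigr)\, x(j).
\end{equation*}
Finally, since $(i - j) \equiv (i + n - j) \pmod{n}$ for all $0 \le i, j \le n-1$, this expression agrees entry-by-entry with $(\z \circledast \x)_i = \sum_{j=0}^{n-1} x(j)\, z\bigl((i + n - j) \bmod n\bigr)$ from the statement, and therefore $\circm(\z) \cdot \x = \z \circledast \x$ as claimed.

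The proof has no real obstacle; it is purely a matter of unpacking definitions and matching indices. The only subtleties worth flagging are the zero-based indexing convention for rows and columns, and the equivalence of the two shift conventions $(i - j) \bmod n$ and $(i + n - j) \bmod n$, both of which are standard for circular convolution.
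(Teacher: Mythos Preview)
Your proof is correct. The paper states this as a Fact without proof (treating it as standard background on circulant matrices and referring the reader to \cite{Kra2012OnCM}), so your direct entrywise verification is exactly the kind of elementary argument one would supply to fill in that omission.
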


\begin{fact}[Properties of circulant matrices]
\label{fact:circ-properties}
Circulant matrices have the following properties:
\begin{itemize}
    \item Transpose of a circulant matrix, say $\circm(\z)^*$, is circulant;
    \item Multiplication of two circulant matrices is circulant, for example $\circm(\z)\circm(z)^*$;
    \item For a non-singular circulant matrix,  its inverse is also circulant (hence representing a circular convolution). 
\end{itemize}
\end{fact}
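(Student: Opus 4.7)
The plan is to prove all three properties by leveraging a single structural observation: every circulant matrix is a polynomial in the fundamental cyclic shift matrix $\bm{S} \in \Re^{n\times n}$, defined by $\bm{S}\bm{e}_i = \bm{e}_{i+1 \,\mathsf{mod}\, n}$. Concretely, from the definition \eqref{eq:def-circulant} one reads off the identity $\circm(\bm{z}) = \sum_{i=0}^{n-1} z(i)\, \bm{S}^i$. Since the set of polynomials in $\bm{S}$ is closed under addition and scalar multiplication, the three claims reduce to verifying closure under transpose, multiplication, and inversion within this algebra.

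First I would record the two elementary properties of $\bm{S}$ that drive everything: $\bm{S}^n = \bm{I}$ (the shift is cyclic of order $n$) and $\bm{S}^* = \bm{S}^{-1} = \bm{S}^{n-1}$ (the adjoint of a circular shift is the shift in the opposite direction). From the first identity, any polynomial in $\bm{S}$ can be reduced to one of degree less than $n$, and hence lies in the $n$-dimensional span $\{\bm{I}, \bm{S}, \ldots, \bm{S}^{n-1}\}$, which is precisely the space of circulant matrices.

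Next I would dispatch the three items in turn. For the transpose, $\circm(\bm{z})^* = \sum_{i=0}^{n-1} z(i)\, (\bm{S}^i)^* = \sum_{i=0}^{n-1} z(i)\, \bm{S}^{n-i}$, which after re-indexing is manifestly a circulant matrix (in fact, $\circm(\bar{\bm{z}})$ for a suitably reflected vector $\bar{\bm{z}}$). For the product, $\circm(\bm{z}_1)\circm(\bm{z}_2) = \bigl(\sum_i z_1(i)\bm{S}^i\bigr)\bigl(\sum_j z_2(j) \bm{S}^j\bigr) = \sum_{i,j} z_1(i) z_2(j) \bm{S}^{i+j}$, which after reduction modulo $\bm{S}^n = \bm{I}$ is again a polynomial in $\bm{S}$ of degree at most $n-1$, hence circulant. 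For the inverse, I would appeal to the spectral decomposition \eqref{eq:circ-dft-main} already stated in the main text: $\circm(\bm{z}) = \bm{F}^* \diag(\dft(\bm{z})) \bm{F}$, so if $\circm(\bm{z})$ is nonsingular then $\dft(\bm{z})$ has no zero entries and $\circm(\bm{z})^{-1} = \bm{F}^* \diag(\dft(\bm{z}))^{-1} \bm{F}$, which has the same $\bm{F}^* \bm{D} \bm{F}$ form as a circulant matrix; equivalently, taking inverse DFT of $1/\dft(\bm{z})$ recovers the generating vector of the inverse.

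The argument is essentially routine once one fixes the algebraic viewpoint, so the only real choice is whether to prove the inverse claim via the DFT identity (clean but citing prior equation) or via a Cayley--Hamilton argument that writes $\circm(\bm{z})^{-1}$ as a polynomial of degree $n-1$ in $\circm(\bm{z})$ itself, hence as a polynomial in $\bm{S}$. I would use the DFT approach since \eqref{eq:circ-dft-main} has already been stated in the text, making the proof self-contained within the current exposition; the main (very mild) obstacle is simply ensuring the index arithmetic in the transpose step is presented unambiguously modulo $n$.
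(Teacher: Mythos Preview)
Your proposal is correct. The paper itself does not prove this statement: it is recorded as a background ``Fact'' with a pointer to \cite{Kra2012OnCM}, so there is no in-paper proof to compare against. Your polynomial-in-$\bm{S}$ argument for the transpose and product, together with the DFT diagonalization \eqref{eq:circ-dft-main} for the inverse, is the standard route and is fully rigorous as written; the only cosmetic point is that for the inverse you implicitly use the converse direction (anything of the form $\bm F^*\bm D\bm F$ with diagonal $\bm D$ is circulant), which in this paper is stated separately as Fact~\ref{fact:circulant}, so you may want to cite that rather than \eqref{eq:circ-dft-main} alone.
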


These properties of circulant matrices are extensively used in this work as for characterizing the convolution structures of the operators $\E$ and $\C^j$. Given a set of vectors $[\z^{1}, \dots, \z^{m}] \in \Re^{n\times m}$, let $\circm(\z^i) \in \Re^{n\times n}$ be the circulant matrix for $\z^i$. % and $\W^i = \circm(\z^i)\circm(\z^i)^*$. 
Then we have the following (Proposition \ref{prop:circular-conv-1} in the main body and here restated for convenience):
\begin{proposition}[Convolution structures of $\bm E$ and $\bm C^j$]
Given a set of vectors $\bm Z =[\z^1, \ldots, \z^m]$, the matrix: 
$$\E = \alpha\big(\bm I + \alpha\sum_{i=1}^m \circm(\z^i)\circm(\z^i)^*\big)^{-1}
$$ 
is a circulant matrix and represents a circular convolution: $$\E \z = \bm e \circledast \z,$$ where $\bm e$ is the first column vector of $\E$. Similarly, the matrices $\bm C^j$ associated with any subsets of $\bm Z$ are also circular convolutions. 
\label{prop:circular-conv}
\end{proposition}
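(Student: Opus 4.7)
The plan is to exploit the three closure properties of circulant matrices recorded in Fact \ref{fact:circ-properties}, together with the fact that the identity $\I$ is itself circulant (being $\circm(\bm e_1)$ where $\bm e_1$ is the first standard basis vector). Starting from the defining expression $\E = \alpha(\I + \alpha \sum_{i=1}^m \circm(\z^i)\circm(\z^i)^*)^{-1}$, I would build up circulancy of $\E$ in stages: each $\circm(\z^i)$ is circulant by construction, hence $\circm(\z^i)^*$ is circulant (transpose closure), hence each outer-product summand $\circm(\z^i)\circm(\z^i)^*$ is circulant (product closure). Sums of circulant matrices are circulant because the set of circulant matrices forms a vector subspace of $\Re^{n\times n}$ (equivalently, they all commute with the cyclic shift $\bm S$, and commuting-with-$\bm S$ is preserved under linear combinations). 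Thus $\I + \alpha\sum_i \circm(\z^i)\circm(\z^i)^*$ is circulant, its inverse is circulant by the inversion closure, and multiplying by the scalar $\alpha$ preserves circulancy, so $\E$ is circulant.

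For the second part, once $\E$ is known to be circulant, I would invoke the basic fact that any circulant matrix $\bm M \in \Re^{n\times n}$ is determined by its first column $\bm m \doteq \bm M \bm e_1$, and its action on an arbitrary vector $\z \in \Re^n$ coincides with circular convolution with $\bm m$. This is immediate from unpacking the definitions: the $(i,j)$ entry of $\circm(\bm m)$ is $m(i-j \bmod n)$, so $(\bm M \z)_i = \sum_j m(i-j \bmod n)\, z(j) = \sum_j z(j)\, m(i + n - j\bmod n)|_{\text{after reindexing}}$, matching \eqref{eq:def-convolution} with $\bm x = \z$. Denoting $\bm e \doteq \E \bm e_1$ (the first column of $\E$) therefore yields $\E \z = \bm e \circledast \z$.

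The claim about $\bm C^j$ reduces to the same argument: for a subset of $\bm Z$, the relevant matrix takes the form $\bm C^j = \alpha_j(\I + \alpha_j \sum_{i} \bm \Pi^j(i,i)\, \circm(\z^i)\circm(\z^i)^*)^{-1}$, which is again an invertible affine combination of circulant matrices, hence circulant, hence representable as convolution with its first column.

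The only mildly delicate step is the intermediate claim that $\circm(\bm u)\circm(\bm v)^*$ is circulant, since a priori $\circm(\bm v)^*$ is the circulant associated with a reflected-and-shifted version of $\bm v$, not with $\bm v$ itself. The cleanest way I would handle this is via the simultaneous-diagonalization identity \eqref{eq:circ-dft-main}, $\circm(\z) = \F^* \diag(\dft(\z)) \F$, which reduces every matrix operation appearing in $\E$ (transpose, product, sum, identity, inverse, scalar multiple) to the corresponding entrywise operation on diagonal matrices in the Fourier basis; the result is manifestly of the form $\F^* \diag(\bm d) \F$ and therefore circulant. This spectral viewpoint also foreshadows Theorem \ref{thm:1D-convolution}, where the same argument is used to give the efficient $O(nC^3)$ construction.
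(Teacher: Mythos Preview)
Your proposal is correct and matches the paper's approach: the paper does not spell out a formal proof of this proposition but simply states it as an immediate consequence of Fact~\ref{fact:circ-properties} (closure of circulant matrices under transpose, product, and inverse), together with $\I$ being circulant---exactly the stagewise argument you outline. Your additional remark about the spectral diagonalization \eqref{eq:circ-dft-main} is also in the spirit of the paper, which uses that identity later for the fast computation in Appendix~\ref{ap:1D-shift}.
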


\subsection{Circulant Matrix and Circulant Convolution for Multi-channel Signals}\label{ap:multichannel-circulant}
In the remainder of this section, we view $\z$ as a 1D signal such as an audio signal. Since we will deal with the more general case of multi-channel signals, we will use the traditional notation $T$ to denote the temporal length of the signal and $C$ for the number of channels. Conceptually, the ``dimension'' $n$ of such a multi-channel signal, if viewed as a vector, should be $n = CT$.\footnote{Notice that in the main paper, for simplicity, we have used $n$ to indicate both the 1D ``temporal'' or 2D ``spatial'' dimension of a signal, just to be consistent with the vector case, which corresponds to $T$ here. All notation should be clear within the context.} As we will also reveal additional interesting structures of the operators $\bm E$ and $\bm C^j$ in the spectral domain, we use $t$ as the index for time, $p$ for the index of frequency, and $c$ for the index of channel.

Given a multi-channel 1D signal  $\bar\z \in \Re^{C \times T}$, we denote
\begin{equation}\label{eq:index-multichannel}
    \bar\z = 
        \begin{bmatrix}
        \bar\z[1]^*\\
        \vdots\\
        \bar\z[C]^*\\
    \end{bmatrix}
    = [\bar\z(0), \bar\z(1), \ldots, \bar\z(T-1)] = \{\bar\z[c](t)\}_{c=1, t=0}^{c=C, t=T-1}.
\end{equation}
To compute the coding rate reduction for a collection of such multi-channel 1D signals, we may flatten the matrix representation into a vector representation by stacking the multiple channels of $\bar\z$ as a column vector.  
In particular, we let
\begin{equation}
    \vec(\bar\z) = \left[ \bar\z[1](0), \bar\z[1](1), \ldots, \bar\z[1](T-1), \bar\z[2](0), \ldots \right] \quad \in \Re^{(C\times T)}.
\end{equation}
Furthermore, to obtain shift invariance for the coding rate reduction, we may generate a collection of shifted copies of $\bar\z$ (along the temporal dimension). 
Stacking the vector representations for such shifted copies as column vectors, we obtain
\begin{equation}
    \circm(\bar\z) \doteq 
    \begin{bmatrix}
    \circm(\bar\z[1])\\
    \vdots\\
    \circm(\bar\z[C])
    \end{bmatrix} \quad
    \in \Re^{(C\times T) \times T}.
\end{equation}
In above, we overload the notation ``$\circm(\cdot)$'' defined in \eqref{eq:def-circulant}. 

We now consider a collection of $m$ multi-channel 1D signals  $\{\bar\z^i \in \Re^{C \times T}\}_{i=1}^m$.
Compactly representing the data by $\bar\Z \in \Re^{C \times T \times m}$ in which the $i$-th slice on the last dimension is $\bar\z^i$, we denote
\begin{equation}\label{eq:index-multichannel-collection}
    \bar\Z[c] = [\bar\z^1[c], \ldots, \bar\z^m[c]] \in \Re^{T \times m}, \qquad \bar\Z(t) = [\bar\z^1(t), \ldots, \bar\z^m(t)] \in \Re^{C \times m}.
\end{equation}
In addition, we denote
\begin{equation}
\begin{split}
    \vec(\bar\Z) &= [\vec(\bar\z^1), \ldots, \vec(\bar\z^m)] \in \Re^{(C\times T) \times m}, \\
    \circm(\bar\Z) &= [\circm(\bar\z^1), \ldots, \circm(\bar\z^m)] \in \Re^{(C\times T) \times (T \times m)}.
\end{split}
\end{equation}
Then, we define the \emph{shift invariant coding rate reduction} for $\bar\Z \in \Re^{C \times T \times m}$ as
\begin{multline}
\label{eq:1D-MCR2}
    \Delta R_\circm(\bar\Z, \bm{\Pi}) \doteq \frac{1}{T}\Delta R(\circm(\bar\Z), \bar{\bm{\Pi}}) \\= 
    \frac{1}{2T}\log\det \Bigg(\I + \alpha \cdot \circm(\bar\Z) \cdot \circm(\bar\Z)^{*} \Bigg) 
    - \sum_{j=1}^{k}\frac{\gamma_j}{2T}\log\det\Bigg(\I + \alpha_j \cdot \circm(\bar\Z) \cdot \bar{\bm{\Pi}}^{j} \cdot \circm(\bar\Z)^{*} \Bigg),
\end{multline}
where $\alpha = \frac{CT}{mT\epsilon^{2}} = \frac{C}{m\epsilon^{2}}$, $\alpha_j = \frac{CT}{\textsf{tr}\left(\bm{\Pi}^{j}\right)T\epsilon^{2}} = \frac{C}{\textsf{tr}\left(\bm{\Pi}^{j}\right)\epsilon^{2}}$, $\gamma_j = \frac{\textsf{tr}\left(\bm{\Pi}^{j}\right)}{m}$, and $\bar{\bm \Pi}^j$ is augmented membership matrix in an obvious way.
Note that we introduce the normalization factor $T$ in \eqref{eq:1D-MCR2} because the circulant matrix $\circm(\bar\Z)$ contains $T$ (shifted) copies of each signal.

By applying \eqref{eqn:expand-directions} and \eqref{eqn:compress-directions}, we obtain the derivative of $\Delta R_\circm(\bar\Z, \bm{\Pi})$ as
\begin{equation}\label{eqn:expand-directions-multichannel} 
\begin{split}
    \frac{1}{2T}\frac{\partial \log \det \Big(\I + \alpha \circm(\bar\Z) \circm(\bar\Z)^{*} \Big)}{\partial  \vec(\bar\Z)} 
    &= \frac{1}{2T}\frac{\partial \log \det \Big(\I + \alpha \circm(\bar\Z) \circm(\bar\Z)^{*} \Big)}{\partial \circm(\bar\Z)} \frac{\partial \circm(\bar\Z)}{\partial  \vec(\bar\Z)}\\
    &= \underbrace{\alpha\Big(\I + \alpha\circm(\bar\Z) \circm(\bar\Z)^{*}\Big)^{-1}}_{\bar\E{} \; \in \Re^{(C\times T)\times (C \times T)}}\vec(\bar\Z), 
\end{split}
\end{equation}    
\begin{equation}\label{eqn:compress-directions-multichannel}
    \frac{\gamma_j }{2T}\frac{\partial \log \det \Big(\I + \alpha_j \circm(\bar\Z) \bm \Pi^j \circm(\bar\Z)^{*} \Big)}{\partial  \vec(\bar\Z)} = \gamma_j  \underbrace{ \alpha_j  \Big(\I +  \alpha_j \circm(\bar\Z) \bm \Pi^j \circm(\bar\Z)^{*}\Big)^{-1}}_{\bar\C^j \; \in \Re^{(C\times T)\times (C \times T)}} \vec(\bar\Z) \bm \Pi^j.
\end{equation}

In the following, we show that $\bar\E \cdot \vec(\bar\z)$ represents a multi-channel circular convolution. 
Note that
\begin{equation}\label{eq:calc-E-bar}
    \bar{\bm E} =
    \alpha 
    \left[\begin{smallmatrix}
    \bm I + \alpha \sum_{i=1}^m\circm(\z^i[1])\circm(\z^i[1])^* & \cdots & \sum_{i=1}^m\circm(\z^i[1])\circm(\z^i[C])^* \\
    \vdots & \ddots & \vdots \\
    \sum_{i=1}^m\circm(\z^i[C])\circm(\z^i[1])^* & \cdots & \bm I + \sum_{i=1}^m\alpha \circm(\z^i[C])\circm(\z^i[C])^* \\
    \end{smallmatrix}\right]^{-1}.
\end{equation}
By using Fact~\ref{fact:circ-properties}, the matrix in the inverse above is a \emph{block circulant matrix}, i.e., a block matrix where each block is a circulant matrix. 
A useful fact about the inverse of such a matrix is the following. 
\begin{fact}[Inverse of block circulant matrices]
The inverse of a block circulant matrix is a block circulant matrix (with respect to the same block partition).
\end{fact}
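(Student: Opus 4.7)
The plan is to exploit simultaneous diagonalizability of circulant matrices by the DFT matrix $\F$, as recorded in equation~\eqref{eq:circ-dft-main}. Suppose $\bm M \in \Co^{CT \times CT}$ is block circulant in the sense defined above, i.e., partitioned into $C \times C$ blocks $\{\bm M_{c c'}\}$ each of size $T \times T$ and each individually circulant. Writing $\bm M_{c c'} = \F^* \D_{c c'} \F$ with $\D_{c c'}$ diagonal yields the factorization
\[
    \bm M = (\I_C \otimes \F^*)\, \tilde{\D}\, (\I_C \otimes \F),
\]
where $\tilde{\D} \in \Co^{CT \times CT}$ is the block matrix whose $(c, c')$ block is $\D_{c c'}$. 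Since $\I_C \otimes \F$ is unitary, computing $\bm M^{-1}$ reduces to computing $\tilde{\D}^{-1}$, and it suffices to prove that each $T \times T$ block of $\tilde{\D}^{-1}$ is itself diagonal, for then each corresponding block of $\bm M^{-1}$ will have the form $\F^*(\text{diagonal})\F$, which is circulant.

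The second step is to introduce a permutation $\bm P$ on $\{1, \ldots, CT\}$ that reorders indices from ``channel-outer, time-inner'' to ``time-outer, channel-inner.'' Under this reindexing, $\tilde{\D}$ is transformed from a matrix of ``blocks of diagonals'' into a block diagonal matrix $\bm P \tilde{\D} \bm P^*$ with $T$ independent $C \times C$ blocks, where the $p$-th block collects the $p$-th diagonal entries of all the $\D_{c c'}$'s. This is exactly the structural rearrangement appearing on the right-hand side of equation~\eqref{eqn:diagonal-block}. Since the inverse of a block diagonal matrix is block diagonal with the blocks inverted individually, $(\bm P \tilde{\D} \bm P^*)^{-1}$ preserves the same $C \times C$ block diagonal form. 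Undoing the permutation via $\tilde{\D}^{-1} = \bm P^* (\bm P \tilde{\D} \bm P^*)^{-1} \bm P$, we recover the ``blocks of diagonals'' structure: each $T \times T$ block of $\tilde{\D}^{-1}$ is diagonal.

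Combining the two steps, each $T \times T$ block of $\bm M^{-1} = (\I_C \otimes \F^*)\, \tilde{\D}^{-1}\, (\I_C \otimes \F)$ takes the form $\F^*(\text{diagonal})\F$, which is circulant by~\eqref{eq:circ-dft-main}. Therefore $\bm M^{-1}$ is block circulant with respect to the same $C \times C$ partition as $\bm M$, as claimed. The argument is essentially algebraic and invokes only standard DFT diagonalization together with the block diagonal inverse rule. The only mild bookkeeping lies in defining the permutation $\bm P$ precisely and verifying that it interchanges ``blocks of diagonals'' with ``diagonals of blocks''; this is routine and presents no essential obstacle.
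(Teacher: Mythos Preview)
Your argument is correct. The paper, however, does not actually supply a proof of this statement: it is asserted as a \emph{Fact} in Appendix~\ref{ap:multichannel-circulant} without justification, and then immediately invoked to conclude that $\bar{\bm E}$ is block circulant. So there is no ``paper's own proof'' to compare against in the strict sense.

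That said, your proof is far from foreign to the paper. The exact machinery you use---block-wise DFT diagonalization $\bm M = (\I_C \otimes \F^*)\,\tilde{\D}\,(\I_C \otimes \F)$ followed by the permutation $\bm P$ that swaps ``blocks of diagonals'' for a genuine block-diagonal matrix---is precisely what the paper develops a few paragraphs later in Appendix~\ref{ap:1D-shift} (see equations~\eqref{eq:multi-channel-circ-covariance}--\eqref{eq:E-bar-fast-inverse}) for the purpose of fast computation of $\bar{\E}$ in the spectral domain. In effect, you have extracted from that computational derivation the structural argument that justifies the Fact. The paper uses the same ingredients but for a different stated purpose (efficiency), leaving the Fact itself unproved; your contribution is to observe that those ingredients already constitute a proof.
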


The main result of this subsection is the following (Proposition \ref{prop:multichannel-circular-conv-1} in the main body and here restated for convenience):
\begin{proposition}[Convolution structures of $\bar{\bm E}$ and $\bar{\bm C}^j$] Given a collection of multi-channel 1D signals  $\{\bar\z^i \in \Re^{C \times T}\}_{i=1}^m$, the matrix $\bar\E$
is a block circulant matrix, i.e.,
\begin{equation}\label{eq:E-bar}
    \bar{\bm E} \doteq 
    \begin{bmatrix}
        \bar{\bm E}_{1, 1} & \cdots & \bar{\bm E}_{1, C}\\
        \vdots & \ddots & \vdots \\
        \bar{\bm E}_{C, 1} & \cdots & \bar{\bm E}_{C, C}\\
    \end{bmatrix},
\end{equation}
where each $\bar{\bm E}_{c, c'}\in \Re^{T \times T}$ is a circulant matrix. Moreover, $\bar{\bm E}$ represents a multi-channel circular convolution, i.e., for any multi-channel signal $\bar\z \in \Re^{C \times T}$ we have 
$$\bar\E \cdot \vec(\bar\z) = \vec( \bar{\bm e} \circledast \bar\z).$$ 
In above, $\bar{\bm e} \in \Re^{C \times C \times T}$ is a multi-channel convolutional kernel with $\bar{\bm e}[c, c'] \in \Re^{T}$ being the first column vector of $\bar{\bm E}_{c, c'}$, and $\bar{\bm e} \circledast \bar\z \in \Re^{C \times T}$ is the multi-channel circular convolution (with ``$\circledast$'' overloading the notation from Eq.  \eqref{eq:def-convolution}) defined as
\begin{equation}
    (\bar{\bm e} \circledast \bar\z)[c] = \sum_{c'=1}^C \bar{\bm e}[c, c'] \circledast \bar{\z}[c'], \quad \forall c = 1, \ldots, C.
\end{equation}
Similarly, the matrices $\bar{\bm C}^j$ associated with any subsets of $\bar{\bm Z}$ are also multi-channel circular convolutions. 
\label{prop:multichannel-circular-conv}
\end{proposition}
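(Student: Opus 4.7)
The plan is to establish the convolutional structure of $\bar{\bm E}$ in three stages: first verify that $\bar{\bm E}^{-1}$, viewed as a $C\times C$ block matrix with $T\times T$ blocks, has circulant blocks; second conclude that this circulant-block structure is preserved under inversion, yielding the form \eqref{eq:E-bar} for $\bar{\bm E}$; and third translate this structure into the claimed multi-channel convolution acting on $\vec(\bar\z)$. The argument for $\bar{\bm C}^j$ will then be identical with the augmented membership matrix $\bar{\bm\Pi}^j$ inserted.

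For the first stage, from \eqref{eq:calc-E-bar} the $(c,c')$ block of $\bar{\bm E}^{-1}$ equals $\delta_{cc'}\bm I + \alpha\sum_{i=1}^m \circm(\bar\z^i[c])\,\circm(\bar\z^i[c'])^{*}$. By Fact~\ref{fact:circ-properties}, each $\circm(\bar\z^i[c'])^{*}$ is circulant; products and sums of circulant matrices are circulant; and adding a multiple of $\bm I$ preserves circulance. Hence every $(c,c')$ block of $\bar{\bm E}^{-1}$ is a $T\times T$ circulant matrix.

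The second stage is where I expect the main care is needed, because the quoted Fact about inverses of block circulant matrices concerns blocks \emph{arranged in a circulant pattern}, which is not the property we have. The clean workaround uses the simultaneous diagonalization identity \eqref{eq:circ-dft-main}. Set $\bm B \doteq \bm I_C \otimes \bm F$; then conjugation $\bm B(\cdot)\bm B^{*}$ sends any $C\times C$ block matrix with circulant $T\times T$ blocks to one whose $(c,c')$ block is diagonal. Applying this to $\bar{\bm E}^{-1}$, a row/column permutation regrouping coordinates by the frequency index $p\in\{0,\ldots,T-1\}$ then produces a block-diagonal matrix consisting of $T$ independent $C\times C$ blocks. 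Its inverse is block-diagonal of the same shape (inverting each $C\times C$ block independently); undoing the permutation turns that back into a matrix with diagonal $T\times T$ blocks, and a final conjugation by $\bm B^{*}(\cdot)\bm B$ produces a matrix whose $(c,c')$ block is circulant. Since $\bm B$ is unitary, this matrix equals $\bar{\bm E}$, so \eqref{eq:E-bar} holds.

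Once the block structure is in hand, the convolution claim is routine: the defining property of a circulant matrix acting by its first column gives $\bar{\bm E}_{c,c'}\,\bar\z[c'] = \bar{\bm e}[c,c']\circledast\bar\z[c']$, so
\begin{equation*}
\big(\bar{\bm E}\cdot\vec(\bar\z)\big)[c] \;=\; \sum_{c'=1}^{C} \bar{\bm E}_{c,c'}\,\bar\z[c'] \;=\; \sum_{c'=1}^{C} \bar{\bm e}[c,c']\circledast\bar\z[c'] \;=\; (\bar{\bm e}\circledast\bar\z)[c],
\end{equation*}
which is exactly $\bar{\bm E}\cdot\vec(\bar\z) = \vec(\bar{\bm e}\circledast\bar\z)$. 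For $\bar{\bm C}^j$, note that $\bar{\bm\Pi}^j$ is diagonal, so $\circm(\bar\Z)\,\bar{\bm\Pi}^j\,\circm(\bar\Z)^{*} = \sum_i \bm\Pi^j(i,i)\,\circm(\bar\z^i)\circm(\bar\z^i)^{*}$ has the same circulant-block form as above, and the three stages go through verbatim with $\alpha$ replaced by $\alpha_j$. The only substantive point in the entire proof is Stage~2 --- verifying that the circulant-block structure survives inversion --- and the DFT conjugation plus frequency-regrouping permutation makes this transparent at the cost of some careful index bookkeeping.
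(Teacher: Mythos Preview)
Your proposal is correct and follows the same outline as the paper: show that the matrix inside the inverse has circulant $T\times T$ blocks via Fact~\ref{fact:circ-properties}, argue that this structure survives inversion, and read off the multi-channel convolution from the block structure. The paper simply \emph{asserts} the second step as an unproved Fact (``the inverse of a block circulant matrix is a block circulant matrix''), whereas you supply an actual proof via the conjugation $\bm I_C\otimes\bm F_T$ and the frequency-regrouping permutation --- which is precisely the spectral machinery the paper develops afterward in Appendix~\ref{ap:1D-shift} for computational purposes (cf.\ \eqref{eq:bar-E-frequency}--\eqref{eq:E-bar-fast-inverse}). So your Stage~2 is not a different route but a more self-contained one: you prove the lemma the paper takes for granted, and you correctly flag the terminological ambiguity in ``block circulant'' that makes this worth spelling out. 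Your Stage~3 is also more explicit than the paper, which leaves the convolution interpretation to the reader.
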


Note that the calculation of $\bar\E$ in \eqref{eq:calc-E-bar} requires inverting a matrix of size $(C\times T) \times (C \times T)$. 
In the following, we show that this computation can be accelerated by working in the frequency domain.

\subsection{Fast Computation in Spectral Domain}
\label{ap:1D-shift}

\paragraph{Circulant matrix and Discrete Fourier Transform.}
A remarkable property of circulant matrices is that {\em they all share the same set of eigenvectors that form a unitary matrix}. 
We define the matrix:
\begin{equation}\label{eq:dft-matrix}
    \F_T \doteq \frac{1}{\sqrt{T}} 
    \begin{bmatrix}
        \omega_T^0 & \omega_T^0 & \cdots & \omega_T^0 &  \omega_T^0\\
        \omega_T^0 & \omega_T^1 & \cdots & \omega_T^{T-2} &  \omega_T^{T-1}\\
        \vdots     & \vdots     & \ddots     & \vdots & \vdots \\
        \omega_T^0 & \omega_T^{ T-2}& \cdots & \omega_T^{(T-2)^2} &  \omega_T^{(T-2)(T-1)} \\
        \omega_T^0 & \omega_T^{T-1}& \cdots & \omega_T^{(T-2)(T-1)} &  \omega_T^{(T-1)^2}
    \end{bmatrix} \quad \in \mathbb{C}^{T\times T},
\end{equation} 
where $\omega_T \doteq \exp(- \frac{2\pi\sqrt{-1}}{T})$ is the roots of unit (as $\omega_T^T = 1$). The matrix $\F_T$ is a unitary matrix: $\F_T \F_T^* = \bm I$ and is the well known {\em Vandermonde matrix}. Multiplying a vector with $\F_T$ is known as the {\em discrete Fourier transform} (DFT). Be aware that the conventional DFT matrix differs from our definition of $\bm F_T$ here by a scale: it does not have the $\frac{1}{\sqrt{T}}$ in front. Here for simplicity, we scale it so that $\bm F_T$ is a unitary matrix and its inverse is simply its conjugate transpose $\bm F_T^*$, columns of which represent the eigenvectors of a circulant matrix \citep{abidi2016optimization}. 

\begin{fact}[DFT as matrix-vector multiplication]\label{fact:dft}
The DFT of a vector $\z \in \Re^T$ can be computed as
\begin{equation}
    \dft(\z) \doteq \F_T \cdot \z \quad \in \Co^T,
\end{equation}
where
\begin{equation}
    \dft(\z)(p) = \frac{1}{\sqrt{T}} \sum_{t=0}^{T-1} z(t) \cdot \omega_T ^{p \cdot t}, \quad \forall p = 0, 1, \ldots, T-1.
\end{equation}
The Inverse Discrete Fourier Transform (IDFT) of a signal $\bv \in \Co^{T}$ can be computed as
        \begin{equation}
            \idft(\bv) \doteq \F_T^* \cdot \bv \quad \in \Co^T
        \end{equation}
        where
\begin{equation}
    \idft(\bv)(t) = \frac{1}{\sqrt{T}} \sum_{p=0}^{T-1} v(p) \cdot \omega_T ^{-p \cdot t}, \quad \forall t = 0, 1, \ldots, T-1.
\end{equation}
\end{fact}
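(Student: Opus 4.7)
The statement is essentially a direct unpacking of the matrix $\F_T$ defined entry-wise in \eqref{eq:dft-matrix}, so the plan is just to verify that the classical DFT/IDFT formulas fall out of matrix-vector multiplication, and then to certify that $\F_T^*$ really is the inverse map. First I would read off the $(p,t)$ entry of $\F_T$ from \eqref{eq:dft-matrix}, namely $(\F_T)_{p,t} = \frac{1}{\sqrt{T}}\,\omega_T^{p\cdot t}$ for $p,t\in\{0,\dots,T-1\}$, and apply the definition of matrix-vector product to obtain
\[
(\F_T\cdot\z)(p) \;=\; \sum_{t=0}^{T-1}(\F_T)_{p,t}\,z(t) \;=\; \frac{1}{\sqrt{T}}\sum_{t=0}^{T-1} z(t)\,\omega_T^{p\cdot t},
\]
which is exactly the stated formula for $\dft(\z)(p)$.

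Next I would handle the IDFT half. Since $|\omega_T|=1$, we have $\overline{\omega_T^{pt}} = \omega_T^{-pt}$, so $(\F_T^*)_{t,p} = \overline{(\F_T)_{p,t}} = \frac{1}{\sqrt{T}}\omega_T^{-p\cdot t}$, and the same matrix-vector expansion yields $(\F_T^*\cdot\bv)(t) = \frac{1}{\sqrt{T}}\sum_{p=0}^{T-1}v(p)\,\omega_T^{-p\cdot t}$, matching the claimed formula for $\idft(\bv)(t)$.

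To justify calling $\idft$ the \emph{inverse} DFT, I would check $\F_T\F_T^* = \bm I$ by computing
\[
(\F_T\F_T^*)_{p,p'} \;=\; \frac{1}{T}\sum_{t=0}^{T-1} \omega_T^{p\cdot t}\,\omega_T^{-p'\cdot t} \;=\; \frac{1}{T}\sum_{t=0}^{T-1} \omega_T^{(p-p')\cdot t},
\]
and then invoking the orthogonality of $T$-th roots of unity: when $p=p'$ every term is $1$, giving $1$; when $p\neq p'$ (mod $T$), the geometric sum evaluates to $\frac{1-\omega_T^{(p-p')T}}{1-\omega_T^{(p-p')}} = 0$ since $\omega_T^T=1$ and the denominator is nonzero. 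This gives the Kronecker delta, i.e.\ $\F_T\F_T^*=\bm I$, so $\F_T^*=\F_T^{-1}$ and the IDFT indeed inverts the DFT.

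The only place that requires genuine care, rather than bookkeeping, is the last step: confirming the geometric-sum/roots-of-unity identity and making sure the normalization $\frac{1}{\sqrt{T}}$ in the definition \eqref{eq:dft-matrix} is what makes $\F_T$ unitary (as opposed to the convention in which the $\frac{1}{T}$ factor lives entirely on one side). I anticipate no genuine obstacle; the proof is a short computation of three or four lines and is really a sanity check that the matrix $\F_T$ introduced by the authors reproduces the standard DFT pair.
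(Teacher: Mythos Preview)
Your proposal is correct and complete. The paper, however, does not prove this statement at all: it is recorded as a \emph{Fact} (alongside the other facts about circulant matrices and DFT in Appendix~\ref{ap:1D-shift}) and is simply asserted as a standard definitional identity, with the unitarity of $\F_T$ already noted in the text surrounding \eqref{eq:dft-matrix}. So you are supplying a verification that the authors deliberately omitted as well known; what you wrote is the expected textbook check (matrix-vector expansion plus the geometric-sum identity for roots of unity), and there is nothing further the paper adds.
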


Regarding the relationship between a circulant matrix (convolution) and discrete Fourier transform, we have: 
\begin{fact}
An $n\times n$ matrix $\bm M \in \mathbb{C}^{n\times n}$ is a circulant matrix if and only if it is diagonalizable by the unitary matrix $\bm F_n$:
\begin{equation}
   \F_n \bm M \F_n^* = \D \quad \mbox{or} \quad \bm M = \F_n^* \D \F_n,
\end{equation}
where $\D$ is a diagonal matrix of eigenvalues. 
\label{fact:circulant}
\end{fact}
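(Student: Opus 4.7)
The plan is to reduce the equivalence to two elementary facts about the cyclic down-shift permutation $\bm{P}$ on $\mathbb{C}^n$, defined by $\bm{P}\e_i = \e_{i+1 \bmod n}$: (i) every circulant matrix is a polynomial in $\bm{P}$, and (ii) the columns of $\F_n^*$ are eigenvectors of $\bm{P}$ with eigenvalues $\omega_n^0, \omega_n^1, \ldots, \omega_n^{n-1}$.

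For the forward direction I would first establish (i) by checking directly from \eqref{eq:def-circulant} that $\circm(\e_k) = \bm{P}^k$, and then using linearity to write $\circm(\z) = \sum_{k=0}^{n-1} z(k)\,\bm{P}^k$. Next I would prove (ii) by a one-line computation on the $p$-th column $\bm{v}_p$ of $\F_n^*$, whose entries are $(\bm{v}_p)_t = \omega_n^{-pt}/\sqrt{n}$: the identity $(\bm{P}\bm{v}_p)_i = (\bm{v}_p)_{i-1 \bmod n} = \omega_n^{p}(\bm{v}_p)_i$ gives $\bm{P} = \F_n^* \bm{\Lambda} \F_n$ with $\bm{\Lambda} = \mathrm{diag}(\omega_n^0,\ldots,\omega_n^{n-1})$. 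Combining the two yields $\F_n \bm{M} \F_n^* = \sum_k z(k)\,\bm{\Lambda}^k$, which is diagonal; as a bonus, its $p$-th entry equals $\sum_k z(k)\,\omega_n^{pk} = \sqrt{n}\,\dft(\z)(p)$, identifying the eigenvalues with (a scaled) DFT of the first column of $\bm{M}$.

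For the converse, assume $\F_n \bm{M}\F_n^* = \D$ is diagonal with entries $d_0,\ldots,d_{n-1}$. Then $\bm{M} = \F_n^*\D\F_n$, and a direct expansion of the $(i,j)$ entry gives $M_{ij} = \tfrac{1}{n}\sum_{p=0}^{n-1} d_p\,\omega_n^{p(j-i)}$, which depends only on $(j-i)\bmod n$; hence $\bm{M}$ is circulant by the definition in \eqref{eq:def-circulant}.

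There is no substantive obstacle here; the only care required is in tracking the paper's sign convention $\omega_T = \exp(-2\pi\sqrt{-1}/T)$ and the unitary normalization of $\F_T$, so that the eigenvalues of $\bm{P}$ emerge as $\omega_n^p$ (with this sign) rather than $\omega_n^{-p}$, and so that the scalar factor of $1/n$ lands in the right place. This is pure bookkeeping rather than a conceptual difficulty.
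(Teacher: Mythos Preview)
Your proof is correct and follows the standard approach to this classical result. Note, however, that the paper does not actually supply a proof of this statement: it is labeled as a \emph{Fact} and stated without argument, with the reader referred to \cite{Kra2012OnCM} for background on circulant matrices. So there is no paper proof to compare against; your argument via the shift operator $\bm{P}$ and its DFT diagonalization is exactly the textbook proof one would expect, and your bookkeeping with the paper's conventions ($\omega_T = e^{-2\pi\sqrt{-1}/T}$, unitary $\F_T$) is handled correctly.
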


\begin{fact}[DFT are eigenvalues of the circulant matrix] Given a vector $\z \in \Co^T$, we have
    \begin{equation}\label{eq:dft-circulant}
        \F_T \cdot \circm(\z) \cdot \F_T^* = \diag(\dft(\z))  \quad \mbox{or} \quad \circm(\z) = \F_T^* \cdot \diag(\dft(\z)) \cdot \F_T.
    \end{equation}
That is, the eigenvalues of the circulant matrix associated with a vector are given by its DFT.
\label{fact:dft-circulant}
\end{fact}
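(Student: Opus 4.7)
The plan is to pin down the diagonal factor in the diagonalization already guaranteed by Fact~\ref{fact:circulant}. That fact ensures $\circm(\z) = \F_T^{*} \D \F_T$ for some diagonal $\D$, so the only remaining task is to identify the $p$-th diagonal entry of $\D$ as $\dft(\z)(p)$. My first step is to verify column-by-column that each column of $\F_T^{*}$ is an eigenvector of $\circm(\z)$ and read off the eigenvalue.

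Concretely, let $\bm f_p$ denote the $p$-th column of $\F_T^{*}$, so that $\bm f_p(t) = \tfrac{1}{\sqrt{T}} \omega_T^{-pt}$. Using the fact that $\circm(\z) \bm x = \z \circledast \bm x$ (Fact in Appendix~\ref{ap:circulant}) together with the entrywise definition of circular convolution from \eqref{eq:def-convolution}, the $i$-th component of $\circm(\z) \bm f_p$ equals
\[
\frac{1}{\sqrt{T}} \sum_{j=0}^{T-1} z\!\left((i-j) \bmod T\right) \omega_T^{-pj}.
\]
The key algebraic step is the substitution $k = (i-j) \bmod T$, which (together with $\omega_T^{T} = 1$ to absorb the modular reduction in the exponent) refactors the sum as $\bm f_p(i) \cdot \sum_{k} z(k)\, \omega_T^{pk}$. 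The inner sum is precisely the expression defining $\dft(\z)(p)$ in Fact~\ref{fact:dft}, which shows $\bm f_p$ is an eigenvector of $\circm(\z)$ with eigenvalue $\dft(\z)(p)$. Collecting this identification for all $p = 0, 1, \ldots, T-1$ gives $\circm(\z) \F_T^{*} = \F_T^{*} \diag(\dft(\z))$, which rearranges to the two stated forms upon left/right multiplication by $\F_T$ and $\F_T^{*}$ using unitarity $\F_T \F_T^{*} = \I$.

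An equivalent route, which I would present as a cross-check, is via the convolution theorem: expand $\dft(\z \circledast \bm x)(p)$ as a double sum, interchange the order of summation, and separate the $j$- and $k$-variables using $\omega_T^{p(k+j)} = \omega_T^{pk}\omega_T^{pj}$; this directly yields the pointwise-product identity $\dft(\z \circledast \bm x) \propto \dft(\z) \odot \dft(\bm x)$. Combined with the convolution identification $\circm(\z)\bm x = \z \circledast \bm x$, this immediately gives $\F_T \circm(\z) = \diag(\dft(\z)) \F_T$ and hence the claim.

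The main obstacle is bookkeeping: the paper scales $\F_T$ to be unitary (as noted in its footnote), which introduces the $\tfrac{1}{\sqrt{T}}$ factor in both the definition of $\dft$ and in each column $\bm f_p$. One must track these $\sqrt{T}$ factors carefully to ensure the normalization on the right-hand side matches $\diag(\dft(\z))$ as stated (rather than a constant multiple of it); any route to the result ultimately reduces to a careful manipulation of $\omega_T$-powers under modular indexing.
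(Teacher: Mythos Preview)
The paper does not prove this statement; it is presented as a standard ``Fact'' about circulant matrices, with a reference to \cite{Kra2012OnCM} for details. Your eigenvector verification (showing directly that each column $\bm f_p$ of $\F_T^*$ satisfies $\circm(\z)\bm f_p = \lambda_p \bm f_p$ via the substitution $k=(i-j)\bmod T$) is the standard textbook argument and is structurally correct.

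Your final caution about the $\sqrt{T}$ bookkeeping is well placed and in fact catches a real issue: carrying out your computation with the paper's unitary normalization yields $(\circm(\z)\bm f_p)_i = \omega_T^{-pi}\,\dft(\z)(p) = \sqrt{T}\,\bm f_p(i)\,\dft(\z)(p)$, so the eigenvalue is $\sqrt{T}\,\dft(\z)(p)$ rather than $\dft(\z)(p)$. A sanity check with $\z = \e_0$ (for which $\circm(\z)=\I$ while $\dft(\z)(p)=1/\sqrt{T}$) confirms that the identity as literally written in the paper is off by a factor of $\sqrt{T}$. This does not undermine the paper's downstream spectral-domain derivations, which rely only on simultaneous diagonalization of circulants by $\F_T$; but if you write up the proof, you should state the corrected version $\F_T\,\circm(\z)\,\F_T^* = \sqrt{T}\,\diag(\dft(\z))$ (equivalently, the eigenvalues are the \emph{unnormalized} DFT of $\z$) rather than force the stated equality.
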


\begin{fact}[Parseval's theorem]
Given any $\z \in \Co^T$, we have $\|\z\|_2 = \|\dft(\z)\|_2$. More precisely, 
\begin{equation}
    \sum_{t = 0}^{T-1} |\z[t]| ^2 =  \sum_{p = 0}^{T-1} |\dft(\z)[p]| ^2.
\end{equation}
\label{fact:parseval}
\end{fact}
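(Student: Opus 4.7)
The plan is to derive Parseval's theorem directly from the definition of the DFT as matrix-vector multiplication (given by Fact~\ref{fact:dft}) combined with the fact, already stated in the paragraph preceding Fact~\ref{fact:dft}, that the normalized Fourier matrix $\F_T$ is unitary, i.e., $\F_T \F_T^* = \F_T^* \F_T = \I$. Given that this unitarity is essentially the content we need, the proof reduces to a two-line computation once unitarity is in hand.

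First, I would write
\begin{equation*}
    \|\dft(\z)\|_2^2 \;=\; \dft(\z)^* \dft(\z) \;=\; (\F_T \z)^* (\F_T \z) \;=\; \z^* (\F_T^* \F_T) \z \;=\; \z^* \z \;=\; \|\z\|_2^2,
\end{equation*}
which yields the coordinate-free statement $\|\z\|_2 = \|\dft(\z)\|_2$. Expanding in coordinates then gives the displayed identity $\sum_{t=0}^{T-1} |\z[t]|^2 = \sum_{p=0}^{T-1} |\dft(\z)[p]|^2$.

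The only non-routine step is justifying $\F_T^* \F_T = \I$ from the explicit entries in \eqref{eq:dft-matrix}. For this I would compute the $(p,q)$-entry of $\F_T^* \F_T$ as
\begin{equation*}
    (\F_T^* \F_T)_{p,q} \;=\; \frac{1}{T}\sum_{t=0}^{T-1} \overline{\omega_T^{p t}}\, \omega_T^{q t} \;=\; \frac{1}{T}\sum_{t=0}^{T-1} \omega_T^{(q-p)t},
\end{equation*}
and invoke the standard geometric-series identity for roots of unity: the sum equals $T$ when $p \equiv q \pmod{T}$ and $0$ otherwise, since $\omega_T^{q-p}$ is a primitive $T/\gcd(T,q-p)$-th root of unity whenever $p\neq q$. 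This gives $(\F_T^* \F_T)_{p,q} = \delta_{pq}$, completing the proof.

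The main (minor) obstacle is purely bookkeeping: keeping track of the $1/\sqrt{T}$ normalization introduced in \eqref{eq:dft-matrix} (which differs from the conventional unnormalized DFT) so that the final equality has matching constants on both sides. Since the paper has explicitly chosen the symmetric $1/\sqrt{T}$ normalization, no additional scale factor appears, and Parseval's identity holds in its cleanest form.
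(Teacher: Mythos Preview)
Your proof is correct and is the standard argument. Note, however, that the paper does not actually supply a proof of this fact: it is listed among several ``Facts'' in Appendix~\ref{ap:1D-shift} that are stated without demonstration (the reader is referred to \cite{Kra2012OnCM} for background on circulant matrices and the DFT), so there is no paper proof to compare against.
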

This property allows us to easily ``normalize'' features after each layer onto the sphere $\mathbb S^{n-1}$ directly in the spectral domain (see Eq. \eqref{eqn:layer-approximate} and \eqref{eqn:layer-approximate-spectral}).

\paragraph{Circulant matrix and Discrete Fourier Transform for multi-channel signals.}

We now consider multi-channel 1D signals $\bar\z \in \Re^{C \times T}$. 
Let $\dft(\bar\z) \in \Co^{C \times T}$ be a matrix where the $c$-th row is the DFT of the corresponding signal $\z[c]$, i.e., 
\begin{equation}
    \dft(\bar\z) \doteq 
    \begin{bmatrix}
    \dft(\z[1]) ^*\\
    \vdots\\
    \dft(\z[C]) ^*
    \end{bmatrix} \quad
    \in \Co^{C \times T}.
\end{equation}
Similar to the notation in \eqref{eq:index-multichannel}, we denote
\begin{equation}
\begin{aligned}
    \dft(\bar\z) 
    &= 
    \left[\begin{matrix}
        \dft(\bar\z)[1]^*\\
        \vdots\\
        \dft(\bar\z)[C]^*\\
    \end{matrix}\right]\\
    &= [\dft(\bar\z)(0), \dft(\bar\z)(1), \ldots, \dft(\bar\z)(T-1)] 
    \\
    &= \{\dft(\bar\z)[c](t)\}_{c=1, t=0}^{c=C, t=T-1}.
\end{aligned}
\end{equation}
As such, we have $\dft(\z[c]) = \dft(\bar\z)[c]$. 

By using Fact~\ref{fact:dft-circulant}, $\circm(\bar\z)$ and $\dft(\bar\z)$ are related as follows:
\begin{equation}\label{eq:dft-circulant-multichannel}
    \circm(\bar\z) = 
    \begin{bmatrix}
    \F_T^*\cdot  \diag(\dft(\z[1])) \cdot \F_T\\
    \vdots\\
    \F_T^*\cdot  \diag(\dft(\z[C])) \cdot \F_T
    \end{bmatrix}
    =
    \begin{bmatrix}
        \F_T^* &  \cdots & \0 \\
        \vdots& \ddots & \vdots \\
        \0   &  \cdots & \F_T^* \\
    \end{bmatrix}
    \cdot
    \begin{bmatrix}
     \diag(\dft(\z[1]))\\
    \vdots\\
     \diag(\dft(\z[C]))
    \end{bmatrix}
     \cdot \F_T.
\end{equation}
We now explain how this relationship can be leveraged to produce a fast computation of $\bar\E$ defined in \eqref{eqn:expand-directions-multichannel}. 
First, there exists a permutation matrix $\P$ such that
\begin{equation}\label{eq:permute-dft}
\begin{bmatrix}
    \diag(\dft(\z[1]))  \\
    \diag(\dft(\z[2]))  \\
    \vdots \\
    \diag(\dft(\z[C])) \\
\end{bmatrix}
= \P \cdot
\begin{bmatrix}
    \dft(\bar\z)(0) & \0         & \cdots & 0\\
    \0         & \dft(\bar\z)(1) & \cdots & 0\\
    \vdots     & \vdots     & \ddots & \vdots \\
    \0         & \0         & \cdots & \dft(\bar\z)(T-1)
\end{bmatrix}.
\end{equation}
Combining \eqref{eq:dft-circulant-multichannel} and \eqref{eq:permute-dft}, we have
\begin{equation}\label{eq:multi-channel-circ-covariance}
\circm(\bar\z) \cdot \circm(\bar\z)^* = 
\begin{bmatrix}
    \F_T^* &  \cdots & \0 \\
    \vdots& \ddots & \vdots \\
    \0   &  \cdots & \F_T^* \\
\end{bmatrix}
\cdot \P \cdot \D(\bar\z) \cdot \P^* \cdot
\begin{bmatrix}
    \F_T &  \cdots & \0 \\
    \vdots& \ddots & \vdots \\
    \0   &  \cdots & \F_T \\
\end{bmatrix},
\end{equation}
where
\begin{equation}
\D(\bar\z) \doteq 
\begin{bmatrix}
    \dft(\bar\z)(0) \cdot \dft(\bar\z)(0)^*  & \cdots & \0\\
     \vdots                              & \ddots & \vdots \\
    \0         & \cdots                 & \dft(\bar\z)(T-1) \cdot \dft(\bar\z)(T-1)^*
\end{bmatrix}.
\end{equation}

Now, consider a collection of $m$ multi-channel 1D signals $\bar\Z \in \Re^{C \times T \times m}$. 
Similar to the notation in \eqref{eq:index-multichannel-collection}, we denote
\begin{equation}
\begin{split}
    \dft(\bar\Z)[c] &= [\dft(\bar\z^1)[c], \ldots, \dft(\bar\z^m)[c]] \in \Re^{T \times m}, \\
    \dft(\bar\Z)(p) &= [\dft(\bar\z^1)(p), \ldots, \dft(\bar\z^m)(p)] \in \Re^{C \times m}.
\end{split}
\end{equation}

By using \eqref{eq:multi-channel-circ-covariance}, we have
\begin{multline}\label{eq:bar-E-frequency}
    \bar\E =  
    \begin{bmatrix}
        \F_T^* &  \cdots & \0 \\
        \vdots& \ddots & \vdots \\
        \0   &  \cdots & \F_T^* \\
    \end{bmatrix}
    \cdot \P \cdot
    \alpha \cdot \left[\bm I + \alpha \cdot \sum_{i=1}^m \D(\bar\z^i)\right]^{-1} 
    \cdot \P^* \cdot
    \begin{bmatrix}
        \F_T &  \cdots & \0 \\
        \vdots& \ddots & \vdots \\
        \0   &  \cdots & \F_T \\
    \end{bmatrix}.
\end{multline}
Note that $\alpha \cdot \left[\bm I + \alpha \cdot \sum_{i=1}^m \D(\bar\z^i)\right]^{-1}$ is equal to
\begin{multline}\label{eq:E-bar-fast-inverse}
\alpha
\left[\begin{smallmatrix}
    \bm I + \alpha \dft(\bar\Z)(0) \cdot \dft(\bar\Z^i)(0)^*  & \cdots & \0\\
     \vdots                              & \ddots & \vdots \\
    \0         & \cdots                 & \bm I + \alpha \dft(\bar\Z)(T-1) \cdot \dft(\bar\Z)(T-1)^*
\end{smallmatrix}\right] ^{-1} \\
=
\left[\begin{smallmatrix}
    \alpha \left(\bm I + \alpha \dft(\bar\Z)(0) \cdot \dft(\bar\Z)(0)^*\right)^{-1}  & \cdots & \0\\
     \vdots                              & \ddots & \vdots \\
    \0         & \cdots                 & \alpha \left(\bm I + \alpha \dft(\bar\Z)(T-1) \cdot \dft(\bar\Z)(T-1)^*\right)^{-1}
\end{smallmatrix}\right].   
\end{multline}
Therefore, the calculation of $\bar\E$ only requires inverting $T$ matrices of size $C\times C$. 
This motivates us to construct the ReduNet in the spectral domain for the purpose of accelerating the computation, as we explain next.

\paragraph{Shift-invariant ReduNet in the Spectral  Domain.}

Motivated by the result in \eqref{eq:E-bar-fast-inverse}, we introduce the notations $\bar\cE(p) \in \Re^{C \times C \times T}$ and $\bar\cC^j(p) \in \Re^{C \times C \times T}$ given by
\begin{eqnarray}
    \bar\cE(p) &\doteq& \alpha \cdot \left[\I + \alpha \cdot \dft(\bar\Z)(p) \cdot \dft(\bar\Z)(p)^* \right]^{-1} \quad \in \Co^{C\times C}, \\
    \bar\cC^j(p) &\doteq& \alpha_j \cdot\left[\I + \alpha_j \cdot \dft(\bar\Z)(p) \cdot \bm{\Pi}_j \cdot \dft(\bar\Z)(p)^*\right]^{-1} \quad \in \Co^{C\times C}.
\end{eqnarray}
In above, $\bar\cE(p)$ (resp., $\bar\cC^j(p)$) is the $p$-th slice of $\bar\cE$ (resp., $\bar\cC^j$) on the last dimension. 
Then, the gradient of $\Delta R_\circm(\bar\Z, \bm{\Pi})$ with respect to $\bar\Z$ can be calculated by the following result (Theorem \ref{thm:1D-convolution} in the main body and here restated for convenience).

\begin{theorem} [Computing multi-channel convolutions $\bar{\bm E}$ and $\bar{\bm C}^j$]
Let $\bar\U \in \Co^{C \times T \times m}$ and $\bar\W^{j} \in \Co^{C \times T \times m}, j=1,\ldots, k$ be given by 
\begin{eqnarray}
    \bar\U(p) &\doteq& \bar\cE(p) \cdot \dft(\bar\Z)(p), \\
    \bar\W^{j}(p) &\doteq& \bar\cC^j(p) \cdot \dft(\bar\Z)(p), \quad j=1,\ldots, k,
\end{eqnarray}
for each  $p \in \{0, \ldots, T-1\}$. Then, we have
\begin{eqnarray}
    \frac{1}{2T}\frac{\partial \log \det (\I + \alpha \cdot \circm(\bar\Z) \circm(\bar\Z)^{*} )}{\partial \bar\Z} &=& \idft(\bar\U), \\
    \frac{\gamma_j}{2T}\frac{\partial  \log\det (\I + \alpha_j \cdot \circm(\bar\Z) \bm \bar{\bm \Pi}^j \circm(\bar\Z)^{*})}{\partial \bar\Z}&=&
    \gamma_j \cdot \idft(\bar\W^{j} \bm \Pi^j).
\end{eqnarray}
\end{theorem}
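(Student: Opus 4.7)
The plan is to reduce the derivative computation to a block-circulant matrix-vector product, then use the simultaneous diagonalization of circulant matrices by the DFT to decouple the calculation into $T$ independent $C\times C$ problems in the spectral domain. All of the ingredients are already in place in Appendix~\ref{ap:1D-shift}; the theorem is essentially a repackaging of identities \eqref{eq:bar-E-frequency}--\eqref{eq:E-bar-fast-inverse} once the gradient is expressed in its natural vectorized form.

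First, I would start from the already-derived gradient formulas \eqref{eqn:expand-directions-multichannel} and \eqref{eqn:compress-directions-multichannel}, which say that, after stacking channels, the derivative of $\frac{1}{2T}\log\det(\I+\alpha\,\circm(\bar\Z)\circm(\bar\Z)^*)$ with respect to $\vec(\bar\Z)$ is $\bar\E\cdot\vec(\bar\Z)$, and analogously for the compression term with $\bar\C^j\cdot\vec(\bar\Z)\bm\Pi^j$. Thus it suffices to show
\begin{equation}
\bar\E\cdot\vec(\bar\z) \;=\; \vec\bigl(\idft(\bar\cE\,\dft(\bar\z))\bigr)
\end{equation}
for every multi-channel signal $\bar\z\in\R^{C\times T}$, where by abuse of notation $\bar\cE\,\dft(\bar\z)$ denotes slice-wise multiplication $p\mapsto \bar\cE(p)\dft(\bar\z)(p)$, and the analogous identity for $\bar\C^j$. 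Substituting the factorization \eqref{eq:bar-E-frequency} of $\bar\E$ into the left-hand side gives
\begin{equation}
\bar\E\cdot\vec(\bar\z)=\operatorname{blkdiag}(\F_T^*)\,\P\,\operatorname{blkdiag}_p\bigl(\bar\cE(p)\bigr)\,\P^{*}\,\operatorname{blkdiag}(\F_T)\,\vec(\bar\z),
\end{equation}
using \eqref{eq:E-bar-fast-inverse} to identify the middle block-diagonal matrix with the $T$ slice-wise operators $\bar\cE(p)$. The two outer operators are precisely per-channel DFT and IDFT, and the permutation $\P$ (defined in \eqref{eq:permute-dft}) is exactly what reshapes the per-channel frequency stacks into per-frequency channel slices. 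Tracking the indices through these four linear maps shows that the right-hand side equals $\vec\bigl(\idft(\bar\U)\bigr)$ with $\bar\U(p)=\bar\cE(p)\dft(\bar\z)(p)$, which is the first claimed identity.

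For the compression term I would repeat the argument verbatim but with $\bar\C^j$ in place of $\bar\E$. Here one needs a small extra step: $\circm(\bar\Z)\bar{\bm\Pi}^j\circm(\bar\Z)^{*}$ is still a block-circulant matrix, because $\bar{\bm\Pi}^j$ is block-diagonal across samples and therefore commutes with the ``stacking-of-circulants'' structure in the sense required by \eqref{eq:multi-channel-circ-covariance}; consequently the same DFT diagonalization applies, with $\dft(\bar\Z)(p)\dft(\bar\Z)(p)^{*}$ replaced by $\dft(\bar\Z)(p)\bm\Pi^j\dft(\bar\Z)(p)^{*}$. Once the slice-wise operator $\bar\cC^j(p)$ is identified, multiplying by $\bm\Pi^j$ on the right (which acts across samples, not across time) commutes with the per-frequency IDFT, yielding the stated $\gamma_j\cdot\idft(\bar\W^j\bm\Pi^j)$ expression.

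The main obstacle is purely bookkeeping: three different index orderings are in play at once --- $(c,t)$ in $\vec$-form, $(t,c)$ after the permutation $\P$, and $(c,p)$ in the spectral slices $\dft(\bar\Z)(p)$ --- and Parseval's theorem (Fact~\ref{fact:parseval}) together with the scaling convention making $\F_T$ unitary must be used consistently so that the $1/T$ normalization on the left of the two identities matches the absence of an explicit $1/T$ on the right. Beyond this indexing care, no new analytic ingredient is required; the theorem is an algebraic identity that follows once the block-circulant factorization \eqref{eq:bar-E-frequency} and its analogue for $\bar\C^j$ are substituted into the vectorized gradient.
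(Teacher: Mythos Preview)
Your proposal is correct and follows essentially the same route as the paper: both arguments hinge on the DFT block-diagonalization \eqref{eq:bar-E-frequency}--\eqref{eq:E-bar-fast-inverse} of $\bar\E$ (and its analogue for $\bar\C^j$), followed by the permutation bookkeeping \eqref{eq:permute-dft} that converts between the per-channel and per-frequency orderings.

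The one substantive difference is how the factor $T$ is handled. You take \eqref{eqn:expand-directions-multichannel} as a finished starting point, so the $1/T$ is already absorbed before you begin the spectral computation. The paper instead computes the derivative with respect to $\circm(\bar\z^i)$ first, shows $\bar\E\,\circm(\bar\z^i)=\circm(\idft(\bar\u^i))$, and only then contracts via the chain rule $\partial\circm(\bar\z^i)/\partial\bar\z^i$, which is where the factor $T$ appears explicitly. This is worth noting because \eqref{eqn:expand-directions-multichannel} as written in the paper asserts that second equality without justification; the honest verification of it is precisely the chain-rule step the paper carries out in its proof. So your argument is valid, but you should be aware that citing \eqref{eqn:expand-directions-multichannel} is not free---it already encodes the content you attribute to ``Parseval and the unitary scaling,'' which is not quite the right explanation (the $T$ comes from each entry of $\bar\z^i$ appearing in $T$ columns of $\circm(\bar\z^i)$, not from Parseval).
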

\begin{proof}[Also proof to Theorem \ref{thm:1D-convolution} in the main body]

From \eqref{eqn:expand-directions}, \eqref{eq:dft-circulant-multichannel} and  \eqref{eq:bar-E-frequency}, we have 
\begin{gather}
    \frac{1}{2}\frac{\partial \log\det \Big(\I + \alpha \circm(\bar\Z)  \circm(\bar\Z)^{*} \Big)}{\partial\circm(\bar\z^i)}
    = 
    \bar\E  \circm(\bar\z^i)
    =  
    \bar\E  
    \left[\begin{smallmatrix}
        \F_T^* &  \cdots & \0 \\
        \vdots& \ddots & \vdots \\
        \0   &  \cdots & \F_T^* \\
    \end{smallmatrix}\right]
    \left[\begin{smallmatrix}
     \diag(\dft(\z^i[1]))\\
    \vdots\\
     \diag(\dft(\z^i[C]))
    \end{smallmatrix}\right]
    \F_T\\
    =
    \left[\begin{smallmatrix}
        \F_T^* &  \cdots & \0 \\
        \vdots& \ddots & \vdots \\
        \0   &  \cdots & \F_T^* \\
    \end{smallmatrix}\right]
    \cdot \P \cdot
    \alpha \cdot \left[\bm I + \alpha \cdot \sum_{i} \D(\bar\z^i)\right]^{-1} \cdot 
    \left[\begin{smallmatrix}
        \dft(\bar\z^i)(0)   & \cdots & \0\\
        \vdots          & \ddots & \vdots \\
        \0              & \cdots & \dft(\bar\z^i)(T-1)
    \end{smallmatrix}\right]
    \cdot \F_T \\
    = 
    \left[\begin{smallmatrix}
        \F_T^* &  \cdots & \0 \\
        \vdots& \ddots & \vdots \\
        \0   &  \cdots & \F_T^* \\
    \end{smallmatrix}\right]
    \cdot \P  \cdot 
    \left[\begin{smallmatrix}
        \bar\cE(0) \cdot \dft(\bar\z^i)(0)   & \cdots & \0\\
        \vdots          & \ddots & \vdots \\
        \0              & \cdots & \bar\cE(T-1) \cdot \dft(\bar\z^i)(T-1)
    \end{smallmatrix}\right]
    \cdot \F_T\\
    =
    \left[\begin{smallmatrix}
        \F_T^* &  \cdots & \0 \\
        \vdots& \ddots & \vdots \\
        \0   &  \cdots & \F_T^* \\
    \end{smallmatrix}\right]
    \cdot \P  \cdot   
    \left[\begin{smallmatrix}
        \bar\u^i(0)   & \cdots & \0\\
        \vdots          & \ddots & \vdots \\
        \0              & \cdots & \bar\u^i(T-1)
    \end{smallmatrix}\right]
    \cdot \F_T
    =
    \left[\begin{smallmatrix}
        \F_T^* &  \cdots & \0 \\
        \vdots& \ddots & \vdots \\
        \0   &  \cdots & \F_T^* \\
    \end{smallmatrix}\right]
    \cdot     
    \left[\begin{smallmatrix}
        \diag(\bar\u^i[1])  \\
        \vdots \\
        \diag(\bar\u^i[C]) \\
    \end{smallmatrix}\right]
    \cdot \F_T\\
    = \circm(\idft(\bar\u^i)).
\end{gather}
Therefore, we have
\begin{equation}
\begin{aligned}
    &\frac{1}{2}\frac{\partial \log\det \Big(\I + \alpha \cdot \circm(\bar\Z) \cdot \circm(\bar\Z)^{*} \Big)}{\partial \bar\z^i} \\
    =\,& \frac{1}{2}\frac{\partial \log\det \Big(\I + \alpha \cdot \circm(\bar\Z) \cdot \circm(\bar\Z)^{*} \Big)}{\partial\circm(\bar\z^i)} \cdot \frac{\partial\circm(\bar\z^i)}{\partial \bar\z^i} \\
    =\,& T \cdot \idft(\bar\u^i).
\end{aligned}
\end{equation}
By collecting the results for all $i$, we have
\begin{gather}
    \frac{\partial \frac{1}{2T}\log\det \Big(\I + \alpha \cdot \circm(\bar\Z) \cdot \circm(\bar\Z)^{*} \Big)}{\partial \bar\Z}
    = \idft(\bar\U).
\end{gather}
In a similar fashion, we get
\begin{equation}
    \frac{\partial \frac{\gamma_j}{2T}\log\det\Big(\I + \alpha_j \cdot \circm(\bar\Z) \cdot \bar{\bm{\Pi}}^{j} \cdot \circm(\bar\Z)^{*} \Big)}{\partial \bar\Z} 
    = \gamma_j \cdot \idft(\bar\W^{j} \cdot \bm{\Pi}^j).
\end{equation}

\end{proof}

By the above theorem, the gradient ascent update in \eqref{eqn:gradient-descent} (when applied to $\Delta R_\circm(\bar\Z, \bm{\Pi})$) can be equivalently expressed as an update in frequency domain on $\bar\V_\ell \doteq \dft(\bar\Z_\ell)$ as
\begin{equation}
    \bar\V_{\ell+1}(p) \; \propto \; \bar\V_{\ell}(p) + \eta \; \Big(\bar\cE_\ell(p) \cdot \bar\V_\ell(p) - \sum_{j=1}^k \gamma_j \bar\cC_\ell^j(p) \cdot \bar\V_\ell(p) \bm \Pi^j \Big), \quad p = 0, \ldots, T-1.
\end{equation}
Similarly, the gradient-guided feature map increment in \eqref{eqn:layer-approximate} can be equivalently expressed as an update in frequency domain on $\bar\bv_\ell \doteq \dft(\bar\z_\ell)$ as
\begin{equation}
\bar\bv_{\ell+1}(p) \propto \bar\bv_\ell(p) +  \eta \cdot  \bar\cE_{\ell}(p) \bar\bv_{\ell}(p) - \eta\cdot  \bm \sigma\Big([\bar\cC_{\ell}^{1}(p) \bar\bv_{\ell}(p), \dots, \bar\cC_{\ell}^{k}(p) \bar\bv_{\ell}(p)]\Big), \quad p = 0, \ldots, T-1,
\label{eqn:layer-approximate-spectral}
\end{equation}
subject to the constraint that $\|\bar\bv_{\ell+1}\|_F = \|\bar\z_{\ell+1}\|_F = 1$ (the first equality follows from Fact~\ref{fact:parseval}). 

We summarize the training, or construction to be more precise, of ReduNet in the spectral domain in Algorithm~\ref{alg:training-1D}.

\begin{algorithm}[t]
	\caption{\textbf{Training Algorithm} (1D Signal, Shift Invariance, Spectral Domain)}
	\label{alg:training-1D}
	\begin{algorithmic}[1]
		\REQUIRE $\bar\Z \in \Re^{C \times T \times m}$, $\bm{\Pi}$, $\epsilon > 0 $, $\lambda$, and a learning rate $\eta$.
		\STATE Set $\alpha = \frac{C}{m \epsilon^2}$, $\{\alpha_j = \frac{C}{\textsf{tr}\left(\bm{\Pi}^{j}\right)\epsilon^{2}}\}_{j=1}^k$, 
		$\{\gamma_j = \frac{\textsf{tr}\left(\bm{\Pi}^{j}\right)}{m}\}_{j=1}^k$.
		% \STATE $\bar \z_1^i \doteq \bm \tau( \big[\circm(\bm k_1) \x^i, \ldots, \circm(\bm k_C) \x^i \big]^*) \quad \in \Re^{C \times T}$
		\STATE Set $\bar\V_1 = \{\bar\bv_1^{i}(p) \in \Co^C\}_{p=0,i=1}^{T-1, m}\doteq \dft(\bar\Z) \in \Co^{C \times T \times m}$.
		\FOR{$\ell = 1, 2, \dots, L$} 
		    \STATE {\texttt{\# Step 1:Compute network parameters} $\cE_\ell$ \texttt{and} $\{\cC_\ell^j\}_{j=1}^k$.}
    		\FOR{$p = 0, 1, \dots, T-1$} 
        		\STATE % Compute $\bar\cE_{\ell}(p)\in \Co^{C \times C}$ and $\{\bar\cC_{\ell}^j(p)\in \Co^{C \times C}\}_{j=1}^{k}$ as\\
        		$\bar\cE_\ell(p) \doteq \alpha \cdot \left[\I + \alpha \cdot \bar\V_{\ell}(p) \cdot \bar\V_{\ell}(p)^* \right]^{-1}$, \\
        		$\bar\cC_{\ell}^j(p) \doteq \alpha_j \cdot\left[\I + \alpha_j \cdot \bar\V_{\ell}(p) \cdot \bm{\Pi}^j \cdot \bar\V_{\ell}(p)^*\right]^{-1}$;
    		\ENDFOR
		    \STATE {\texttt{\# Step 2:Update feature} $\bar\V$.}
    		\FOR{$i=1, \ldots, m$}
    		    \STATE \texttt{\# Compute  (approximately) projection at each frequency $p$.}
        		\FOR{$p = 0, 1, \dots, T-1$}     
    		        \STATE Compute $\{\bar\p_\ell^{ij} (p) \doteq \bar\cC^j_{\ell}(p) \cdot \bar\bv_\ell^{i}(p) \in \Co^{C \times 1}\}_{j=1}^k$;
        		\ENDFOR    	
    		    \STATE \texttt{\# Compute overall (approximately) projection by aggregating over frequency $p$.}
        		\STATE Let $\{\bar\P_\ell^{ij} = [\bar\p_\ell^{ij} (0), \ldots, \bar\p_\ell^{ij} (T-1)] \in \Co^{C \times T}\}_{j=1}^k$;
    		    \STATE \texttt{\# Compute soft   assignment from  (approximately) projection.}
            	\STATE Compute $\Big\{\widehat{\bm \pi}_\ell^{ij} = \frac{\exp(-\lambda \|\bar\P_\ell^{ij}\|_F)}{\sum_{j=1}^k \exp(-\lambda \|\bar\P_\ell^{ij}\|_F)}\Big\}_{j=1}^k$;
    		    \STATE \texttt{\# Compute update at each frequency $p$.}     	
        		\FOR{$p = 0, 1, \dots, T-1$}         		
            		\STATE $\bar\bv_{\ell+1}^{i}(p) = \mathcal{P}_{\mathbb{S}^{n-1}} \left( \bar\bv_{\ell}^{i}(p) + \eta \left(\bar\cE_\ell(p) \bar\bv_\ell^{i}(p) - \sum_{j=1}^k \gamma_j \cdot \widehat{\bm \pi}_\ell^{ij} \cdot \bar\cC_\ell^j(p) \cdot \bar\bv_\ell^{i}(p)\right)\right)$;
        		\ENDFOR
        		% \STATE $\bar\bv_\ell^{i} = \bar\bv_\ell^{i} \;/\; \|\bar\bv_\ell^{i}\|_F$;
        	\ENDFOR
    		\STATE Set $\bar\Z_{\ell+1} = \idft(\bar\V_{\ell+1})$ as the feature at the $\ell$-th layer;	
    % 		\STATE \texttt{\# Evaluate the objective value.}
    % 		\STATE $\frac{1}{2T}\sum_{p=0}^{T-1} \left( \log\det[\I + \alpha \bar\V_\ell(p) \cdot \bar\V_\ell(p)^* ] - \frac{\textsf{tr}\left(\bm{\Pi}^{j}\right)}{m}\log\det[\I + \alpha_j \bar\V_\ell(p) \cdot {\bm{\Pi}}^{j} \cdot \bar\V_\ell(p)^* ]\right)$;
		\ENDFOR
		\ENSURE features $\bar\Z_{L+1}$, the learned filters $\{\bar\E_\ell(p)\}_{\ell, p}$ and  $\{\bar\cC^j_\ell(p)\}_{j, \ell, p}$.
	\end{algorithmic}
\end{algorithm}

\section{ReduNet for 2D Circular Translation Invariance}\label{ap:2D-translation}
To a large degree, both conceptually and technically, the 2D case is very similar to the 1D case that we have studied carefully in the previous Appendix \ref{app:1D}. For the sake of consistency and completeness, we here give a brief account.

\subsection{Doubly Block Circulant Matrix}

In this section, we consider $\z$ as a 2D signal such as an image, and use $H$ and $W$ to denote its ``height'' and ``width'', respectively. 
It will be convenient to work with both a matrix representation
\begin{equation}
\z = 
\begin{bmatrix}
z(0, 0) & z(0, 1) & \cdots & z(0, W-1)\\
z(1, 0) & z(1, 1) & \cdots & z(1, W-1)\\
\vdots       & \vdots       & \ddots & \vdots        \\
z(H-1, 0) & z(H-1, 1) & \cdots & z(H-1, W-1)\\
\end{bmatrix} \quad \in \Re^{H \times W},
\end{equation}
as well as a vector representation
\begin{equation}
\begin{aligned}
\vec(\z)\doteq
\Big[z(0, 0), &\ldots, z(0, W-1), z(1, 0), \ldots,\\
&z(1, W-1), 
\ldots, z(H-1, 0), \ldots, z(H-1, W-1)\Big]^* \in \Re^{(H \times W)}.
\end{aligned}  
\end{equation}
We represent the circular translated version of $\z$ as $\trans_{p, q}(\z) \in \Re^{H \times W}$ by an amount of $p$ and $q$ on the vertical and horizontal directions, respectively. That is, we let
\begin{equation}
    \trans_{p, q}(\z) (h, w) \doteq \z(h - p \mod H, w - q \mod W),
\end{equation}
where $\forall (h, w) \in \{0, \ldots, H-1\} \times \{0, \ldots, W-1\}$. It is obvious that $\trans_{0, 0}(\z) = \z$. 
Moreover, there is a total number of $H \times W$ distinct translations given by $\{\trans_{p, q}(\z), (p, q) \in \{0, \ldots, H-1\} \times \{0, \ldots, W-1\}\}$.  We may arrange the vector representations of them into a matrix and obtain 
\begin{multline}
    \circm(\z) \doteq \Big[\vec(\trans_{0, 0}(\z)), \ldots, \vec(\trans_{0, W-1}(\z)), \vec(\trans_{1, 0}(\z)), \ldots, \vec(\trans_{1, W-1}(\z)),\\
    \ldots, \\
    \vec(\trans_{H-1, 0}(\z)), \ldots, \vec(\trans_{H-1, W-1}(\z))\Big] \in \Re^{(H\times W)\times (H\times W)}.
\end{multline}
The matrix $\circm(\z)$ is known as the \emph{doubly block circulant matrix} associated with $\z$ (see, e.g., \cite{abidi2016optimization,sedghi2018singular}). 

We now consider a multi-channel 2D signal represented as a tensor $\bar\z \in \Re^{C \times H \times W}$, where $C$ is the number of channels. 
The $c$-th channel of $\bar\z$ is represented as $\bar\z[c] \in \Re^{H \times W}$, and the $(h, w)$-th pixel is represented as $\bar\z(h, w) \in \Re^C$. 
To compute the coding rate reduction for a collection of such multi-channel 2D signals, we may flatten the tenor representation into a vector representation by concatenating the vector representation of each channel, i.e., we let
\begin{equation}
    \vec(\bar\z) = [\vec(\bar\z[1]) ^*, \ldots, \vec(\bar\z[C])^*]^* \quad \in \Re^{(C \times H \times W)}
\end{equation}
Furthermore, to obtain shift invariance for coding rate reduction, we may generate a collection of translated versions of $\bar\z$ (along two spatial dimensions). Stacking the vector representation for such translated copies as column vectors, we obtain
\begin{equation}
    \circm(\bar\z) \doteq 
    \begin{bmatrix}
    \circm(\bar\z[1])\\
    \vdots\\
    \circm(\bar\z[C])
    \end{bmatrix} \quad 
    \in \Re^{(C \times H \times W) \times (H \times W)}.
\end{equation}

We can now define a \emph{translation invariant coding rate reduction} for multi-channel 2D signals. 
Consider a collection of $m$ multi-channel 2D signals $\{\bar\z^i \in \Re^{C \times H \times W}\}_{i=1}^m$. 
Compactly representing the data by $\bar\Z \in \Re^{C \times H \times W \times m}$ where the $i$-th slice on the last dimension is $\bar\z^i$, we denote
\begin{equation}
    \circm(\bar\Z) = [\circm(\bar\z^1), \ldots, \circm(\bar\z^m)] \quad \in \Re^{(C\times H \times W) \times (H \times W \times m)}.
\end{equation}
Then, we define
\begin{multline}
\label{eq:2D-MCR2}
    \Delta R_\circm(\bar\Z, \bm{\Pi}) \doteq \frac{1}{HW}\Delta R(\circm(\bar\Z), \bar{\bm{\Pi}}) = 
    \frac{1}{2HW}\log\det \Bigg(\I + \alpha \cdot \circm(\bar\Z) \cdot \circm(\bar\Z)^{*} \Bigg) \\
    - \sum_{j=1}^{k}\frac{\gamma_j}{2HW}\log\det\Bigg(\I + \alpha_j \cdot \circm(\bar\Z) \cdot \bar{\bm{\Pi}}^{j} \cdot \circm(\bar\Z)^{*} \Bigg),
\end{multline}
where $\alpha = \frac{CHW}{mHW\epsilon^{2}} = \frac{C}{m\epsilon^{2}}$, $\alpha_j = \frac{CHW}{\textsf{tr}\left(\bm{\Pi}^{j}\right)HW\epsilon^{2}} = \frac{C}{\textsf{tr}\left(\bm{\Pi}^{j}\right)\epsilon^{2}}$, $\gamma_j = \frac{\textsf{tr}\left(\bm{\Pi}^{j}\right)}{m}$, and $\bar{\bm \Pi}^j$ is augmented membership matrix in an obvious way.

By following an analogous argument as in the 1D case, one can show that ReduNet for multi-channel 2D signals naturally gives rise to the multi-channel 2D circulant convolution operations. 
We omit the details, and focus on the construction of ReduNet in the frequency domain.

\subsection{Fast Computation in Spectral Domain}

\paragraph{Doubly block circulant matrix and 2D-DFT. }

Similar to the case of circulant matrices for 1D signals, all doubly block circulant matrices share the same set of eigenvectors, and these eigenvectors form a unitary matrix given by 
\begin{equation}
    \F \doteq \F_H \otimes \F_W  \quad \in \Co^{(H\times W) \times (H \times W)},
\end{equation}
where $\otimes$ denotes the Kronecker product and $\F_H, \F_W$ are defined as in \eqref{eq:dft-matrix}.

Analogous to Fact \ref{fact:dft}, $\F$ defines 2D-DFT as follows.

\begin{fact}[2D-DFT as matrix-vector multiplication]\label{fact:2D-dft}
The 2D-DFT of a signal $\z \in \Re^{H \times W}$ can be computed as
    \begin{equation}
        \vec(\dft(\z)) \doteq \F \cdot \vec(\z) \quad \in \Co^{(H \times W)},
    \end{equation}
where $\forall (p, q) \in \{0, \ldots, H-1\} \times \{0, \ldots, W-1\}$,
\begin{equation}\label{eq:2d-dft}
    \dft(\z)(p, q) = \frac{1}{\sqrt{H \cdot W}} \sum_{h=0}^{H-1} \sum_{w=0}^{W-1} \z(h, w) \cdot \omega_H ^{p \cdot h} \omega_W ^{q\cdot w}.
\end{equation}
The 2D-IDFT of a signal $\bv \in \Co^{H \times W}$ can be computed as
\begin{equation} 
    \vec(\idft(\bv)) \doteq \F_T^* \cdot \vec(\bv) \quad \in \Co^{(H \times W)},
\end{equation}
where $\forall (h, w) \in \{0, \ldots, H-1\} \times \{0, \ldots, W-1\}$,
\begin{equation}
\begin{aligned}
    \idft(\bv)(h, w) =& \frac{1}{\sqrt{H \cdot W}} \sum_{p=0}^{H-1}\sum_{q=0}^{W-1} v(p, q) \cdot \omega_H ^{-p \cdot h}\omega_W ^{-q \cdot w}.
\end{aligned}
\end{equation}
\end{fact}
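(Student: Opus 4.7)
The plan is to reduce the stated two-dimensional identity to the one-dimensional Fact \ref{fact:dft} by exploiting the Kronecker-product structure $\F = \F_H \otimes \F_W$ together with a vectorization identity. There are essentially two things to check: that the double sum in \eqref{eq:2d-dft} can be rewritten as a matrix-matrix-matrix product, and that vectorizing this product yields exactly $\F \cdot \vec(\z)$.

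First, I would verify the compact product form $\dft(\z) = \F_H \, \z \, \F_W$. Directly reading off the $(p,q)$-entry from the definition of $\F_H$ and $\F_W$ in \eqref{eq:dft-matrix}, and using the fact that both DFT matrices are symmetric (by inspection of \eqref{eq:dft-matrix}, $\F_T(i,j) = \frac{1}{\sqrt{T}} \omega_T^{ij}$ is symmetric in $i,j$), one obtains
\begin{equation*}
(\F_H \z \F_W)(p,q) = \tfrac{1}{\sqrt{HW}} \sum_{h=0}^{H-1} \sum_{w=0}^{W-1} \omega_H^{ph} \, \z(h,w) \, \omega_W^{qw},
\end{equation*}
which matches \eqref{eq:2d-dft}.

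Second, I would apply the standard vectorization identity. Since the $\vec$ operator adopted in the paper stacks \emph{rows} (as read off from its definition just before Fact \ref{fact:2D-dft}), the appropriate identity is $\vec(A X B) = (A \otimes B^{*}) \vec(X)$ in the row-major convention; combining this with the symmetry $\F_W^{*} = \overline{\F_W}$ requires a brief care, but one may equivalently write $\vec_{\text{row}}(AXB) = \vec_{\text{col}}(B^\top X^\top A^\top)$ and then apply the column-major identity $\vec_{\text{col}}(CYD) = (D^\top \otimes C) \vec_{\text{col}}(Y)$. Either route yields $\vec(\dft(\z)) = (\F_H \otimes \F_W) \vec(\z) = \F \vec(\z)$, proving the forward direction. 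The IDFT direction then follows at once: $\F_H$ and $\F_W$ are unitary (noted in Section \ref{ap:1D-shift}), hence so is $\F = \F_H \otimes \F_W$ with $\F^{-1} = \F^{*} = \F_H^{*} \otimes \F_W^{*}$; applying the same vec-Kronecker identity to $\F_H^{*} \z \F_W^{*}$ and expanding gives the IDFT double sum with conjugate roots $\omega_H^{-ph} \omega_W^{-qw}$.

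The main obstacle is purely bookkeeping: getting the order of the Kronecker factors correct so that the result reads $\F_H \otimes \F_W$ rather than the mirror $\F_W \otimes \F_H$. Because the paper stacks entries row-by-row rather than the more common column-by-column, I would explicitly state the row-major vec identity once at the start and apply it consistently; beyond that caveat, the proof is just assembling the one-dimensional Fact \ref{fact:dft} along each axis.
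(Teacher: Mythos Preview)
The paper does not prove this statement: it is presented as a ``Fact'' (analogous to the 1D Fact~\ref{fact:dft}) and simply stated without justification as standard background. Your proposal therefore supplies more than the paper does, and the argument you give is correct: the matrix form $\dft(\z) = \F_H\,\z\,\F_W$ follows from the entrywise definition of $\F_T$, the symmetry $\F_W^\top = \F_W$ lets the row-major vec identity $\vec_{\mathrm{row}}(AXB) = (A\otimes B^\top)\vec_{\mathrm{row}}(X)$ produce exactly $(\F_H\otimes\F_W)\vec(\z)$, and unitarity of the Kronecker product then gives the inverse direction. Your attention to the row-major convention is the only genuinely delicate point, and you handle it correctly.
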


Analogous to Fact \ref{fact:2d-dft-circulant}, $\F$ relates $\dft(\z)$ and $\circm(\z)$ as follows.
\begin{fact}[2D-DFT are eigenvalues of the doubly block circulant matrix] Given a signal $\z \in \Co^{H \times W}$, we have
\begin{equation}\label{eq:dft-circulant-v1}
    \F \cdot \circm(\z) \cdot \F^* = \diag(\vec(\dft(\z)))  \quad \mbox{or} \quad \circm(\z) = \F^* \cdot \diag(\vec(\dft(\z))) \cdot \F.
\end{equation}
\label{fact:2d-dft-circulant}
\end{fact}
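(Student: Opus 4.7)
The plan is to reduce the two-dimensional claim to its one-dimensional analog (Fact~\ref{fact:dft-circulant}) by exploiting the Kronecker-product structure of both $\F = \F_H \otimes \F_W$ and the doubly block circulant matrix $\circm(\z)$.

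\textbf{Step one: expose the block structure.} I would first verify, directly from the definitions of $\vec$ and $\trans_{p,q}$, that $\circm(\z)$ is a block circulant matrix of $H \times H$ blocks of size $W \times W$, and that each block is itself a $W \times W$ circulant matrix. Concretely, letting $\z_r \doteq [z(r, 0), \ldots, z(r, W-1)]^*$ denote the $r$-th row of $\z$ and $\bm P_H \in \R^{H \times H}$ denote the elementary cyclic shift, one checks that
\[
\circm(\z) \;=\; \sum_{r=0}^{H-1} \bm P_H^r \otimes \circm(\z_r).
\]
This identity follows from the fact that $\trans_{p,q}$ shifts the row index and column index independently, so the $(h,p)$-th $W \times W$ block of $\circm(\z)$ is exactly $\circm(\z_{h - p \bmod H})$.

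\textbf{Step two: diagonalize via Kronecker.} Using the mixed-product identity $(\bm A \otimes \bm B)(\bm C \otimes \bm D) = (\bm A \bm C) \otimes (\bm B \bm D)$ and unitarity of $\F_H$ and $\F_W$, I get
\[
\F \, \circm(\z) \, \F^* \;=\; \sum_{r=0}^{H-1} \bigl(\F_H \bm P_H^r \F_H^*\bigr) \otimes \bigl(\F_W \, \circm(\z_r) \, \F_W^*\bigr).
\]
Applying Fact~\ref{fact:dft-circulant} in dimension $W$ gives $\F_W \, \circm(\z_r) \, \F_W^* = \diag(\dft(\z_r))$; applying the same fact in dimension $H$ to the circulant matrix $\bm P_H^r = \circm(\mathbf{e}_r)$ shows that $\F_H \bm P_H^r \F_H^*$ is the diagonal matrix whose $j$-th entry is $\omega_H^{j r}$. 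Every summand is therefore diagonal, which establishes that $\F \, \circm(\z) \, \F^*$ is diagonal.

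\textbf{Step three: identify the eigenvalues.} Reading off the $(j_1 W + j_2)$-th diagonal entry of $\F \, \circm(\z) \, \F^*$ yields
\[
\sum_{r=0}^{H-1} \omega_H^{j_1 r} \, \dft(\z_r)(j_2),
\]
and substituting the definition of the 1D DFT and interchanging the order of summation reproduces, up to the normalization convention of Eq.~\eqref{eq:2d-dft}, the value $\dft(\z)(j_1, j_2)$. This gives $\F \, \circm(\z) \, \F^* = \diag(\vec(\dft(\z)))$, and the second form of the claim follows immediately from unitarity of $\F$.

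The main obstacle I anticipate is purely bookkeeping: one must align the row-major vectorization used by $\vec$, the ordering of translated copies used to define $\circm$, and the Kronecker-product ordering inside $\F = \F_H \otimes \F_W$, so that the $(j_1, j_2)$-indexed diagonal entry genuinely corresponds to $\dft(\z)(j_1, j_2)$ and not to some permutation. Once the conventions are carefully pinned down, the result is really just a tensorization of the 1D fact and the remaining algebra is routine.
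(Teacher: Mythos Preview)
The paper does not actually prove this statement: it is recorded as a ``Fact'' (a known background result about doubly block circulant matrices and the 2D DFT) and is stated without justification, just as the 1D analog Fact~\ref{fact:dft-circulant} is. So there is no paper proof to compare against.

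Your proposed argument is correct and is the standard one: write $\circm(\z) = \sum_{r} \bm P_H^r \otimes \circm(\z_r)$, conjugate by $\F = \F_H \otimes \F_W$ using the mixed-product identity, and invoke the 1D diagonalization on each tensor factor. The only caution is the one you already flag: the paper's $\vec$ is row-major, its ordering of columns in $\circm(\z)$ is $(p,q)$-lexicographic, and $\F = \F_H \otimes \F_W$; these conventions are mutually consistent with your Kronecker decomposition, so the bookkeeping goes through. Your hedge ``up to the normalization convention'' is also appropriate, since the paper's unitary scaling of $\F_T$ makes the exact constants in the eigenvalue identification slightly delicate (this is already present in the 1D Fact~\ref{fact:dft-circulant} and is inherited here).
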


\paragraph{Doubly block circulant matrix and 2D-DFT for multi-channel signals.}

We now consider multi-channel 2D signals $\bar\z \in \Re^{C \times H \times W}$. 
Let $\dft(\bar\z) \in \Co^{C \times H \times W}$ be a matrix where the $c$-th slice on the first dimension is the DFT of the corresponding signal $\z[c]$.
That is, $\dft(\bar\z)[c] = \dft(\z[c]) \in \Co^{H \times W}$. 
We use $\dft(\bar\z)(p, q) \in \Co^{C}$ to denote slicing of $\bar\z$ on the frequency dimensions. 

By using Fact~\ref{fact:2d-dft-circulant}, $\circm(\bar\z)$ and $\dft(\bar\z)$ are related as follows:
\begin{equation}
\begin{aligned}
\circm(\bar\z) 
&=
\begin{bmatrix}
\F^* \cdot \diag(\vec(\dft(\z[1])))  \cdot \F\\
\vdots \\
\F^* \cdot \diag(\vec(\dft(\z[C]))) \cdot \F\\
\end{bmatrix} 
\\
&=
\begin{bmatrix}
    \F^*  & \cdots & \0 \\
    \0    & \cdots & \0 \\
    \vdots  & \ddots & \vdots \\
    \0   & \cdots & \F^* \\
\end{bmatrix}
\cdot
\begin{bmatrix}
    \diag(\vec(\dft(\z[1])))  \\
    \diag(\vec(\dft(\z[2])))  \\
    \vdots \\
    \diag(\vec(\dft(\z[C])))  \\
\end{bmatrix}
\cdot \F.
\end{aligned}
\end{equation}
Similar to the 1D case, this relation can be leveraged to produce a fast implementation of ReduNet in the spectral domain. 

\paragraph{Translation-invariant ReduNet in the Spectral Domain. }

Given a collection of multi-channel 2D signals $\bar\Z \in \Re^{C \times H \times W \times m}$, we denote
\begin{equation}
    \dft(\bar\Z)(p, q) \doteq [\dft(\bar\z^1)(p, q), \ldots, \dft(\bar\z^m)(p, q)] \quad \in \Re^{C \times m}.
\end{equation}
We introduce the notations $\bar\cE(p, q) \in \Re^{C \times C \times H \times W}$ and $\bar\cC^j(p, q) \in \Re^{C \times C \times H \times W}$ given by
\begin{eqnarray}
    \bar\cE(p, q) &\doteq& \alpha \cdot \left[\I + \alpha \cdot \dft(\bar\Z)(p, q) \cdot \dft(\bar\Z)(p, q)^* \right]^{-1} \quad \in \Co^{C\times C}, \\
    \bar\cC^j(p, q) &\doteq& \alpha_j \cdot\left[\I + \alpha_j \cdot \dft(\bar\Z)(p, q) \cdot \bm{\Pi}_j \cdot \dft(\bar\Z)(p, q)^*\right]^{-1} \quad \in \Co^{C\times C}.
\end{eqnarray}
In above, $\bar\cE(p, q)$ (resp., $\bar\cC^j(p, q)$) is the $(p, q)$-th slice of $\bar\cE$ (resp., $\bar\cC^j$) on the last two dimensions. 
Then, the gradient of $\Delta R_\circm(\bar\Z, \bm{\Pi})$ with respect to $\bar\Z$ can be calculated by the following result.

\begin{theorem}[Computing multi-channel 2D convolutions $\bar{\bm E}$ and $\bar{\bm C}^j$]\label{thm:2D-convolution}
Suppose $\bar\U \in \Co^{C \times H \times W \times m}$ and $\bar\W^{j} \in \Co^{C \times H \times W \times m}, j=1,\ldots, k$ are given by 
\begin{eqnarray}
    \bar\U(p, q) &\doteq& \bar\cE(p, q) \cdot \dft(\bar\Z)(p, q), \\
    \bar\W^{j}(p, q) &\doteq& \bar\cC^j(p, q) \cdot \dft(\bar\Z)(p, q), \quad j=1,\ldots, k,
\end{eqnarray}
for each  $(p, q) \in \{0, \ldots, H-1\}\times \{0, \ldots, W-1\}$. 
Then, we have
\begin{eqnarray}
    \frac{1}{2HW}\frac{\partial \log \det (\I + \alpha \cdot \circm(\bar\Z) \circm(\bar\Z)^{*} )}{\partial \bar\Z} &=& \idft(\bar\U), \\
    \frac{1}{2HW}\frac{\partial \left( \gamma_j  \log \det (\I + \alpha_j \cdot \circm(\bar\Z) \bm \bar{\bm \Pi}^j \circm(\bar\Z)^{*} )  \right)}{\partial \bar\Z}&=&
    \gamma_j \cdot \idft(\bar\W^{j} \bm \Pi^j).
\end{eqnarray}
\end{theorem}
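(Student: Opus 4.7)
The plan is to mirror the proof of Theorem~\ref{thm:1D-convolution} for the 1D case, replacing the DFT matrix $\F_T$ and circulant matrices with their 2D counterparts $\F = \F_H \otimes \F_W$ and doubly block circulant matrices. First I would use Fact~\ref{fact:2d-dft-circulant} to simultaneously diagonalize every $\circm(\bar\z[c])$ block of $\circm(\bar\z)$, writing
\begin{equation*}
\circm(\bar\z) \;=\; \bigl(\I_C \otimes \F^*\bigr) \cdot \begin{bmatrix} \diag(\vec(\dft(\z[1]))) \\ \vdots \\ \diag(\vec(\dft(\z[C]))) \end{bmatrix} \cdot \F.
\end{equation*}
As in the 1D derivation, there is a permutation matrix $\P$ that reorders the stacked diagonals into a block-diagonal matrix whose $(p,q)$-th block on the diagonal is precisely $\dft(\bar\z)(p,q) \in \Co^{C}$. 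Consequently, the covariance $\sum_i \circm(\bar\z^i)\circm(\bar\z^i)^*$ conjugated by $\F$ (and permuted) becomes block diagonal with $HW$ blocks of size $C \times C$, each equal to $\sum_i \dft(\bar\z^i)(p,q)\cdot\dft(\bar\z^i)(p,q)^*$.

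Second, I would invoke this block-diagonal structure to invert $\I + \alpha \circm(\bar\Z)\circm(\bar\Z)^*$ one $(p,q)$-frequency block at a time; by construction, the $(p,q)$-th $C\times C$ block of the inverse is exactly $\bar\cE(p,q)$. Pushing the factors $(\I_C\otimes\F^*)\P$ and $\P^*(\I_C\otimes\F)$ back to the outside recovers a doubly block circulant operator $\bar\E$ acting on $\vec(\bar\Z)$. Third, I would compute the gradient by the chain rule: the derivative of $\frac{1}{2}\log\det(\I + \alpha\,\circm(\bar\Z)\circm(\bar\Z)^*)$ with respect to $\circm(\bar z^i)$ is $\bar\E\cdot\circm(\bar z^i)$, and differentiating $\circm(\bar z^i)$ with respect to $\bar z^i$ contributes an extra factor of $HW$ because each entry of $\bar z^i$ appears in $HW$ distinct translated copies inside $\circm(\bar z^i)$. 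This extra $HW$ exactly cancels the $\frac{1}{HW}$ prefactor in the definition \eqref{eq:2D-MCR2} of $\Delta R_\circm$. Substituting the spectral-domain expression for $\bar\E$, the resulting per-sample gradient is $(\I_C\otimes \F^*)\P\cdot\mathrm{blockdiag}\{\bar\cE(p,q)\dft(\bar z^i)(p,q)\}\cdot\P^*(\I_C\otimes \F)$ applied to $\bar z^i$, which collapses to $\idft(\bar\U^i)$ once the inverse 2D-DFT is identified. Collecting over $i=1,\ldots,m$ yields the first identity, and the computation for $\bar\cC^j$ is identical after inserting the indicator factor $\bm\Pi^j$, producing the second identity with the extra right-multiplication by $\bm\Pi^j$ and prefactor $\gamma_j$.

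The main obstacle is purely bookkeeping: correctly tracking how the 2D permutation $\P$ reshuffles the two-level block structure (outer indexing by channel $c$, inner by spatial location $(h,w)$) so that, after conjugation by $\F=\F_H\otimes\F_W$, the resulting matrix decomposes into $HW$ blocks indexed by $(p,q)$ of size $C\times C$. Once this permutation identity is checked carefully for the 2D case (it is a direct analog of the 1D permutation in \eqref{eq:permute-dft}, now applied over a two-dimensional frequency grid), every subsequent step is a routine analog of the 1D derivation, and no additional analytical difficulty arises. I do not anticipate needing any ingredient beyond Fact~\ref{fact:2d-dft-circulant}, the Kronecker-product identity for $\F$, and the chain-rule observation about the $HW$-fold multiplicity of each entry in $\circm(\bar z^i)$.
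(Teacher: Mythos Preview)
Your proposal is correct and follows essentially the same approach as the paper. The paper in fact does not give a separate detailed proof of Theorem~\ref{thm:2D-convolution}; it simply remarks that ``the calculation of the derivatives for the 2D case is analogous to that of the 1D case,'' and your outline spells out precisely that analogy---replacing $\F_T$ by $\F=\F_H\otimes\F_W$, circulant by doubly block circulant, and tracking the permutation and the $HW$-multiplicity factor exactly as in the 1D argument.
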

This result shows that the calculation of the derivatives for the 2D case is analogous to that of the 1D case. 
Therefore, the construction of the ReduNet for 2D translation invariance can be performed using Algorithm~\ref{alg:training-1D} with straightforward extensions. 

\newpage

\section{Additional Simulations and Experiments for MCR$^2$}\label{ap:additional-exp}

\subsection{Simulations - Verifying Diversity Promoting Properties of MCR$^2$}
As proved in Theorem~\ref{thm:maximal-rate-reduction}, the proposed MCR$^2$ objective promotes within-class diversity. In this section, we use simulated data to verify the diversity promoting property of MCR$^2$. As shown in Table~\ref{table:simulations}, we calculate our proposed MCR$^2$ objective on simulated data. We observe that orthogonal subspaces with \textit{higher} dimension achieve higher MCR$^2$ value, which is consistent with our theoretical analysis in Theorem~\ref{thm:maximal-rate-reduction}.

\begin{table}[h]
\begin{center}
\begin{small}
\begin{sc}
\begin{tabular}{l | c c c c c}
\toprule
& ${R}$ &  ${R}_{c}$ &  $\Delta{R}$ &   Orthogonal? & Output Dimension\\
\midrule
Random Gaussian & 552.70 & 193.29 & 360.41 & {\cmark} & 512 \\
Subspace  ($d_j = 50$) & 545.63 & 108.46 & \textbf{437.17} & {\cmark} & 512 \\
Subspace  ($d_j = 40$)  & 487.07 & 92.71 & 394.36 & {\cmark} & 512 \\
Subspace  ($d_j = 30$) & 413.08 & 74.84 & 338.24 & {\cmark} & 512 \\
Subspace  ($d_j = 20$) & 318.52 &	54.48 &	264.04 & {\cmark} & 512 \\
Subspace  ($d_j = 10$) & 195.46 &	30.97 &	164.49 & {\cmark} & 512 \\
Subspace  ($d_j = 1$) & 31.18 &	4.27 &	26.91 & {\cmark} & 512 \\
\midrule
Random Gaussian & 292.71 &	154.13 &	138.57 & {\cmark} & 256 \\
Subspace  ($d_j = 25$) & 288.65 &	56.34 &	\textbf{232.31} & {\cmark} & 256 \\
Subspace  ($d_j = 20$)  & 253.51 &	47.58 &	205.92 & {\cmark} & 256 \\
Subspace  ($d_j = 15$) & 211.97 &	38.04 &	173.93 & {\cmark} & 256 \\
Subspace  ($d_j = 10$) & 161.87 &	27.52 &	134.35 & {\cmark} & 256 \\
Subspace  ($d_j = 5$) & 98.35 &	15.55 &	82.79 & {\cmark} & 256 \\
Subspace  ($d_j = 1$) & 27.73 &	3.92 &	23.80 & {\cmark} & 256 \\
\midrule
Random Gaussian & 150.05 &	110.85 &	39.19 & {\cmark} & 128 \\
Subspace  ($d_j = 12$) & 144.36 &	27.72 &	\textbf{116.63} & {\cmark} & 128 \\
Subspace  ($d_j = 10$) & 129.12 &	24.06 &	105.05 & {\cmark} & 128 \\
Subspace  ($d_j = 8$) & 112.01 &	20.18 &	91.83 & {\cmark} & 128 \\
Subspace  ($d_j = 6$) & 92.55 &	16.04 &	76.51 & {\cmark} & 128 \\
Subspace  ($d_j = 4$) & 69.57 &	11.51 &	58.06 & {\cmark} & 128 \\
Subspace  ($d_j = 2$) & 41.68 &	6.45 &	35.23 & {\cmark} & 128 \\
Subspace  ($d_j = 1$) & 24.28 &	3.57 &	20.70 & {\cmark} & 128 \\
\midrule
Subspace  ($d_j = 50$) & 145.60 &	75.31 &	70.29 & {\xmark} & 128 \\
Subspace  ($d_j = 40$) & 142.69 &	65.68 &	77.01 & {\xmark} & 128 \\
Subspace  ($d_j = 30$) & 135.42 &	54.27 &	81.15 & {\xmark} & 128 \\
Subspace  ($d_j = 20$) & 120.98 &	40.71 &	80.27 & {\xmark} & 128 \\
Subspace  ($d_j = 15$) & 111.10 &	32.89 &	78.21 & {\xmark} & 128 \\
Subspace  ($d_j = 12$) & 101.94 &	27.73 &	74.21 & {\xmark} & 128 \\
\bottomrule
\end{tabular}
\end{sc}
\end{small}
\caption{\small \textbf{MCR$^2$ objective on simulated data.} We evaluate the proposed MCR$^2$ objective defined in \eqref{eqn:maximal-rate-reduction}, including ${R}$,  ${R}_{c}$, and  $\Delta{R}$, on simulated data. The output dimension $d$ is set as 512, 256, and 128. We set the batch size as $m=1000$ and random assign the label of each sample from $0$ to $9$, i.e., 10 classes.  We generate two types of data: 1) (\textsc{Random Gaussian}) For comparison with data without structures, for each class we generate random vectors sampled from Gaussian distribution (the dimension is set as the output dimension $d$) and normalize each vector to be on the unit sphere. 2) (\textsc{Subspace}) For each class, we generate vectors sampled from its corresponding subspace with  dimension $d_j$ and normalize each vector to be on the unit sphere. We consider the subspaces from different classes are orthogonal/nonorthogonal to each other.}
\label{table:simulations}
\end{center}
\vskip -0.3in
\end{table}

\subsection{Implementation Details}\label{sec:appendix-mcr2-exp}
\paragraph{Training Setting.} We mainly use ResNet-18~\citep{he2016deep} in our experiments, where we use 4 residual blocks with layer widths $\{64, 128, 256, 512\}$.  The implementation of network architectures used in this paper are mainly based on this github repo.\footnote{\url{https://github.com/kuangliu/pytorch-cifar}} For data augmentation in the supervised setting, we apply the \texttt{RandomCrop} and \texttt{RandomHorizontalFlip}. For the supervised setting, we train the models for 500 epochs and use stage-wise learning rate decay every 200 epochs (decay by a factor of 10). For the supervised setting, we train the models for 100 epochs and use stage-wise learning rate decay at 20-th epoch and 40-th epoch (decay by a factor of 10).

\paragraph{Evaluation Details.} For the supervised setting, we set the number of principal components for nearest subspace classifier $r_j = 30$. We also study the effect of $r_j$ in Section~\ref{sec:appendix-subsec-sup}. For the CIFAR100 dataset, we consider 20 superclasses and set the cluster number as 20, which is the same setting as in \cite{chang2017deep, wu2018unsupervised}.

\paragraph{Datasets.} We apply the default datasets in PyTorch, including CIFAR10, CIFAR100, and STL10.

\paragraph{Augmentations $\mathcal{T}$ used for the self-supervised setting.} We apply the same data augmentation for CIFAR10 dataset and CIFAR100 dataset and the pseudo-code is as follows.

\begin{tcolorbox}
\begin{footnotesize}
\begin{verbatim}
import torchvision.transforms as transforms
TRANSFORM = transforms.Compose([
    transforms.RandomResizedCrop(32),
    transforms.RandomHorizontalFlip(),
    transforms.RandomApply([transforms.ColorJitter(0.4, 0.4, 0.4, 0.1)], p=0.8),
    transforms.RandomGrayscale(p=0.2),
    transforms.ToTensor()])
\end{verbatim}
\end{footnotesize}
\end{tcolorbox}

\noindent
The augmentations we use for STL10 dataset and the pseudo-code is as follows.

\begin{tcolorbox}
\begin{footnotesize}
\begin{verbatim}
import torchvision.transforms as transforms
TRANSFORM = transforms.Compose([
    transforms.RandomResizedCrop(96),
    transforms.RandomHorizontalFlip(),
    transforms.RandomApply([transforms.ColorJitter(0.8, 0.8, 0.8, 0.2)], p=0.8),
    transforms.RandomGrayscale(p=0.2),
    GaussianBlur(kernel_size=9),
    transforms.ToTensor()])
\end{verbatim}
\end{footnotesize}
\end{tcolorbox}

\paragraph{Cross-entropy training details.} For CE models presented in Table~\ref{table:label-noise}, Figure \ref{fig:pca-ce-1}-\ref{fig:pca-ce-3}, and Figure~\ref{fig:heatmap-plot}, we use the same network architecture, ResNet-18~\citep{he2016deep}, for cross-entropy training on CIFAR10, and set the output dimension as 10 for the last layer.  We apply SGD, and set learning rate $\texttt{lr=0.1}$, momentum $\texttt{momentum=0.9}$, and weight decay $\texttt{wd=5e{-4}}$. We set the total number of training epoch as 400, and use stage-wise learning rate decay every 150 epochs (decay by a factor of 10).

\subsection{Additional Experimental Results}

\subsubsection{PCA Results of MCR$^2$ Training versus Cross-Entropy Training}\label{sec:subsec-pca}

\begin{figure*}[h]
\subcapcentertrue
  \begin{center}
    \subfigure[\label{fig:pca-mcr-1}PCA: MCR$^2$ training learned features for overall data (first 30 components).]{\includegraphics[width=0.31\textwidth]{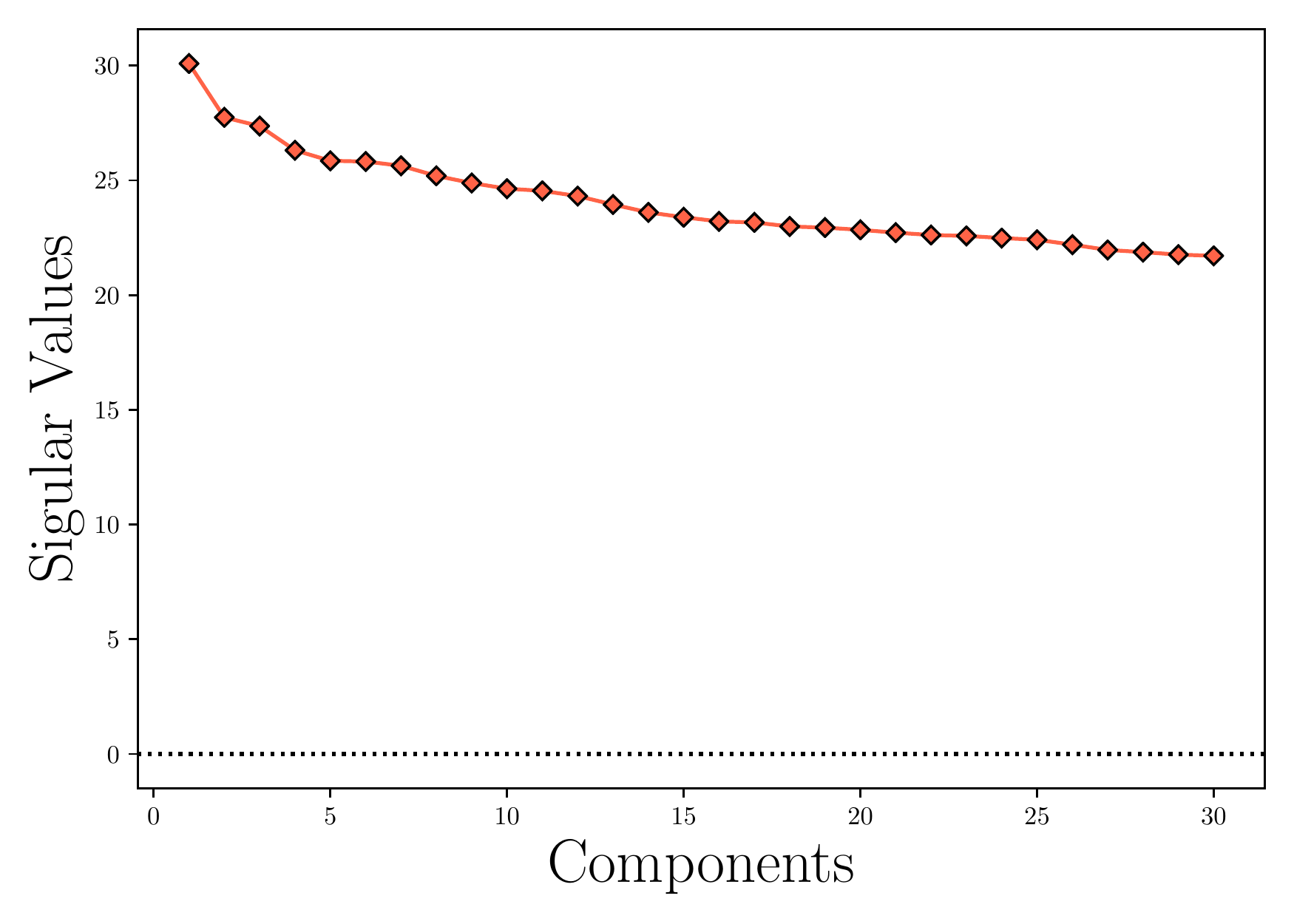}}
    \subfigure[\label{fig:pca-mcr-2}PCA: MCR$^2$ training learned features for overall data.]{\includegraphics[width=0.31\textwidth]{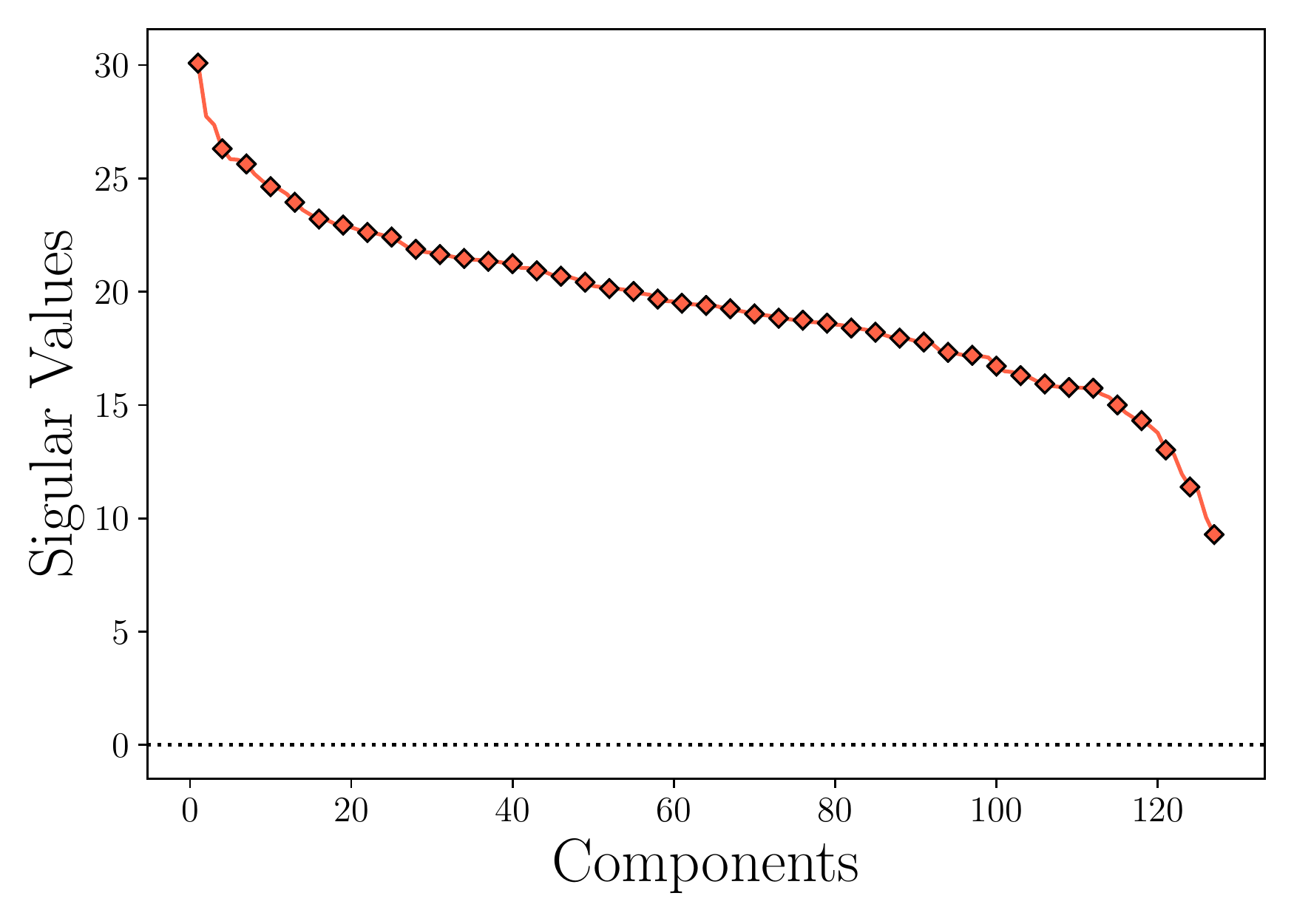}}
    \subfigure[\label{fig:pca-mcr-3}PCA: MCR$^2$ training learned features for every class.]{\includegraphics[width=0.31\textwidth]{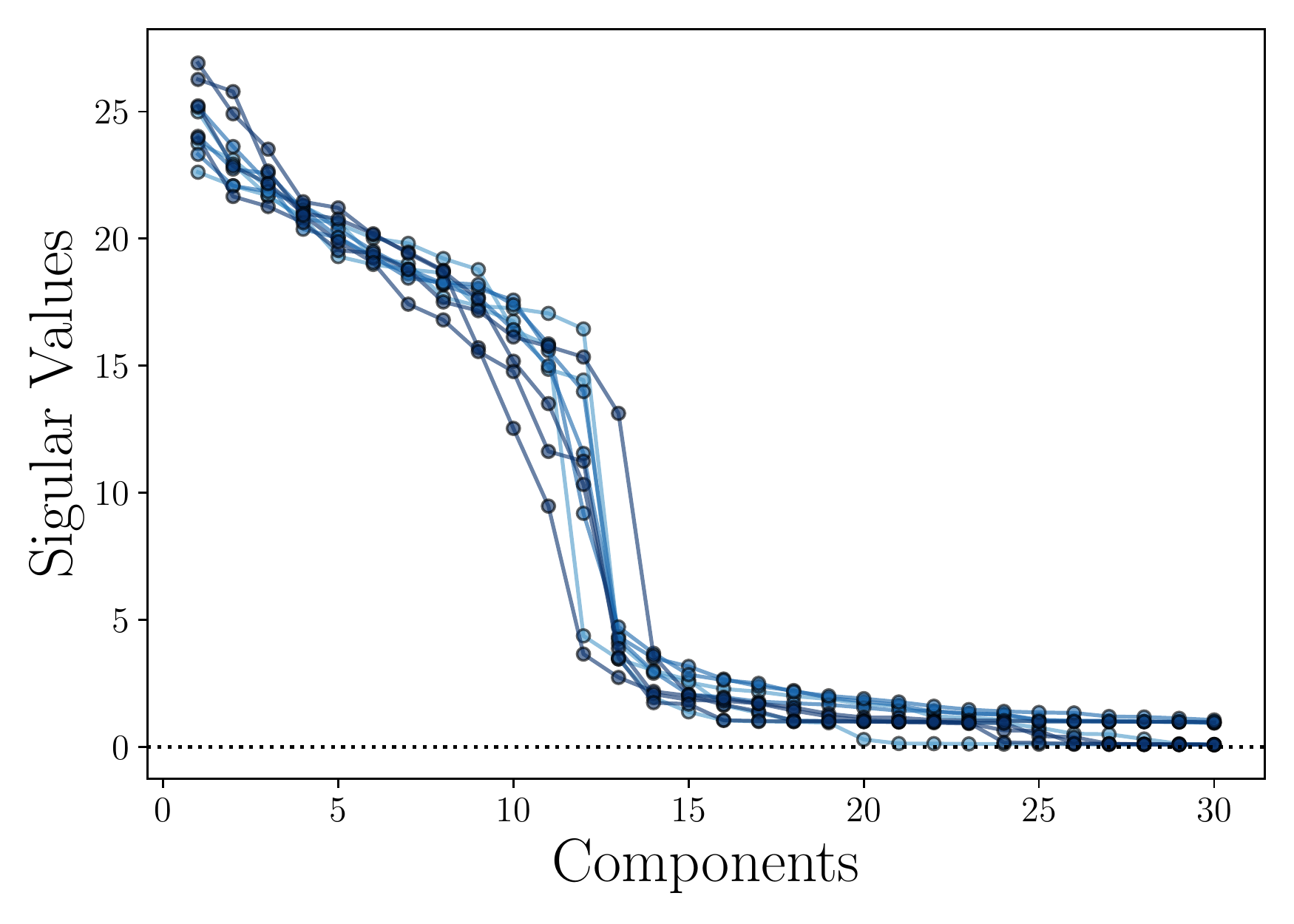}}
    \subfigure[\label{fig:pca-ce-1}PCA: cross-entropy training learned features for overall data (first 30 components).]{\includegraphics[width=0.31\textwidth]{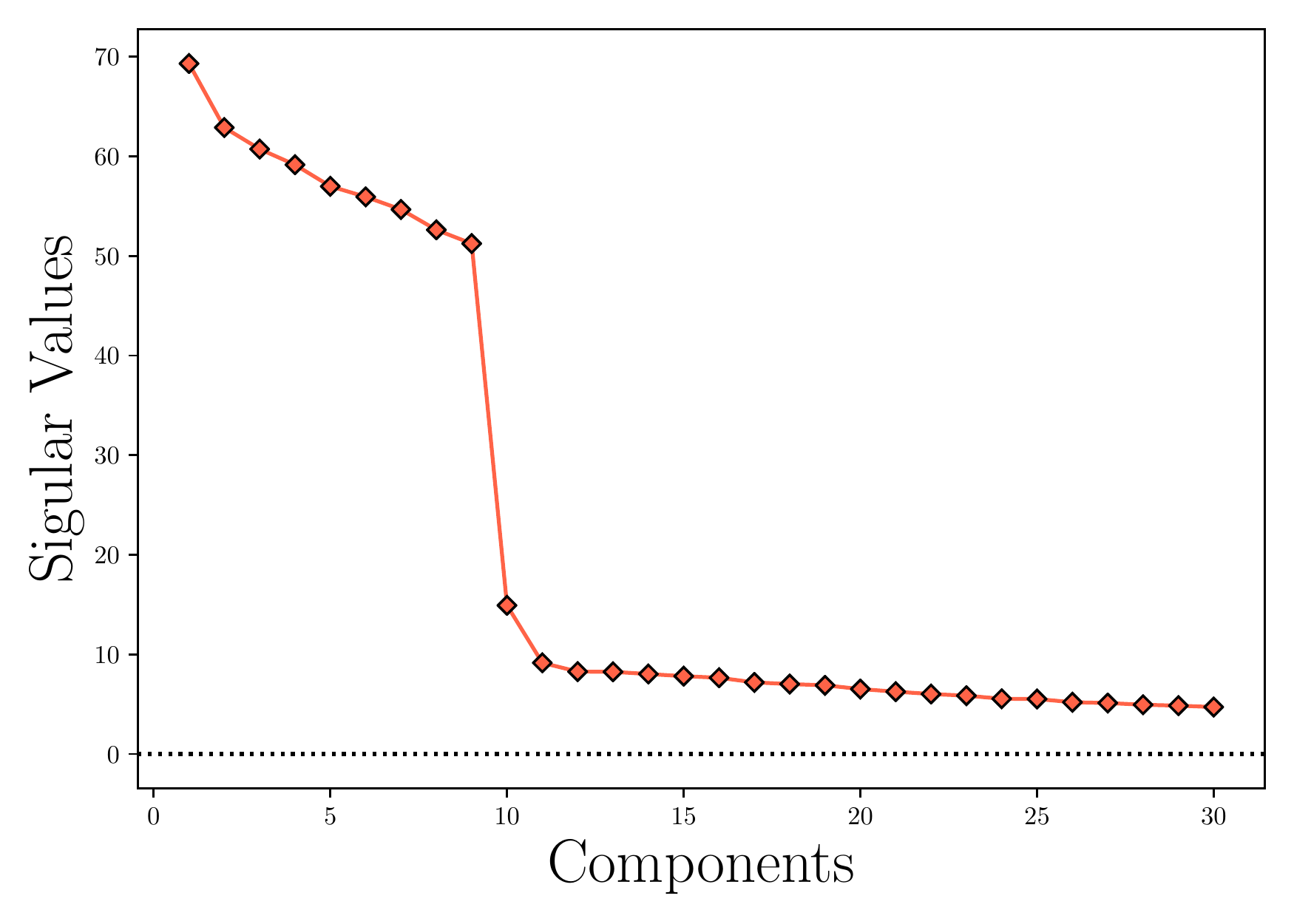}}
    \subfigure[\label{fig:pca-ce-2}PCA: cross-entropy training learned features for overall data.]{\includegraphics[width=0.31\textwidth]{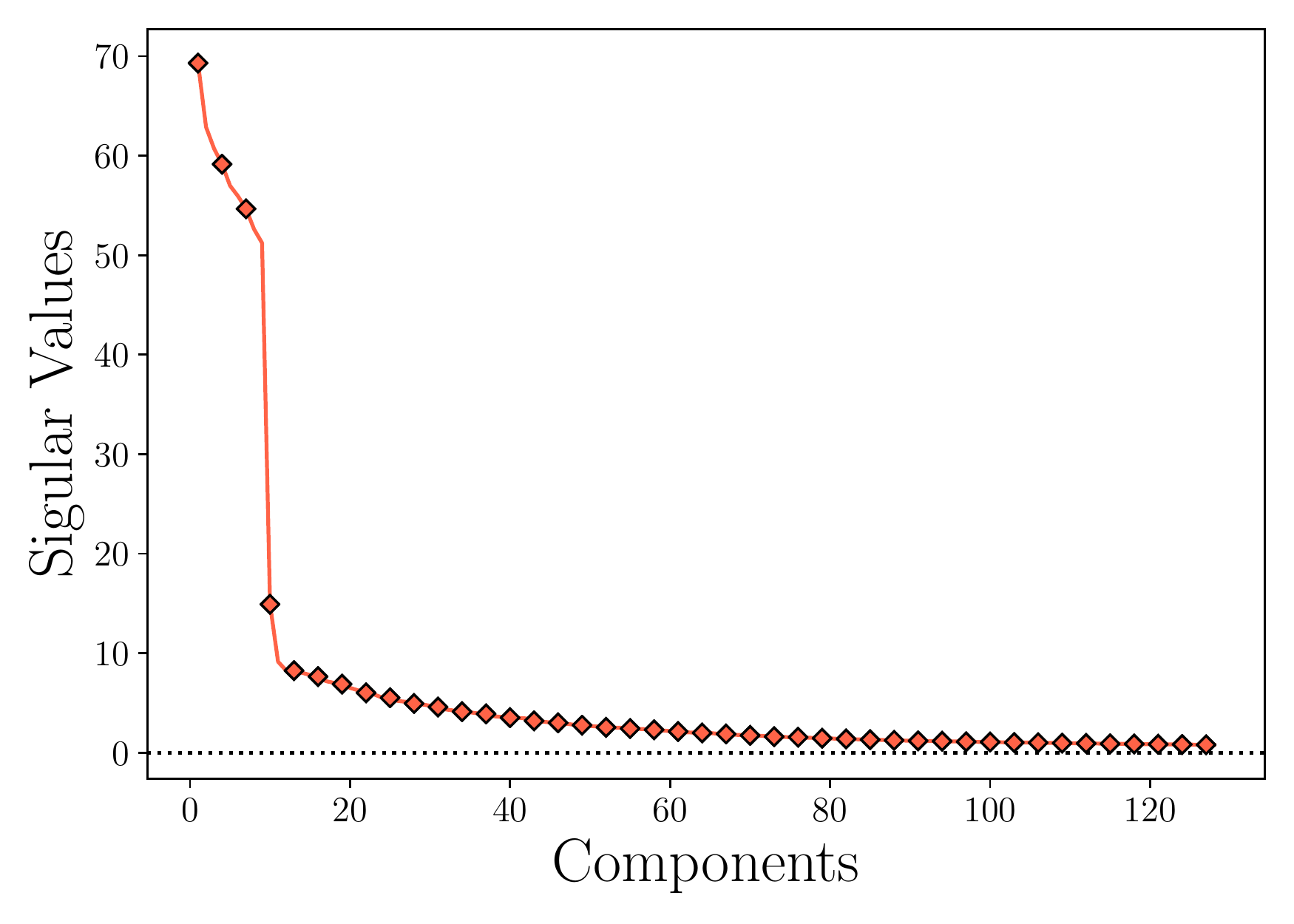}}
    \subfigure[\label{fig:pca-ce-3}PCA: cross-entropy training learned features for every class.]{\includegraphics[width=0.31\textwidth]{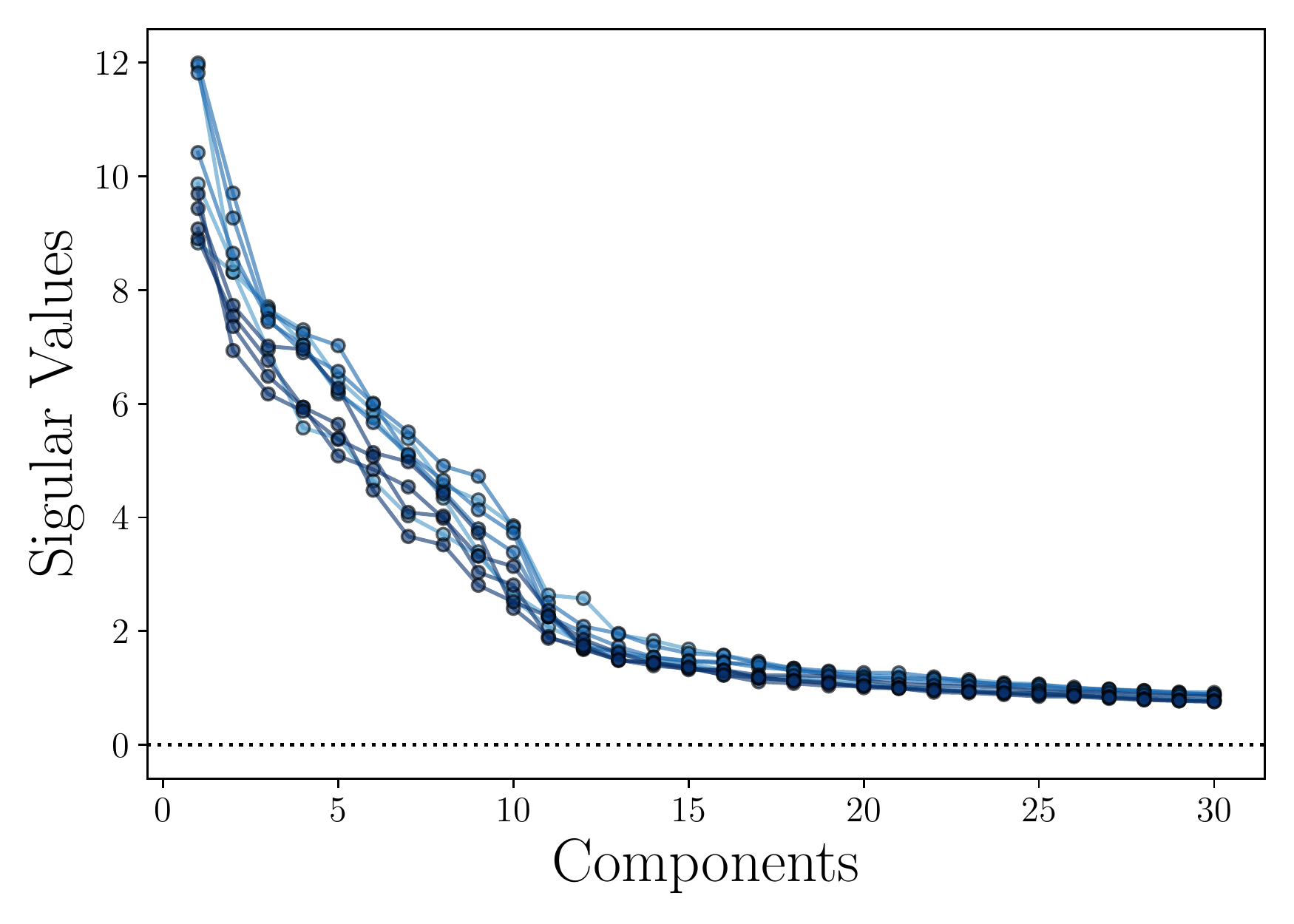}}
    \vskip -0.05in
    \caption{\small Principal component analysis (PCA) of learned representations for the  MCR$^2$ trained model (\textbf{first row}) and the cross-entropy trained model (\textbf{second row}).}
    \label{fig:pca-plot}
  \end{center}
%   \vskip -0.1in
\end{figure*}

\begin{figure*}[h]
  \begin{center}
    \subfigure{\includegraphics[width=0.47\textwidth]{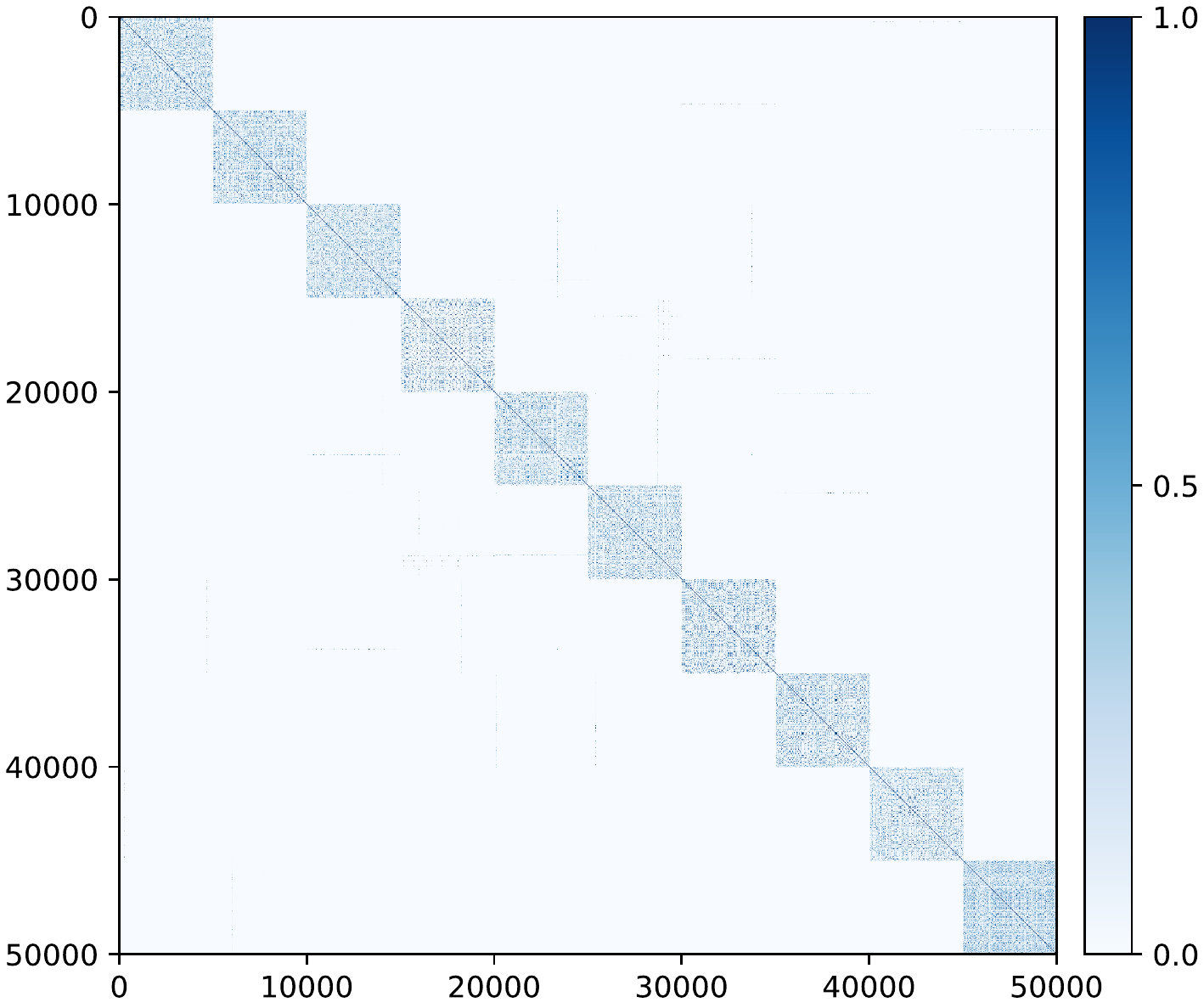}}
    \hspace{0.25cm}
    \subfigure{\includegraphics[width=0.47\textwidth]{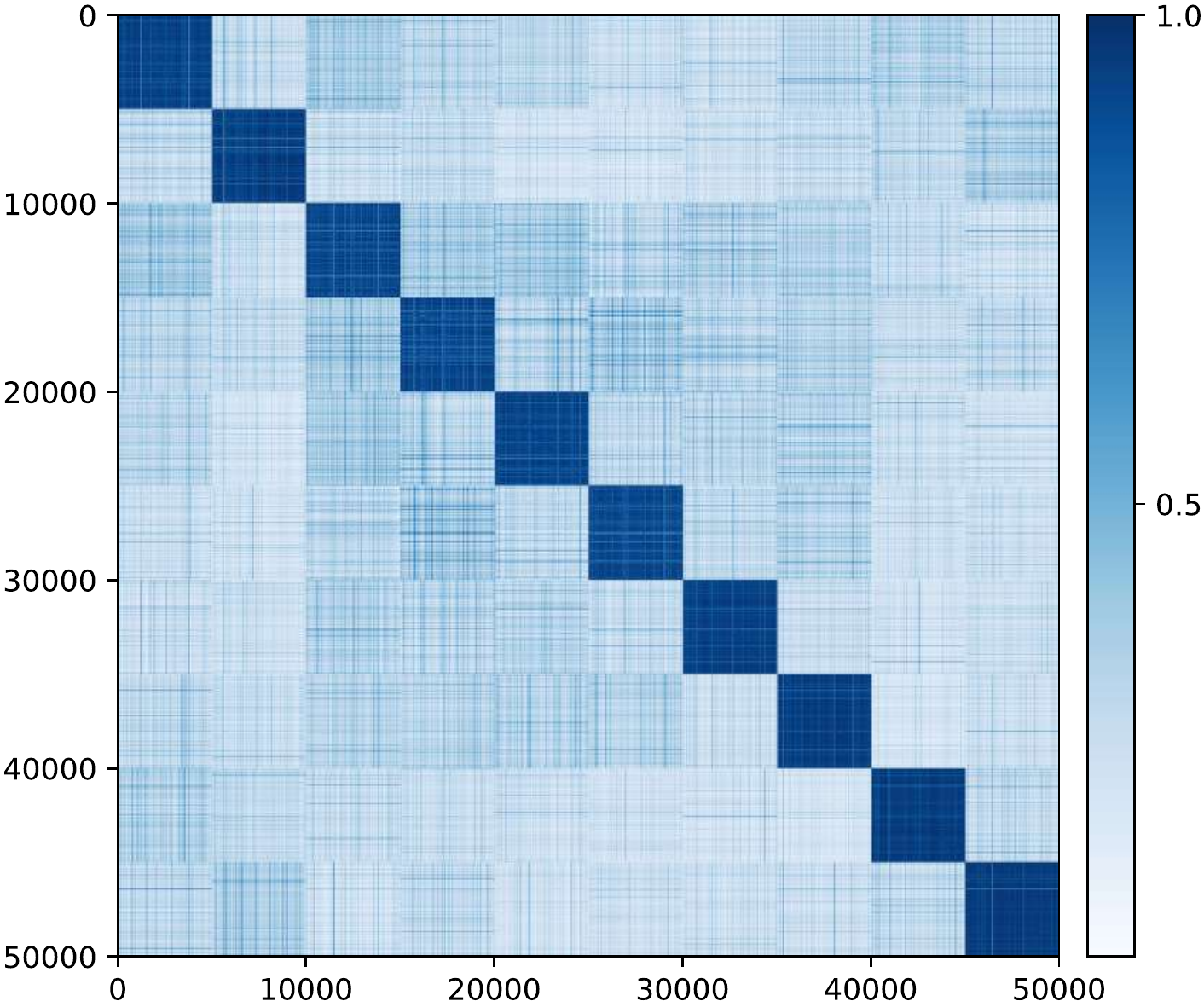}}
    \vskip -0.1in
    \caption{\small Cosine similarity between learned features by using the MCR$^2$ objective  (\textbf{left}) and CE loss (\textbf{right}).}
    \label{fig:heatmap-plot}
  \end{center}
  \vskip -0.1in
\end{figure*}

\begin{figure*}[!h]
\subcapcentertrue
\begin{center}
    \subfigure[\label{fig:visual-overall-mcr}10  representative images from each class based on top-10 principal components of the SVD of learned representations by MCR$^2$.]{\includegraphics[width=0.46\textwidth]{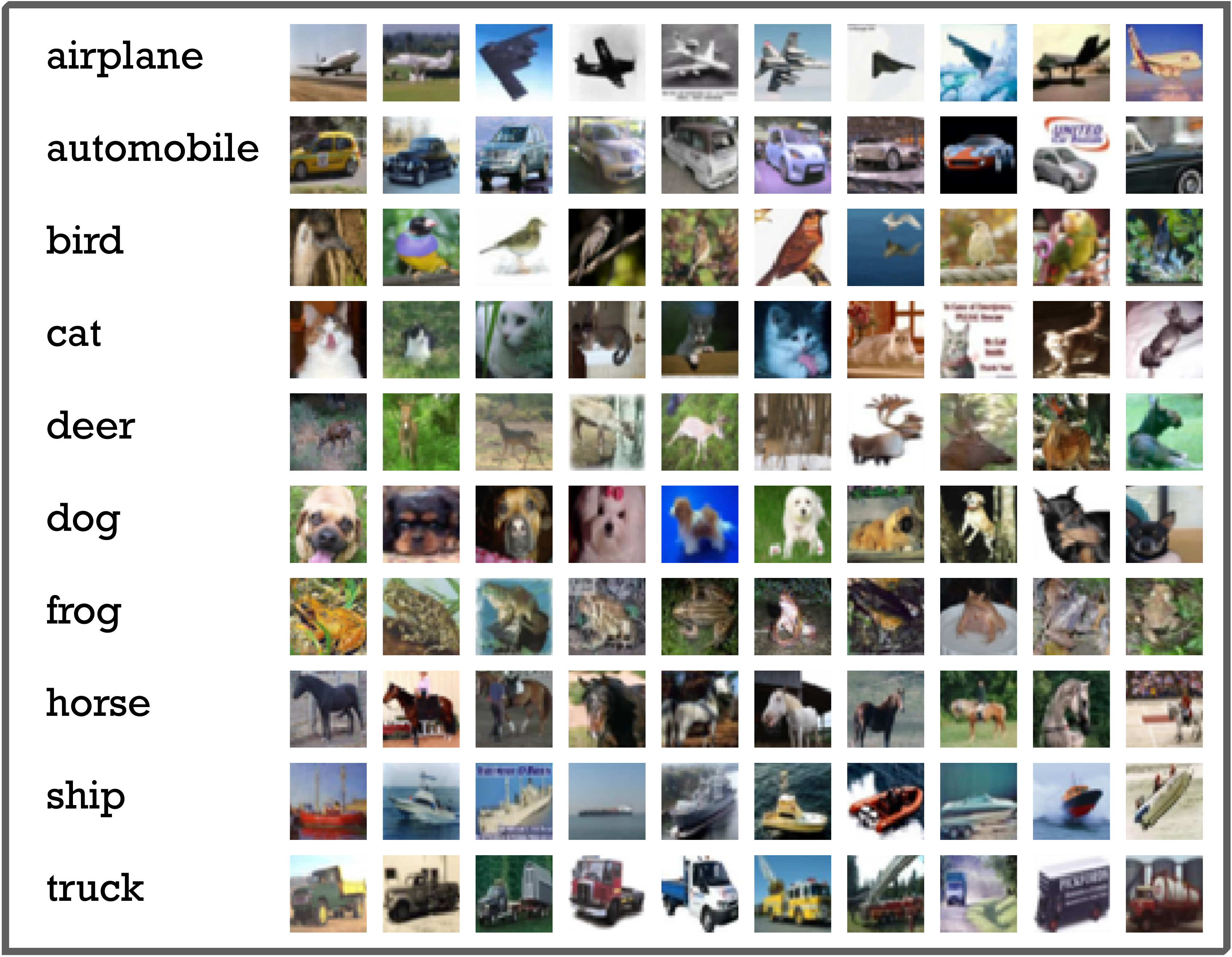}}
    % \hspace{0.05cm}
    \subfigure[\label{fig:visual-overall-cifar-website} Randomly selected 10 images from each class.]{\includegraphics[width=0.46\textwidth]{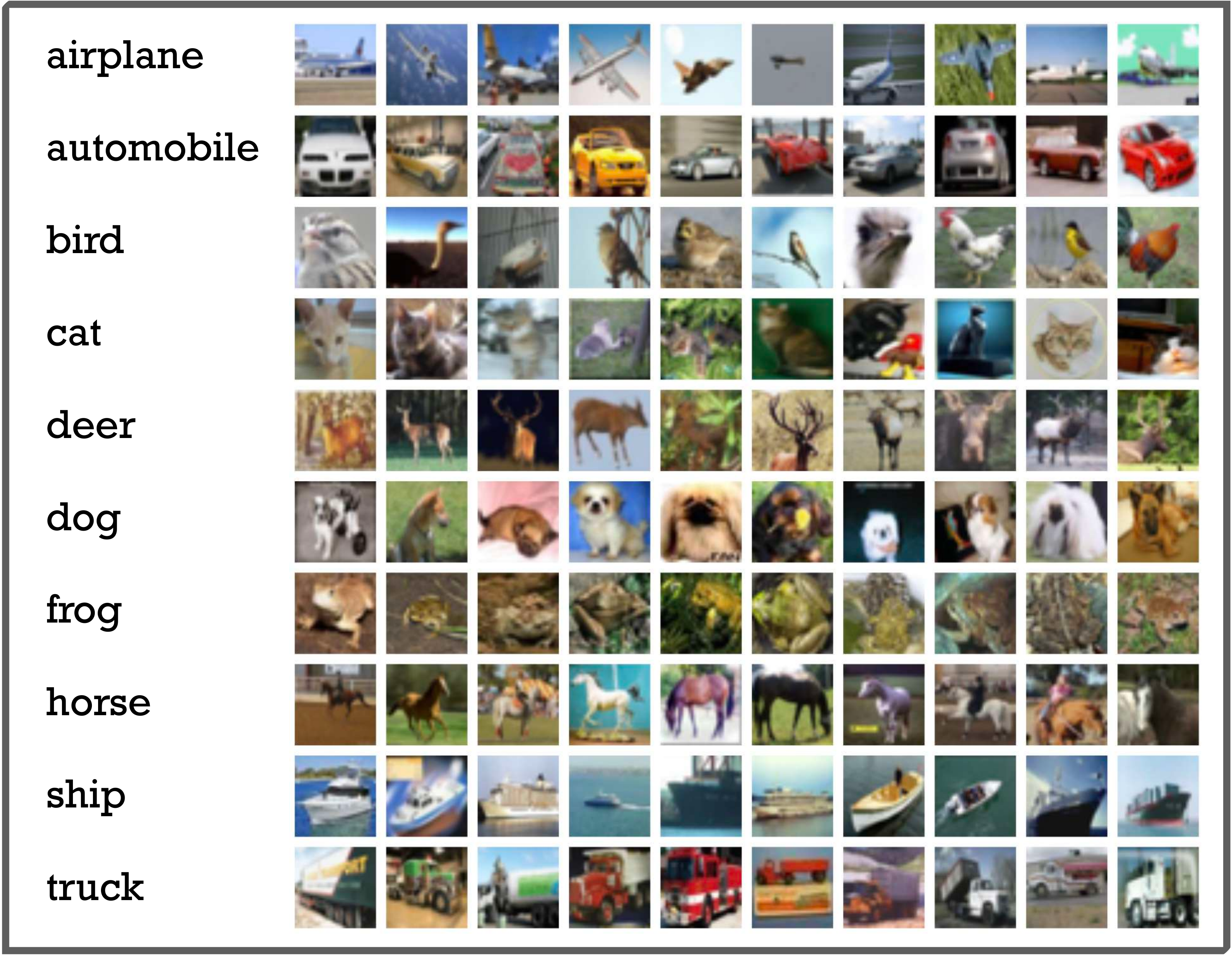}}
    \caption{\small Visualization of top-10 ``principal'' images for each class in the CIFAR10 dataset. \textbf{(a)} For each class-$j$, we first compute the top-10 singular vectors of the SVD of the learned features $\Z^j$. Then for the $l$-th singular vector of class $j$, $\u_{j}^{l}$, and for the feature of the $i$-th image of class $j$, $\z_{j}^{i}$, we calculate the absolute value of inner product, $| \langle  \z_{j}^{i}, \u_{j}^{l} \rangle|$, then we select the largest one for each singular vector within class $j$. Each row corresponds to one class, and each image corresponds to one singular vector, ordered by the value of the associated singular value. \textbf{(b)} For each class, 10 images are randomly selected in the dataset. These images are the ones displayed in the CIFAR dataset website~\citep{krizhevsky2009learning}.}
\label{fig:visual-overall-data}
\end{center}
\vskip -0.1in
\end{figure*}
For comparison, similar to Figure~\ref{fig:train-test-loss-pca-3}, we calculate the principle components of representations learned by MCR$^2$ training and cross-entropy training. For cross-entropy training, we take the output of the second last layer as the learned representation. The results are summarized in Figure~\ref{fig:pca-plot}. We also compare the cosine similarity between learned representations for both MCR$^2$ training and cross-entropy training, and the results are presented in  Figure~\ref{fig:heatmap-plot}. 

As shown in Figure~\ref{fig:pca-plot}, we observe that representations learned by MCR$^2$ are much more diverse, the dimension of learned features (each class) is  around a dozen, and the dimension of the overall features is nearly 120, and the output dimension is 128. In contrast, the dimension of the overall features learned using entropy is slightly greater than 10, which is much smaller than that learned by MCR$^2$. From Figure~\ref{fig:heatmap-plot}, for MCR$^2$ training,  we find that the  features of different class are almost orthogonal.

\paragraph{Visualize representative images selected from CIFAR10 dataset by using MCR$^2$.} As mentioned in Section \ref{sec:principled-objective-via-compression}, obtaining the properties of desired representation in the proposed MCR$^2$ principle is equivalent to performing {\em nonlinear generalized principle components} on the given dataset. As shown in Figure~\ref{fig:pca-mcr-1}-\ref{fig:pca-mcr-3}, MCR$^2$ can indeed learn such diverse and discriminative  representations. In order to better interpret the representations learned by MCR$^2$, we select images according to their ``principal'' components (singular vectors using SVD) of the learned features.  In Figure~\ref{fig:visual-class-2-8}, we visualize images selected from class-`Bird' and class-`Ship'. For each class, we first compute top-10 singular vectors of the SVD of the learned features and then for each of the top singular vectors, we display in each row the top-10 images whose
corresponding features are closest to the singular vector. As shown in Figure~ \ref{fig:visual-class-2-8}, we observe that images in the same row share many common characteristics such as shapes, textures, patterns, and styles, whereas images in different rows are significantly different from each other---suggesting our method captures all the different ``modes'' of the data even within the same class. Notice that top rows are associated with components with larger singular values, hence they are images that show up more frequently in the dataset.

In Figure~\ref{fig:visual-overall-mcr}, we visualize the 10 ``principal'' images selected from CIFAR10 for each of the 10 classes. That is, for each class, we display the 10 images whose corresponding features are most coherent with the top-10 singular vectors. We observe that the selected images are much more diverse and representative than those selected randomly from the dataset (displayed on the CIFAR official website), indicating such principal images can be used as a good ``summary'' of the dataset.

%\clearpage
\subsubsection{Experimental Results of MCR$^2$ in the Supervised Learning Setting.}\label{sec:appendix-subsec-sup}

\paragraph{Training details for mainline experiment.} For the model presented in Figure~\ref{fig:low-dim} (\textbf{Right}) and Figure~\ref{fig:train-test-loss-pca},  we use ResNet-18 to parameterize $f(\cdot, \bm \theta)$, and we set the output dimension $d=128$, precision $\epsilon^2=0.5$, mini-batch size $m=1,000$. We use SGD in Pytorch~\citep{paszke2019pytorch} as the optimizer, and set the  learning rate \texttt{lr=0.01}, weight decay \texttt{wd=5e-4}, and \texttt{momentum=0.9}.

\paragraph{Experiments for studying the effect of hyperparameters and architectures.} We present the experimental results of MCR$^2$ training in the supervised setting by using various training hyperparameters and different network architectures. The results are summarized in Table~\ref{table:ablation-supervise}. Besides the ResNet architecture, we also consider VGG architecture~\citep{simonyan2014very} and ResNext achitecture~\citep{xie2017aggregated}. From Table~\ref{table:ablation-supervise}, we find that larger batch size $m$ can lead to  better performance. Also, models with higher output dimension $d$ require larger training batch size $m$.

\begin{table}[h]
\vskip -0.05in
\centering
% \vskip -0.05in
\begin{small}
\begin{sc}
\begin{tabular}{ lcccccl}
\toprule
Arch & Dim $n$ &  Precision $\epsilon^2$ & BatchSize $m$ & {\texttt{lr}} & ACC & Comment \\
\midrule
\multirow{1}{*}{ResNet-18} & 128  & 0.5 & 1,000 & 0.01 &  0.922 & Mainline, Fig~\ref{fig:train-test-loss-pca} \\
\midrule
ResNext-29  & 128  & 0.5 & 1,000 & 0.01  & 0.925 & \multirow{2}{6em}{Different Architecture}  \\
VGG-11      & 128  & 0.5 & 1,000 & 0.01 & 0.907 &        \\
% ResNet-34   & 128  & 0.5 & 1000 & 0.01 & -\% &        \\
\midrule
ResNet-18   & 512  & 0.5 & 1,000 & 0.01 & 0.886 & \multirow{3}{6em}{Effect of Output Dimension}  \\
ResNet-18   & 256  & 0.5 & 1,000 & 0.01 & 0.921 &        \\
ResNet-18   & 64  & 0.5 & 1,000 & 0.01 & 0.922 &        \\
\midrule
ResNet-18   & 128  & 1.0 & 1,000 & 0.01 & 0.930 & \multirow{3}{6em}{Effect of precision}  \\
ResNet-18   & 128  & 0.4 & 1,000 & 0.01 & 0.919 &        \\
ResNet-18   & 128  & 0.2 & 1,000 & 0.01 & 0.900 &        \\
\midrule
ResNet-18   & 128  & 0.5 & 500 & 0.01 & 0.823 & \multirow{5}{6em}{Effect of Batch Size}  \\
ResNet-18   & 128  & 0.5 & 2,000 & 0.01 & 0.930 &        \\
ResNet-18   & 128  & 0.5 & 4,000 & 0.01 & 0.925 &        \\
ResNet-18   & 512  & 0.5 & 2,000 & 0.01 & 0.924 &        \\
ResNet-18   & 512  & 0.5 & 4,000 & 0.01 & 0.921 &        \\
\midrule
ResNet-18   & 128  & 0.5 & 1,000 & 0.05 & 0.860 & \multirow{3}{6em}{Effect of \texttt{lr}}  \\
ResNet-18   & 128  & 0.5 & 1,000 & 0.005 & 0.923 &        \\
ResNet-18   & 128  & 0.5 & 1,000 & 0.001 & 0.922 &        \\
\bottomrule
\end{tabular}
\end{sc}
\end{small}
\caption{\small Experiments of MCR$^2$ in the supervised setting on the CIFAR10 dataset.}
\label{table:ablation-supervise}
%\vskip -0.05in
\end{table}

\paragraph{Effect of  $r_j$ on classification.} Unless otherwise stated, we set the number of components $r_j=30$ for nearest subspace classification. We study the effect of $r_j$ when used for classification, and the results are summarized in Table~\ref{table:effect-rj}. We observe that the nearest subspace classification works for a wide range of $r_j$.

\begin{table}[h]
\vskip -0.05in
\begin{center}
\begin{small}
\begin{sc}
\begin{tabular}{l|ccccc}
\toprule
Number of components & $r_j=10$ & $r_j=20$ & $r_j=30$ & $r_j=40$ & $r_j=50$  \\
\midrule
Mainline (Label Noise Ratio=0.0) & 0.926  & 0.925 & 0.922 & 0.923 & 0.921  \\
\midrule
Label Noise Ratio=0.1 & 0.917  & 0.917 & 0.911 & 0.918 & 0.917  \\
Label Noise Ratio=0.2 & 0.906  & 0.906 & 0.897 & 0.906 & 0.905  \\
Label Noise Ratio=0.3 & 0.882  & 0.879 & 0.881 & 0.881 & 0.881  \\
Label Noise Ratio=0.4 & 0.864  & 0.866 & 0.866 & 0.867 & 0.864  \\
Label Noise Ratio=0.5 & 0.839  & 0.841 & 0.843 & 0.841 & 0.837  \\
\bottomrule
\end{tabular}
\end{sc}
\end{small}
\caption{\small Effect of number of components $r_j$ for nearest subspace classification in the supervised setting.}
% \vskip -0.05in
\label{table:effect-rj}
\end{center}
\vskip -0.05in
\end{table}

\paragraph{Effect of  $\epsilon^2$ on learning from corrupted labels.} To further study the proposed MCR$^2$ on learning from corrupted labels, we use different precision parameters, $\epsilon^2 = 0.75, 1.0$, in addition to the one shown in Table~\ref{table:label-noise}. Except for the precision parameter $\epsilon^2$, all the other parameters are the same as the mainline experiment (the first row in Table~\ref{table:ablation-supervise}). The first row ($\epsilon^2=0.5$) in Table~\ref{table:label-noise-appendix-precision} is identical to the \textsc{MCR$^2$ training} in Table~\ref{table:clustering}. Notice that with slightly different choices in $
\epsilon^2$, one might even see slightly improved performance over the ones reported in the main body.

\begin{table}[h]
\vspace{-2mm}
\begin{center}
% \vskip -0.07in
\begin{small}
\begin{sc}
\begin{tabular}{l | c c c c c }
\toprule
Precision  & Ratio=0.1 &  Ratio=0.2 &  Ratio=0.3 &  Ratio=0.4 &  Ratio=0.5 \\
\midrule
$\epsilon^2=0.5$  & {0.911} & {0.897} & {0.881} & {0.866} &  {0.843}\\
$\epsilon^2=0.75$ & \textbf{0.923} & {0.908} & \textbf{0.899} & \textbf{0.876} &  {0.836}\\
$\epsilon^2=1.0$  & {0.919} & \textbf{0.911} & {0.896} & {0.870} &  \textbf{0.845}\\
\bottomrule
\end{tabular}
\end{sc}
\end{small}
\caption{\small Effect of Precision $\epsilon^2$ on classification results with features learned with labels corrupted at different levels by using MCR$^2$ training.}
\label{table:label-noise-appendix-precision}
\end{center}
% \vspace{-6mm}
\end{table}

\subsection{Comparison with Related Work on Label Noise}\label{sec:appendix-label-noise-related-work}
We compare the proposed MCR$^2$ with OLE~\citep{lezama2018ole}, Large Margin Deep Networks~\citep{elsayed2018large}, and ITLM~\citep{shen2019learning} in label noise robustness experiments on CIFAR10 dataset. In Table~\ref{table:label-noise-related-work}, we compare MCR$^2$ with OLE~\citep{lezama2018ole} and Large Margin Deep Networks~\citep{elsayed2018large} on the corrupted label task using the same network, MCR$^2$ achieves significant better performance. We compare MCR$^2$ with ITLM~\citep{shen2019learning} using the same network. MCR2 achieves better performance without any noise ratio dependent hyperparameters as required by \cite{shen2019learning}.

\begin{table}[h]
\begin{center}
% \vskip -0.07in
\begin{small}
\begin{sc}
\begin{tabular}{l | c c c c c }
\toprule
ResNet18 & Ratio=0.1 &  Ratio=0.2 &  Ratio=0.3 & Ratio=0.4 &  Ratio=0.5 \\
\midrule
OLE &  0.910 &  0.860 &  0.806 &  0.717  &  0.610 \\
LargeMargin &  0.901 &  0.874 &  0.837 &  0.785  &  0.724 \\
MCR$^2$  & \textbf{0.911} & \textbf{0.897} & \textbf{0.881} & \textbf{0.866} &  \textbf{0.843}\\
\midrule
WRN16 & Ratio=0.1 &  Ratio=0.3 &  Ratio=0.5 & Ratio=0.7\\
\midrule
ITLM &  0.903 &  0.882 &  0.825 &  0.647  \\
MCR$^2$  & \textbf{0.915} & \textbf{0.888} & \textbf{0.842} & \textbf{0.670} \\
\bottomrule
\end{tabular}
\end{sc}
\end{small}
\caption{\small Comparison  with related work (OLE~\citep{lezama2018ole}, LargeMargin~\citep{elsayed2018large}, ITLM~\citep{shen2019learning}) on learning  from noisy labels.}
\label{table:label-noise-related-work}
\end{center}
% \vspace{-6mm}
\end{table}

\subsection{Learning from Gaussian noise corrupted data.}

We investigate the performance of MCR$^2$ training with corrupted data by adding varying levels of Gaussian noise. For each corruption level, we add $\mathcal{N}(0, \sigma^{2})$ to the input images with different standard deviations $\sigma \in \{0.04, 0.06, 0.08, 0.09, 0.1\}$ as in \citet{hendrycks2018benchmarking}. We train the same architecture ResNet-18 as the previous experiments for 500 epochs, set mini-batch size to $m=1000$ and optimize using SGD with learning rate \texttt{lr=0.01}, momentum \texttt{momentum=0.9} and weight decay \texttt{wd=5e-4}. We also decrease the learning rate to 0.001 at epoch 200 and to 0.0001 at epoch 400. In our objective, we set precision $\epsilon^2 = 0.5$. To compare the performance of MCR$^2$ versus cross-entropy (CE), we train the same architecture using the cross-entropy loss for 200 epochs and optimize using SGD with learning rate \texttt{lr=0.1}, momentum \texttt{momentum=0.9} and weight decay \texttt{wd=5e-4}. We also use Cosine Annealing learning rate scheduler during training. We show the respective testing accuracy in Table~\ref{table:gaussian-noise}. Although the classification result of using MCR$^2$ slightly lags behind that of using CE, when the noise level is small, their performances are comparable with each other. Similar sensitivity to the noise indicates that the reason might be because of the choice of the same network architecture. We reserve the study on how to improve robustness to input noise for future work. 

\begin{table}[h]
\begin{center}
\vskip 0.05in
\begin{small}
\begin{sc}
\begin{tabular}{l | c c c c c }
% \columncolor{blue!15}{2}
\toprule
 Noise level & $\sigma=0.04$ &  $\sigma=0.06$ &  $\sigma=0.08$ &  $\sigma=0.09$ &  $\sigma=0.1$ \\
\midrule
% CE Training & 0.903 & 0.861 & 0.791 & 0.724  & 0.603 \\
% MCR$^2$ Training  & 0.903 & 0.879 & 0.857		 & 0.850 &  0.838\\
CE Training & 0.912 & 0.897 & 0.876 & 0.867 & 0.857\\
MCR$^2$ Training & 0.909 & 0.882 & 0.869 & 0.855 & 0.829 \\
\bottomrule
\end{tabular}
\end{sc}
\end{small}
\caption{\small  Classification results of features learned with inputs corrupted by  Gaussian noise at different levels.}
\label{table:gaussian-noise}
\end{center}
\vspace{-4mm}
\end{table}

\subsection{Experimental Results of MCR$^2$ in the Self-supervised Learning Setting}\label{sec:appendix-selfsup}
\subsubsection{Self-supervised Learning of Invariant Features} 
% \vspace{-1mm}
\textbf{Learning invariant features via rate reduction.} Motivated by self-supervised learning algorithms~\citep{lecun2004learning,kavukcuoglu2009learning,oord2018representation,he2019momentum,wu2018unsupervised}, we use the MCR$^2$ principle to learn representations that are {\em invariant} to certain class of transformations/augmentations, say $\mathcal T$ with a distribution $P_{\mathcal T}$. Given a mini-batch of data  $\{ \bm{x}^j\}_{j=1}^{k}$ with mini-batch size equals to $k$, we augment each sample $\x^j$ with $n$  transformations/augmentations $\{\tau^{i}(\cdot)\}_{i=1}^n$ randomly drawn from $P_{\mathcal{T}}$.
We simply label all the augmented samples $\X^j = [\tau_{1}(\bm{x}^j), \ldots, \tau_{n}(\bm{x}^j)]$ of $\bm x^j$ as the $j$-th class, and $\Z^j$ the corresponding learned features. Using this self-labeled data, we train our feature mapping $f(\cdot, \bm{\theta})$ the same way as the supervised setting above. For every mini-batch, the total number of samples for training is $m = k n$.

\noindent \textbf{Evaluation via clustering.} To learn invariant features, our formulation itself does {\em not} require the original samples $\x^j$ come from a fixed number of classes. For evaluation, we may train on a few classes and observe how the learned features facilitate classification or clustering of the data. A common method to evaluate learned features is to train an additional linear classifier~\citep{oord2018representation,he2019momentum}, with ground truth labels. But for our purpose, because we explicitly verify whether the so-learned invariant features have good subspace structures when the samples come from $k$ classes, we use an off-the-shelf subspace clustering algorithm EnSC~\citep{you2016oracle}, which is computationally efficient and is provably correct for data with well-structured subspaces. 
We also use K-Means on the original data $\bm X$ as our baseline for comparison.  We use normalized mutual information (NMI), clustering accuracy (ACC), and adjusted rand index (ARI) for our evaluation metrics, see Appendix~\ref{sec:appendix-subsec-clustering} for their detailed definitions.

\noindent \textbf{Controlling dynamics of expansion and compression.} By directly optimizing the rate reduction $\Delta R = R - R_c$, we achieve $0.570$ clustering accuracy on CIFAR10 dataset, which is the second best result compared with previous methods. Empirically, we observe that, without class labels, the overall \textit{coding rate} $R$ expands quickly and the MCR$^2$ loss saturates (at a local maximum), see Fig~\ref{fig:train-test-loss-selfsup-mcr}.  Our experience suggests that learning a good representation from unlabeled data might be too ambitious when directly optimizing the original $\Delta R$. Nonetheless, from the  geometric meaning of $R$ and $R_c$, one can design a different learning strategy by controlling the dynamics of expansion and compression differently during training. 
For instance, we may re-scale the rate by replacing $R(\Z, \epsilon)$ with 
\begin{equation}
    \widetilde R(\Z,\epsilon) \doteq \frac{1}{2 \gamma_1}\log\det(\I + \frac{\gamma_2 d}{ m\epsilon^{2}}\Z\Z^{*}).
\end{equation}
With $\gamma_1 = \gamma_2 = k$, the learning dynamics change from Figure~\ref{fig:train-test-loss-selfsup-mcr} to Figure~\ref{fig:train-test-loss-selfsup-mcr-ctrl}: All features are first compressed then gradually expand. We denote the controlled MCR$^2$ training by MCR$^2$-{\scriptsize CTRL}.

\begin{figure*}[t]
\subcapcentertrue
\begin{center}
    \subfigure[\label{fig:train-test-loss-selfsup-mcr}MCR$^2$]{\includegraphics[width=0.45\textwidth]{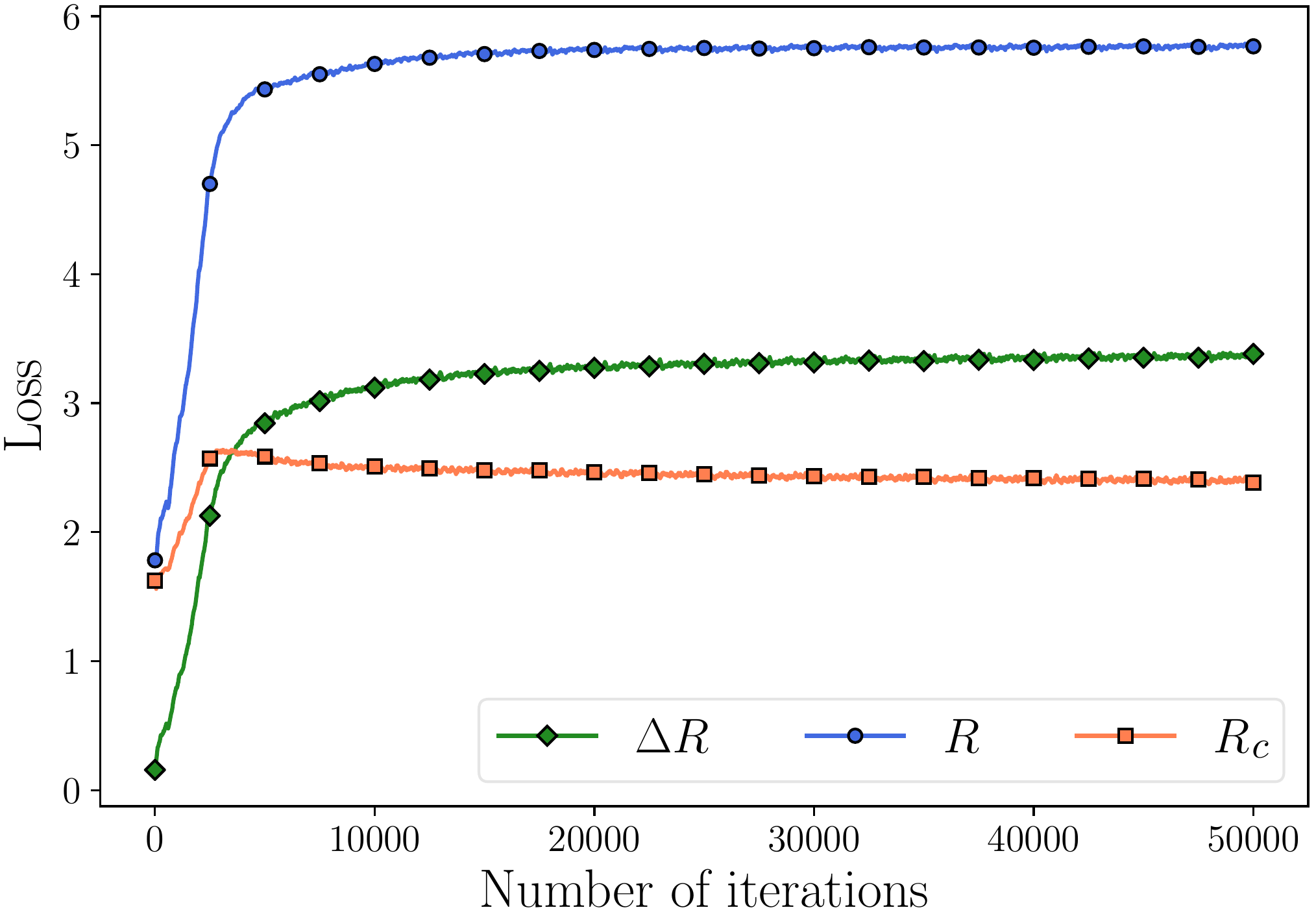}} 
    \hspace{5mm}
    \subfigure[\label{fig:train-test-loss-selfsup-mcr-ctrl}MCR$^2$-{\scriptsize CTRL}.]{\includegraphics[width=0.45\textwidth]{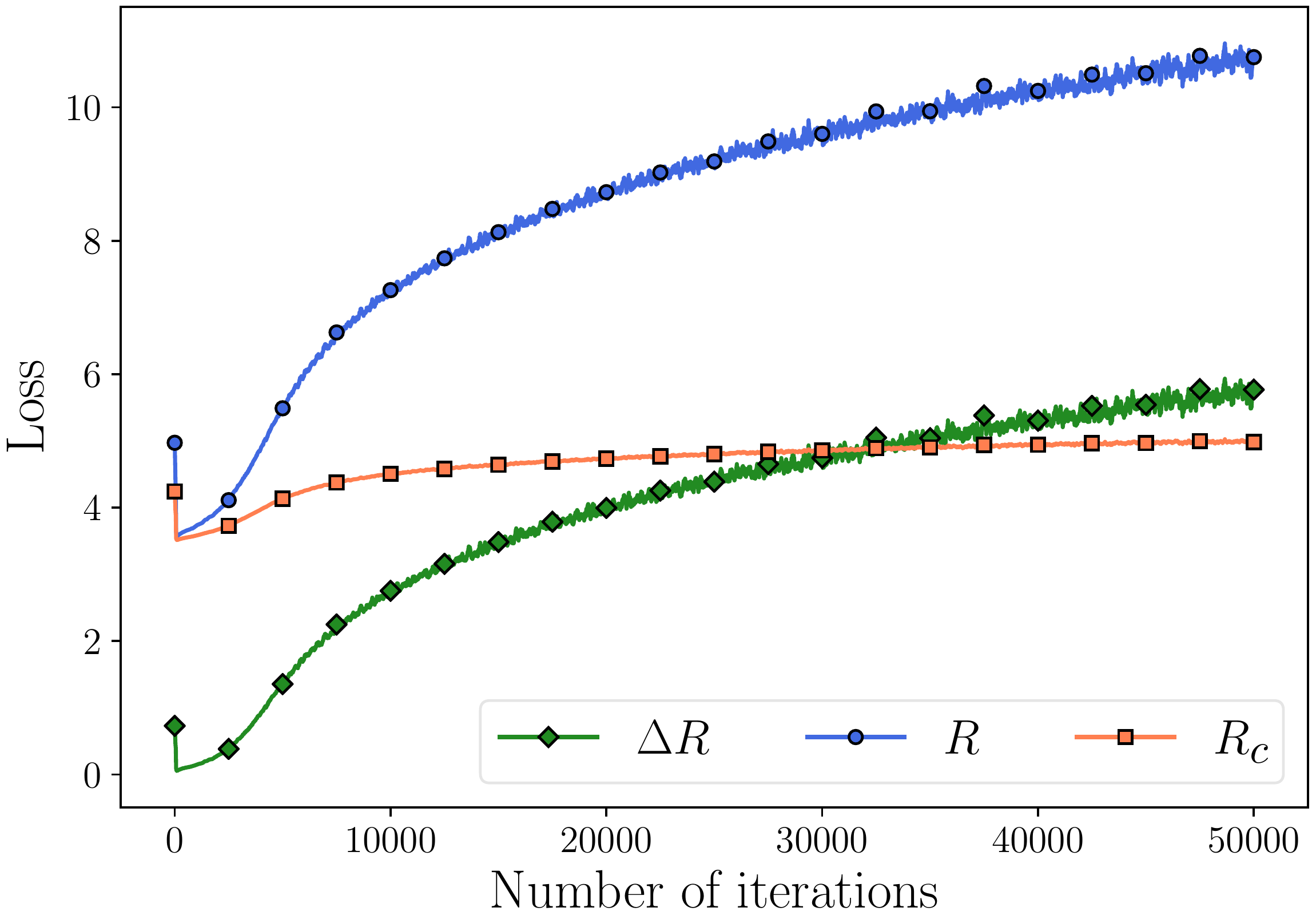}}
    \vskip -0.05in
    \caption{\small Evolution of the rates of (\textbf{left}) MCR$^2$   and (\textbf{right}) MCR$^2$-{\scriptsize CTRL} in the training process in the self-supervised setting on CIFAR10 dataset.}
\label{fig:training-dynamic-controlling-compare}
\end{center}
\vskip -0.1in
\end{figure*}

\noindent \textbf{Experiments on real data.} Similar to the supervised learning setting, we train {\em exactly the same} ResNet-18 network on the CIFAR10, CIFAR100, and STL10~\citep{coates2011analysis} datasets.
We set the mini-batch size as $k = 20$, number of augmentations for each sample as $n=50$ and the precision parameter as $\epsilon^2 = 0.5$. Table \ref{table:clustering} shows the results of the proposed MCR$^2$-{\scriptsize CTRL}  in comparison with methods JULE~\citep{yang2016joint}, RTM~\citep{nina2019decoder}, DEC~\citep{xie2016unsupervised}, DAC~\citep{chang2017deep}, and DCCM~\citep{wu2019deep} that have achieved the best results on these datasets. {Surprisingly, without utilizing any inter-class or inter-sample information and heuristics on the data, the invariant features learned by our method with augmentations alone achieves a better performance over other highly engineered clustering methods.} More comparisons and ablation studies can be found in Appendix~\ref{sec:appendix-subsec-clustering}. 

Nevertheless, compared to the representations learned in the supervised setting where the optimal partition $\bm{\Pi}$ in \eqref{eqn:maximal-rate-reduction} is initialized by correct class information, the representations here learned with self-supervised  classes are far from being optimal. 
It remains wide open how to design better optimization strategies and dynamics to learn from unlabelled or partially-labelled data better representations (and the associated partitions) close to the global maxima of the MCR$^2$ objective \eqref{eqn:maximal-rate-reduction}.

\begin{table}[t]
\begin{center}
\vskip 0.05in
\begin{small}
\begin{sc}
\begin{tabular}{l l | c c c c c c c c c }
\toprule
Dataset & Metric & K-Means & JULE &  RTM    & DEC  & DAC  &  DCCM & MCR$^2$-{\scriptsize Ctrl} \\
\midrule
\multirow{3}{*}{CIFAR10}    
& NMI & 0.087 & 0.192 & 0.197 & 0.257 & 0.395  & 0.496 & \textbf{0.630}       \\
& ACC & 0.229 & 0.272 & 0.309 & 0.301 & 0.521  & 0.623 & \textbf{0.684}       \\
& ARI & 0.049 & 0.138 & 0.115 & 0.161 & 0.305  & 0.408 & \textbf{0.508}       \\
\midrule
\multirow{3}{*}{CIFAR100}    
& NMI & 0.084 & 0.103 & - & 0.136 & 0.185  & 0.285 & \textbf{0.387}       \\
& ACC & 0.130 & 0.137 & - & 0.185 & 0.237  & 0.327 & \textbf{0.375}       \\
& ARI & 0.028 & 0.033 & - & 0.050 & 0.087  & 0.173 & \textbf{0.178}       \\
\midrule
\multirow{3}{*}{STL10}    
& NMI & 0.124 & 0.182 & - & 0.276 &  0.365 & 0.376 &  \textbf{0.446}       \\
& ACC & 0.192 & 0.182 & - & 0.359 & 0.470 & 0.482  &  \textbf{0.491}       \\
& ARI & 0.061 & 0.164 & - & 0.186 & 0.256 & 0.262 &  \textbf{0.290}       \\
\bottomrule
\end{tabular}
\end{sc}
\end{small}
\caption{\small Clustering results on CIFAR10, CIFAR100, and STL10 datasets.}
\label{table:clustering}
\end{center}
\vskip -0.2in
% \vspace{-4mm}
\end{table}

\paragraph{Training details of MCR$^2$-{\scriptsize CTRL}.} For three datasets (CIFAR10, CIFAR100, and STL10), we use ResNet-18 as in the supervised setting, and we set the output dimension $d=128$, precision $\epsilon^2=0.5$, mini-batch size $k=20$, number of augmentations $n=50$, $\gamma_1 = \gamma_2 =20$. We observe that MCR$^2$-{\scriptsize CTRL} can achieve better clustering performance by using smaller $\gamma_2$, i.e., $\gamma_2=15$, on CIFAR10 and CIFAR100 datasets. We use SGD as the optimizer, and set the  learning rate \texttt{lr=0.1}, weight decay \texttt{wd=5e-4}, and \texttt{momentum=0.9}.

\paragraph{Training dynamic comparison between MCR$^2$ and MCR$^2$-{\scriptsize CTRL}.}
In the self-supervised setting, we compare the training process for MCR$^2$ and MCR$^2$-\text{\scriptsize CTRL} in terms of $R, \widetilde{R}, R_c$, and $\Delta R$. For MCR$^2$ training shown in Figure~\ref{fig:train-test-loss-selfsup-mcr}, the features first expand (for both $R$ and $R_c$) then compress (for $R_c$). For MCR$^2$-\text{\scriptsize CTRL}, both $\widetilde{R}$ and $R_c$ first compress then $\widetilde{R}$ expands quickly and $R_c$ remains small, as we have seen in Figure~\ref{fig:train-test-loss-selfsup-mcr-ctrl}.

\paragraph{Clustering results comparison.} 
We compare the clustering performance between MCR$^2$ and MCR$^2$-{\scriptsize CTRL} in terms of NMI, ACC, and ARI. The clustering results are summarized in Table~\ref{table:mcr-mcrctrl-compare-appendix}. We find  that  MCR$^2$-{\scriptsize CTRL}  can achieve better performance for clustering.

\begin{table}[th]
\begin{center}
\begin{small}
\begin{sc}
\begin{tabular}{l | c c c  }
\toprule
 & NMI & ACC & ARI  \\
\midrule
MCR$^2$ & 0.544 & 0.570 & 0.399\\
MCR$^2$-{\scriptsize Ctrl} & 0.630 & 0.684 & 0.508 \\
\bottomrule
\end{tabular}
\end{sc}
\end{small}
\caption{\small Clustering comparison between MCR$^2$ and MCR$^2$-{\scriptsize CTRL} on CIFAR10 dataset.}
\label{table:mcr-mcrctrl-compare-appendix}
\end{center}
\end{table}

\subsubsection{Clustering Metrics and More  Results}\label{sec:appendix-subsec-clustering}
We first introduce the definitions of  normalized mutual information (NMI)~\citep{strehl2002cluster}, clustering accuracy (ACC), and adjusted rand index (ARI)~\citep{hubert1985comparing}.

\paragraph{Normalized mutual information (NMI).} Suppose $Y$ is the ground truth partition and $C$ is the prediction partition. The NMI metric is defined as 
\begin{equation*}
    \text{NMI}(Y, C) = \frac{\sum_{i=1}^{k}\sum_{j=1}^{s}|Y_{i} \cap C_{j}|\log\left(\frac{m |Y_{i} \cap C_{j}| }{|Y_{i}| |C_{j}|}\right)}{\sqrt{\left(\sum_{i=1}^{k}|Y_i|\log\left(\frac{|Y_i|}{m}\right)\right) \left(\sum_{j=1}^{s}|C_j|\log\left(\frac{|C_j|}{m}\right)\right)}},
\end{equation*}
where $Y_i$ is the $i$-th cluster in $Y$ and $C_j$ is the $j$-th cluster in $C$, and $m$ is the total number of samples.

\paragraph{Clustering accuracy (ACC).} Given $m$ samples, $\{(\x^i, \y^i)\}_{i=1}^m$.  For the $i$-th sample $\x^i$, let $\y^i$ be its ground truth label, and let $\bm{c}^i$ be its cluster label. The ACC metric is defined as 
\begin{equation*}
    \text{ACC}(\Y, \bm{C})= \max_{\sigma\in S}\frac{\sum_{i=1}^{m}\mathbf{1}\{\y^i = \sigma(\bm{c}^i)\}}{m},
\end{equation*}
where $S$ is the set includes all the one-to-one mappings from cluster to label, and $\Y = [\y^1, \dots, \y^m]$, $\bm{C} = [\bm{c}^1, \dots, \bm{c}^{m}]$.

\paragraph{Adjusted rand index (ARI).} Suppose there are $m$ samples, and let $Y$ and $C$ be two clustering of these samples, where $Y = \{Y_1, \dots, Y_r\}$ and $C = \{C_1, \dots, C_{s}\}$. Let $m_{ij}$ denote the number of the intersection between $Y_i$ and $C_{j}$, i.e., $m_{ij} = |Y_i \cap C_j|$. The ARI metric is defined as 
\begin{equation*}
    \text{ARI} = \frac{\sum_{ij}\binom{m_{ij}}{2} - \left(\sum_{i}\binom{a_{i}}{2} \sum_{j}\binom{b_{j}}{2} \right)\big/ \binom{m}{2} }{\frac{1}{2}\left(\sum_{i}\binom{a_{i}}{2} +\sum_{j}\binom{b_{j}}{2} \right) - \left(\sum_{i}\binom{a_{i}}{2} \sum_{j}\binom{b_{j}}{2} \right)\big/ \binom{m}{2}},
\end{equation*}
where $a_{i} = \sum_{j}m_{ij}$ and $b_{j} = \sum_{i}m_{ij}$.

\paragraph{Comparison with \cite{ji2019invariant,hu2017learning}.} We compare MCR$^2$ with IIC~\citep{ji2019invariant} and IMSAT~\citep{hu2017learning} in Table~\ref{table:clustering-appendix}. We find that MCR$^2$ outperforms IIC~\citep{ji2019invariant} and IMSAT~\citep{hu2017learning} on both CIFAR10 and CIFAR100 by a large margin. For STL10, \cite{hu2017learning} applied pretrained ImageNet models and \cite{ji2019invariant} used more data for training.

\begin{table}[ht]
\begin{center}
% \vskip -0.07in
\begin{small}
\begin{sc}
\begin{tabular}{l l | c c c }
\toprule
Dataset & Metric &  IIC  &  IMSAT & MCR$^2$-{\scriptsize Ctrl} \\
\midrule
\multirow{3}{*}{CIFAR10}    
& NMI & -  & - & \textbf{0.630}       \\
& ACC & 0.617  & 0.456  & \textbf{0.684}       \\
& ARI & -  & - & \textbf{0.508}       \\
\midrule
\multirow{3}{*}{CIFAR100}    
& NMI & -  & - & \textbf{0.387}       \\
& ACC & 0.257  & 0.275 & \textbf{0.375}       \\
& ARI & -  & - & \textbf{0.178}       \\
\bottomrule
\end{tabular}
\end{sc}
\end{small}
\caption{\small Compare with \cite{ji2019invariant,hu2017learning} on clustering.}
\label{table:clustering-appendix}
\end{center}
% \vskip -0.2in
% \vspace{-4mm}
\end{table}

\paragraph{More experiments on the effect of hyperparameters of MCR$^2$-{\scriptsize CTRL}.}We provide more experimental results of MCR$^2$-{\scriptsize CTRL} training in the self-supervised setting by varying training hyperparameters on the STL10 dataset. The results are summarized in Table~\ref{table:ablation-self-supervise}. Notice that the choice of hyperparameters only has small effect on the performance with the MCR$^2$-{\scriptsize CTRL} objective. We may hypothesize that, in order to further improve the performance, one has to seek other, potentially better, control of optimization dynamics or strategies. We leave those for future investigation. 

\begin{table}[htp]
\centering
\begin{small}
\begin{sc}
\begin{tabular}{ lccccc}
\toprule
Arch &   Precision $\epsilon^2$  & Learning Rate \texttt{lr} & NMI & ACC & ARI \\
\midrule
ResNet-18 &  0.5 & 0.1 & 0.446 & 0.491 & 0.290  \\
\midrule
ResNet-18  &  0.75 &  0.1 &   0.450 & 0.484 & 0.288   \\
ResNet-18  &  0.25 &  0.1 &   0.447 & 0.489 & 0.293   \\
ResNet-18  &  0.5 &  0.2  &   0.477 & 0.473 & 0.295   \\
ResNet-18  &  0.5 &  0.05  &   0.444 & 0.496 & 0.293  \\
ResNet-18  &  0.25 &  0.05 &   0.454 & 0.489 & 0.294  \\
\bottomrule
\end{tabular}
\end{sc}
\end{small}
\caption{\small Experiments of MCR$^2$-{\scriptsize CTRL} in the self-supervised setting on STL10 dataset.}
\label{table:ablation-self-supervise}
\end{table}

\clearpage
\section{Implementation Details and Additional Experiments for ReduNets}\label{sec:appendix-redunet-exp}
In this section, we provide additional experimental results related to ReduNet in Section~\ref{sec:experiments}. Obviously, in this work we have chosen a simplest design of the ReduNet and do not particularly optimize any of the hyper parameters, such as the number of initial channels, kernel sizes, normalization, and learning rate etc., for the best performance or scalability. The choices are mostly for convenience and just minimally adequate to verify the concept. We leave all such practical matters for us and others to investigate in the future.

We first present the visualization of  rotated and translated images of the MNIST dataset in Section~\ref{sec:appendix-visualization}. We provide additional experimental results on rotational invariance in Section~\ref{sec:appendix-rotation} and translational invariance in Section~\ref{sec:appendix-translation}. In Section~\ref{sec:appendix-gaussian}, we provide more results on learning mixture of Gaussians.

\subsection{Visualization of Rotation and Translation on MNIST}\label{sec:appendix-visualization}
In this subsection, we present the visualization of rotation and translation images on the MNIST dataset. The rotation examples are shown in Figure~\ref{fig:appendix-mnist-rotation-visualize} and the translation examples are shown in Figure~\ref{fig:appendix-mnist-translation-visualize}.
\begin{figure*}[ht!]
  \begin{center}
    \includegraphics[width=0.24\textwidth]{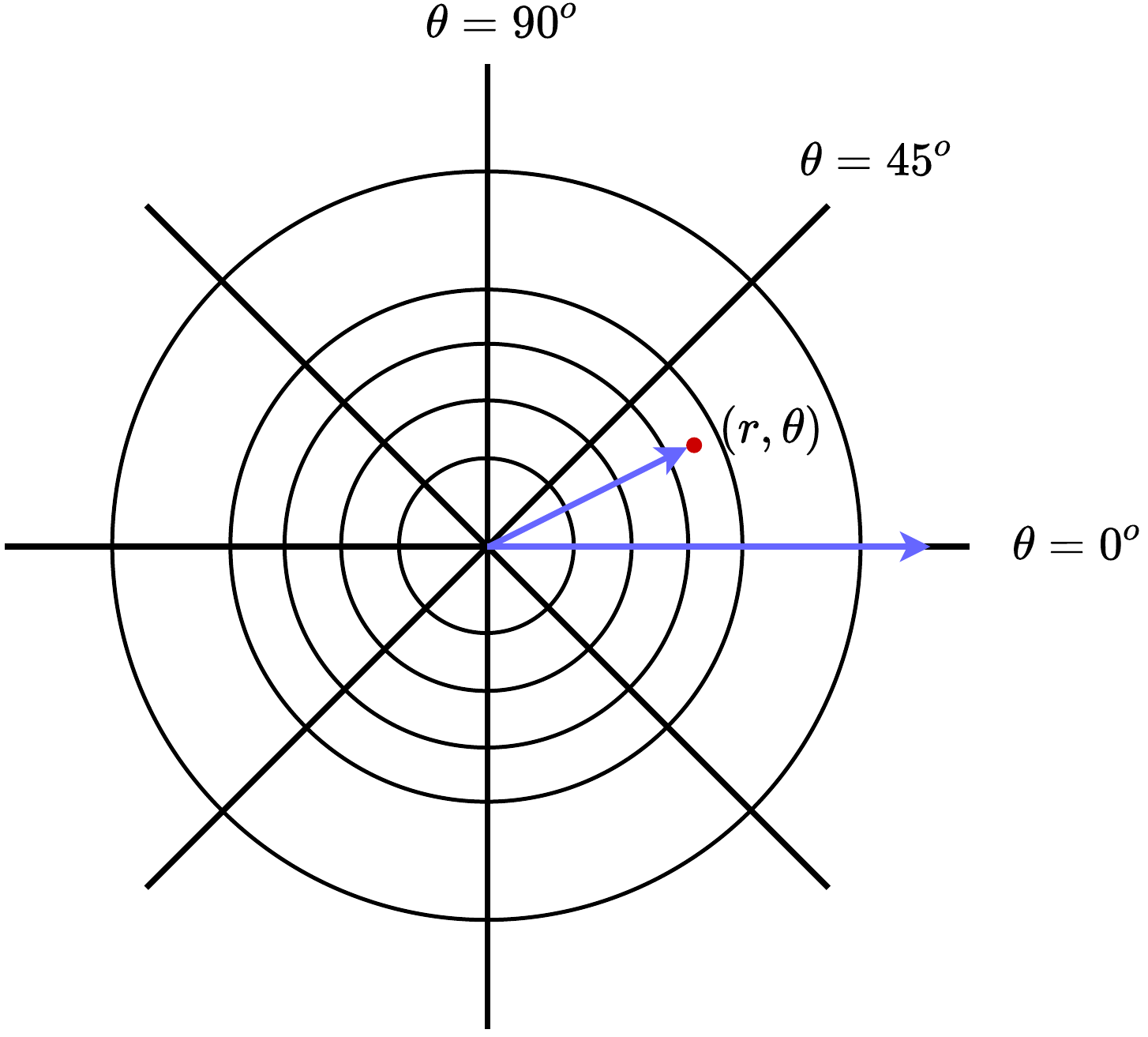} \hspace{4mm}
    \includegraphics[width=0.24\textwidth]{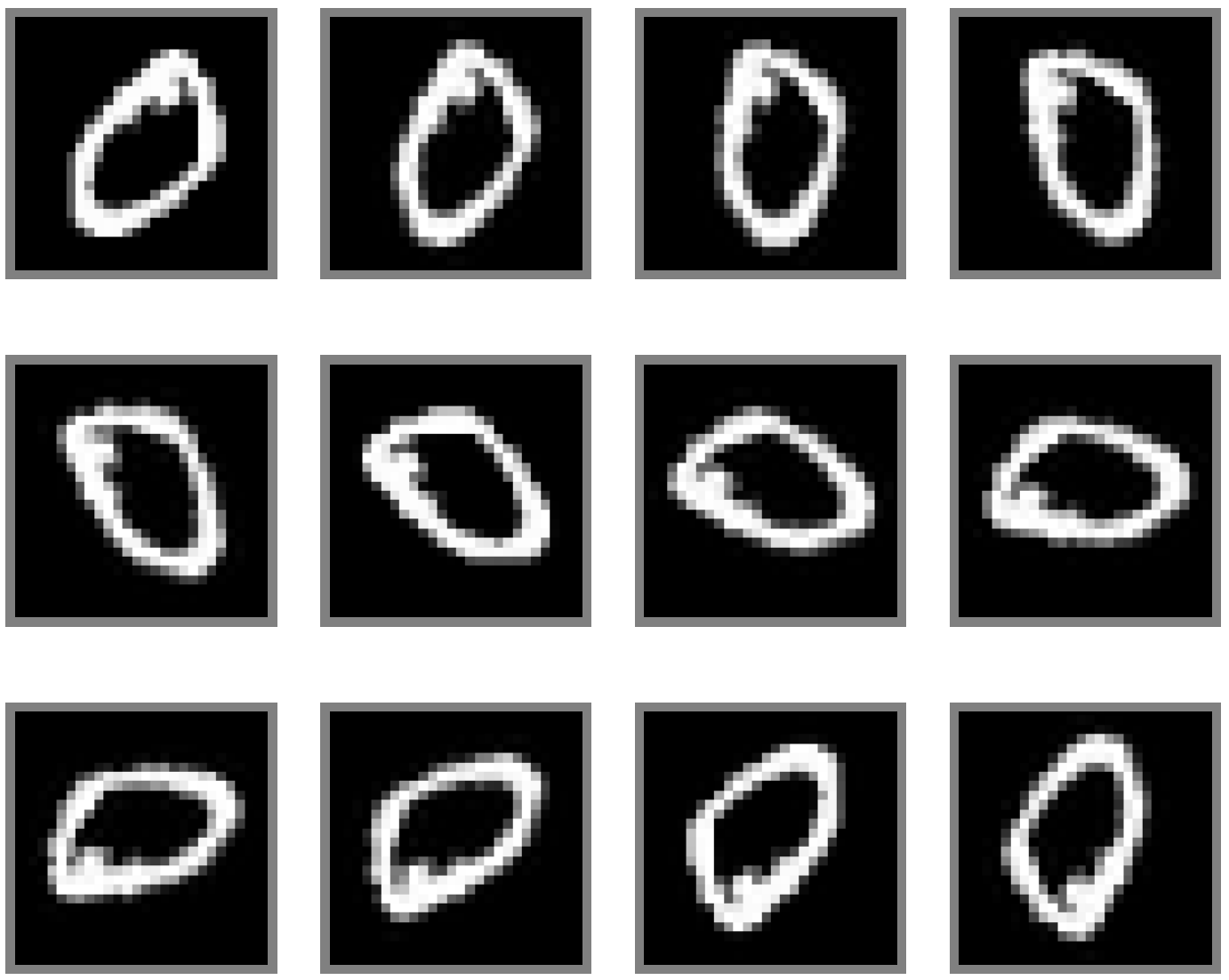}
    \includegraphics[width=0.24\textwidth]{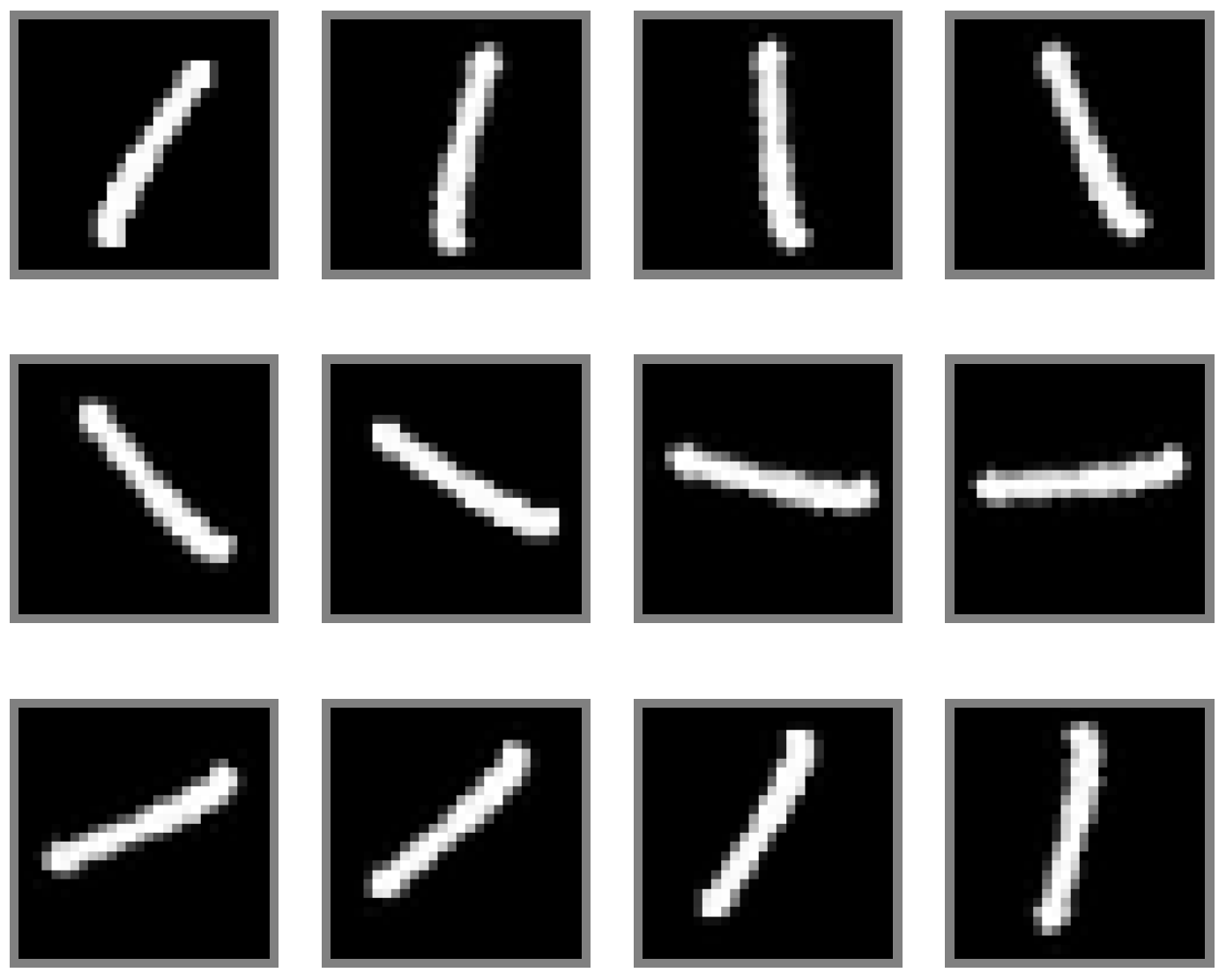}
    \vskip -0.01in
    \caption{Examples of rotated images of MNIST digits, each by 18$^{\circ}$. (\textbf{Left}) Diagram for polar coordinate representation;  (\textbf{Right}) Rotated images of digit `0' and `1'.}
    \label{fig:appendix-mnist-rotation-visualize}
  \end{center}
  \vskip -0.2in
\end{figure*}

\begin{figure*}[ht!]
  \begin{center}
  \includegraphics[width=0.24\textwidth]{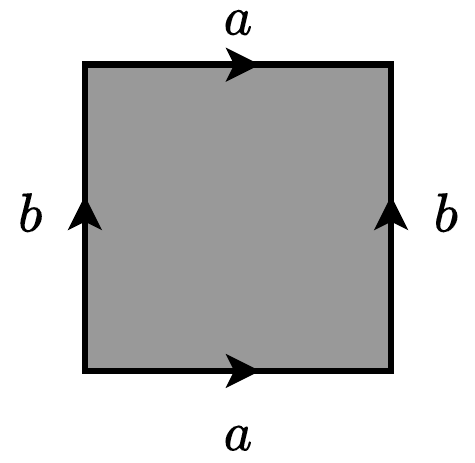} \hspace{4mm}
    \includegraphics[width=0.27\textwidth]{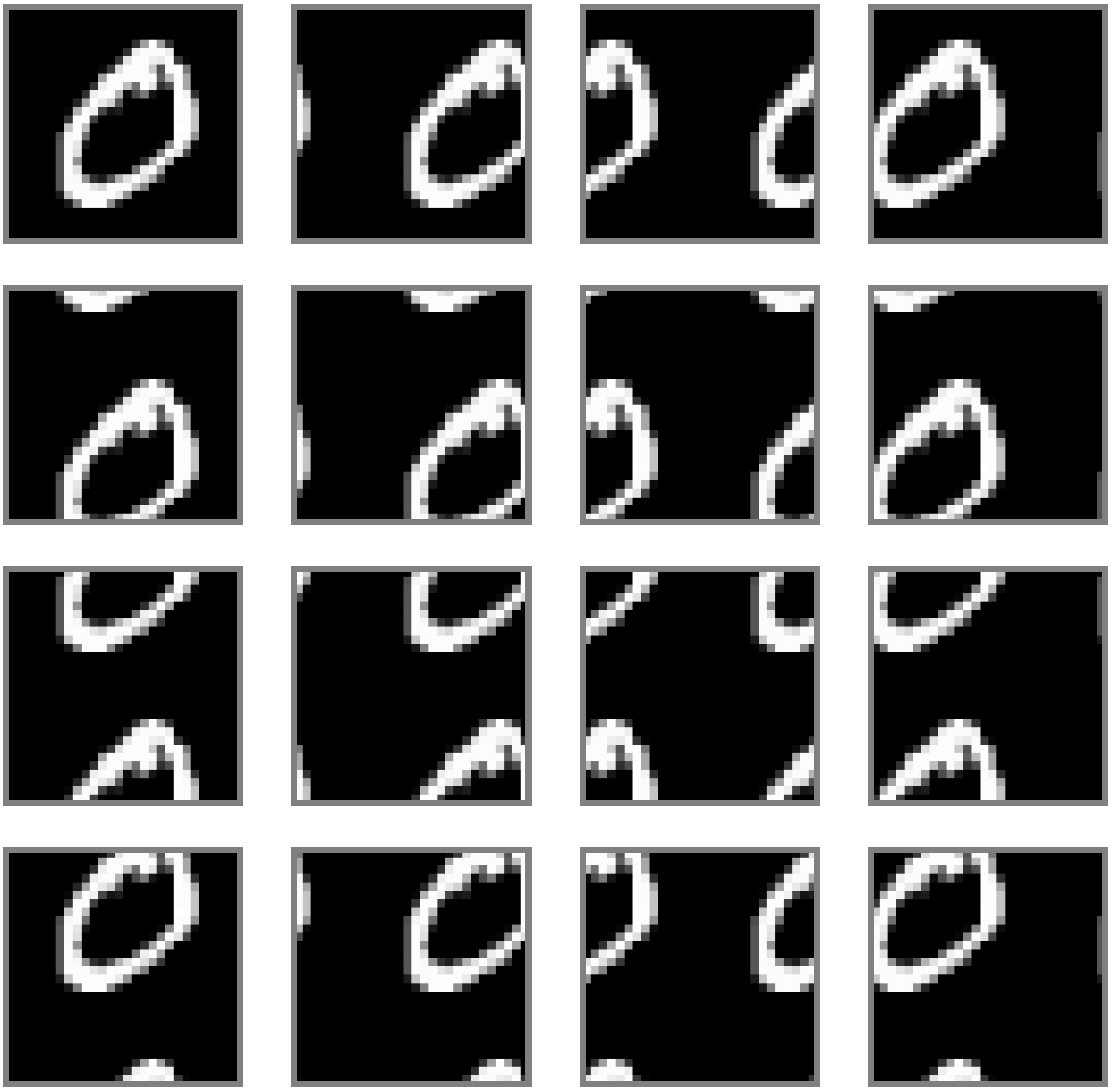}
    \includegraphics[width=0.27\textwidth]{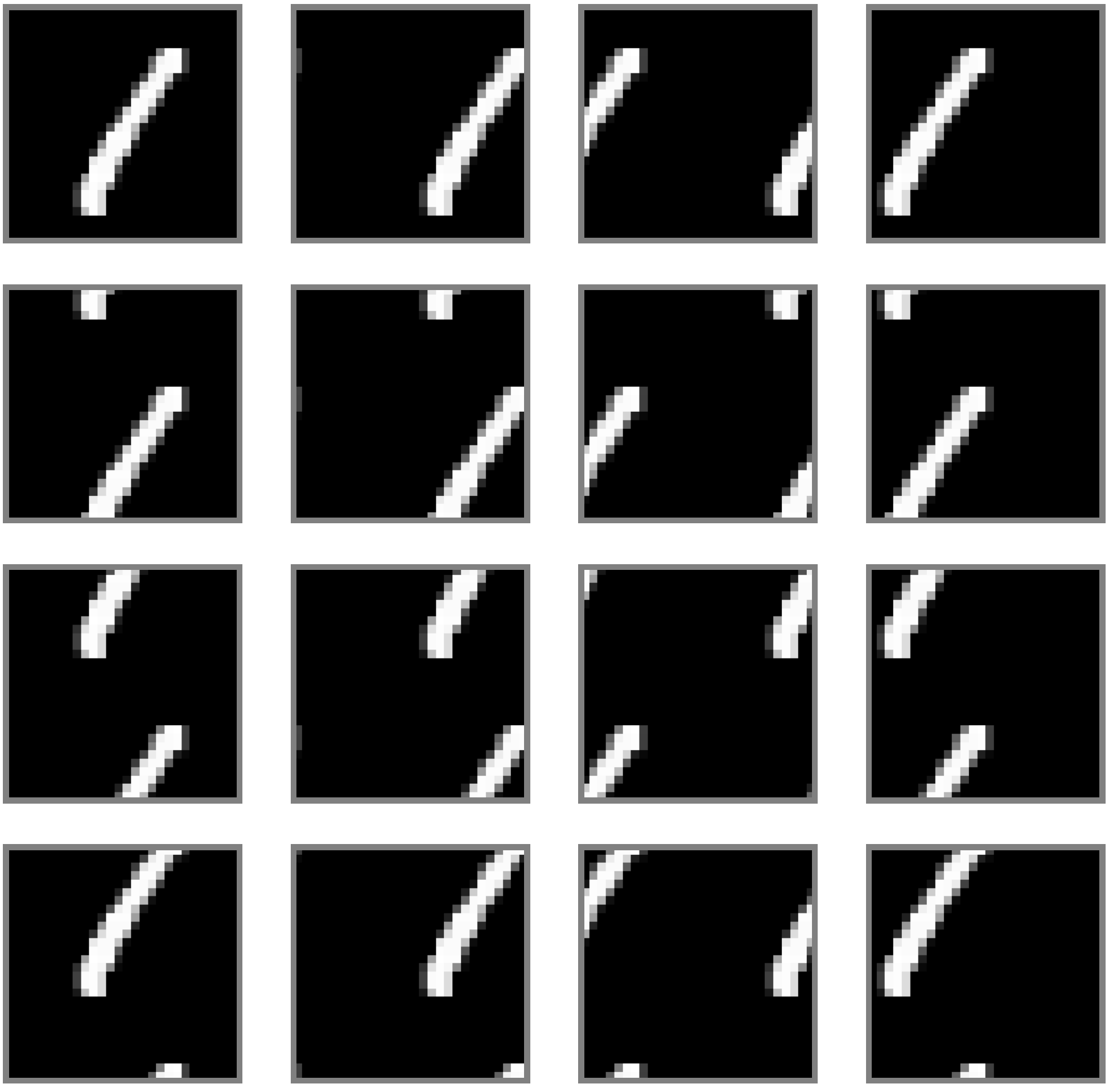}
    \vskip -0.1in
    \caption{(\textbf{Left}) A torus on which 2D cyclic translation is defined;  (\textbf{Right}) Cyclic translated images of MNIST digits `0' and `1' (with \textsf{stride=7}).}
    \label{fig:appendix-mnist-translation-visualize}
  \end{center}
\end{figure*}

\newpage
\subsection{Additional Experiments on Learning Rotational Invariance on MNIST}\label{sec:appendix-rotation}

\paragraph{Effect of channel size.} We study the effect of the number of channels on the rotation invariant task  on the MNIST dataset. We apply the same parameters as in Figure~\ref{fig:1d-invariance-plots-b} except that we vary the channel size from 5 to 20, where the number of channels used in Figure~\ref{fig:1d-invariance-plots-b} is 20. We summarize the results in Table~\ref{table:appendix-rotation-channel}, Figure~\ref{fig:mnist1d_different_channels_heatmap} and Figure~\ref{fig:mnist1d_different_channels_loss}. From Table~\ref{table:appendix-rotation-channel}, we find that the invariance training accuracy increases when we increase the channel size. As we can see from Figure~\ref{fig:mnist1d_different_channels_heatmap}, the shifted learned features become more orthogonal across different classes when the channel size is large. Also, we observe that with more channel size, the training loss converges faster (in Figure~\ref{fig:mnist1d_different_channels_loss}).

\begin{table}[ht]
\begin{center}
\begin{small}
\begin{tabular}{l|cccc}
\toprule
Channel & 5 & 10 & 15 & 20\\
\midrule
Training Acc &              1.000 & 1.000 &  1.000 &  1.000 \\
Test Acc &                  0.560 & 0.610 &  0.610 &  0.610 \\
Invariance Training Acc &   0.838 & 0.972 &  0.990 &  1.000 \\
Invariance Test Acc &       0.559 & 0.609 &  0.610 &  0.610 \\
\bottomrule
\end{tabular}\vspace{-2mm}
\caption{Training accuracy of 1D  rotation-invariant ReduNet with different number of channels on the MNIST dataset. %\vspace{-5mm} 
}\label{table:appendix-rotation-channel}
\end{small}
\end{center}
\end{table}

\begin{figure}[ht]
    \centering
    \subfigure[5 Channels]{
    \includegraphics[width=0.20\textwidth]{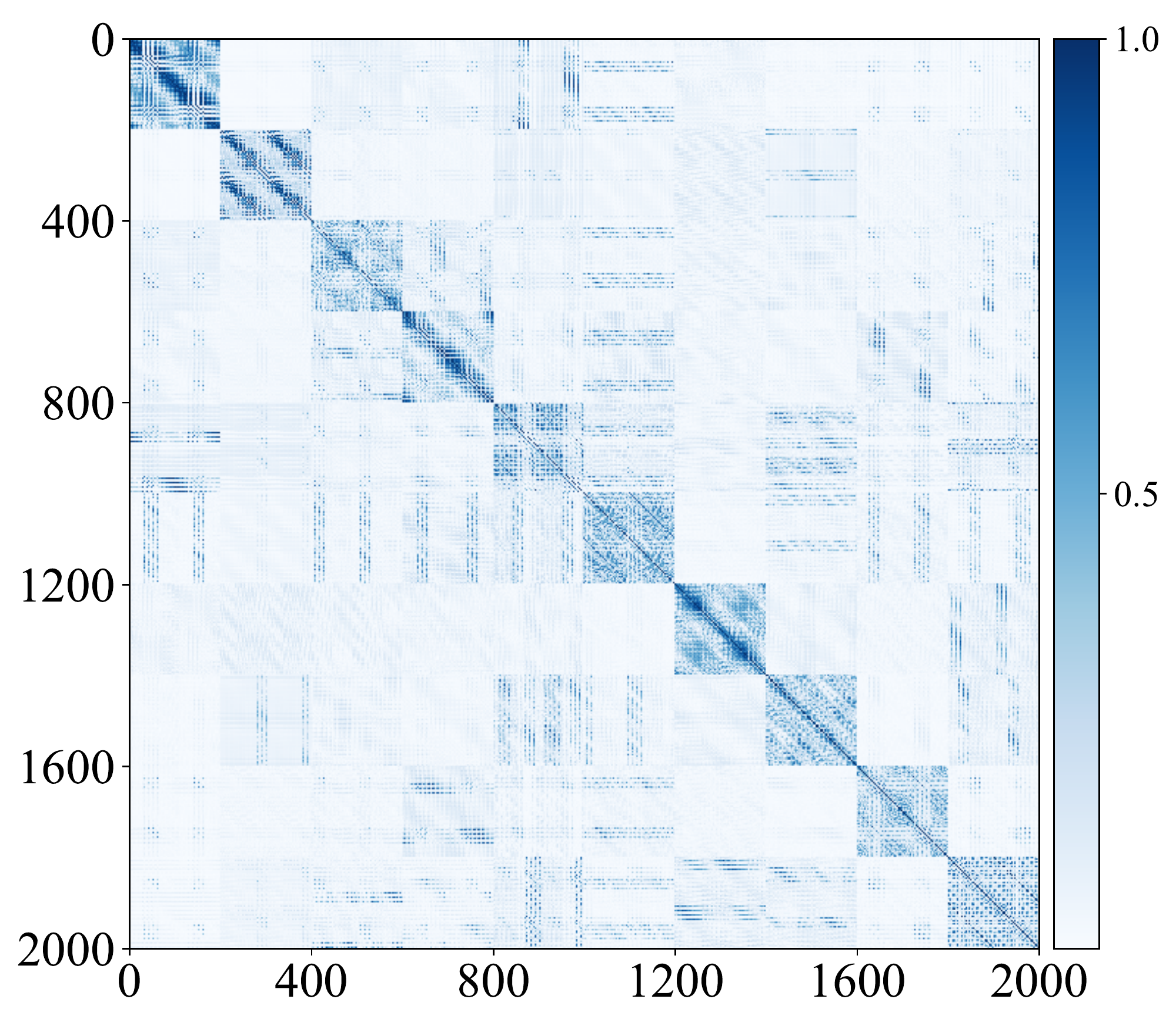}
    }
    \subfigure[10 Channels]{
    \includegraphics[width=0.20\textwidth]{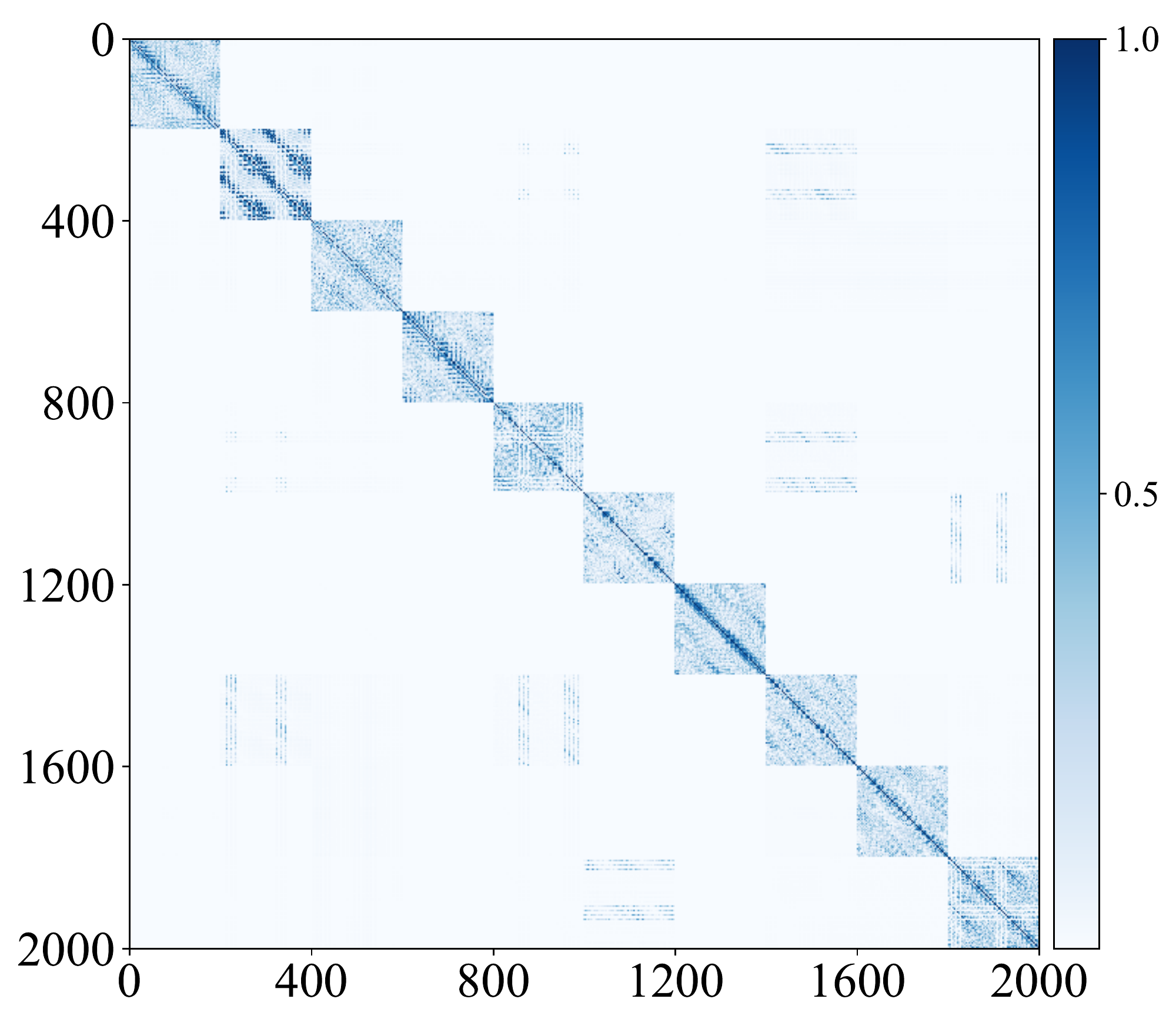}
    }
    \subfigure[15 Channels]{
    \includegraphics[width=0.20\textwidth]{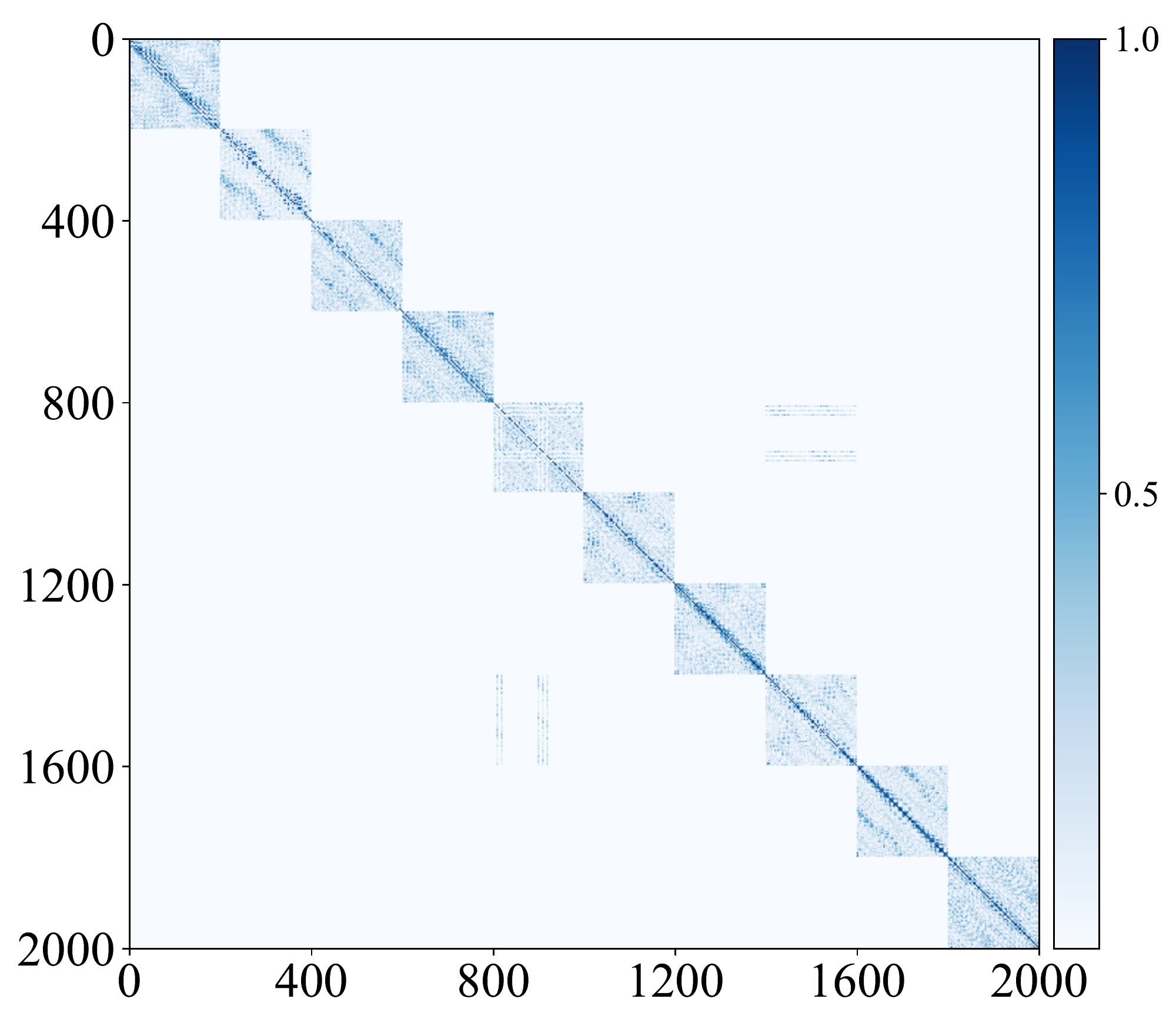}
    }
    \subfigure[20 Channels]{
    \includegraphics[width=0.20\textwidth]{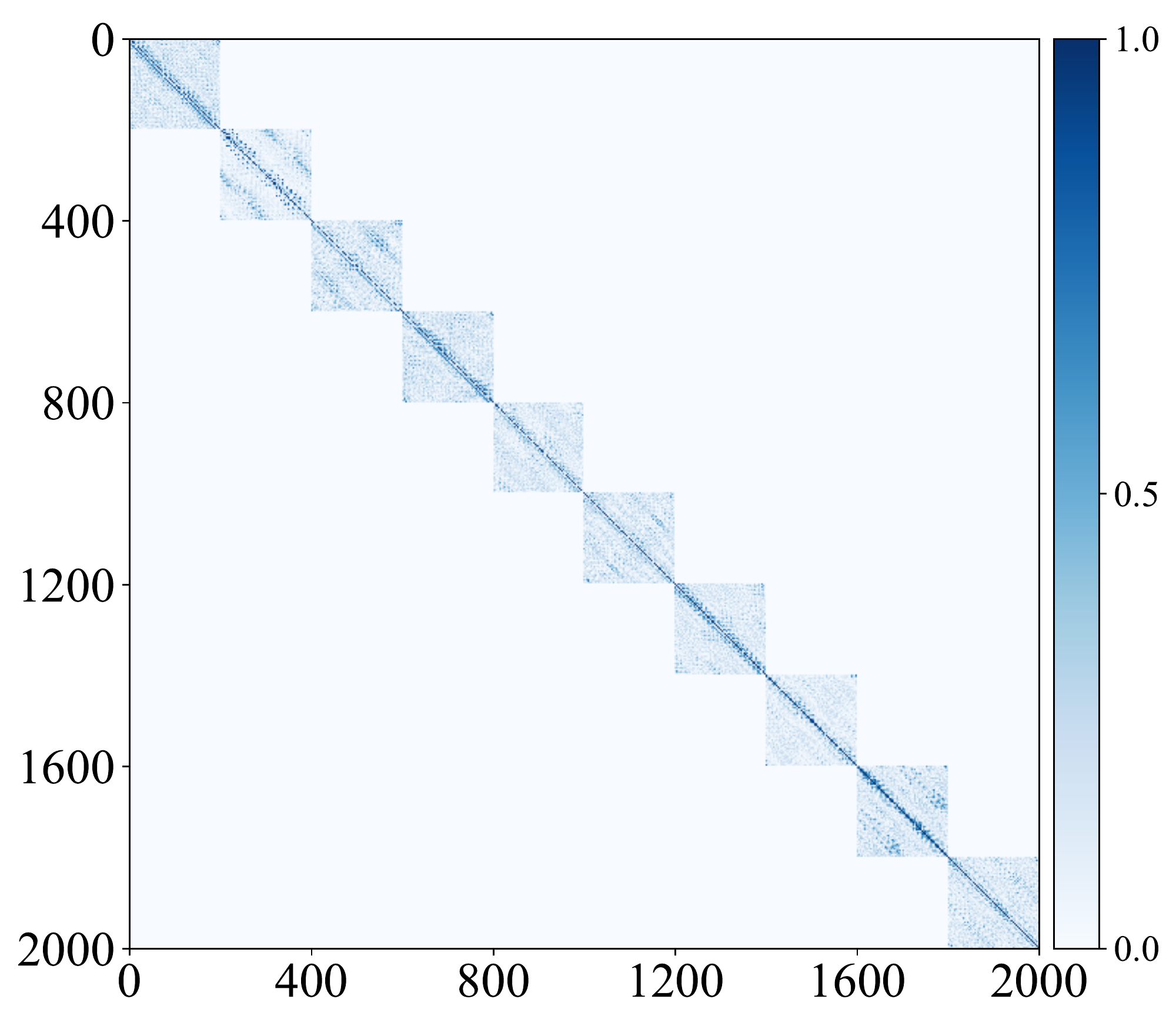}
    }
    \vspace{-0.1in}
    \caption{Heatmaps of cosine similarity among shifted learned features (with different channel sizes) $\bar{\Z}_{\text{shift}}$ for rotation invariance on the MNIST dataset (RI-MNIST). }
    \vspace{-0.01in}
    \label{fig:mnist1d_different_channels_heatmap}
\end{figure}
\begin{figure}[ht!]
    \centering
    \subfigure[5 Channels]{
    \includegraphics[width=0.22\textwidth]{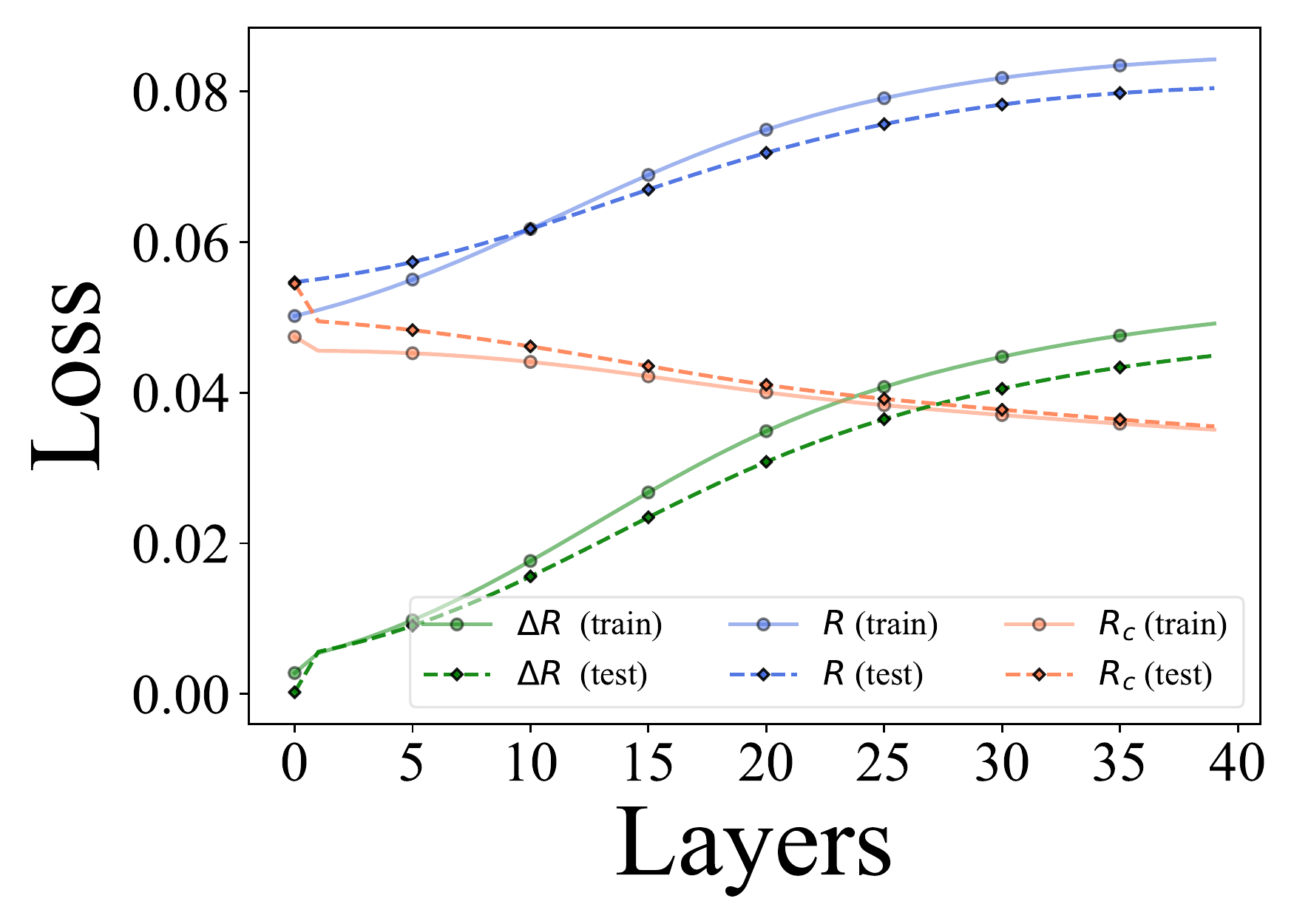}
    }
    \subfigure[10 Channels]{
    \includegraphics[width=0.22\textwidth]{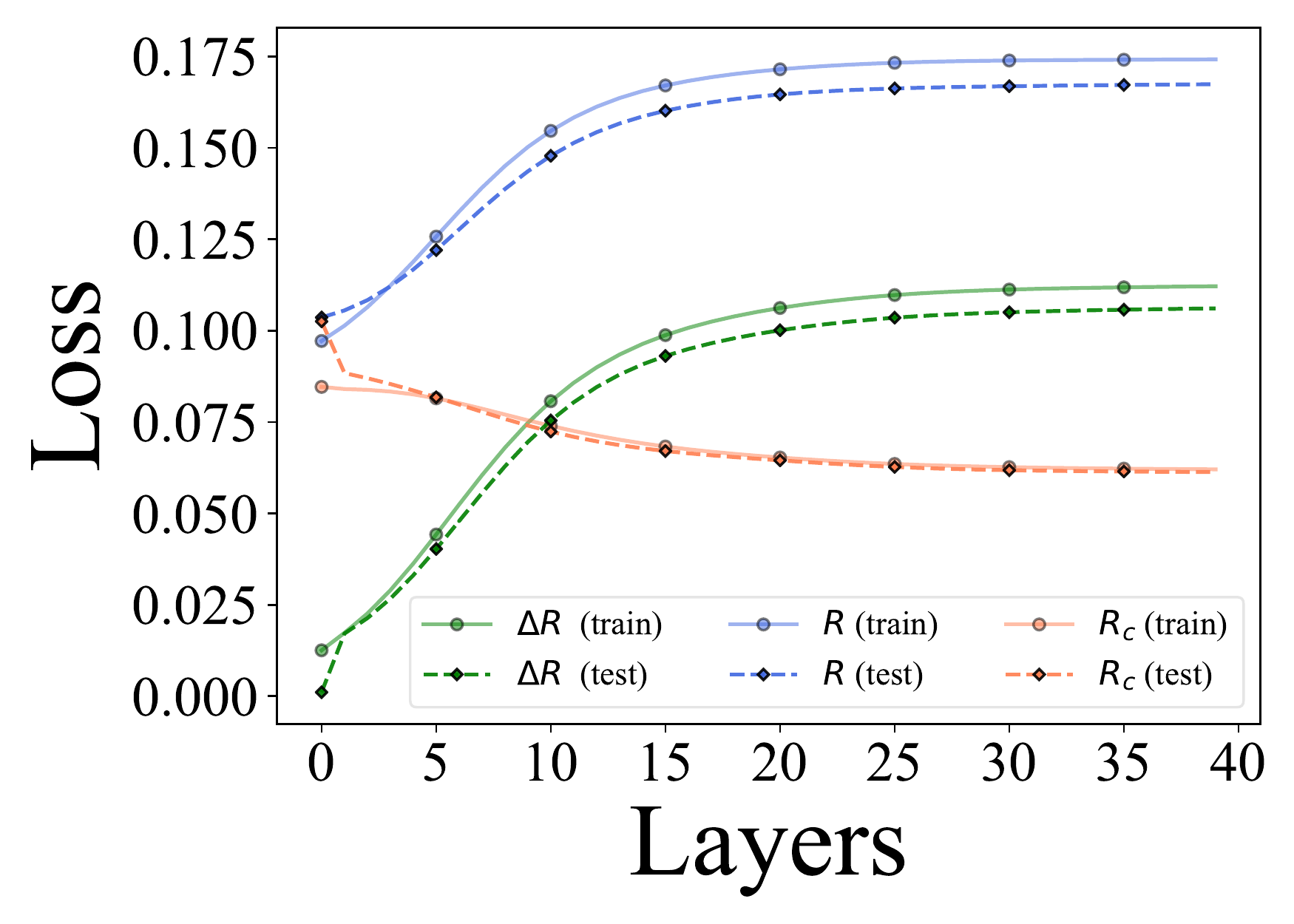}
    }
    \subfigure[15 Channels]{
    \includegraphics[width=0.22\textwidth]{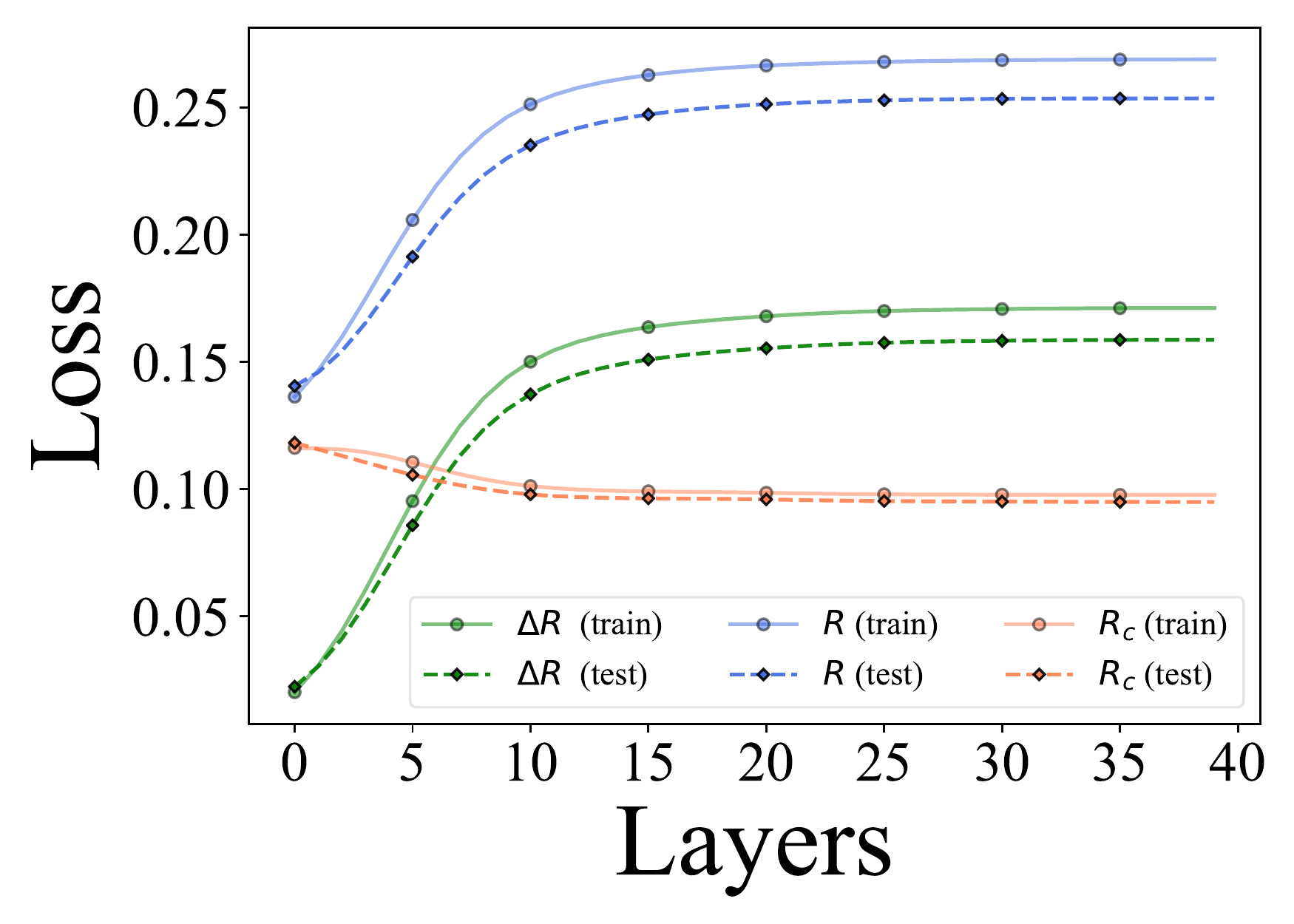}
    }
    \subfigure[20 Channels]{
    \includegraphics[width=0.22\textwidth]{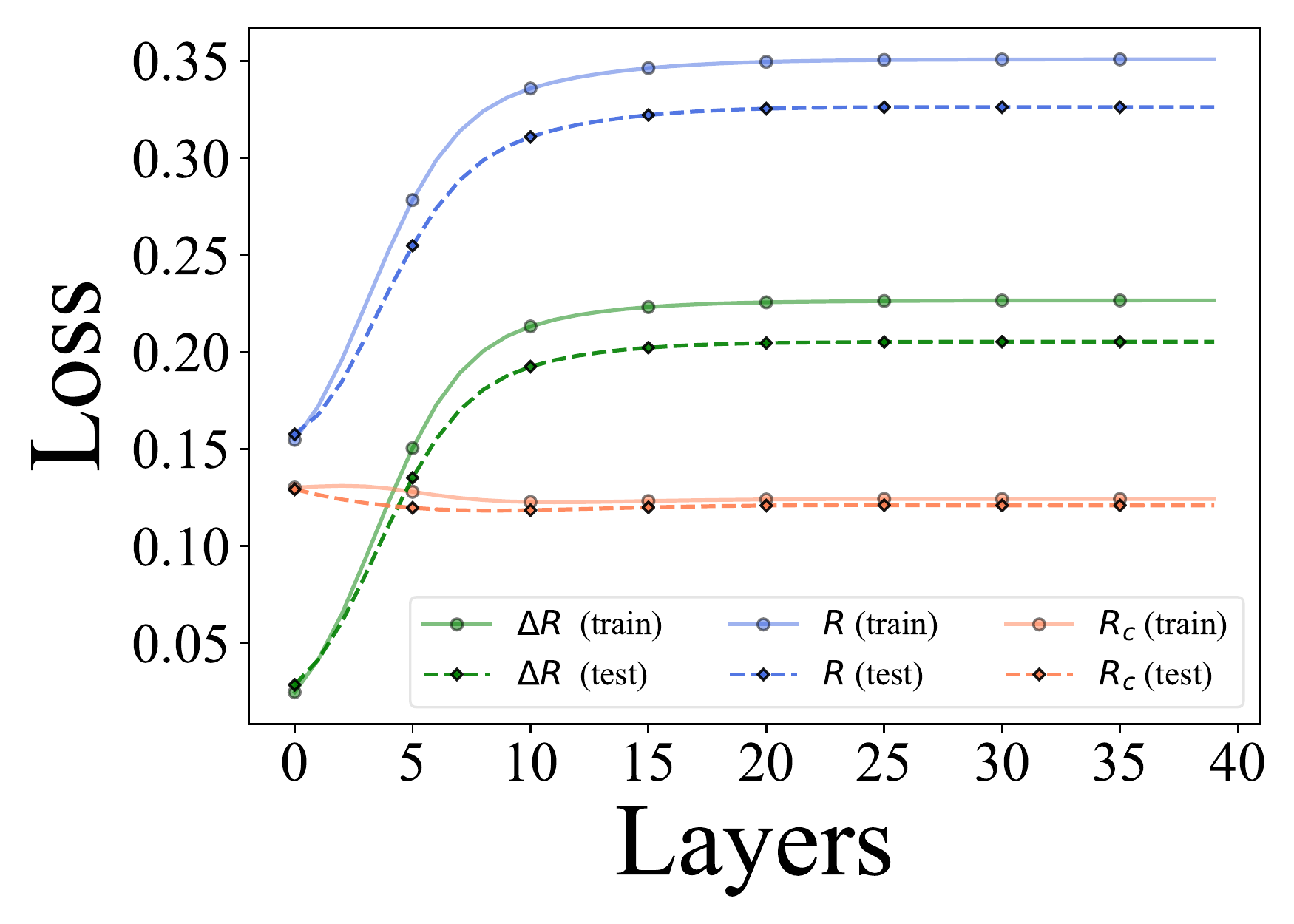}
    }
    \vspace{-0.1in}
    \caption{Training and test losses of rotational invariant ReduNet for rotation invariance on the MNIST dataset (RI-MNIST). }
    \vspace{-0.1in}
    \label{fig:mnist1d_different_channels_loss}
\end{figure}

\subsection{Additional Experiments on Learning 2D Translation Invariance on MNIST}\label{sec:appendix-translation}

\paragraph{Effect of channel size.} We study the effect of the number of channels on the translation invariant task on the MNIST dataset. We use the same parameters as in Figure~\ref{fig:1d-invariance-plots-f} except that we vary the channel size from 5 to 75, where the number of channels used in Figure~\ref{fig:1d-invariance-plots-f} is 75. Similar to the observations in Section~\ref{sec:appendix-rotation}, in Table~\ref{table:appendix-translation-channel}, we observe that both the invariance training accuracy and invariance test accuracy increase when we increase the channel size. From Figure~\ref{fig:mnist2d_different_channels_heatmap} and Figure~\ref{fig:mnist2d_different_channels_loss},  we find that when we increase the number of channels, the shifted learned features become more orthogonal across different classes and the training loss converges faster.

\begin{table}[ht]
\begin{center}
\begin{small}
\begin{tabular}{l|ccccc}
\toprule
Channel & 5 & 15 & 35 & 55 & 75\\
\midrule
Training Acc            & 1.000 & 1.000 & 1.000 & 1.000 & 1.000\\
Test Acc                & 0.680 & 0.610 & 0.670 & 0.770 & 0.840\\
Invariance Training Acc & 0.879 & 0.901 & 0.933 & 0.933 & 0.976\\
Invariance Test Acc     & 0.619 & 0.599 & 0.648 & 0.767 & 0.838\\
\bottomrule
\end{tabular}\vspace{-2mm}
\caption{\small Training/test accuracy of 2D  translation-invariant ReduNet with different number of channels on the MNIST dataset. %\vspace{-5mm} 
}\label{table:appendix-translation-channel}
\end{small}
\end{center}
\end{table}

\begin{figure}[ht]
    \centering
    \subfigure[5 Channels]{
    \includegraphics[width=0.17\textwidth]{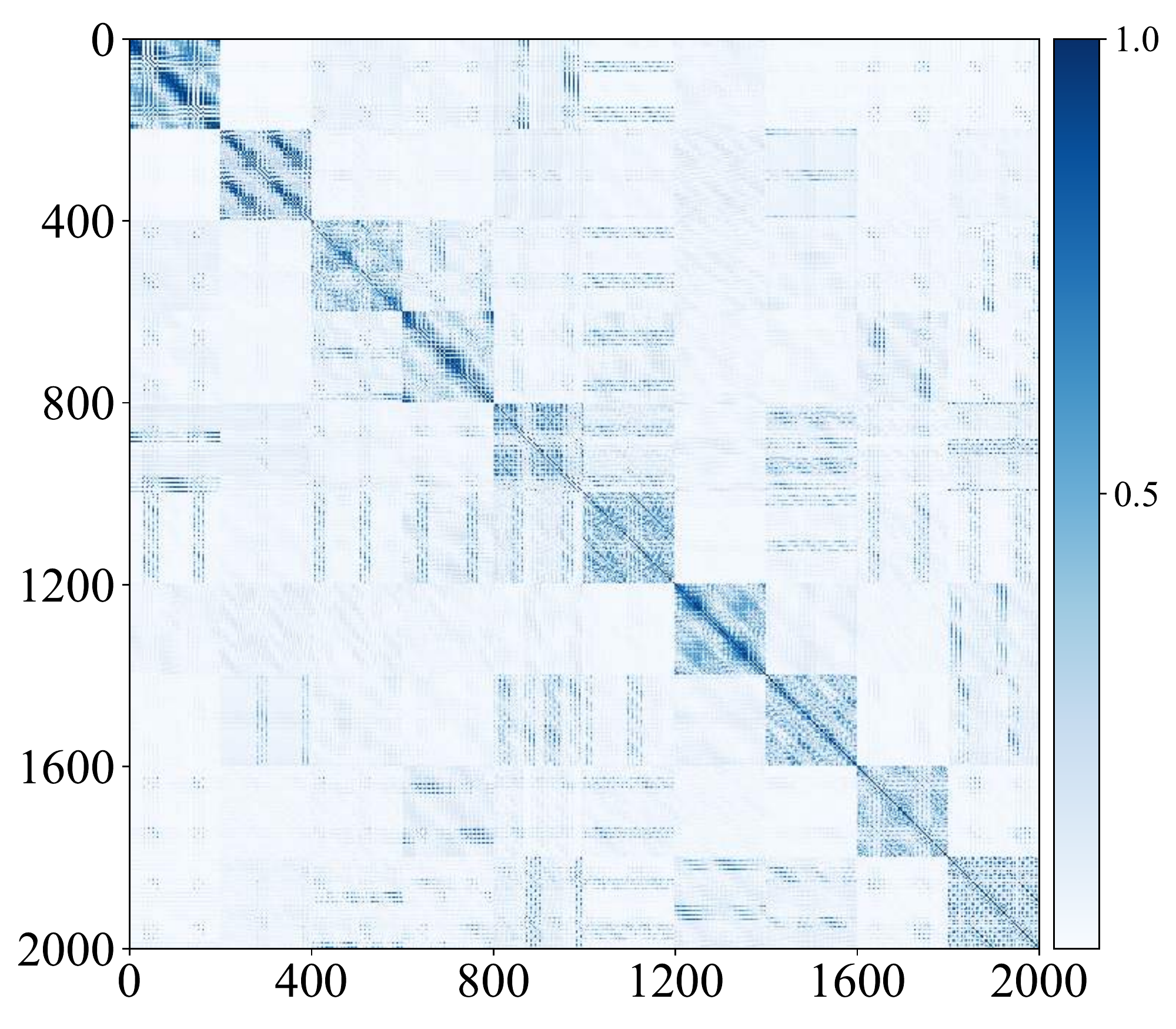}
    }
    \subfigure[15 Channels]{
    \includegraphics[width=0.17\textwidth]{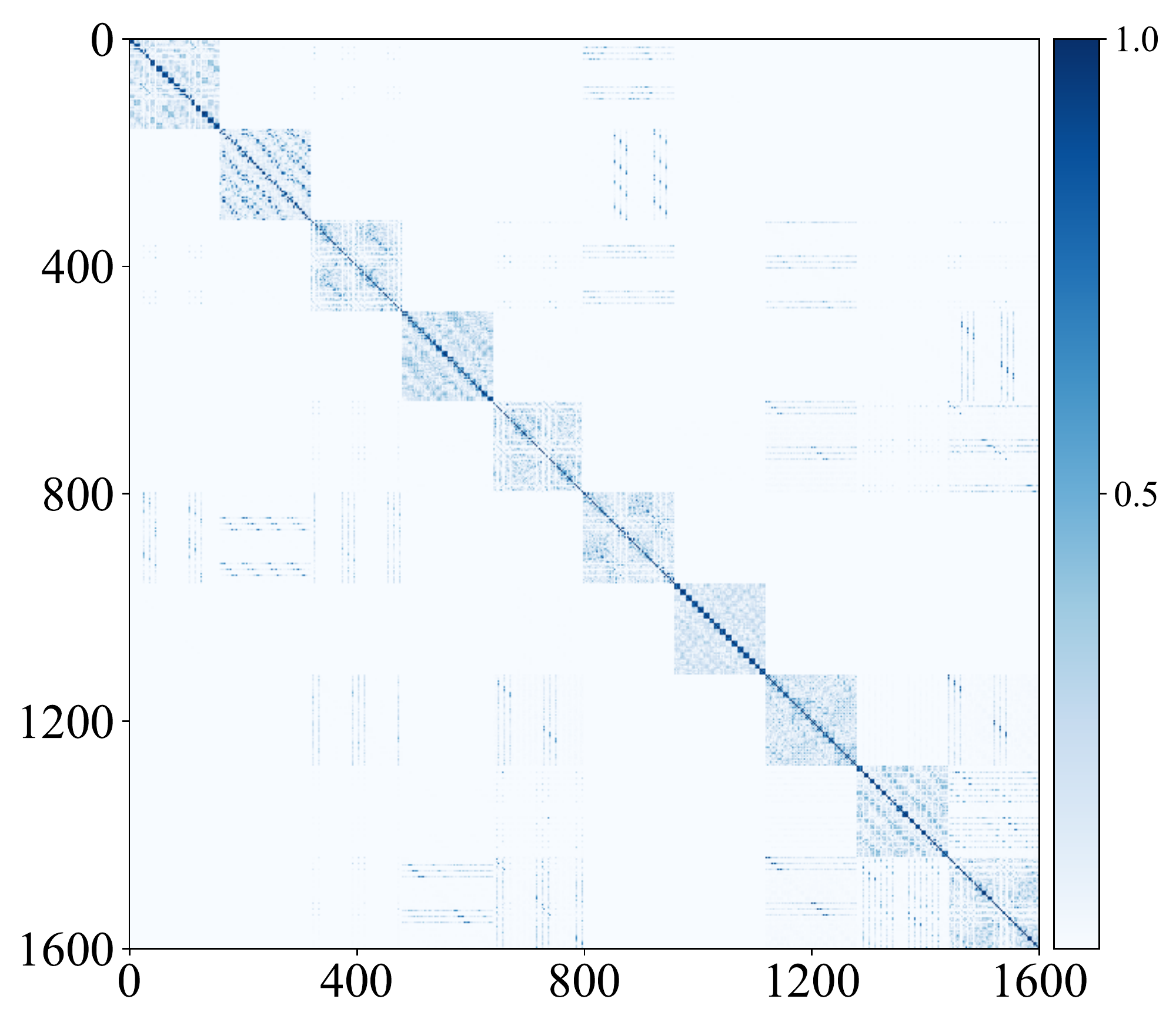}
    }
    \subfigure[35 Channels]{
    \includegraphics[width=0.17\textwidth]{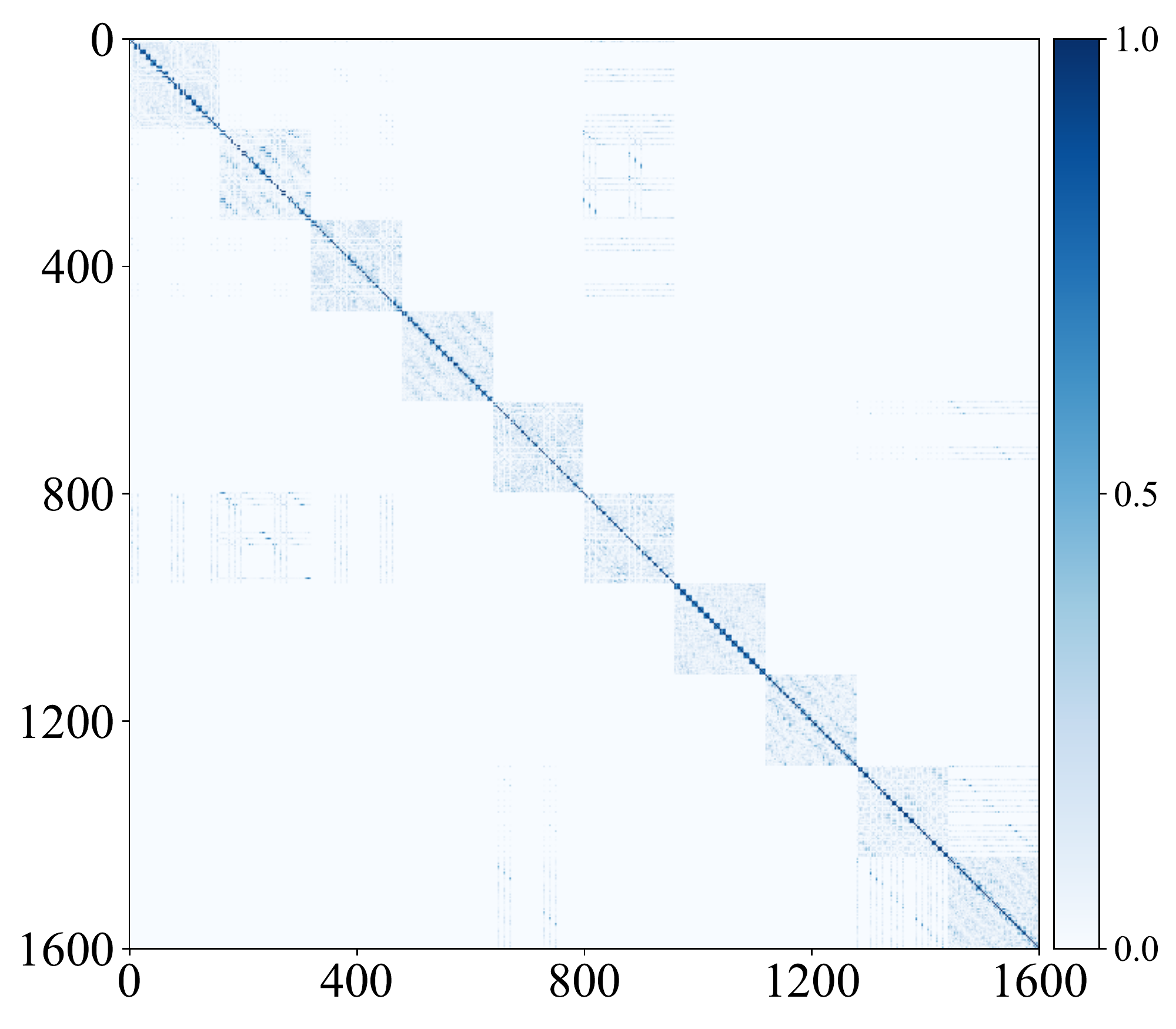}
    }
    \subfigure[55 Channels]{
    \includegraphics[width=0.17\textwidth]{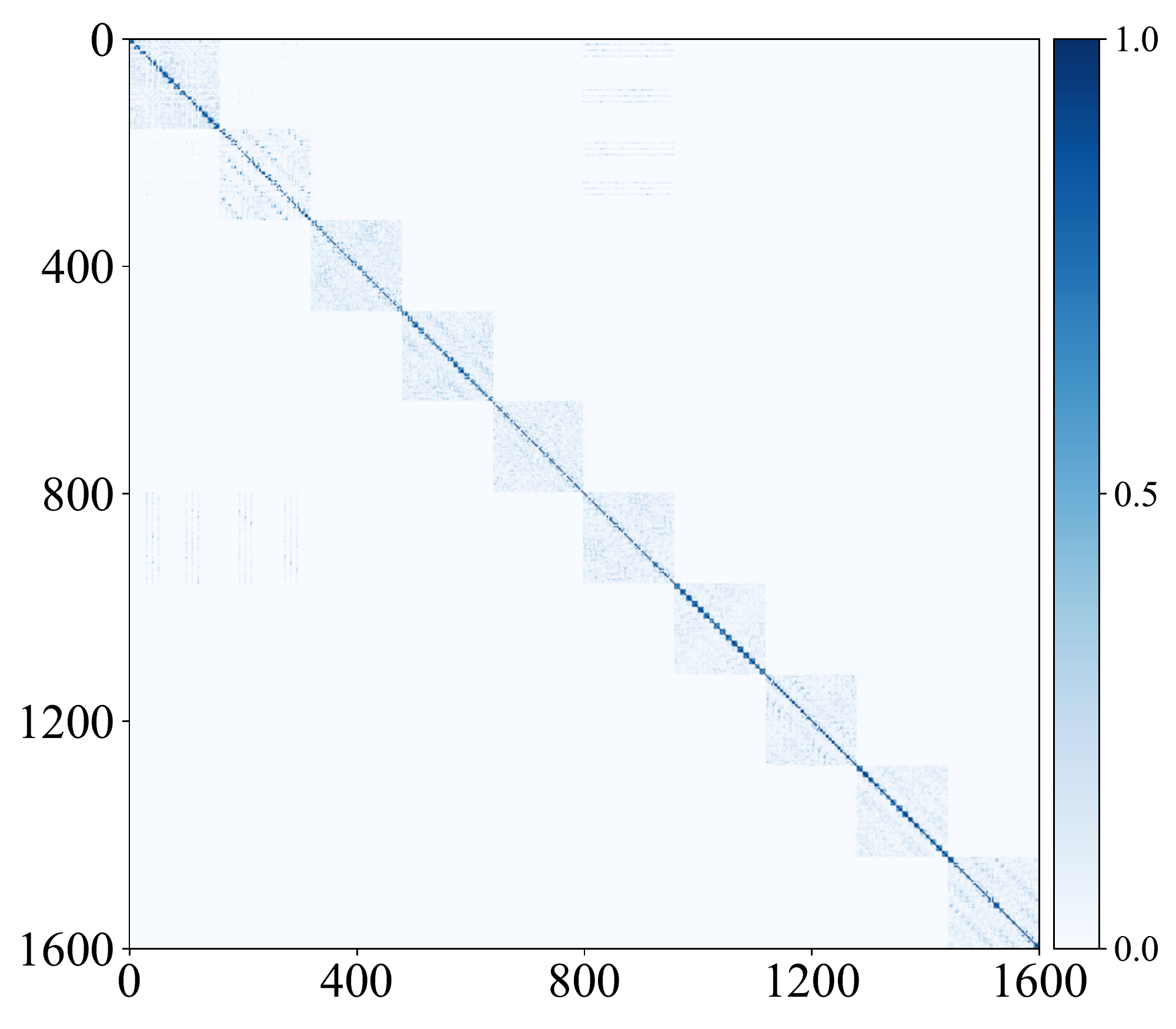}
    }
    \subfigure[75 Channels]{
    \includegraphics[width=0.17\textwidth]{experiments/redunet/mnist2d/class_all/heatmap-Z_translate_train_all.pdf}
    }
    \vspace{-0.1in}
    \caption{Heatmaps of cosine similarity among shifted learned features (with different channel sizes) $\bar{\Z}_{\text{shift}}$ for translation invariance on the MNIST dataset (TI-MNIST).}
    % \vspace{-0.01in}
    \label{fig:mnist2d_different_channels_heatmap}
\end{figure}
\begin{figure}[ht!]
    \centering
    \subfigure[5 Channels]{
    \includegraphics[width=0.17\textwidth]{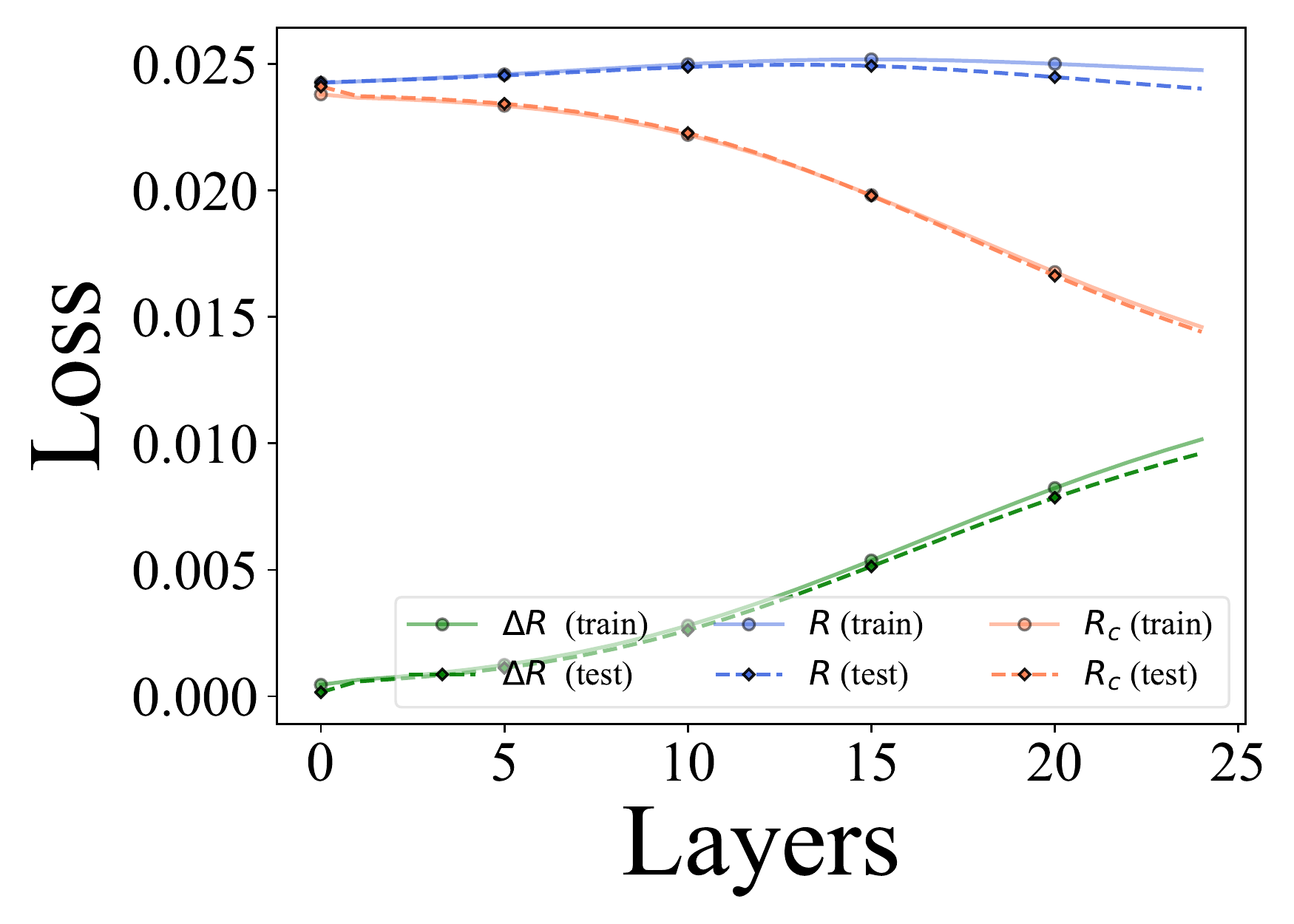}
    }
    \subfigure[15 Channels]{
    \includegraphics[width=0.17\textwidth]{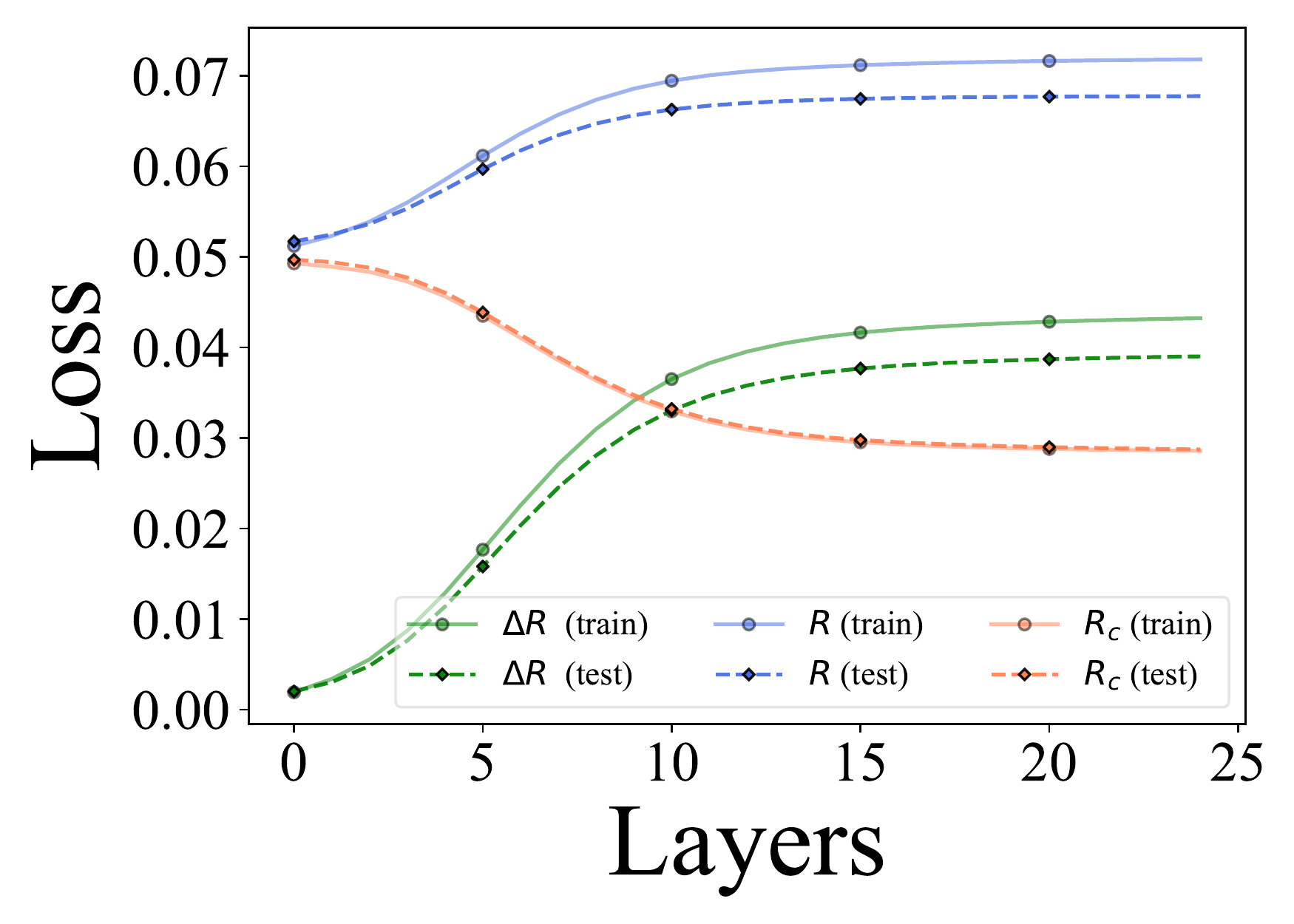}
    }
    \subfigure[35 Channels]{
    \includegraphics[width=0.17\textwidth]{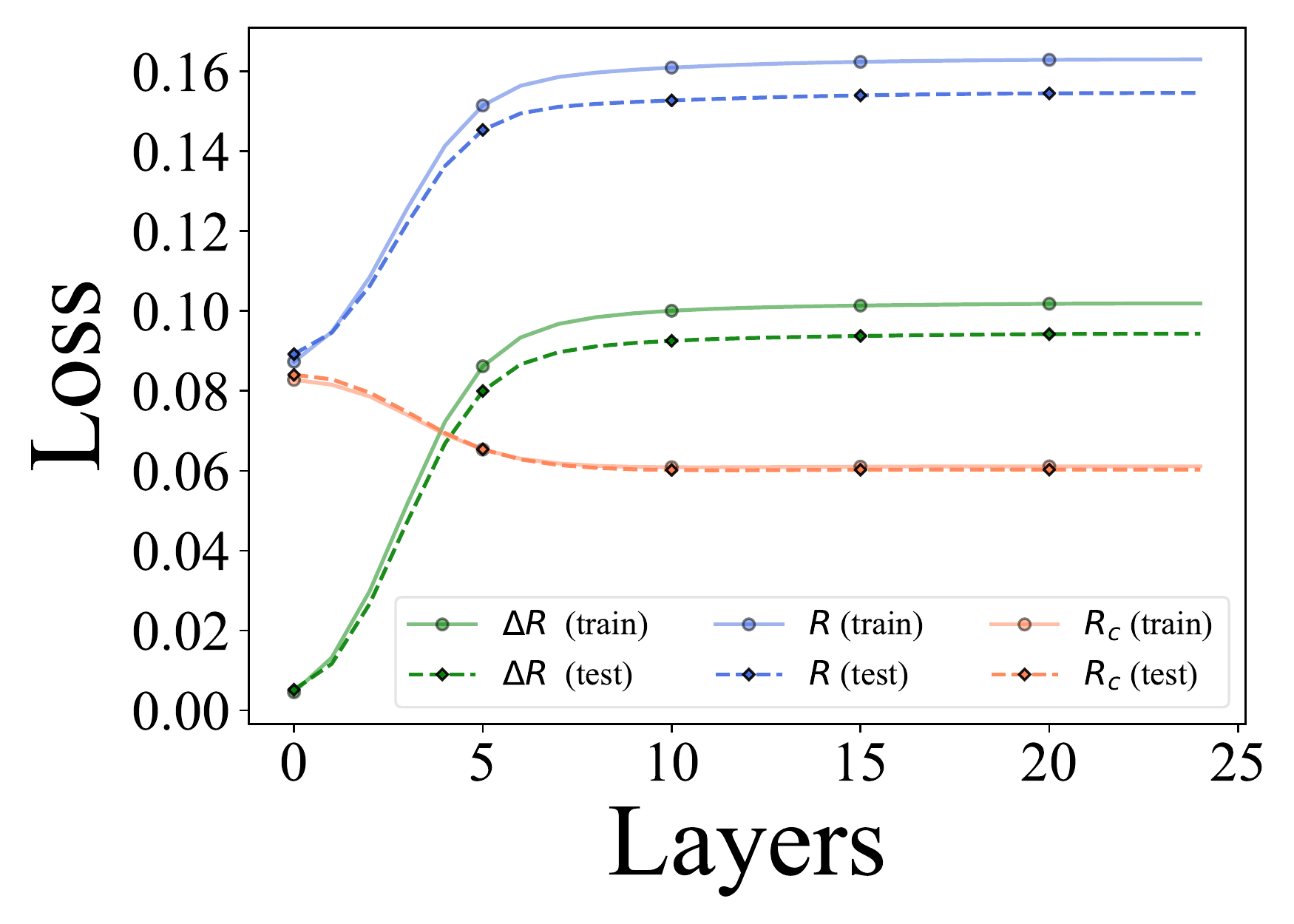}
    }
    \subfigure[55 Channels]{
    \includegraphics[width=0.17\textwidth]{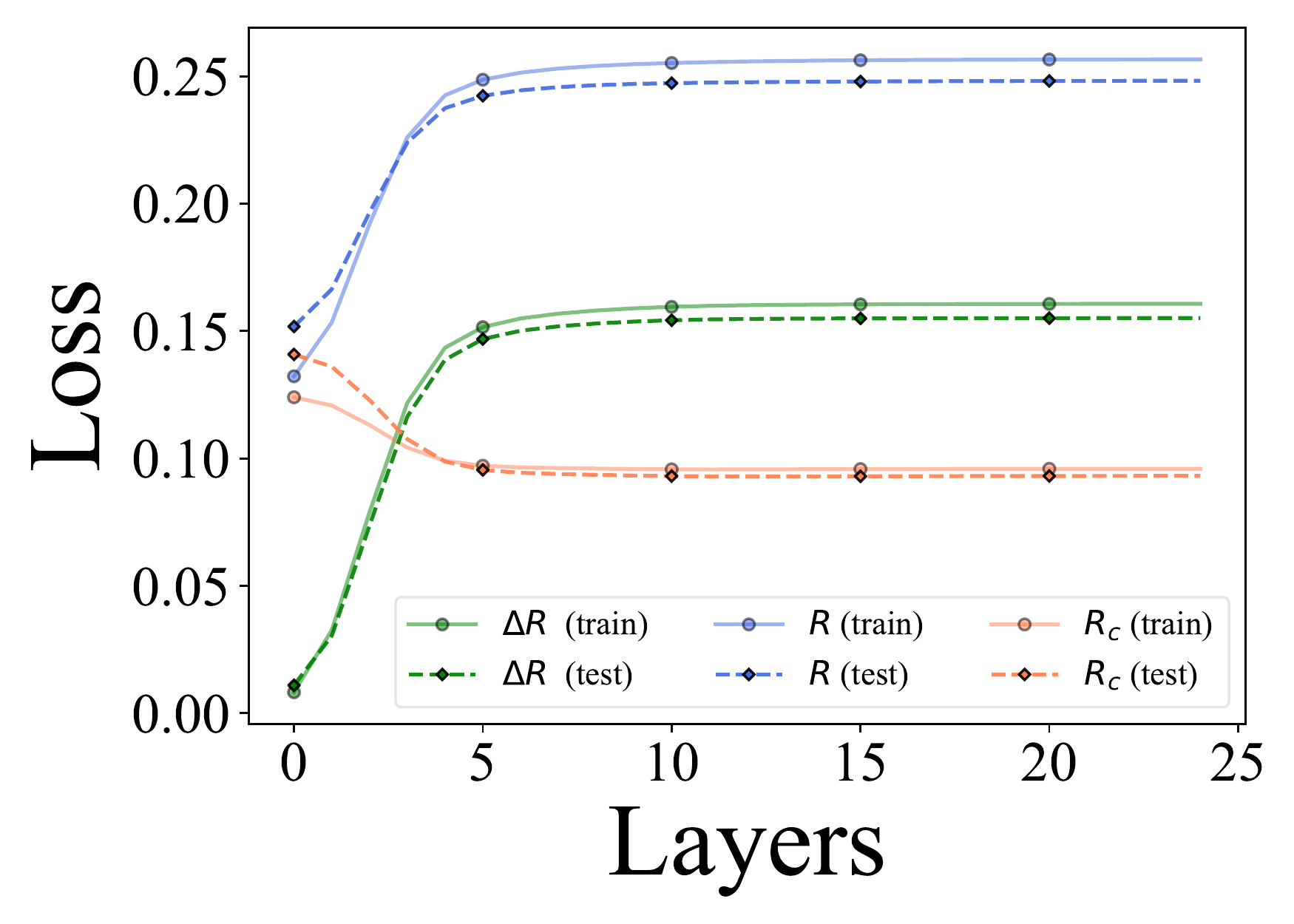}
    }
    \subfigure[75 Channels]{
    \includegraphics[width=0.17\textwidth]{experiments/redunet/mnist2d/class_all/loss-traintest.pdf}
    }
    \vspace{-0.1in}
    \caption{Training and test losses of translation invariant ReduNet for translation invariance on the MNIST dataset (TI-MNIST).}
    %\vspace{-0.01in}
    \label{fig:mnist2d_different_channels_loss}
\end{figure}

\subsection{Additional Experiments on Learning Mixture of Gaussians in $\mathbb{S}^1$ and $\mathbb{S}^2$}\label{sec:appendix-gaussian}

\paragraph{Additional experiments on  $\mathbb{S}^1$ and $\mathbb{S}^2$.} We also provide additional experiments on learning mixture of Gaussians in $\mathbb{S}^1$ and $\mathbb{S}^2$ in Figure~\ref{fig:appendix-guassian-exp1}. We can observe similar behavior of the proposed ReduNet: the network can map data points from different classes to orthogonal subspaces.

\begin{figure*}[ht!]
  \begin{center}
    \subfigure[$\X (2D)$ (\textbf{left: }scatter plot; \textbf{right: }cosine similarity visualization)]{
        \includegraphics[width=0.17\textwidth]{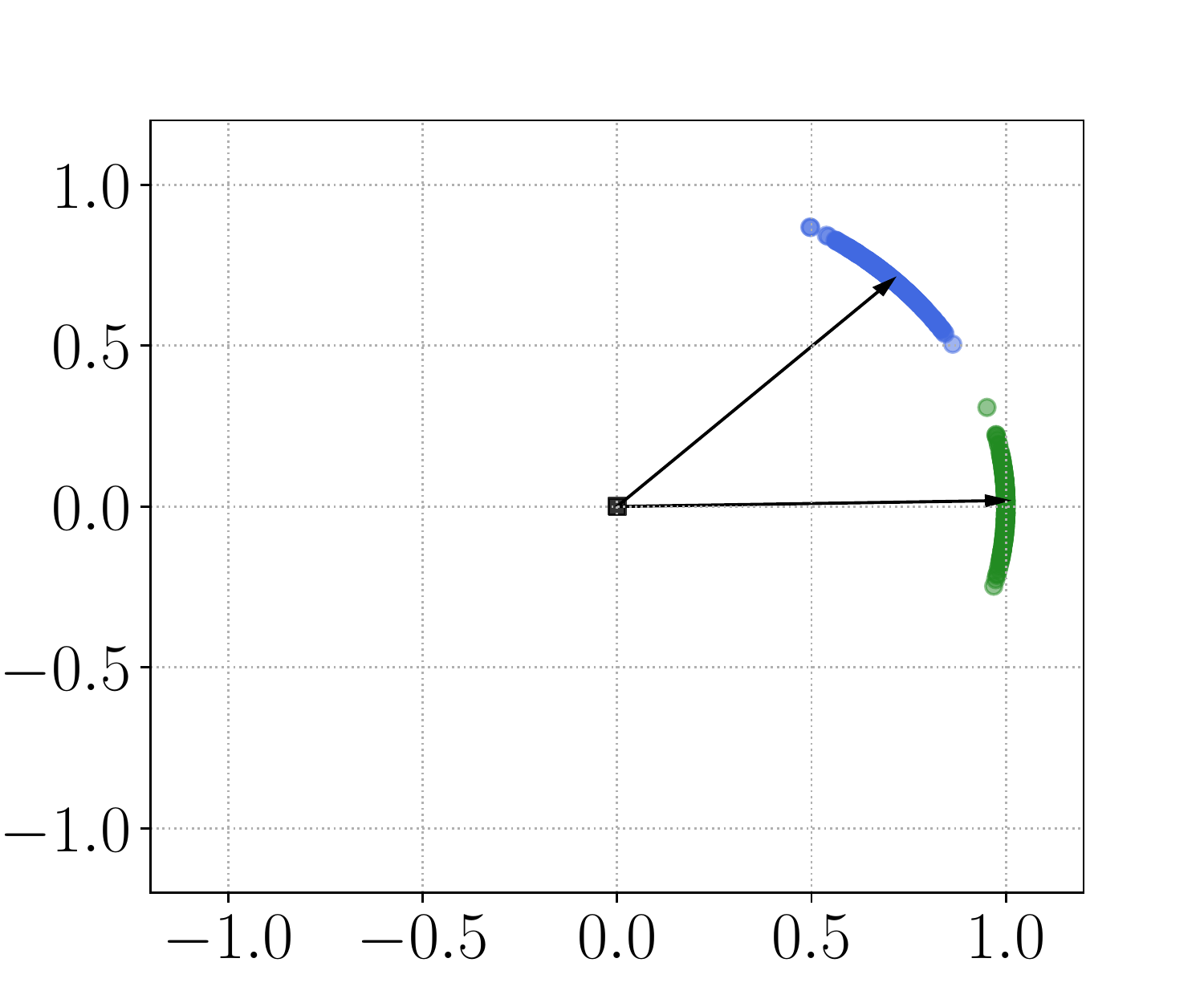}
        \includegraphics[width=0.17\textwidth]{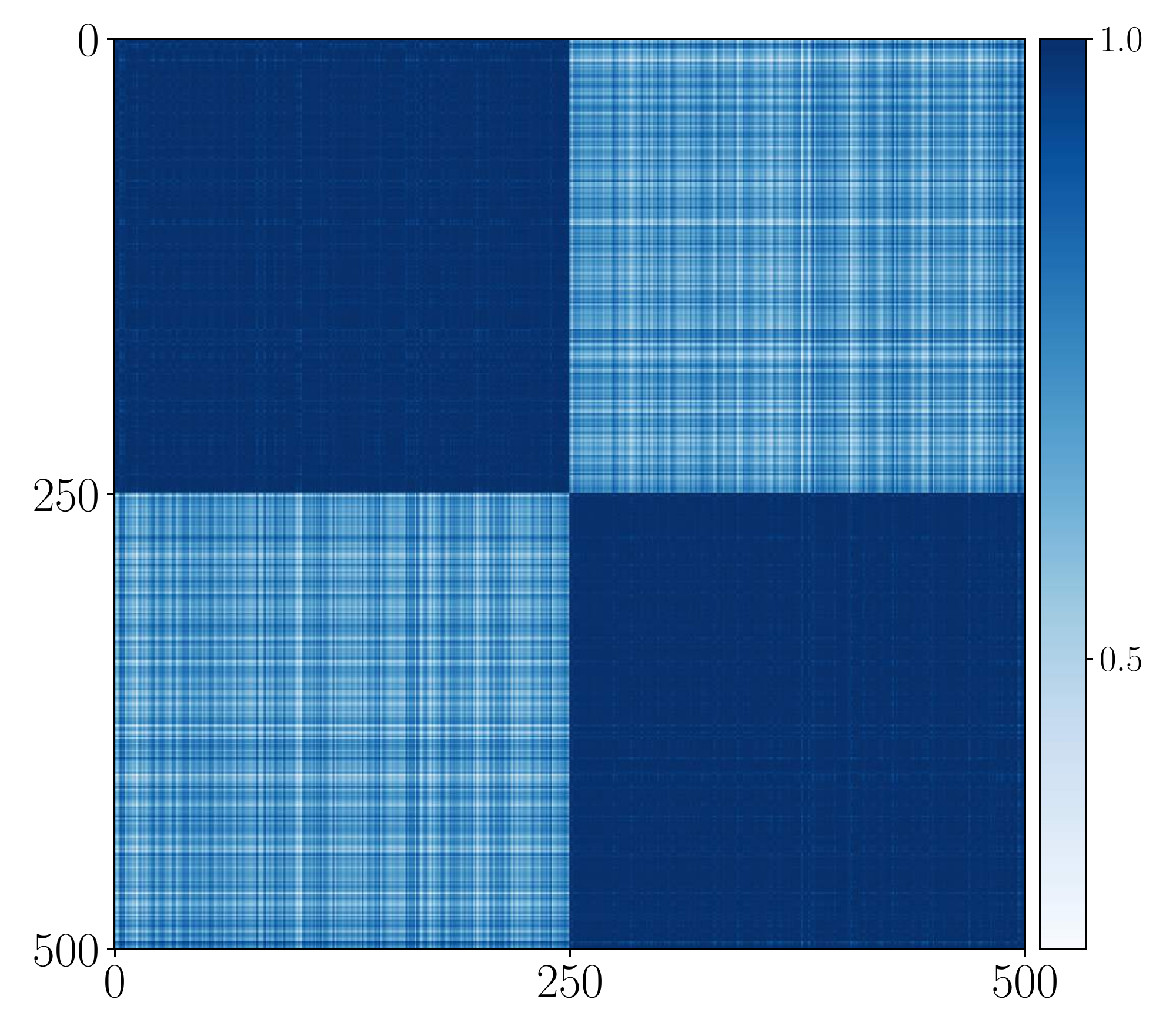}
    }
    \quad
    \subfigure[$\Z (2D)$ (\textbf{left: }scatter plot; \textbf{right: }cosine similarity visualization)]{
        \includegraphics[width=0.17\textwidth]{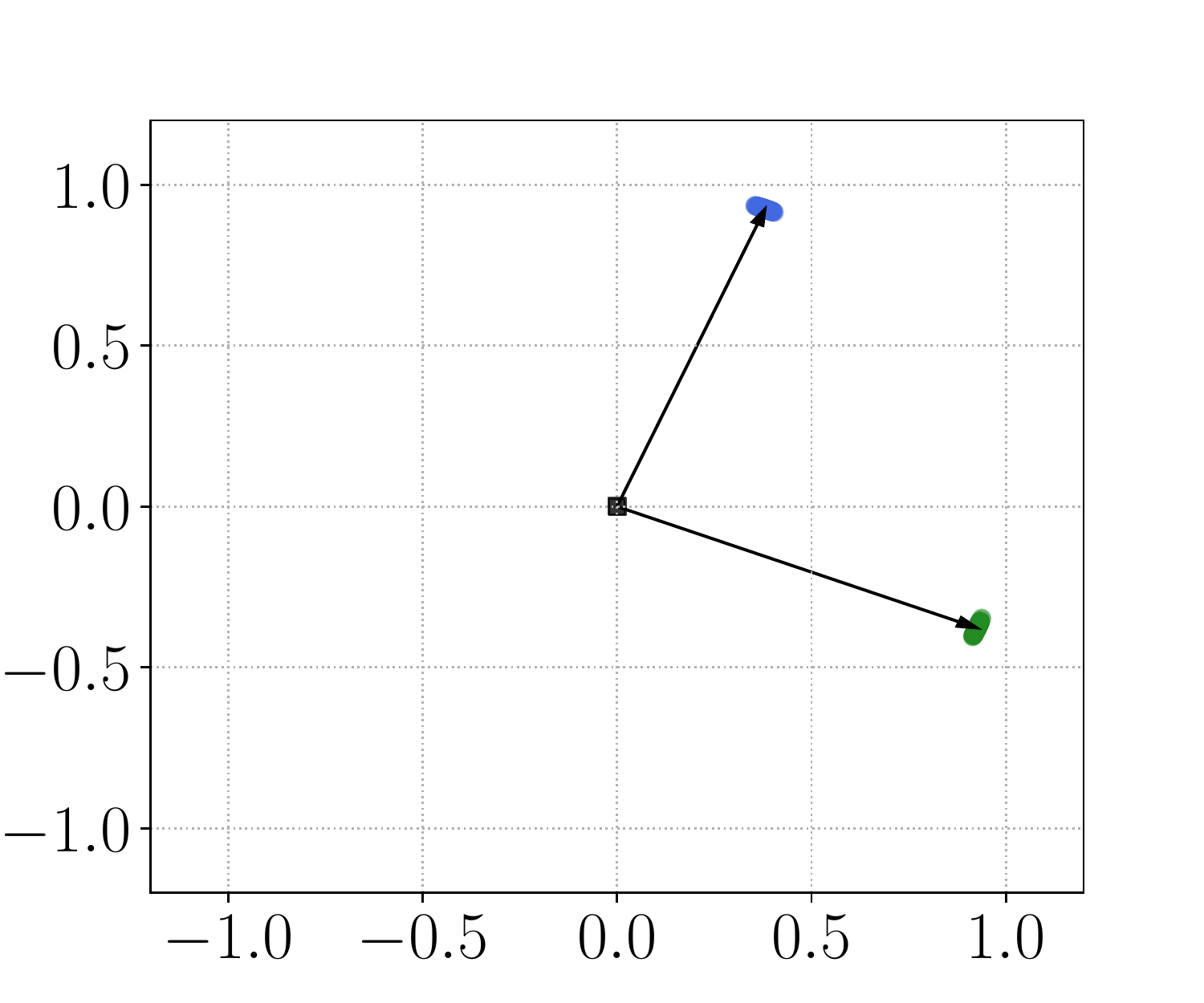}
        \includegraphics[width=0.17\textwidth]{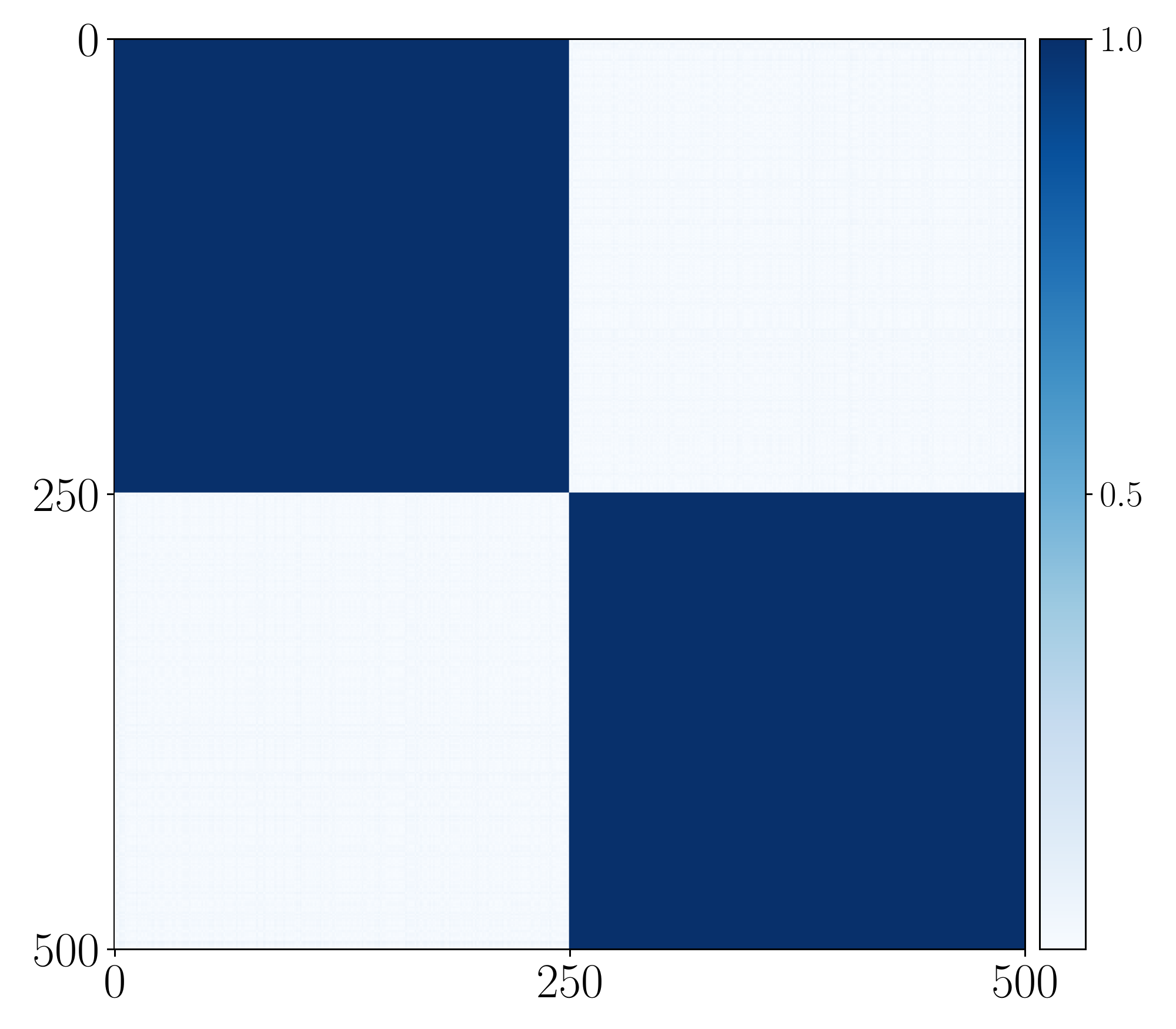}
    }
    \quad
    \subfigure[Loss]{
        \includegraphics[width=0.17\textwidth]{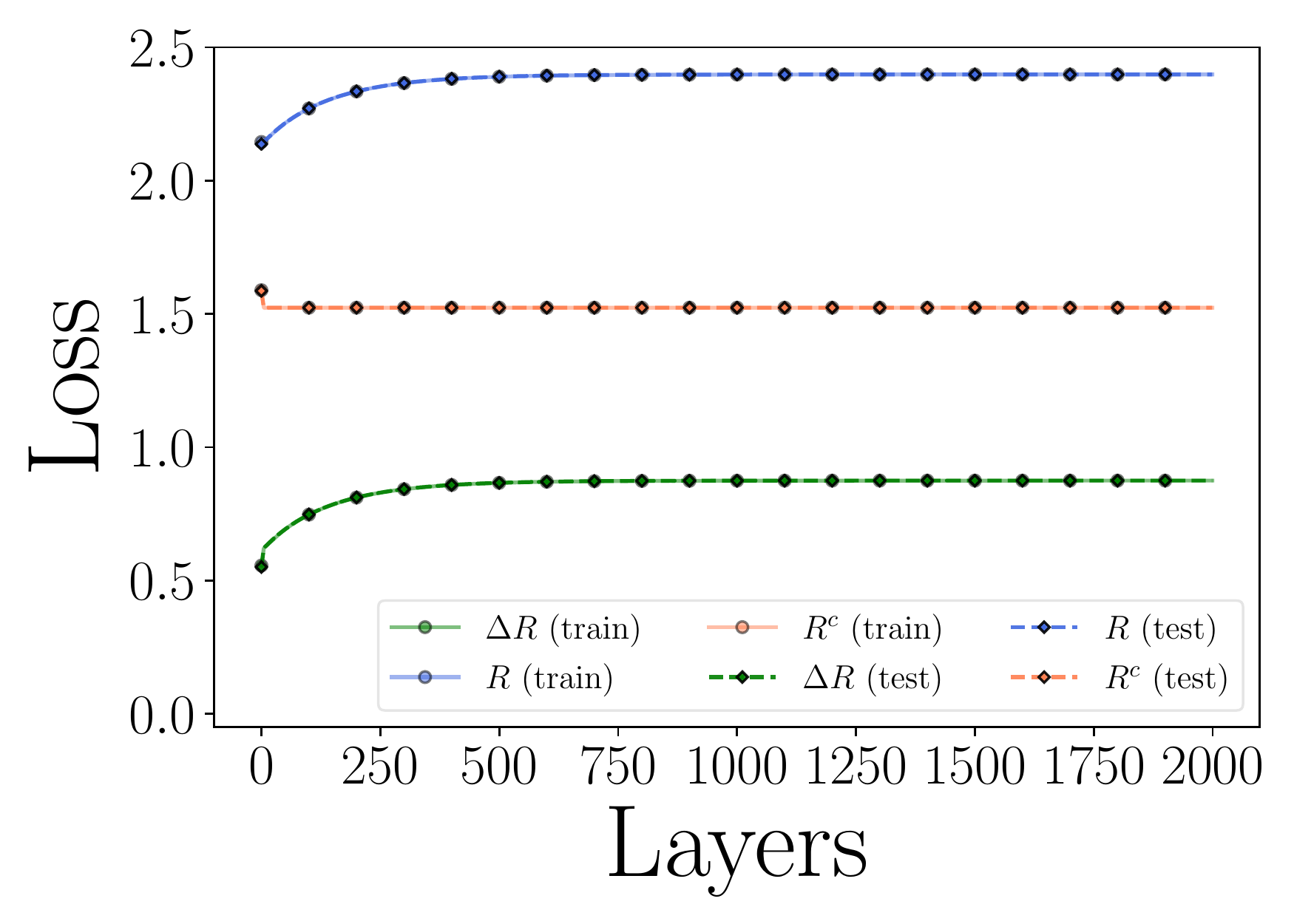}
    }
    \subfigure[$\X (3D)$ (\textbf{left: }scatter plot; \textbf{right: }cosine similarity visualization)]{
        \includegraphics[width=0.17\textwidth]{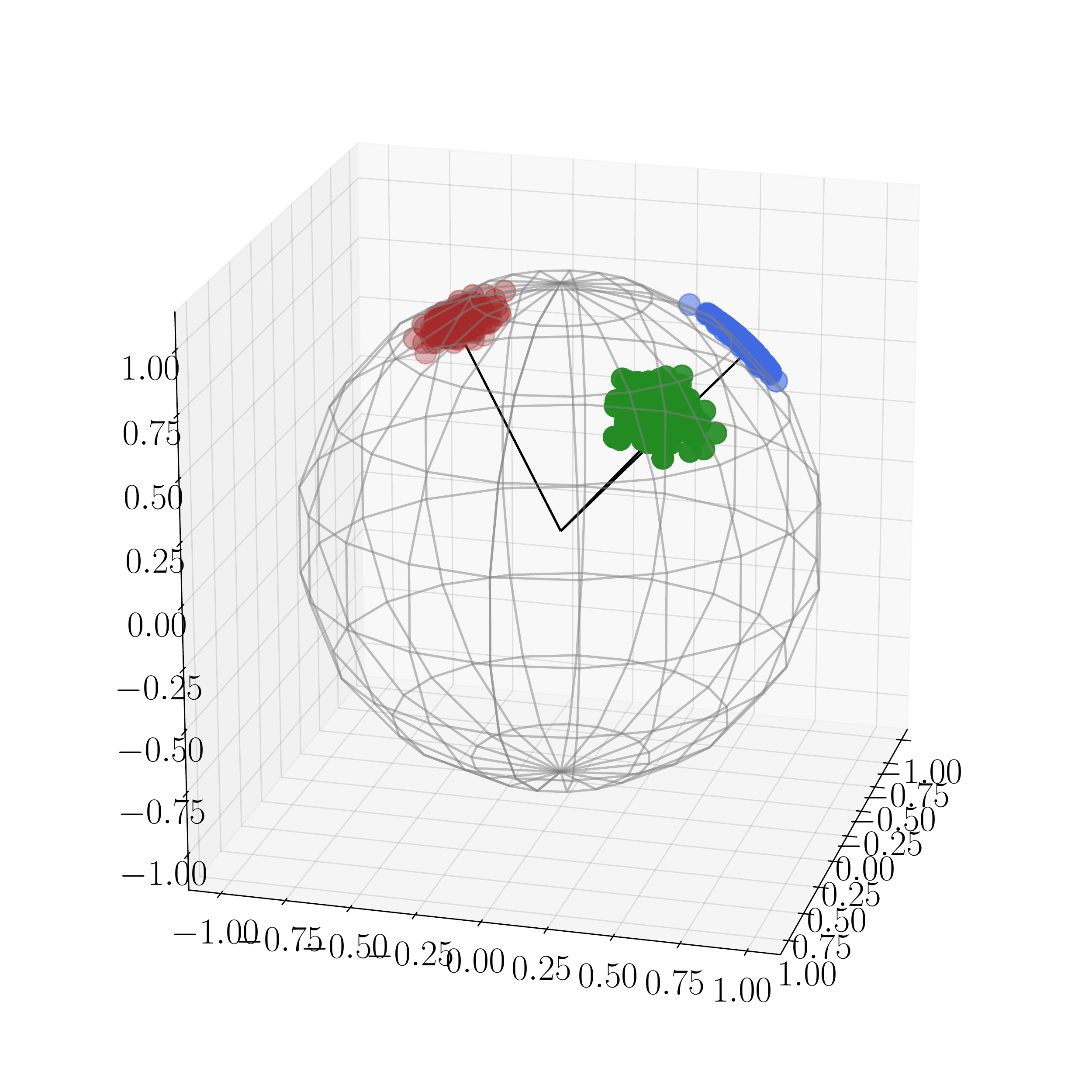}
        \includegraphics[width=0.17\textwidth]{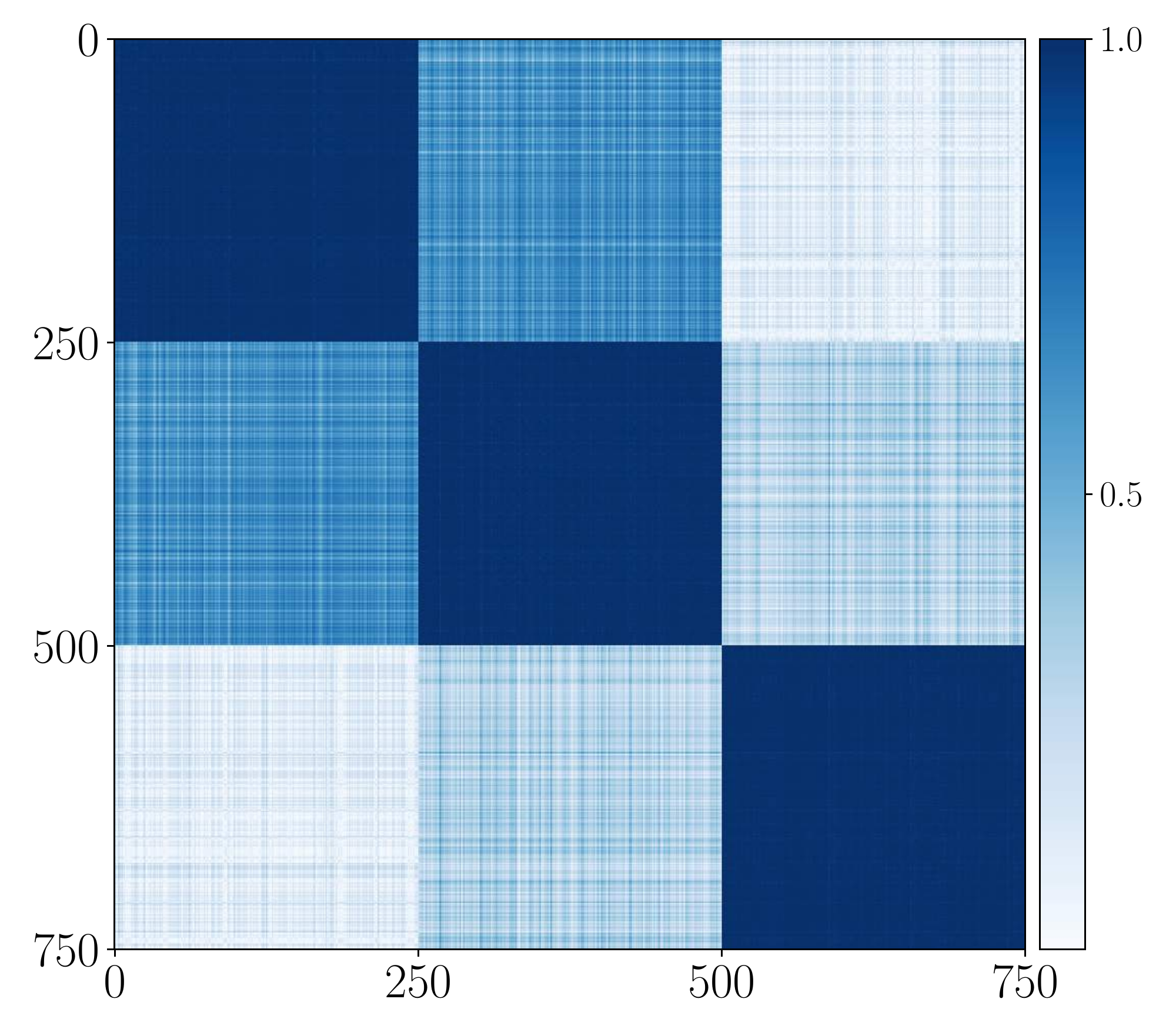}
    }
    \quad
    \subfigure[$\Z (3D)$ (\textbf{left: }scatter plot; \textbf{right: }cosine similarity visualization)]{
        \includegraphics[width=0.17\textwidth]{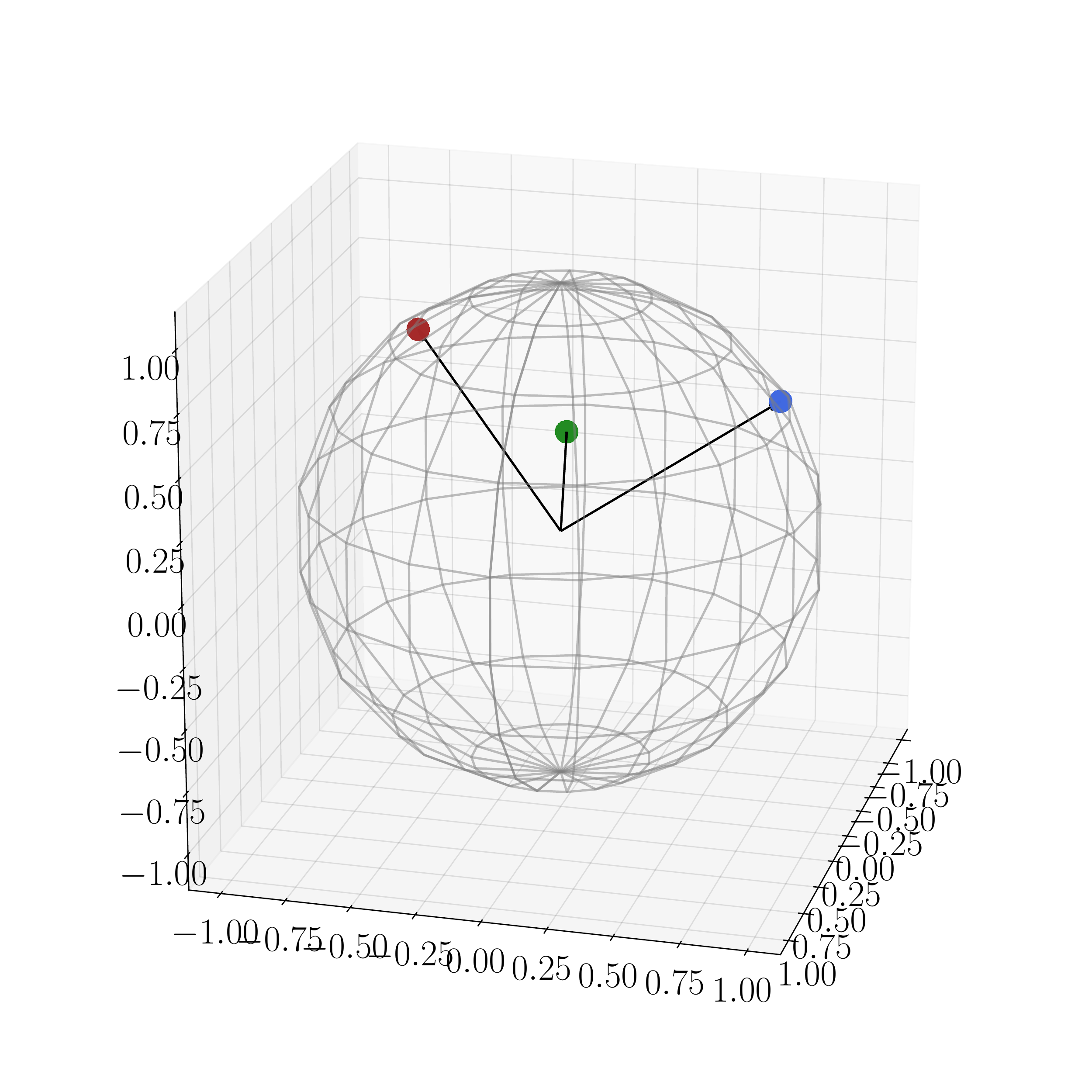}
        \includegraphics[width=0.17\textwidth]{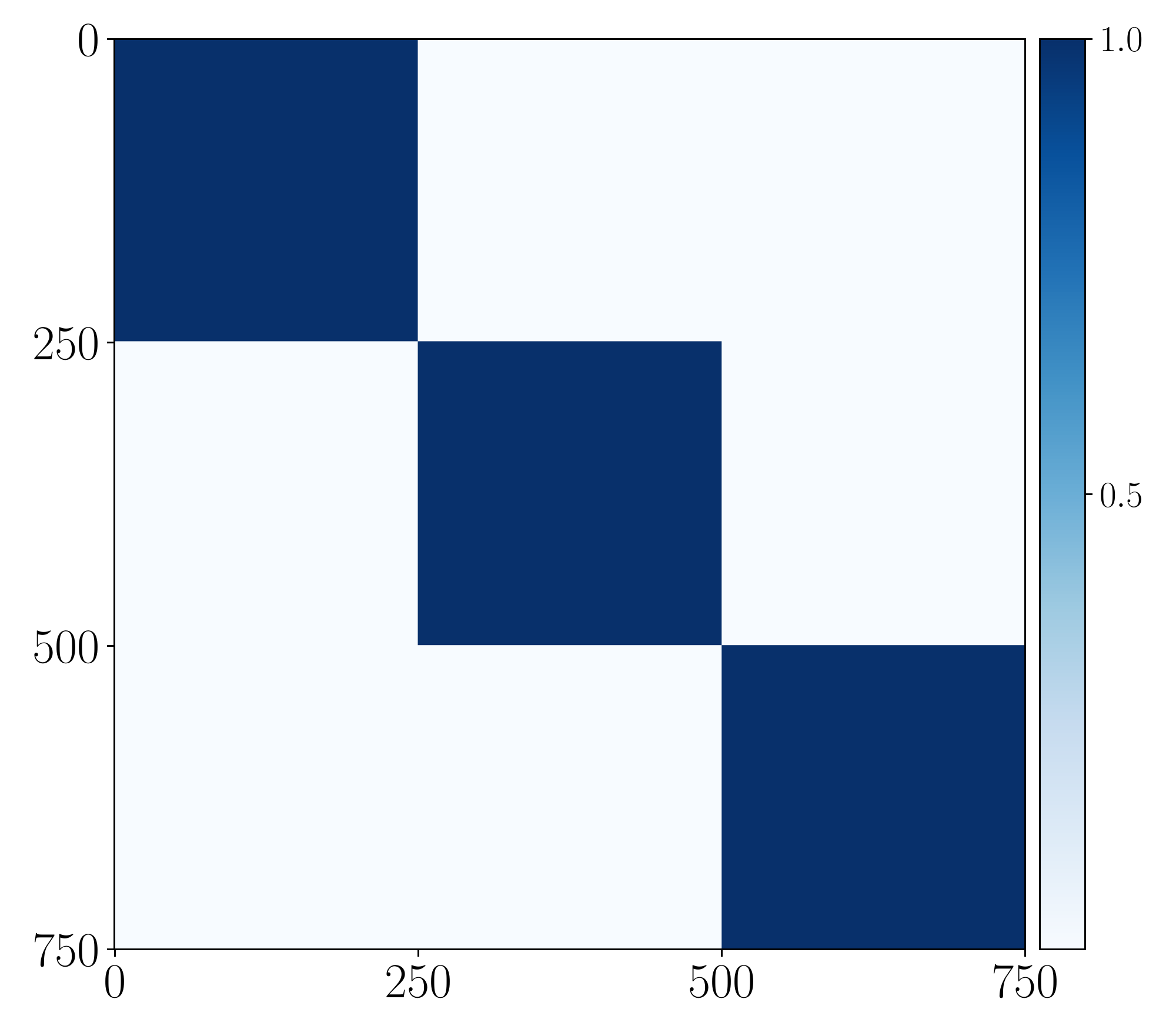}
    }
    \quad
    \subfigure[Loss]{
        \includegraphics[width=0.17\textwidth]{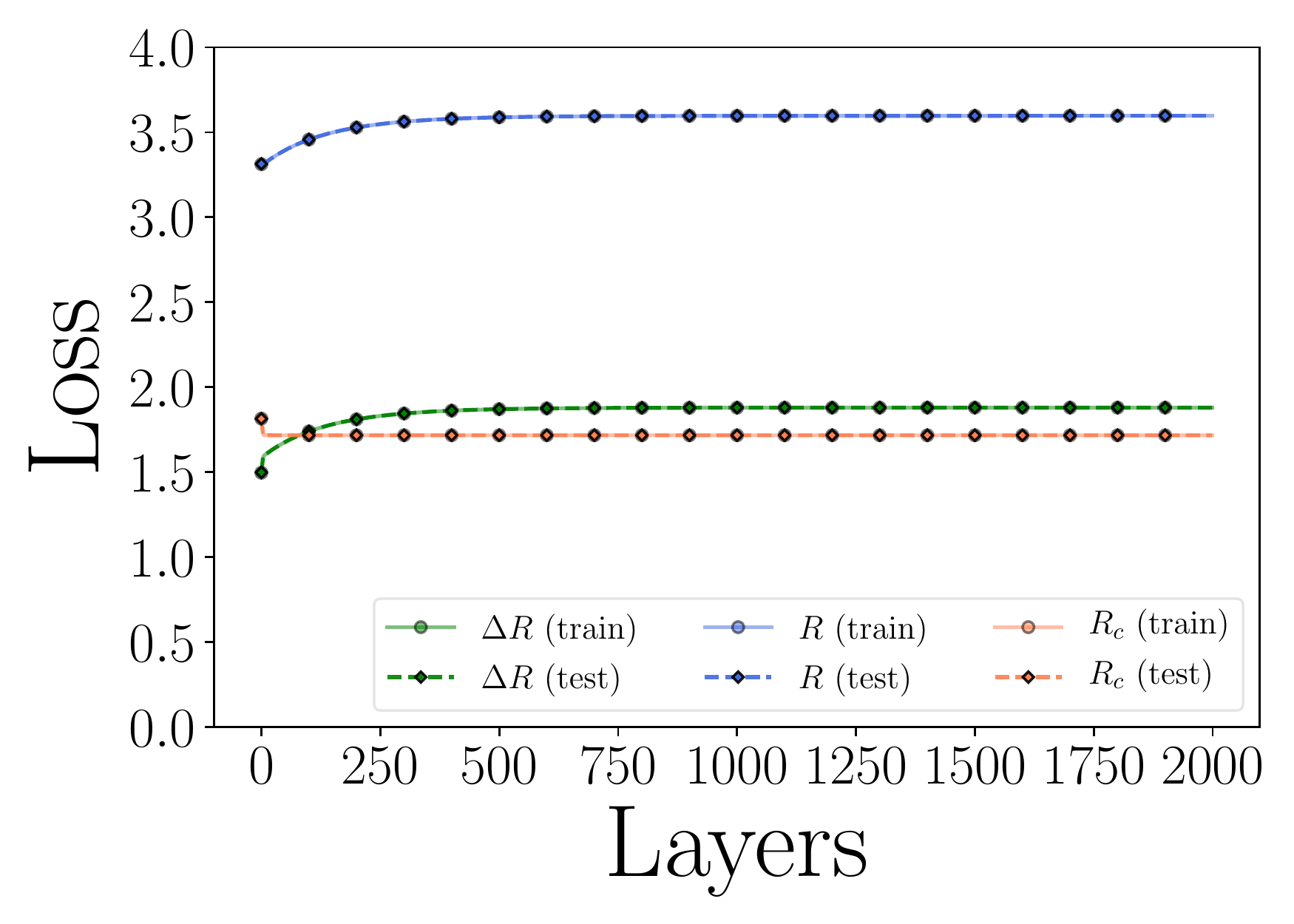}
    }
    \vskip -0.1in
    \caption{Learning mixture of Gaussians in $\mathbb{S}^1$ and $\mathbb{S}^2$. (\textbf{Top}) For $\mathbb{S}^1$, we set $\sigma_1 = \sigma_2 = 0.1$; (\textbf{Bottom}) For $\mathbb{S}^2$, we set $\sigma_1 = \sigma_2 = \sigma_3 = 0.1$.
    }\label{fig:appendix-guassian-exp1}
  \end{center}
  \vspace{-0.4in}
\end{figure*}

\paragraph{Additional experiments on  $\mathbb{S}^1$ with more than 2 classes.}  We try to apply ReduNet to learn mixture of Gaussian distributions on $\mathbb{S}^1$ with the number of class is larger than 2. Notice that these are the cases to which Theorem \ref{thm:MCR2-properties} no longer applies. These experiments suggest that the MCR$^2$ still promotes between-class discriminativeness with so constructed ReduNet. In particular, the case on the left of Figure \ref{fig:appendix-scatter-extra} indicates that the ReduNet has ``merged'' two linearly correlated clusters into one on the same line. This is consistent with the objective of rate reduction to group data as linear subspaces. 

\begin{figure*}[h]
\vspace{-0.2in}
  \begin{center}
    \subfigure[3 classes. (\textbf{Left}) $\X$; (\textbf{Right}) $\Z$]{
        \includegraphics[width=0.23\textwidth]{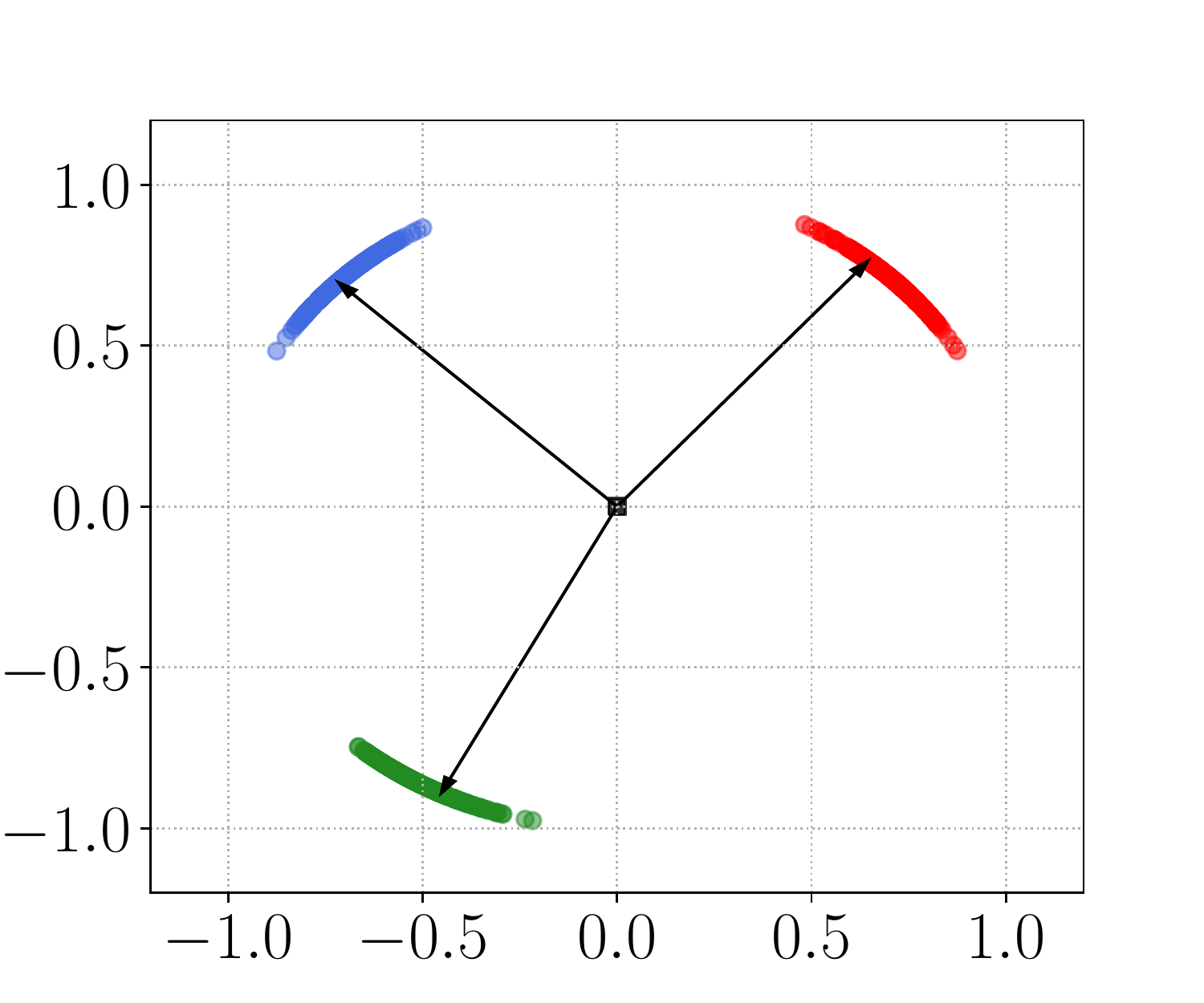}
        \includegraphics[width=0.23\textwidth]{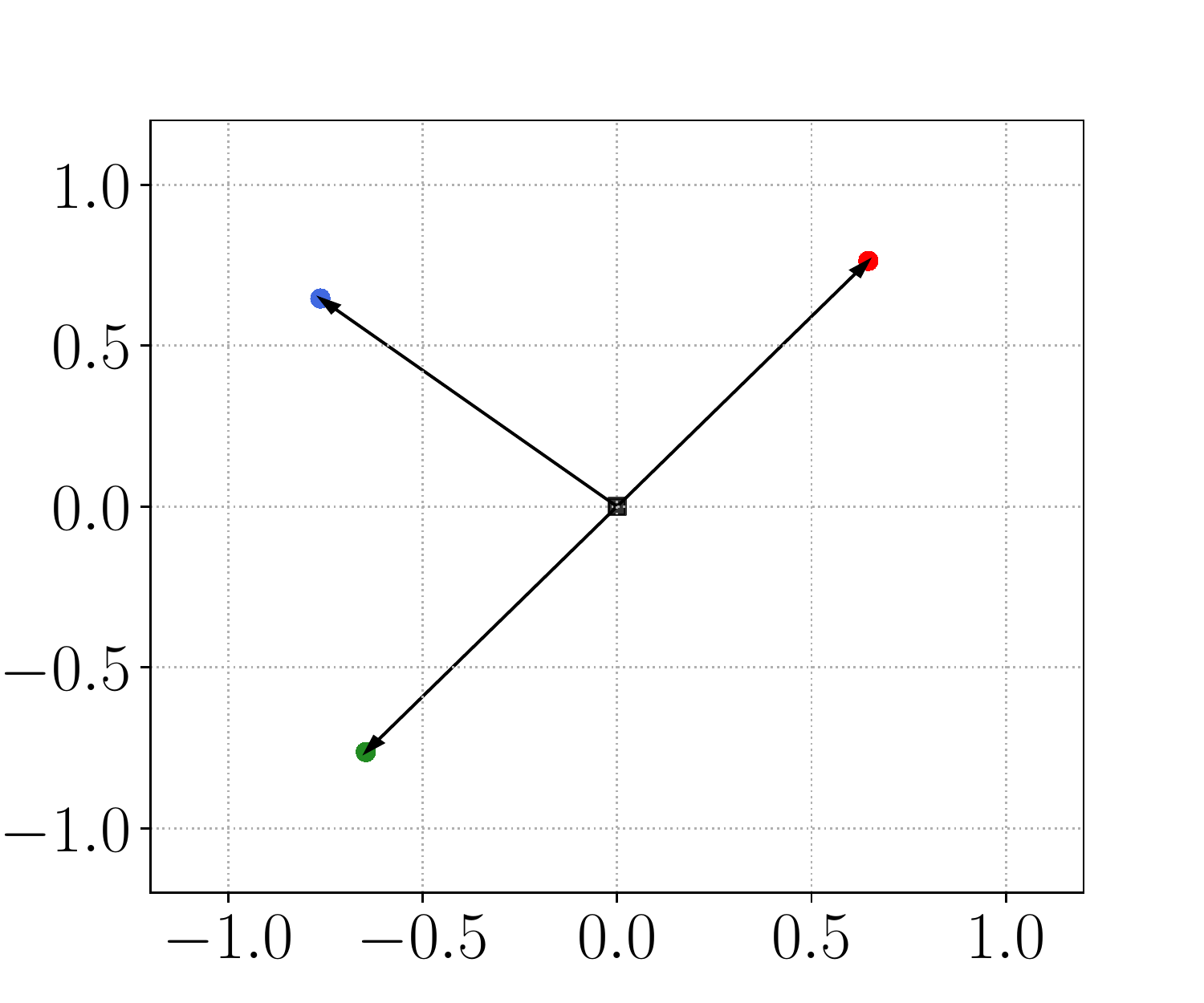}
    }
    \subfigure[6 classes. (\textbf{Left}) $\X$; (\textbf{Right}) $\Z$]{
        \includegraphics[width=0.23\textwidth]{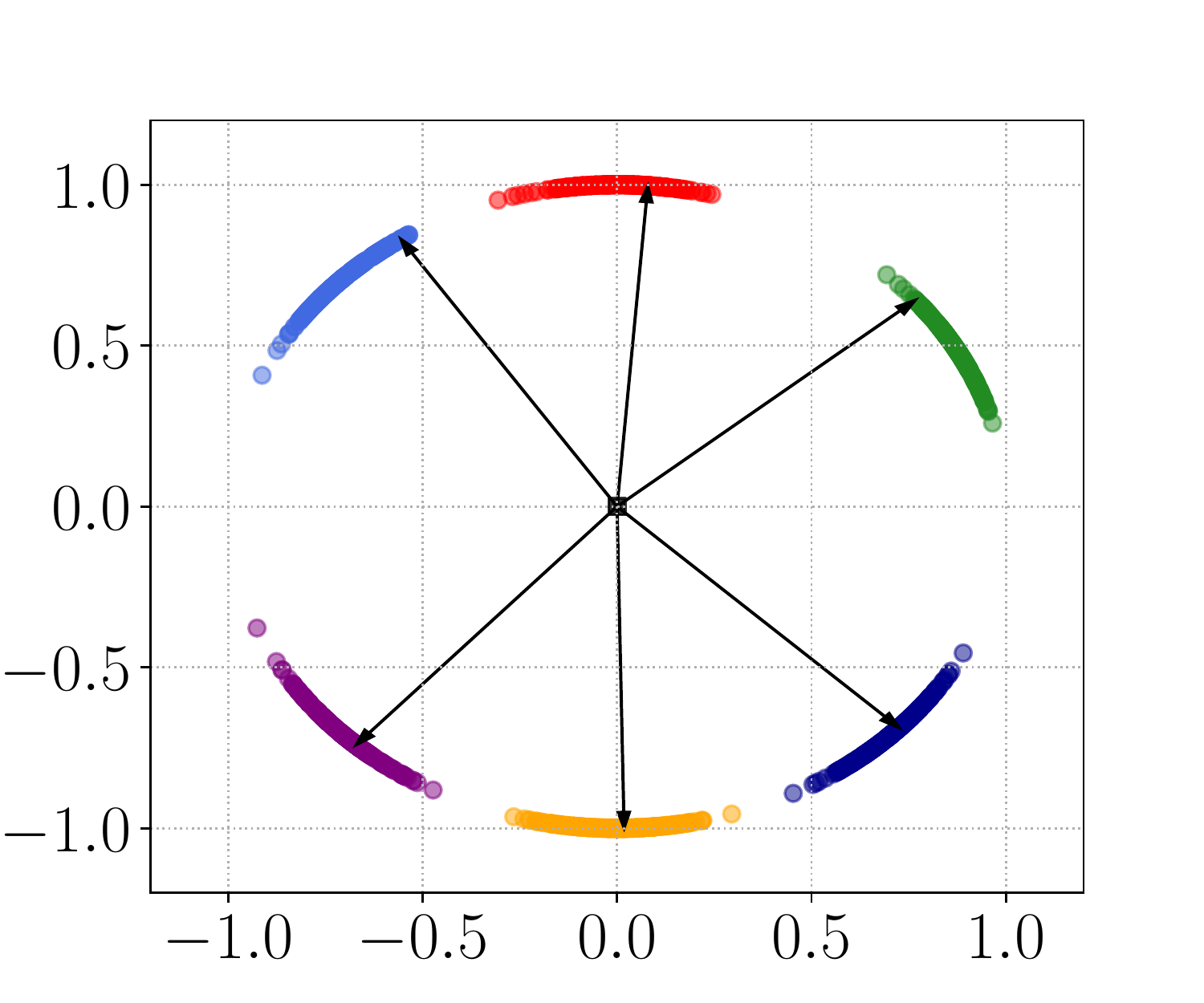}
        \includegraphics[width=0.23\textwidth]{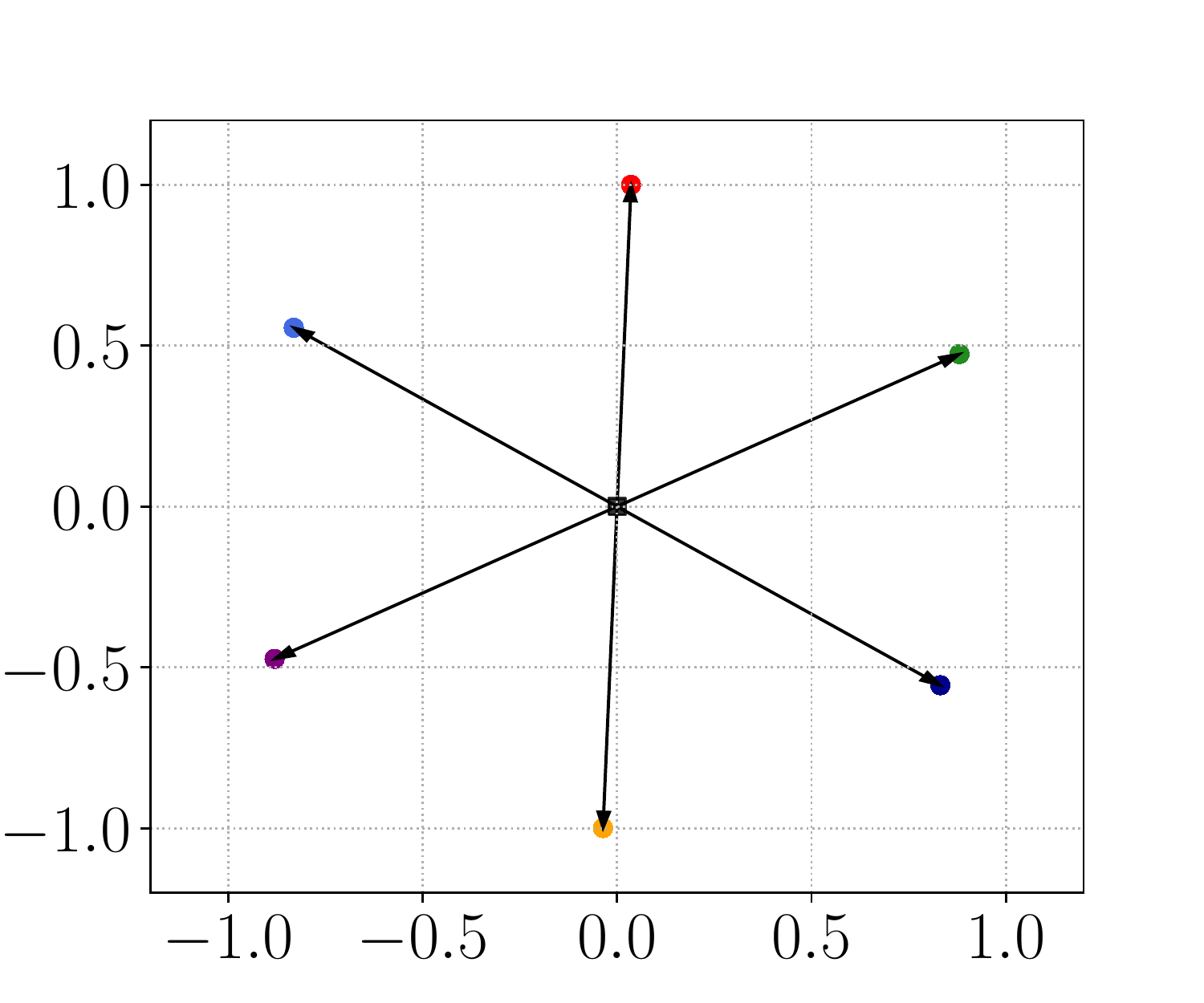}
    }
        \vskip -0.1in
    \caption{Learning mixture of Gaussian distributions with more than 2 classes. For both cases, we use step size $\eta=0.5$ and precision $\epsilon=0.1$. For (a), we set iteration $L=2,500$; for (b), we set iteration $L=4,000$.}
    \label{fig:appendix-scatter-extra}
  \end{center}
\end{figure*}

% \clearpage

\subsection{Additional Experimental Results of ReduNet and Scattering Transform} \label{app:ReduNet-Scattering}

\paragraph{Comparing ResNet and ReduNet on maximizing  $\Delta R$ using CIFAR10.}

We compare the objective value reached by training a ResNet-18 versus constructing a ReduNet that maximizes the MCR$^2$ objective in an iterative fashion. For training ResNet-18, we use the same setting as Section~\ref{sec:experiment-objective}, with mini-batch size $m=1000$ and precision $\epsilon^2 = 0.5$. For training ReduNet, we first apply scattering transform to the training samples, then flatten the scatter-transformed features and project them to a 128-dimensional feature vector using a random linear projection. Then we construct a 4000-layer ReduNet with precision $\epsilon^2 = 0.1$, step size $\eta = 0.5$ and $\lambda = 500$. We vary the scales $\{2^3, 2^4\}$ and angles $\{4, 8, 16\}$ in scattering transforms, but set the phase to 4 for all sets of experiments. In both cases, we use all 50000 training samples. Our results are listed in Table~\ref{table:appendix-obj-val}. Empirically, features learned by training a ResNet-18 with MCR$^2$ achieves a higher $\Delta R$ of 48.65 than those learned by constructing a ReduNet, achieving a $\Delta R$ of 46.14.

\begin{table}[ht]
\begin{center}
\begin{small}

\begin{tabular}{cc|ccc}
\toprule
Scales         & Angles        & $\Delta R$                  & $R$                         & $R_c$                       \\
\midrule
$2^3$         & 4             & 46.1418                     & 66.0627                     & 19.9208                     \\
$2^3$         & 8             & 46.3207                     & 66.1358                     & 19.8151                     \\
$2^3$         & 16            & 46.4162                     & 66.1474                     & 19.7312                     \\
$2^4$         & 4             & 40.5377                     & 58.8128                     & 18.2751                     \\
$2^4$         & 8             & 41.3374                     & 60.1895                     & 18.8521                     \\
$2^4$        & 16            & 42.5311                     & 61.5164                     & 18.9854                     \\
\midrule
\multicolumn{2}{l}{ResNet-18} & \multicolumn{1}{r}{48.6497} & \multicolumn{1}{r}{69.2653} & \multicolumn{1}{r}{20.6155}\\
\bottomrule
\end{tabular}\vspace{-2mm}
\caption{Objective values under scattering transform with varying scales and angles. %\vspace{-5mm} 
}\label{table:appendix-obj-val}
\end{small}
\end{center}
\end{table}

\paragraph{Comparing ReduNet with random filters and scattering transform.
} Here we provide comparisons of using ReduNet on MNIST digits that are lifted by scattering transform versus by random filters. We select $m = 5000$ training samples (500 from each class), and set precision $\epsilon^2 = 0.1$ and step size $\eta = 0.1$. We vary the number of layers for each comparison, as the numbers of layers needed for the rate reduction objective to converge also varies. For each comparison, we construct two networks: 1). We apply scattering transform to each training sample, then flatten the feature vector and construct a ReduNet, and 2). We apply a number random filters, such that the resulting dimensions of the feature are the same. We ablate across different scales $\{2^3, 2^4\}$ and different angles $\{4, 8, 16\}$, but keep the number of phases to 4 the same. To compute the testing accuracy of each architecture, we use all 10000 testing samples in the MNIST dataset. We list details about the feature dimensions and test accuracies in Table~\ref{table:appendix-scatter-compare}. From these empirical results, we have shown that using scattering transform outperforms using random filters in all scenarios. This implies that scattering transform is better at making the original data samples more separable than the random filters. With a more suitable lifting operator, ReduNet is able to linearize low-dimensional structure into incoherent subspaces. This also showcases that scattering transform and ReduNet complement each other when learning diverse and discrminative features.

\begin{table}[ht]
\begin{center}
\begin{small}
\begin{tabular}{lcc|cc}
\toprule
Lifting & $n$ & $L$ & Acc \\
\midrule
Scattering ($2^3$ scales, $4$ angles)   & 441             & 150          & \textbf{0.9734}   \\
Random filters    & 441             & 150          & 0.9225   \\
\midrule
Scattering ($2^3$ scales, $8$ angles)   & 873             & 100          & \textbf{0.9781}   \\
Random filters    & 873             & 100          & 0.9153   \\
\midrule
Scattering ($2^3$ scales, $16$ angles)   & 1737            & 50           & \textbf{0.9795}   \\
Random filters    & 1737            & 50           & 0.9061   \\
\midrule
Scattering ($2^4$ scales, $4$ angles)   & 65              & 400          & \textbf{0.8952}   \\
Random filters    & 65              & 400          & 0.8794   \\
\midrule
Scattering ($2^4$ scales, $8$ angles)   & 129             & 300          & \textbf{0.9392}   \\
Random filters    & 129             & 300          & 0.9172   \\
\midrule
Scattering ($2^4$ scales, $16$ angles)   & 257             & 200          & \textbf{0.9569}   \\
Random filters    & 257             & 200          & 0.8794   \\
\bottomrule
\end{tabular}\vspace{-2mm}
\caption{Lifting by Scattering transform versus by random filters. The dimension of the feature vector $n$ and the number of layers used in the ReduNet $L$ are also stated.
}\label{table:appendix-scatter-compare}
\end{small}
\end{center}
\end{table}

\subsection{Equivariance of learned features using Translation-Invariant ReduNet}

We investigate how features learned using Translational-Invariant ReduNet possess equivariant properties. More specifically, we construct a ReduNet using 500 training samples of MNIST digits (50 from each class) with number of layers $L = 30$, precision $\epsilon^2 = 0.1$, step size $\eta = 0.5$, and $\lambda = 500$. We also apply 2D circulant convolution to the (1-channel) inputs with 16 random Gaussian kernels of size $7 \times 7$.

To evaluate its equivariant properties, we augment each training and test sample by shifting 7 pixels in each canonical direction, resulting in 9 augmented images for each original image. 
% By properties of Translation-Invariant ReduNet, we expect that features of translated images are ``close'' to other features offset by the same amount of translation. 
% We verify and visualize this property in Figure~\ref{appendix:equivariance}. 
As shown in Figure~\ref{appendix:equivariance}, we compute the distance between the representation of a (shifted) test sample and representations of training samples (including all translation augmentations) using cosine similarity. By computing the top-9 largest inner products, we observe that each augmented test sample is closest to a training sample with the similar translation.

\begin{figure*}[t]
  \begin{center}
    \subfigure[\label{fig:equi-a}Class 3]{\includegraphics[width=0.49\textwidth]{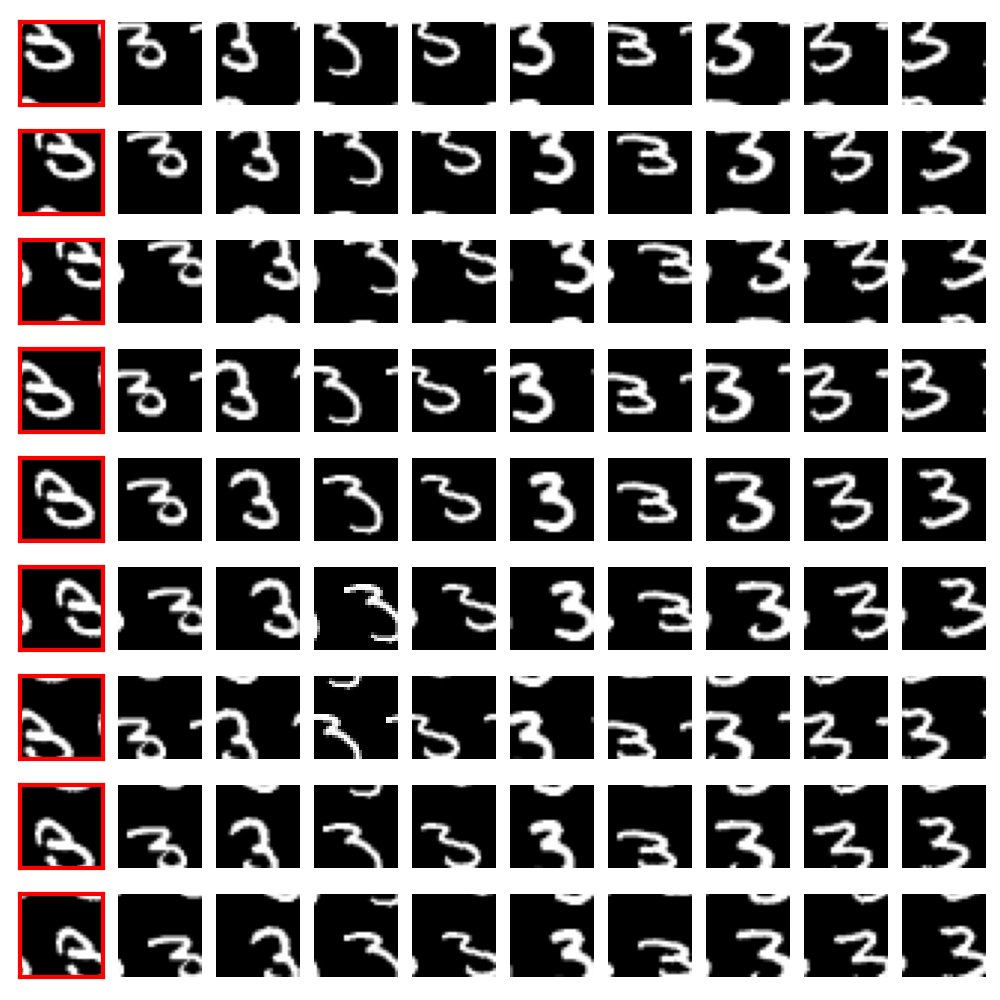}}
    \subfigure[\label{fig:equi-b}Class 7]{\includegraphics[width=0.49\textwidth]{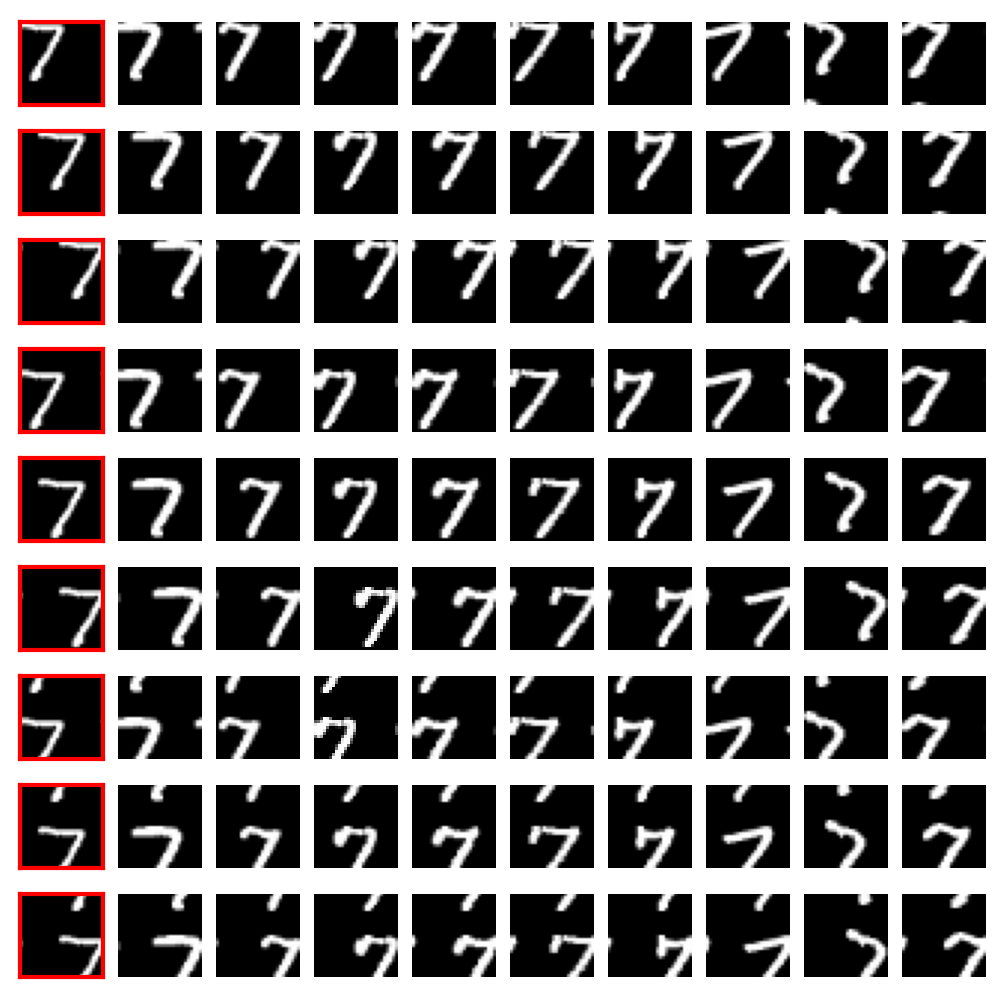}}
    \caption{Verification of Equivariance in ReduNet. For each class of samples, and for all training samples (and their augmentations) $\z^i$ and a test sample (and its augmentations) $\hat{\z}$, we compute their inner product $|\langle \hat{\z}, \z^i \rangle|,\, \forall i$. We select the top-9 largest inner products  (sorted from left to right), and visualize their respective image. The test sample and its augmented version are highlighted in red.  }\label{appendix:equivariance}
  \end{center}
\vskip -0.3in
\end{figure*} 

\newpage
\addcontentsline{toc}{section}{References}
\bibliography{references/references1,references/references2,references/references3}

\end{document}